\newtheorem{theorem}{Theorem}
\newtheorem{lemma}[theorem]{Lemma}
\newtheorem{proposition}[theorem]{Proposition}
\newtheorem{definition}{Definition}
\newtheorem{assumption}{Assumption}
\theoremstyle{remark}
\newtheorem{remark}{Remark}
\theoremstyle{remark}
\newtheorem*{remark*}{Remark}
\newcommand{\defeq}{\mathrel{\mathop:}=}
\newcommand{\XX}{\mathbf{X}}
\newcommand{\Xii}{\XX_i}
\newcommand{\Xjj}{\XX_j}
\newcommand{\Xkk}{\XX_k}
\newcommand{\xsmall}{\bm{x}}
\newcommand{\xdensity}{\mu}
\newcommand{\vdensity}{\lambda}
\newcommand{\nvdensity}{\tilde{\vdensity}}
\newcommand{\hsmall}{\bm{h}}
\newcommand{\hx}{\hsmall_{\Vzero,\xsmall}}
\newcommand{\Ht}{\mathbf{H}}
\newcommand{\Hti}{\Ht_i}
\newcommand{\Htj}{\Ht_j}
\newcommand{\PH}{\mathbf{P}}
\newcommand{\yii}{y_i}
\newcommand{\ysmall}{\bm{y}}
\newcommand{\esmall}{\bm{\epsilon}}
\newcommand{\zzero}{\bm{z}_0}
\newcommand{\zzeroorth}{\bm{z}_{0,\perp}}
\newcommand{\hatf}{\hat{\f}}
\newcommand{\f}{f}
\newcommand{\ft}{\tilde{f}}
\newcommand{\F}{\mathbf{F}}
\newcommand{\fg}{\f_g}
\newcommand{\fl}{\hatf^{\ell_2}}
\newcommand{\flinf}{\fl_{\infty}}
\newcommand{\flinfSet}{\mathcal{F}^{\infty}}
\newcommand{\fv}{\f_{\Vzero}^g}
\newcommand{\Fv}{\F_{\Vzero}^g}
\newcommand{\flmap}{\fl}
\newcommand{\learnableSet}{\mathcal{F}^{\ell_2}}
\newcommand{\hf}{h}
\newcommand{\s}{s}
\newcommand{\gb}{g}
\newcommand{\gbi}{\gb_{i,p}}
\newcommand{\gbinf}{\gb_{i,\infty}}
\newcommand{\Vzero}{\mathbf{V}_0}
\newcommand{\Vzeroi}{\mathbf{V}_{0}[i]}
\newcommand{\Vzeroj}{\mathbf{V}_{0}[j]}
\newcommand{\Vzerok}{\mathbf{V}_{0}[k]}
\newcommand{\Vzerol}{\mathbf{V}_{0}[l]}
\newcommand{\DV}{\Delta \mathbf{V}}
\newcommand{\tDV}{\overline{\DV}}
\newcommand{\DVl}{\DV^{\ell_2}}
\newcommand{\DVs}{\DV^*}
\newcommand{\DVj}{\DV[j]}
\newcommand{\tDVj}{\tDV[j]}
\newcommand{\Vg}{\mathbf{V}^{\text{GD}}}
\newcommand{\Vgk}{\Vg_k}
\newcommand{\DVg}{\Delta \Vg}
\newcommand{\DVgk}{\Delta\Vgk}
\newcommand{\vstari}{\bm{v}_{*,i}}
\newcommand{\capr}{\mathcal{B}^r}
\newcommand{\capsr}{\mathcal{B}_{\bm{v}_*}^r}
\newcommand{\capsrz}{\mathcal{B}_{\bm{v}_*}^{r_0}}
\newcommand{\capsrh}{\mathcal{B}^{\hat{r}}}
\newcommand{\capsdelta}{\mathcal{B}^\frac{\delta}{n^2C_d}}
\newcommand{\capsrpx}{\mathcal{B}_{\bm{v}_*,+}^{r,\xsmall}}
\newcommand{\capsrnx}{\mathcal{B}_{\bm{v}_*,-}^{r,\xsmall}}
\newcommand{\capsrX}{\mathcal{B}_{\bm{v}_{*,i}}^{r_i}}
\newcommand{\capsrpX}{\mathcal{B}_{\bm{v}_{*,i},+}^{r_i,\Xii}}
\newcommand{\capsrnX}{\mathcal{B}_{\bm{v}_{*,i},-}^{r_i,\Xii}}
\newcommand{\capsrzpX}{\mathcal{B}_{\bm{v}_{*,i},+}^{r_0,\Xii}}
\newcommand{\capsrznX}{\mathcal{B}_{\bm{v}_{*,i},-}^{r_0,\Xii}}
\newcommand{\sd}{\mathcal{S}^{d-1}}
\newcommand{\sn}{\mathcal{S}^{n-1}}
\newcommand{\identity}{\mathbf{I}}
\DeclareMathOperator*{\expectation}{\mathsf{E}}
\DeclareMathOperator*{\vari}{\mathsf{Var}}
\DeclareMathOperator*{\argmax}{arg\,max}
\DeclareMathOperator*{\argmin}{arg\,min}
\DeclareMathOperator*{\prob}{\mathsf{Pr}}
\DeclareMathOperator*{\rank}{\mathsf{rank}}
\newcommand{\TODO}{{\color{red}TODO}}
\newcommand{\bino}{\mathsf{Bino}}
\newcommand{\CXix}{\mathcal{C}_{\Xii,\xsmall}^{\Vzero}}
\newcommand{\Czx}{\mathcal{C}_{\bm{z},\xsmall}^{\Vzero}}
\newcommand{\zeroIndex}{\mathcal{Z}_{\Xi}}
\newcommand{\musd}{\lambda_{d-1}}
\newcommand{\eig}{\mathsf{eig}}
\newcommand{\diag}{\mathsf{diag}}
\newcommand*\bigcdot{\mathpalette\bigcdot@{.5}}
\newcommand*\bigcdot@[2]{\mathbin{\vcenter{\hbox{\scalebox{#2}{$\m@th#1\bullet$}}}}}
\newcommand{\w}{\bm{w}}
\newcommand{\wj}{\w_j}
\newcommand{\DX}{D_{\XX}}
\title{On the Generalization Power of\\ Overfitted Two-Layer Neural Tangent Kernel Models}
\newcommand*\samethanks[1][\value{footnote}]{\footnotemark[#1]}
\author{Peizhong Ju\thanks{School of Electrical and Computer Engineering, 
 Purdue University. Email: \texttt{\{jup,linx\}@purdue.edu}} \and Xiaojun Lin\samethanks \and Ness B. Shroff\thanks{Department of ECE and CSE, The Ohio State University. Email: \texttt{shroff.11@osu.edu}}}
\date{March 1, 2021}
\begin{document}

\maketitle

\begin{abstract}
In this paper, we study the generalization performance of min $\ell_2$-norm overfitting solutions for the neural tangent kernel (NTK) model of a two-layer neural network with ReLU activation that has no bias term. We show that, depending on the ground-truth function, the test error of overfitted NTK models exhibits characteristics that are different from the ``double-descent'' of other overparameterized linear models with simple Fourier or Gaussian features. Specifically, for a class of learnable functions, we provide a new upper bound of the generalization error that approaches a small limiting value, even when the number of neurons $p$ approaches infinity. This limiting value further decreases with the number of training samples $n$. For functions outside of this class, we provide a lower bound on the generalization error that does not diminish to zero even when $n$ and $p$ are both large.
\end{abstract}

\section{Introduction}

% Recently, there have been significant interests in understanding why
% overparameterized deep neural networks (DNNs) can still generalize well
% \citep{zhang2016understanding,
% advani2017high}. 
% According to the classical \emph{bias-variance tradeoff} in statistical
% learning \citep{bishop2006pattern, hastie2009elements, stein1956inadmissibility, james1992estimation,lecun1991second,tikhonov1943stability}, too complex a model usually leads to overfitting and poor
% generalization. In contrast, modern DNNs are so overparameterized that
% the training error can be easily driven to zero
% \citep{zhang2016understanding}. Yet, the learned models are still often
% found to generalize well to new data, which seems to defy the
% bias-variance tradeoff. 

Recently, there is significant interest in understanding why overparameterized deep neural networks (DNNs) can still generalize well \citep{zhang2016understanding,advani2017high}, which seems to defy the classical understanding of \emph{bias-variance tradeoff} in statistical
learning \citep{bishop2006pattern, hastie2009elements, stein1956inadmissibility, james1992estimation,lecun1991second,tikhonov1943stability}.
Towards this direction,
a recent line of study
	has focused on overparameterized linear models
	\citep{belkin2018understand, belkin2019two, bartlett2019benign,
	hastie2019surprises,muthukumar2019harmless,ju2020overfitting,mei2019generalization}. For linear models with simple features (e.g., Gaussian features
	and Fourier features) 
	\citep{belkin2018understand, belkin2019two, bartlett2019benign,
	hastie2019surprises,muthukumar2019harmless,ju2020overfitting},
	an interesting ``double-descent'' phenomenon has been observed. Thus, there is a region where the
	number of model parameters (or linear features) is larger than the number
	of samples (and thus overfitting occurs), but the generalization error actually decreases
	with the number of features. However, linear models with these simple features are still quite different from nonlinear neural networks. Thus, although such results provide some hint why
	overparameterization and overfitting may be harmless, it is still unclear whether similar conclusions apply to neural networks.
	
In this
	paper, we are interested in linear models based on the neural
	tangent kernel (NTK) \citep{jacot2018neural}, which can be viewed as a useful
	intermediate step towards modeling nonlinear neural networks.
	Essentially, NTK can be seen as a linear approximation of neural
	networks when the weights of the neurons do not change much.  Indeed,
	\cite{li2018learning,du2018gradient} have shown that, for a wide and
	fully-connected two-layer neural network, both the neuron
	weights and their
	activation patterns do not change much after
	gradient descent (GD) training with a sufficiently small step size. As a result, such a shallow
	and wide neural network is approximately linear in the weights
	when there are a sufficient number of neurons, which suggests
	the utility of the NTK model.

	Despite its linearity, however, characterizing the double descent of such a NTK model remains elusive.
	The work in 
	\cite{mei2019generalization} also studies the double-descent of
	a linear version of two-layer neural network. It uses the so-called
	``random-feature'' model, where the bottom-layer weights are
	random and fixed, and only the top-layer weights are trained.
	(In comparison, the NTK model for such a two-layer neural network
	corresponds to training only the bottom-layer weights.)
	However, the setting there requires the
	number of neurons, the number of samples, and the data dimension to
	all grow proportionally to infinity. In contrast, we are
	interested in the setting where the number of samples is given,
	and the number of neurons is allowed to be much larger than
	the number of samples. As a consequence of the different
	setting, in \cite{mei2019generalization}
	eventually only \emph{linear} ground-truth functions can be learned. (Similar settings are also studied in \cite{d2020double}.) In
	contrast, we will show that far more complex functions can be
	learned in our setting. 
	In a related work, \cite{ghorbani2019linearized} shows that
	both the random-feature model and the NTK model can approximate
	highly \emph{nonlinear} ground-truth functions with a sufficient number of
	neurons.  However, \cite{ghorbani2019linearized} mainly studies
	the \emph{expressiveness} of the models, and therefore does not explain
	why overfitting solutions can still generalize well. To the best
	of our knowledge, our work is the first to characterize the
	double-descent of overfitting solutions based on the NTK model.

Specifically, in this paper we study the generalization error of the min
$\ell_2$-norm overfitting solution for a linear model based on the NTK
of a two-layer neural network with ReLU activation that has no bias. Only the bottom-layer weights are trained. 
We are interested in min $\ell_2$-norm overfitting solutions because
gradient descent (GD) can be shown to converge to such solutions while
driving the training error to zero \citep{zhang2016understanding} (see also Section~\ref{sec.system_model}). 
Given a class of ground truth functions (see details in Section~\ref{sec.main}), 
which we refer to as ``learnable functions,'' 
our main result (Theorem~\ref{th.combine}) provides an upper bound on the
generalization error of the min $\ell_2$-norm overfitting solution for
the two-layer NTK model with $n$ samples and $p$ neurons (for any finite $p$ larger than a polynomial function of $n$). 
% This upper bound not only implies the performance when $n\to\infty$ or $p\to\infty$ or both, but also holds for finite $n$ and finite $p$ larger than a polynomial function of $n$. In particular, it characterizes how the generalization error decreases as $p$ increases, and thus confirms the descent in the overparameterized region. 
This upper bound confirms that the generalization error of the overfitting solution indeed exhibits descent in the overparameterized regime when $p$ increases.
Further, our upper bound can also account for the noise in the training samples.

%From the technical perspective, for learnable functions, we give an
%upper bound on the generalization performance. This upper bound not only
%describes the behavior when $n$ and $p$ approaches infinity, but also
%works for finite $n$ and $p$ as long as $p$ is larger than a value
%determined by a polynomial of $n$. We also include the effect of noise
%in this upper bound, which implies that the solution can at least
%sustain some amount of noise without sacrificing the generalization
%performance too much. For not-learnable functions, we also give a lower

Our results reveal several important insights. First, we find that the
(double) descent of the overfitted two-layer NTK model is drastically different from
that of linear models with simple Gaussian or Fourier features \citep{belkin2018understand, belkin2019two,bartlett2019benign,	hastie2019surprises,muthukumar2019harmless}. 
%
% \subsection{Summary of results}
%
%In this paper, we show that overfitted GD (i.e., min $\ell_2$-norm
%solutions) the NTK model has a very different descent behavior in the
%overfitted regime compared with other linear models using simple
%features. 
Specifically, for linear models with simple features, when the number of
features $p$ increases, the generalization error will eventually
grow again and approach the so-called ``null risk'' \citep{hastie2019surprises},
which is the error of a trivial model that predicts zero.
%
%For min $\ell_2$-norm solutions, such
%approaching may be really quick
%\cite{hastie2019surprises}. For min $\ell_1$-norm (i.e., basis pursuit
%(BP)) solutions, such process will not happen until the number of
%features $p$ is exponentially large, but will still happen when $p$ goes
%to infinity \cite{ju2020overfitting}.  This problem can be intuitively
%explained as follows. 
% The reason is intuitive: as more and more new features are added, eventually there will be some wrong features that are confused with the true features. 
In contrast, for the class of learnable functions described earlier, the
generalization error of the overfitted NTK model will continue to descend as $p$ grows to infinity, and will
approach a limiting value that depends on the number of samples $n$.
Further, when there is no noise, this limiting value will decrease to
zero as the number of samples $n$ increases. 
%
%If those new features
%are very different from the true features, then the generalization error
%increases and null risk\footnote{Null risk denotes the error that gives
%by a model that always predicts zero. The existence of null risk usually
%means that the selected model does not learn the ground-truth.} appears.
%Here we use simulation results to briefly illustrate this phenomenon. 
This difference is shown in
Fig.~\ref{fig.compare}(a). As $p$ increases, the test mean-square-error (MSE) of
min-$\ell_1$ and min-$\ell_2$ overfitting solutions for Fourier features
(blue and red curves) eventually grow back to the null risk (the black
dashed line), even though they exhibit a descent at smaller $p$. In contrast, the error of the overfitted NTK model continues to descend to a
much lower level.
% Therefore, current overfitting studies on min-$\ell_1$ or min-$\ell_2$
% may not be able to explain why DNNs is immune to null risk.

The second important insight is that the aforementioned behavior critically
depends on the ground-truth function belonging to the class of ``learnable functions.'' Further, this class of learnable functions depend on the specific network architecture. For our NTK model (with RELU activation that has no bias), we precisely characterize this class of learnable functions. Specifically, 
for ground-truth functions 
that are outside the class of learnable functions, 
% the
% generalization error of the overfitted NTK model will again grow back to a larger value as $p$
% increases. Thus, our second contribution is to precisely
% characterize this class of learnable functions. In particular, 
we show a lower bound on the generalization error that does not diminish to zero for any $n$ and $p$ (see Proposition~\ref{th.equal_set} and Section~\ref{sec.feature_of_set}). 
This difference is shown in Fig.~\ref{fig.compare}(b), where we use an
almost identical setting as Fig.~\ref{fig.compare}(a), 
except a different ground-truth function.
We can see in Fig.~\ref{fig.compare}(b) that the test-error of the overfitted NTK model is
always above the null risk and looks very different from that in
Fig.~\ref{fig.compare}(a).
We note that whether certain functions are learnable or not critically depends on the specific structure of the NTK model, such as the choice of the activation unit. Recently, \citep{Srikant21} shows that all polynomials can be learned by 2-layer NTK model with ReLU activation that has a bias term, provided that the number of neurons $p$ is sufficiently large.  (See further discussions in Remark~\ref{remark.bias_enlarge}. However, \citep{Srikant21} does not characterize the descent of generalization errors as $p$ increases.) This difference in the class of learnable functions between the two settings (ReLU with or without bias) also turns out to be consistent with the difference in the expressiveness of the neural networks. That is, shallow networks with biased-ReLU are known to be  universal function approximators \citep{ji2019neural}, while those without bias can only approximate the sum of linear functions and even functions \citep{ghorbani2019linearized}.

\begin{figure}[t!]
    \centering
    \includegraphics[width=0.4\textwidth]{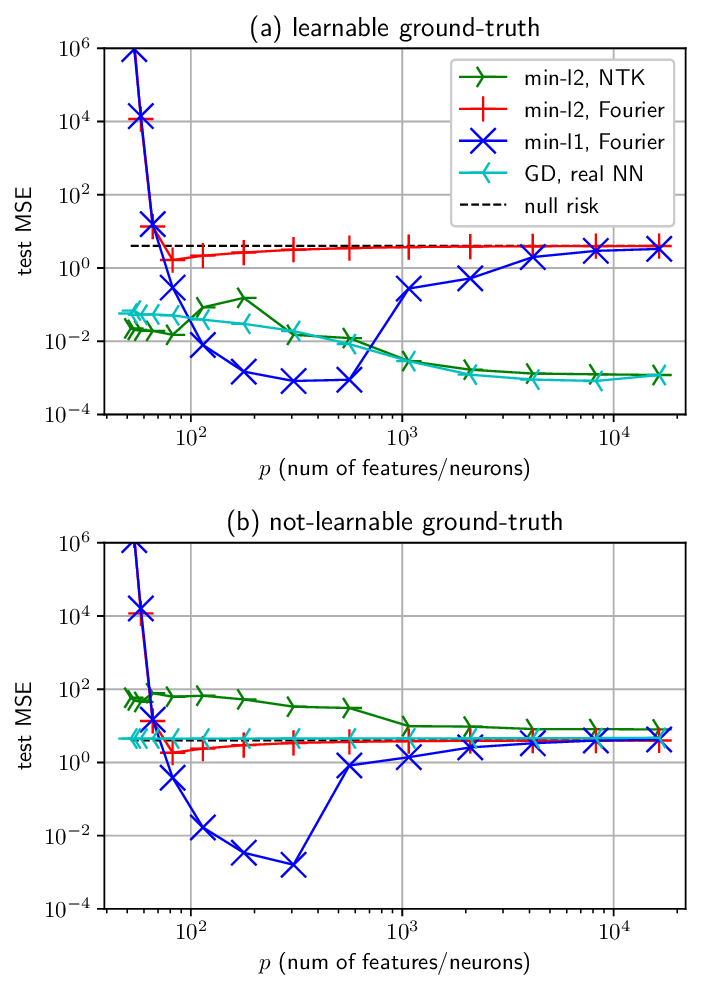}
    \caption{The test mean-square-error(MSE) vs. the number of
	    features/neurons $p$ for \textbf{(a)} learnable function  and \textbf{(b)} not-learnable function when $n=50$, $d=2$,
	    $\|\esmall\|_2^2=0.01$. The corresponding ground-truth are \textbf{(a)}  $\f(\theta)=\sum_{k\in\{0,
	    1,2,4\}}(\sin(k\theta)+\cos(k\theta))$, and \textbf{(b)}  $\f(\theta)=\sum_{k\in\{3, 5, 7, 9\}}(\sin(k\theta)+\cos(k\theta))$. (Note that in 2-dimension every input $\xsmall$ on a unit circle can be represented by an angle $\theta\in[-\pi, \pi]$. See the end of Section~\ref{sec.feature_of_set}.) Every curve is the average of $9$ random simulation runs. For GD on the real neural network (NN), we use the step size $1/\sqrt{p}$ and the number of training epochs is fixed at $2000$.}
    \label{fig.compare}
\end{figure}

A closely related result to ours is the work in \cite{arora2019fine}, which
characterizes the generalization performance of wide two-layer neural
networks whose bottom-layer weights are trained by gradient descent (GD) to
overfit the training samples. In particular, our class of learnable
functions almost coincides with that of \cite{arora2019fine}.
% \linx{Is this precisely true? I think you convinced me that our functions are a subclass of theirs, i.e., sum of polynomials. But do we also include all of theirs, e.g., a specific power of x?}.
This is not surprising
because, when the number of neurons is large, NTK becomes a close
approximation of such two-layer neural networks. In that sense, the
results in \cite{arora2019fine} are even more faithful in following the GD
dynamics of the original two-layer network. However, the advantage of the
NTK model is that it is easier to analyze. In particular, the results in
this paper can quantify how the generalization error descends with $p$.
In contrast, the results in \cite{arora2019fine} provide only a
generalization bound that is independent of $p$ (provided that $p$ is
sufficiently large), but do not quantify the descent behavior as $p$
increases. 
Our numerical results in Fig.~\ref{fig.compare}(a) suggest that, over a wide range of $p$, the
descent behavior of the NTK model (the green curve) matches well with that of two-layer
neural networks trained by gradient descent (the cyan curve). 
Thus, we believe that our results also provide guidance for the latter model. The work in \citep{fiat2019decoupling} studies a different neural network architecture with gated ReLU, whose NTK model turns out to be the same as ours. However, similar to \cite{arora2019fine}, the result in \cite{fiat2019decoupling} does not capture the speed of descent with respect to $p$ either.
Second,
\cite{arora2019fine} only provides upper bounds on the generalization
error. There is no corresponding lower bound to explain whether 
ground-truth functions outside a certain class are
\emph{not} learnable. Our result in Proposition~\ref{th.equal_set} provides such a lower
bound, and therefore more completely characterizes the class of learnable
functions. 
% Third, the generalization bounds in \cite{arora2019fine}
% depend on important parameters $\mathbf{H}^{\infty}$ and $\lambda_0$ (which is $\min\eig (\mathbf{H}^{\infty})$, we will discuss this in Eq.~\eqref{eq.arora}). As the
% number of samples $n$ increases, $\lambda_0$ will decrease to zero, and
% as a result it is unclear whether the bounds will diverge. One of our
% contributions is to bound how fast $\lambda_0$ decreases with $n$,
% which complements the results of \cite{arora2019fine}. 
(See further comparison in Remark~\ref{remark.main} of Section~\ref{sec.main} and Remark~\ref{remark.h_inf} of Section~\ref{sec.proof_combine}.)
Another related work \cite{allen2019learning} also
characterizes the class of learnable functions for two-layer and three-layer
networks. 
However, \cite{allen2019learning} studies a training method that takes a
new sample in every iteration, and thus does not overfit all training data.
% and the number of iterations is typically larger than the number of neurons $p$. As a result, it does not capture the overparameterized regime where the number of neurons $p$ is much larger than the number of samples $n$. 
Finally, our paper studies generalization of NTK models for the regression setting, which is different from the classification setting that assumes a separability condition, e.g., in \citep{ji2019polylogarithmic}.

\section{Problem Setup}\label{sec.system_model}

% \subsection{Notations of a general learning model}

We assume the following data model $y=\f(\xsmall)+\epsilon$, with the input $\xsmall\in\mathds{R}^d$, the output $y\in\mathds{R}$, the noise $\epsilon\in\mathds{R}$, and $\f:\ \mathds{R}^d\mapsto\mathds{R}$ denotes the ground-truth function. Let $(\Xii,\ y_i)$, $i=1,2,\cdots, n$ denote $n$ training samples. We collect them as $\XX=[\XX_1\ \XX_2\ \cdots\ \XX_n]\in\mathds{R}^{d\times n}$, $\ysmall=[y_1\ y_2\ \cdots\ y_n]^T\in\mathds{R}^n$, $\esmall=[\epsilon_1\ \epsilon_2\ \cdots\ \epsilon_n]^T\in\mathds{R}^n$, and $\F(\XX)=[\f(\XX_1)\ \f(\XX_2)\ \cdots\ \f(\XX_n)]^T\in\mathds{R}^n$. Then, the training samples can be written as $\ysmall=\F(\XX)+\esmall$. After training (to be described below), we denote the trained model by the function $\hatf$. Then, for any new test data $\xsmall$, we will calculate the test error by $|\hatf(\xsmall)-\f(\xsmall)|$, and the mean squared error (MSE) by $\expectation_{\xsmall}[\hatf(\xsmall)-\f(\xsmall)]^2$.

% Let $p$ denote the number of neurons. Let $n$ denote the number of training data. Let $d$ denote the input dimension. Let $\f(\cdot):\mathds{R}^d\mapsto\mathds{R}$ denote the ground truth. Let $\Xii\in \mathds{R}^d$ denote the $i$-th training input and let $\yii\in\mathds{R}$ denote corresponding output. Let $\eii\in \mathds{R}$ denote the noise in $\yii$. In other words, for all $i=1,2,\cdots,n$, we have $\yii=\f(\Xii)+\eii$. Define $\XX\in\mathds{R}^{d\times n}$ that the $i$-th column is $\Xii$ for all $i=1,2,\cdots,n$. Define $\ysmall=[y_1\ y_2\ \cdots\ y_n]^T\in\mathds{R}^n$, $\F(\XX)=[\f(\XX_1)\ \f(\XX_2)\ \cdots\ \f(\XX_n)]^T\in\mathds{R}^n$, and $\esmall=[\epsilon_1\ \epsilon_2\ \cdots\ \epsilon_n]^T\in\mathds{R}^n$. We thus have $\ysmall=\F(\XX)+\esmall$. 

\begin{figure}
    \centering
    \begin{tikzpicture}
\draw  [fill](0,4.5) node (v1) {} circle (0.1);
\draw [fill][color=red] (-1,3.5) node (v2) {} circle (0.1);
\draw  [fill][color=blue](1,3.5) node (v3) {} circle (0.1);
\draw [fill] (-1,2.5) node (v5) {} circle (0.1);
\draw[fill]  (1,2.5) node (v7) {} circle (0.1);
\draw[fill][color=black]  (0,3.5) node (v9) {} circle (0.1);
\draw[color=red]  (v1) edge (v2);
\draw[color=blue]  (v1) edge (v3);
\draw[color=red]  (v2) edge (v5);
\draw[color=red]  (v2) edge (v7);

\draw[color=blue]  (v3) edge (v5);
\draw[color=blue]  (v3) edge (v7);

\node at (3.8,4.5) {output};
\node at (3.8,4.0) {top layer weights $\bm{w}$};
\node at (3.8,2.5) {input $\bm{x}=[x_1\ x_2]^T$};
\node at (3.8,3.0) {bottom-layer weights $\mathbf{V}_0$};
\node at (3.8,3.5) {hidden-layer: ReLU};

\node at (-1,4) {{\color{red}$\bm{w}_{1}$}};
\node at (1,4) {{\color{blue}$\bm{w}_{3}$}};
\node at (-1.5,3) {{\color{red}$\mathbf{V}_0[1]$}};
\node at (0,3) {{\color{black}$\mathbf{V}_0[2]$}};
\node at (1.5,3) {{\color{blue}$\mathbf{V}_0[3]$}};
\node at (0,4) {{\color{black}$\bm{w}_{2}$}};
\node at (-1,2) {$x_1$};
\node at (1,2) {$x_2$};

\draw[color=black]  (v9) edge (v5);
\draw[color=black]  (v9) edge (v7);
\draw[color=black]  (v1) edge (v9);

\end{tikzpicture}
    \caption{A two-layer neural network where $d=2$, $p=3$.}
    \label{fig.struct}
\end{figure}
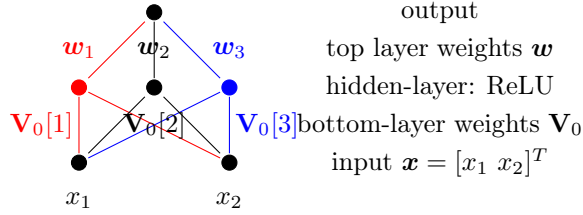

% \begin{wrapfigure}{r}{0.5\textwidth} %this figure will be at the right
% \centering
% \input{figs/structure}
% \caption{A two-layer neural network where $d=2$, $p=3$.}\label{fig.struct}
% \end{wrapfigure}

% \subsection{A fully-connected two-layer neural network}
For training, consider a fully-connected two-layer neural network with $p$ neurons. Let $\wj\in\mathds{R}$ and $\Vzeroj\in\mathds{R}^d$ denote the top-layer and bottom-layer weights, respectively, of the $j$-th neuron, $j=1,2,\cdots,p$ (see Fig.~\ref{fig.struct}). We collect them into $\w=[\w_1\ \w_2\ \cdots\ \w_p]^T\in\mathds{R}^p$, and $\Vzero=[\Vzero[1]^T\ \Vzero[2]^T\ \cdots\ \Vzero[p]^T]^T\in\mathds{R}^{dp}$ (a column vector with $dp$ elements). Note that with this notation, for any row or column vector $\bm{v}$ with $dp$ elements,  $\bm{v}[j]$ denotes a (row/column) vector that consists of the $(jd+1)$-th to $(jd+d)$-th elements of $\bm{v}$.
% Define $\Vzero\in\mathds{R}^{dp}$ that $\Vzeroj\in\mathds{R}^d$ denotes the initial bottom-layer weights connected to the $j$-th neuron (see Fig.~\ref{fig.struct} as an example).  Let $\w\in\mathds{R}^p$ where its $j$-th element $\wj$ denote the top layer weight connected the $j$-th neuron, $j=1,2,\cdots,p$. 
We choose ReLU as the activation function for all neurons and there is no bias term in the ReLU activation function.

% \subsection{NTK model}

Now we are ready to introduce the NTK model \citep{jacot2018neural}. We fix the top-layer weights $\w$, and let the initial bottom-layer weights $\Vzero$ be randomly chosen. We then train only the bottom-layer weights.
Let $\Vzero+\tDV$ denote the bottom-layer weights after training. Thus, the change of the output after training is
\begin{align*}
    &\sum_{j=1}^p \wj \bm{1}_{\{\xsmall^T(\Vzeroj+\tDVj)>0\}}\cdot (\Vzeroj+\tDVj)^T\xsmall\\
    &-\sum_{j=1}^p \wj \bm{1}_{\{\xsmall^T\Vzeroj>0\}}\cdot\Vzeroj^T\xsmall.
\end{align*}
In the NTK model, one assumes that $\tDV$ is very small. As a result, $\bm{1}_{\{\xsmall^T(\Vzeroj+\tDVj)>0\}}= \bm{1}_{\{\xsmall^T\Vzeroj>0\}}$ for most $\xsmall$. Thus, the change of the output can be approximated by
\begin{align*}
    \sum_{j=1}^p \wj\bm{1}_{\{\xsmall^T\Vzeroj>0\}}\cdot\tDVj^T\xsmall=\hx \DV,
\end{align*}
where $\DV\in\mathds{R}^{dp}$ is given by
$\DVj\defeq\wj\tDVj,\ j=1,2,\cdots,p$,
and $\hx\in\mathds{R}^{1\times (dp)}$ is given by
\begin{align}\label{eq.def_hx}
    \hx[j]\defeq\bm{1}_{\{\xsmall^T\Vzeroj>0\}}\cdot\xsmall^T,\ j=1,2,\cdots,p.
\end{align}
In the NTK model, we assume that the output of the trained model is exactly given by Eq.~\eqref{eq.def_hx}, i.e.,
\begin{align}\label{eq.NTK_model}
    \hatf_{\DV, \Vzero}(\xsmall)\defeq \hx \DV.
\end{align}
In other words, the NTK model can be viewed as a linear approximation of the two-layer network when the change of the bottom-layer weights is small.

% \subsection{Overfitted min-\texorpdfstring{$\ell_2$}{l2} and GD  solutions}

% The benefit of the NTK model is that its solution can be easily written in closed form. Specifically, we consider the overparameterized regime where $pd \geq n$.
Define $\Ht\in\mathds{R}^{n\times (dp)}$ such that its $i$-th row is $\Hti\defeq\hsmall_{\Vzero,\Xii}$. 
Throughout the paper, we will focus on the following min-$\ell_2$-norm overfitting solution 
\begin{align*}
    \DVl\defeq\argmin_{\bm{v}} \|\bm{v}\|_2,\text{ subject to }\Ht\bm{v} = \ysmall.
\end{align*}
Whenever $\DVl$ exists, it can be written in closed form as
\begin{align}\label{eq.DVl}
     \DVl=\Ht^T(\Ht\Ht^T)^{-1}\ysmall.
\end{align}
The reason that we are interested in $\DVl$ is that gradient descent (GD) or stochastic gradient descent (SGD) for the NTK model in Eq.~\eqref{eq.NTK_model} is known to converge to $\DVl$ (proven in Supplementary Material, Appendix~\ref{ap.GD_same_l2}).

% In the rest of this paper, we will focus on such overfitted solution. With such solution, the learning result can be expressed as a function:
Using Eq.~\eqref{eq.NTK_model} and  Eq.~\eqref{eq.DVl}, the trained model is then
\begin{align}\label{def.fl}
    \fl(\xsmall) \defeq \hx \DVl.
\end{align}
In the rest of the paper, we will study the generalization error of Eq.~\eqref{def.fl}.

% \subsection{Settings and assumptions}
We collect some assumptions. Define the unit sphere in $\mathds{R}^d$ as: $\sd\defeq\left\{\bm{v}\in\mathds{R}^d\ |\ \|\bm{v}\|_2=1 \right\}$. Let $\xdensity(\cdot)$ denote the distribution of the input $\xsmall$.
Without loss of generality, we make the following assumptions: \textbf{(i)} the inputs $\xsmall$ are \emph{i.i.d.} uniformly distributed in $\sd$, and the initial weights $\Vzeroj$'s are \emph{i.i.d.} uniformly distributed in all directions in $\mathds{R}^d$; \textbf{(ii)} $p\geq n/d$ and $d\geq 2$; \textbf{(iii)} $\Xii\nparallel \Xjj$ for any $i\neq j$, and $\Vzerok\nparallel \Vzerol$ for any $k\neq l$. 
% In the rest of the paper we also assume that \textbf{(iv)} $\rank(\Ht)=n$ (and thus $\DVl$ exists).
We provide detailed justification of those assumptions in Supplementary Material, Appendix~\ref{ap.justify_assump}. 
% Further, with those assumptions, one can show that, when $p$ is large, with high probability $\Ht$ has full row-rank. Therefore, $\DVl$ also exists with high probability for large $p$ (see Supplementary Material, Appendix~\ref{app.full_rank}).

\section{Learnable Functions and Generalization Performance}\label{sec.main}

We now show that the generalization performance of the overfitted NTK model in Eq.~\eqref{def.fl} crucially depends on the ground-truth function $\f(\cdot)$, where good generalization performance only occurs when the ground-truth function is ``learnable.''
% Throughout this paper, we are interested in the test error of the min $\ell_2$-norm overfitting solution to the NTK of a two-layer neural network formulated in Eq.~\eqref{def.fl}. It turns out that this generalization performance crucially depends on the ground-truth function $\f(\cdot)$. 
% For a certain class of ground-truth functions, which we refer to as the ``learnable functions,'' the min $\ell_2$-norm overfitting solution will generalize well even when $p\to\infty$. For other ground-truth functions, however, the generalization performance will be much poorer. 
Below, we first describe a candidate class of ground-truth functions, and explain why they may correspond to the class of ``learnable functions.'' Then, we will give an upper-bound on the generalization performance for this class of ground-truth functions. Finally, we will give a lower-bound on the generalization performance when the ground-truth functions are outside of this class.

% In this section, we will first give an upper bound of the generalization error for a special set of ground-truth functions $\learnableSet$ defined in Definition~\ref{def.learnableSet}. After that, we will give the explicit definition of ``learnable'' and ``not learnable'' ground-truth in Definition~\ref{def.learnable}. At the end of this section, we will show that $\learnableSet$ is the set of all learnable functions.

We first define a set $\learnableSet$ of ground-truth functions.
\begin{definition}\label{def.learnableSet}
$\learnableSet\defeq\big\{\f \stackrel{\text{a.e.}}{=}\fg\ \big|\ \fg(\xsmall)=\int_{\sd}\xsmall^T\bm{z}\frac{\pi-\arccos (\xsmall^T\bm{z})}{2\pi}g(\bm{z}) d\xdensity(\bm{z}),\ \|g\|_1< \infty\big\}$.
% \begin{align*}
%     \left\{\f \stackrel{\text{a.e.}}{=}\fg\ \bigg|\ \fg(\xsmall)=\int_{\sd}\xsmall^T\bm{z}\frac{\pi-\arccos (\xsmall^T\bm{z})}{2\pi}g(\bm{z}) d\xdensity(\bm{z}),\ \|g\|_1< \infty\right\},
% \end{align*}
\end{definition}
Note that in Definition~\ref{def.learnableSet}, $\stackrel{\text{a.e.}}{=}$ means two functions equals almost everywhere, and $\|g\|_1\defeq\int_{\sd}|g(\bm{z})|d\xdensity(\bm{z})$.
The function $g(\bm{z})$ may be any finite-value function in $L^1(\sd\mapsto\mathds{R})$. Further, we also allow $g(\bm{z})$ to contain (as components) Dirac $\delta$-functions on $\sd$. Note that a $\delta$-function $\delta_{\bm{z}_0}(\bm{z})$ has zero value for all $\bm{z}\in \sd\setminus \{\bm{z}_0\}$, but $\|\delta_{\bm{z}_0}\|_1\defeq \int_{\sd}\delta_{\bm{z}_0}(\bm{z})d\xdensity(\bm{z})=1$. Thus, the function $g(\bm{z})$ may contain any sum of $\delta$-functions and finite-value $L^1$-functions. \footnote{Alternatively, we can also interpret $g(\bm{z})$ as a signed measure \citep{rao1983theory} on $\sd$. Then, $\delta$-functions correspond to point masses, and the condition $\|g\|_1<\infty$ implies that the corresponding unsigned version of the measure on $\sd$ is bounded.}

% In Definition~\ref{def.learnableSet}, We use the condition ``$\|g\|_1< \infty$'' instead of ``$g\in L^1(\sd\to\mathds{R})$'' because we want to include the Dirac delta function ($\delta$ function) as $g$. Notice that although $\delta$ function has finite $L^1$ norm ($\|\delta\|_1=1$), it does not belong to $L^1(\sd\to\mathds{R})$ because strictly speaking, $\delta$ function is a functional and is not a function (thus it cannot belong to a function space $L^1(\sd\to\mathds{R})$). Another way to see this notation is to regard $g$ as signed measure (a generalization of the concept of measure by allowing it to have negative values) where discrete measure and Dirac measure are allowed.

To see why $\learnableSet$ may correspond to the class of learnable functions, we can first examine what the learned function $\fl$ in Eq.~\eqref{def.fl} should look like. Recall that $\Ht^T=[\Ht_1^T\ \cdots\ \Ht_n^T]$. Thus, $\hx\Ht^T=\sum_{i=1}^n (\hx\Ht_i^T)\bm{e}_i^T$, where $\bm{e}_i\in\mathds{R}^n$ denotes the $i$-th standard basis. Combining Eq.~\eqref{eq.DVl} and Eq.~\eqref{def.fl}, we can see that the learned function in Eq.~\eqref{def.fl} is of the form
\begin{align}\label{eq.fl_before_limit}
    \fl(\xsmall)=&\hx\Ht^T(\Ht\Ht^T)^{-1}\ysmall\nonumber\\
    =&\sum_{i=1}^n\left(\frac{1}{p}\hx\Hti^T\right)p\bm{e}_i^T(\Ht\Ht^T)^{-1}\ysmall.
\end{align}
For all $\xsmall, \bm{z}\in\sd$, define $\Czx\defeq\{j\in\{1,2,\cdots,p\}\ |\ \bm{z}^T\Vzeroj>0, \xsmall^T\Vzeroj>0\}$, and its cardinality is given by
\begin{align}\label{eq.card_c}
    \left|\Czx\right|=\sum_{j=1}^p\bm{1}_{\{\bm{z}^T\Vzeroj>0,\ \xsmall^T\Vzeroj>0\}}.
\end{align}
Then, using Eq.~\eqref{eq.def_hx}, we can show $\frac{1}{p}\hx\Hti^T=\xsmall^T\Xii\frac{|\CXix|}{p}$. It is not hard to show that
\begin{align}\label{eq.large_p_converge}
    \frac{|\Czx|}{p}\stackrel{\text{P}}{\rightarrow}\frac{\pi-\arccos (\xsmall^T\bm{z})}{2\pi},\text{ as }p\to\infty.
\end{align}
where $\stackrel{\text{P}}{\rightarrow}$ denotes converge in probability.
% As $p\to\infty$, this value can be shown to converge to $\xsmall^T\Xii\frac{\pi-\arccos(\xsmall^T\Xii)}{2\pi}$ by the following lemma.
(see Supplementary Material, Appendix~\ref{app.c_divide_p_convergence}). Thus, if we let
\begin{align}\label{eq.g_as_delta}
    g(\bm{z})=\sum_{i=1}^n p\bm{e}_i^T(\Ht\Ht^T)^{-1}\ysmall \delta_{\Xii}(\bm{z}),
\end{align}
then as $p\to\infty$, Eq.~\eqref{eq.fl_before_limit} should approach a function in $\learnableSet$. This explains why $\learnableSet$ is a candidate class of ``learnable functions.'' 
% Notice that all above only shows what $\fl$ looks like and does not provide any information about how close $\f$ and $\fl$ are (two functions in $\learnableSet$ may still differ a lot). We answer this question as follows.
However, note that the above discussion only addresses the \emph{expressiveness} of the model. It is still unclear whether any function in $\learnableSet$ can be learned with low generalization error. The following result provides the answer.

For some $m\in\left[1,\ \frac{\ln n}{\ln \frac{\pi}{2}}\right]$, define (recall that $d$ is the dimension of $\xsmall$)
\begin{align}
    &J_m(n,d)\defeq 2^{2d+5.5}d^{0.5d}n^{\left(2+\frac{1}{m}\right)(d-1)}\label{eq.def_Jmnd}.
    % ,\\
    % &W_d\defeq\frac{\pi^{\frac{d-1}{2}}}{\Gamma\left(\frac{d+1}{2}\right)}.\label{eq.def_wd}
\end{align}
\begin{theorem}\label{th.combine}
Assume a ground-truth function  $\f\stackrel{\text{a.e.}}{=}\fg\in\learnableSet$ where $\|g\|_\infty<\infty$\footnote{
The requirement of $\|g\|_\infty<\infty$ can be relaxed. We show in Supplementary Material, Appendix~\ref{app.g_is_delta} that, even when $g$ is a $\delta$-function (so $\|g\|_\infty=\infty$), we can still have a similar result of Eq.~\eqref{eq.main_conclusion} but Term~1 will have a slower speed of decay $O(n^{-\frac{1}{2(d-1)}(1-\frac{1}{q})})$ with respect to $n$ instead of $O(n^{-\frac{1}{2}}(1-\frac{1}{q}))$ shown in Eq.~\eqref{eq.main_conclusion}. Term~4 of Eq.~\eqref{eq.main_conclusion} will also be different when $g$ is a $\delta$-function, but it still goes to zero when $p$ and $n$ are large.
}, $n\geq 2$, $m\in\left[1,\ \frac{\ln n}{\ln \frac{\pi}{2}}\right]$, $d\leq n^4$, and $ p\geq 6 J_m(n,d)\ln\left(4n^{1+\frac{1}{m}}\right)$.
Then, for any $q\in[1,\ \infty)$ and for almost every $\xsmall\in \sd$, we must have\footnote{The notion $\prob\limits_{M}$ in Eq.~\eqref{eq.main_conclusion} emphasizes that randomness is in $M$.}
\begin{align}
&\prob_{\Vzero,\XX}\Big\{|\fl(\xsmall)-\f(\xsmall)|\geq \underbrace{n^{-\frac{1}{2}\left(1-\frac{1}{q}\right)}}_{\text{Term 1}}\nonumber\\
&+\underbrace{\left(1+\sqrt{J_m(n,d)n}\right)p^{-\frac{1}{2}\left(1-\frac{1}{q}\right)}}_{\text{Term 2}}+\underbrace{\sqrt{J_m(n,d)n}\|\esmall\|_2}_{\text{Term 3}},\nonumber\\
&\quad\text{ for all }\esmall\in\mathds{R}^n\Big\}\leq 2e^2\bigg(\underbrace{\exp\left(-\frac{\sqrt[q]{n}}{8\|g\|_\infty^2}\right)}_{\text{Term 4}}\nonumber\\
&+\underbrace{\exp\left(-\frac{\sqrt[q]{p}}{8\|g\|_1^2}\right)}_{\text{Term 5}}+\underbrace{\exp\left(-\frac{\sqrt[q]{p}}{8n\|g\|_1^2}\right)}_{\text{Term 6}}\bigg)+\underbrace{\frac{4}{\sqrt[m]{n}}}_{\text{Term 7}}.\label{eq.main_conclusion}
\end{align}
\end{theorem}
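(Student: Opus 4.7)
The plan is to split
\[
\fl(\xsmall)-\f(\xsmall) \;=\; \underbrace{\bigl[\hat f^{\mathrm{nf}}(\xsmall)-\hat f^{\mathrm{nf}}_\infty(\xsmall)\bigr]}_{(\mathrm{A})} + \underbrace{\bigl[\hat f^{\mathrm{nf}}_\infty(\xsmall)-\f(\xsmall)\bigr]}_{(\mathrm{B})} + \underbrace{\hx\Ht^T(\Ht\Ht^T)^{-1}\esmall}_{(\mathrm{C})},
\]
where $\hat f^{\mathrm{nf}}(\xsmall)\defeq\hx\Ht^T(\Ht\Ht^T)^{-1}\F(\XX)$ is the noise-free NTK prediction, $K(\xsmall,\bm{z})\defeq\xsmall^T\bm{z}\cdot\frac{\pi-\arccos(\xsmall^T\bm{z})}{2\pi}$ is the NTK kernel suggested by Eq.~\eqref{eq.large_p_converge}, $\mathbf{K}\in\mathds{R}^{n\times n}$ is the Gram matrix with $\mathbf{K}_{ij}\defeq K(\Xii,\Xjj)$, and $\hat f^{\mathrm{nf}}_\infty(\xsmall)\defeq K(\xsmall,\XX)\mathbf{K}^{-1}\F(\XX)$ is the $p\!\to\!\infty$ kernel-regression counterpart of $\hat f^{\mathrm{nf}}$. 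The three pieces will produce Terms~2, 1, and~3 of Eq.~\eqref{eq.main_conclusion} respectively. Since (C) is linear in $\esmall$, a single deterministic operator-norm bound on $\|\hx\Ht^T(\Ht\Ht^T)^{-1}\|_2$ handles the ``for all $\esmall\in\mathds{R}^n$'' quantifier.

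\textbf{Term (B): Terms~1 and~4.} Because $\f\stackrel{\text{a.e.}}{=}\fg$ is learnable, $\f(\xsmall)=\int_\sd K(\xsmall,\bm{z})g(\bm{z})\,d\xdensity(\bm{z})$. After reinterpreting $\mathbf{K}^{-1}\F(\XX)$ through $g$ (to leading order, $\mathbf{K}^{-1}\F(\XX)\approx\tfrac{1}{n}[g(\Xii)]_{i=1}^n$), Term~(B) becomes the Monte-Carlo deviation of an i.i.d.\ average of $n$ summands bounded by $\|g\|_\infty$ from its expectation. A Hoeffding bound at scale $t=n^{-(1-1/q)/2}$ then gives $2nt^2=2n^{1/q}$, producing
\[
\prob_{\XX}\!\bigl\{|(\mathrm{B})|\ge n^{-\tfrac{1}{2}(1-1/q)}\bigr\}\;\le\;2e^2\exp\!\bigl(-n^{1/q}/(8\|g\|_\infty^2)\bigr),
\]
which is Term~1 paired with Term~4.

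\textbf{Terms (A) and (C): Terms~2, 3, 5, 6.} Using Eq.~\eqref{eq.card_c}, each per-entry discrepancy $\tfrac{1}{p}\hx\Hti^T-K(\xsmall,\Xii)$ and $\tfrac{1}{p}\Hti\Htj^T-\mathbf{K}_{ij}$ is an average of $p$ i.i.d.\ Bernoullis, so a Hoeffding bound at scale $p^{-(1-1/q)/2}$ gives $2pt^2=2p^{1/q}$ and yields rate $p^{-\tfrac{1}{2}(1-1/q)}$ with exponential-in-$p^{1/q}$ tails, contributing Terms~5 and~6. Propagating these through the splitting of $\hat f^{\mathrm{nf}}-\hat f^{\mathrm{nf}}_\infty$ into a ``numerator'' piece $[\hx\Ht^T-pK(\xsmall,\XX)](\tfrac{1}{p}\Ht\Ht^T)^{-1}\F(\XX)/p$ and a ``denominator'' piece $K(\xsmall,\XX)[\mathbf{K}^{-1}-(\tfrac{1}{p}\Ht\Ht^T)^{-1}]\F(\XX)$ (the latter expanded via the Neumann identity $\mathbf{K}^{-1}-(\tfrac{1}{p}\Ht\Ht^T)^{-1}=\mathbf{K}^{-1}(\tfrac{1}{p}\Ht\Ht^T-\mathbf{K})(\tfrac{1}{p}\Ht\Ht^T)^{-1}$), and using $\|\mathbf{K}^{-1}\F(\XX)\|_2\lesssim\sqrt{n}\|g\|_1\|\mathbf{K}^{-1}\|$ together with $\|\mathbf{K}^{-1}\|\lesssim J_m(n,d)$, produces the prefactor $(1+\sqrt{J_m(n,d)n})$ of Term~2. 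For (C), bound $|(\mathrm{C})|\le\|\hx\|_2\|\esmall\|_2/\sqrt{\lambda_{\min}(\Ht\Ht^T)}\le\sqrt{p}\|\esmall\|_2/\sqrt{\lambda_{\min}(\Ht\Ht^T)}$ and use the concentration $\lambda_{\min}(\Ht\Ht^T)\ge\tfrac{p}{2}\lambda_{\min}(\mathbf{K})\gtrsim p/J_m(n,d)$ to obtain Term~3.

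\textbf{Main obstacle: Term~7.} All of the above --- the ``leading order'' simplification in (B) and the norm propagation in (A) and (C) --- rest on the quantitative invertibility $\lambda_{\min}(\mathbf{K})\gtrsim 1/J_m(n,d)$ holding with probability at least $1-4/\sqrt[m]{n}$ over $\XX$; this event's complement produces Term~7. I would attack it by a covering-net / small-ball argument on $\sd$: for unit $\bm{c}\in\sn$, $\bm{c}^T\mathbf{K}\bm{c}$ equals (up to normalization) the squared $L^2(\mu)$-norm of $\bm{z}\mapsto\sum_i c_iK(\bm{z},\Xii)$, and an anti-concentration bound on this piecewise-linear function (kinks at $\bm{z}^T\Xii=0$), combined with a minimum-angular-separation estimate on the $n$ i.i.d.\ sphere samples $\{\Xii\}$ (typical spacing $\gtrsim n^{-(2+1/m)(d-1)}/d^{d/2}$ after a union bound costing $4/\sqrt[m]{n}$), yields exactly the scaling encoded in $J_m(n,d)=2^{1.5d+5.5}d^{0.5d}n^{(2+1/m)(d-1)}$. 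Once this spectral lower bound is in hand, the remaining concentration arguments for (A), (B), and (C) are careful but standard applications of Hoeffding propagated through $\mathbf{K}^{-1}$.
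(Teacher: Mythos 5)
Your high-level shape (bias--variance split, Hoeffding in $n$ for the bias, Hoeffding in $p$ for the finite-width discrepancy, an eigenvalue lower bound for the noise operator) matches the paper's, but your route differs in two essential places and the more important one has a genuine gap. The paper's intermediate object is not the infinite-width kernel regressor $K(\xsmall,\XX)\mathbf{K}^{-1}\F(\XX)$ but the \emph{pseudo ground-truth} $\fv$ of Definition~\ref{def.fv} (the finite-$p$ empirical kernel applied to the same $g$, at fixed $\Vzero$). The error in learning $\fv$ is then controlled by a projection argument in feature space (Lemmas~\ref{le.f_minus_f} and \ref{le.hxDV}): writing $\fv(\xsmall)=\hx\DVs$, one has $|\fl(\xsmall)-\fv(\xsmall)|=|\hx(\PH-\identity)\DVs|\le\sqrt{p}\,\|\DVs-\Ht^T\bm{a}\|_2$ for \emph{any} $\bm{a}$, and the specific choice $\bm{a}_i=g(\Xii)/(np)$ makes $\Ht^T\bm{a}$ an i.i.d.\ average whose deviation from its mean $\DVs$ is exactly a vector-Hoeffding event at scale $n^{-\frac12(1-\frac1q)}$. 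Crucially, no inverse Gram matrix ever touches this term, which is why Term~1 carries no $J_m(n,d)$ factor.

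Your Term~(B) replaces this with the heuristic $\mathbf{K}^{-1}\F(\XX)\approx\tfrac1n[g(\Xii)]_i$. This is the gap: the correction is $K(\xsmall,\XX)\mathbf{K}^{-1}\bigl(\F(\XX)-\tfrac1n\mathbf{K}[g(\Xii)]_i\bigr)$, and while each entry of $\F(\XX)-\tfrac1n\mathbf{K}[g(\Xii)]_i$ is an $O(n^{-1/2})$ Monte-Carlo error, the vector gets hit by $\mathbf{K}^{-1}$, whose operator norm is of order $J_m(n,d)n$ (indeed Appendix~\ref{app.upper_bound_eig} shows $\min\eig(\mathbf{H}^\infty)=O(n^{-\frac{1}{d-1}})$, so this amplification cannot be avoided). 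The resulting contribution is far larger than $n^{-\frac12(1-\frac1q)}$ and does not vanish as $p\to\infty$, so your (B) does not yield Term~1 paired with Term~4. A similar accounting problem appears in your (A): propagating through the Neumann identity costs $\|\mathbf{K}^{-1}\|\cdot\|\F(\XX)\|_2\sim J_m(n,d)\,n\cdot\sqrt{n}\,\|g\|_1$, which overshoots the stated prefactor $1+\sqrt{J_m(n,d)n}$ of Term~2; the paper gets the smaller prefactor because in its decomposition the operator $\hx\Ht^T(\Ht\Ht^T)^{-1}$ (norm $\le\sqrt{J_m(n,d)n}$) multiplies only the already-small vector $\Fv(\XX)-\F(\XX)$, whose \emph{whole} $\ell_2$-norm is $\le p^{-\frac12(1-\frac1q)}$ with probability governed by Term~6. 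Finally, your covering-net/anti-concentration plan for $\lambda_{\min}(\mathbf{K})$ followed by a transfer to $\Ht\Ht^T/p$ is not what the paper does: it lower-bounds $\min\eig(\Ht\Ht^T)/p$ directly by a geometric pairing argument (Lemma~\ref{le.split_half}) --- neurons landing in the two halves of a small cap orthogonal to $\Xii$ have identical activations on all $\Xjj$, $j\ne i$, but opposite activations on $\Xii$, forcing $\|\Ht^T\bm{a}\|_2^2\ge c_{r_i}^i/(2n)$ --- combined with binomial concentration and a lower bound on the minimum separation $\hat r$; the two uses of this bound (real noise and pseudo-noise) each cost $2/\sqrt[m]{n}$, which is where the $4$ in Term~7 comes from.
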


To interpret Theorem~\ref{th.combine}, 
we can first focus on the noiseless case, where $\esmall$ and Term~3 are zero. If we fix $n$ and let $p \to \infty$, then Terms 2, 5, and 6 all approach zero. 
% Then, we increase $n$ to $\infty$, which makes Terms 1, 4, and 7 decreases to zero. The rest Term 3 will be zero if $\esmall=\bm{0}$.
% we may first fix $q=2$ and $m=2$, and consider the noiseless setting $\esmall=\bm{0}$. Fix any $n$. Note that Term 1 is then fixed at $n^{-0.25}$. By taking $p\gg n$, Term 2 can be made arbitrarily small. Further, Term 3 is zero due to the noiseless setting. Therefore, the left-hand-side of the inequality corresponds to the probability that the test error is larger than a small value that only depends on $n$. On the right-hand-side, when $p$ is large, both Term 5 and Term 6 approach zero. What remain are Term 4 and Term 7, which provide an upper bound on the left-hand-side probability. Further, as $n\to\infty$, both the threshold for the test-error and the upper bound of the probability will go to zero. 
We can then conclude that, in the noiseless and heavily overparameterized setting ($p\to\infty$), the generalization error will converge to a small limiting value (Term~1) that depends only on $n$. Further, this limiting value (Term~1) will converge to zero (so do Terms~4 and 7) as $n\to\infty$, i.e., when there are sufficiently many training samples. Finally, Theorem~\ref{th.combine} holds even when there is noise.

The parameters of $q$ and $m$ can be tuned to make Eq.~\eqref{eq.main_conclusion} sharper when $n$ and $p$ are large. For example, as we increase $q$, Term 1 will approach $n^{-0.5}$. Although a larger $q$ makes Terms 4, 5, and 6 bigger, as long as $n$ and $p$ are sufficiently large, those terms will still be close to 0. Similarly, if we increase $m$, then $J_m(n,d)$ will approach the order of $n^{2(d-1)}$. As a result, Term~3 approaches the order of $n^{2d-0.5}$ times $\|\esmall\|_2$ and the requirement $p\geq 6 J_m(n,d)\ln\left(4n^{1+\frac{1}{m}}\right)$ approaches the order of $n^{2(d-1)}\ln n$.

\begin{remark}\label{remark.main}
We note that \cite{arora2019fine} shows that, for two-layer neural networks whose bottom-layer weights are trained by gradient descent, the generalization error for sufficiently large $p$ has the following upper bound: for any $\zeta>0$,
\begin{align}\label{eq.arora}
    &\prob\bigg\{\expectation_{\xsmall}|\hatf(\xsmall)-\f(\xsmall)|\leq \sqrt{\frac{2\ysmall^T(\mathbf{H}^\infty)^{-1}\ysmall}{n}}\nonumber\\
    &+O\bigg(\sqrt{\frac{\log \frac{n}{\zeta\cdot\min\eig (\mathbf{H}^{\infty})}}{n}}\bigg)\bigg\}\geq 1-\zeta,
\end{align}
where $\mathbf{H}^\infty=\lim\limits_{p\to\infty}(\Ht\Ht^T/p)\in\mathds{R}^{n\times n}$.
For certain class of learnable functions (we will compare them with our $\learnableSet$ in Section~\ref{sec.feature_of_set}), the quantity $\ysmall^T(\mathbf{H}^\infty)^{-1}\ysmall$ is bounded. Thus, $\sqrt{\frac{2\ysmall^T(\mathbf{H}^\infty)^{-1}\ysmall}{n}}$ also decreases at the speed $1/\sqrt{n}$. The second $O(\cdot)$-term in Eq.~\eqref{eq.arora}  contains the minimum eigenvalue of $\mathbf{H}^\infty$, which decreases with $n$. (Indeed,  we show that this minimum eigenvalue is upper bounded by $O(n^{-\frac{1}{d-1}})$ in Supplementary Material, Appendix~\ref{app.upper_bound_eig}.) 
Thus, Eq.~\eqref{eq.arora} may decrease a little bit slower than $1/\sqrt{n}$, which is consistent with Term~1 in Eq.~\eqref{eq.main_conclusion} (when $q$ is large).
Note that the term  $2\ysmall^T(\mathbf{H}^\infty)^{-1}\ysmall$ in Eq.~\eqref{eq.arora} captures how the complexity of the ground-truth function affects the generalization error. Similarly, the norm of $g(\cdot)$ also captures the impact\footnote{Although Term~1 in Eq.~\eqref{eq.main_conclusion} in its current form does not depend on $g(\cdot)$, it is possible to modify our proof so that the norm of $g(\cdot)$ also enters Term~1.} of the complexity of the ground-truth function in Eq.~\eqref{eq.main_conclusion}.
However, we caution that the GD solution in \cite{arora2019fine} is based on the original neural network, which is usually different from our min $\ell_2$-norm solution based on the NTK model (even though they are close for very large $p$). Thus, the two results may not be directly comparable.
\end{remark}

Theorem~\ref{th.combine} reveals several important insights on the generalization performance when the ground-truth function belongs to $\learnableSet$.

% \emph{Remark: }Theorem~\ref{th.large_p} implies that the test MSE decreases at the speed at least $O(\frac{1}{\sqrt{n}})$ with high probability. Indeed, our proof method can even show that the decreasing speed can be $O(n^{-1+\gamma})$ where $\gamma>0$ can be arbitrarily small. In other words, when $p\gg n$, the decreasing speed of test MSE can be at least $O(\frac{1}{n})$.

% We put its proof in Section~\ref{sec.proof_combine}. A direct consequence of Theorem~\ref{th.combine} is that in the noiseless setting, by letting $p\to\infty$ and using enough training data, the overfitted solution recover the ground-truth. To see this more clearly, we relax the choice of parameters  of Theorem~\ref{th.combine} by letting $m=2$, $q=2$, and $J_m(n,d)n^{1.5}p^{-\frac{1}{2}\left(1-\frac{1}{q}\right)}\leq n^{-0.25}$. We can get the following corollary.
% \begin{corollary}\label{coro.limit_main}
% Suppose $d$ is fixed and there is no noise. If a ground-truth function  $\f\stackrel{\text{a.e.}}{=}\fg\in\learnableSet$, and we keep $p\geq 2^{6d+22}d^{2d}n^{10d-3}$ when $n\to\infty$, then for almost every $\xsmall\in\sd$, we must have $\lim_{n\to\infty}\fl(\xsmall)\stackrel{\text{P}}{\rightarrow}\f(\xsmall)$. (Notice that $\fl$ changes with $p$ and $n$.)
% \end{corollary}
% Corollary~\ref{coro.limit_main}

% We have the following insights out of Theorem~\ref{th.combine} for such a NTK overfitted learning model.

% 1. compare with simple linear 

% 2. speed of descent , compare with arora

% 3. effect of noise, n to inifity

\textbf{(i) Descent in the overparameterized region:} When $p$ increases, both sides of Eq.~\eqref{eq.main_conclusion} decreases, suggesting that the test error of the overfitted NTK model decreases with $p$.
In Fig.~\ref{fig.compare}(a), we choose a ground-truth function in $\learnableSet$ (we will explain why this function is in $\learnableSet$ later in Section~\ref{sec.feature_of_set}). The test MSE of the aforementioned NTK model (green curve) confirms the overall trend\footnote{This curve oscillates at the early stage when $p$ is small. We suspect it is because, at small $p$, the convergence in Eq.~\eqref{eq.large_p_converge} has not occurred yet, and thus the randomness in $\Vzeroj$ makes the simulation results more volatile.} of descent in the overparameterized region. We note that while  \cite{arora2019fine} provides a generalization error upper-bound for large $p$ (i.e., Eq.~\eqref{eq.arora}), the upper bound there does not capture the dependency in $p$ and thus does not predict this descent.

More importantly, we note a significant difference between the descent in Theorem~\ref{th.combine} and that of min $\ell_2$-norm overfitting solutions for linear models with simple features \citep{belkin2018understand, belkin2019two,bartlett2019benign,	hastie2019surprises,muthukumar2019harmless,liao2020random,jacot2020implicit}. For example, for linear models with Gaussian features, we can obtain (see, e.g., Theorem~2 of \cite{belkin2019two}):
\begin{align}\label{eq.Gaussian}
    \text{MSE}=\|\f\|_2^2 \left(1-\frac{n}{p}\right) 
		+ \frac{\sigma^2 n}{p-n-1},\text{ for $p \ge n+2$}
\end{align}
where $\sigma^2$ denotes the variance of the noise. If we let $p\to\infty$ in Eq.~\eqref{eq.Gaussian}, we can see that the MSE quickly approaches $\|\f\|_2^2$, which is referred to as the ``null risk'' \citep{hastie2019surprises}, i.e., the MSE of a model that predicts zero. Note that the null-risk is at the level of the signal, and thus is quite large.
% Indeed, for those linear models with simple features such as Gaussian or Fourier features, where the error goes to null risk when $p\to\infty$. 
In contrast, as $p\to\infty$, the test error of the NTK model converges to a value determined by $n$ and $\esmall$ (and is independent of the null risk).
This difference is confirmed in Fig.~\ref{fig.compare}(a), where the test MSE for the NTK model (green curve) is much lower than the null risk (the dashed line) when $p\to\infty$, while both the min $\ell_2$-norm (the red curve) and  the min $\ell_1$-norm solutions (the blue curve) \cite{ju2020overfitting} with Fourier features rise to the null risk when $p\to\infty$. Finally, note that the descent in Theorem~\ref{th.combine} requires $p$ to increase much faster than $n$. Specifically, to keep Term~2 in Eq.~\eqref{eq.main_conclusion} small,
it suffices to let $p$ increase a little bit faster than $\Omega(n^{4d-1})$. This is again quite different from the descent shown in Eq.~\eqref{eq.Gaussian} and in other related work using Fourier and Gaussian features \citep{liao2020random, jacot2020implicit}, where $p$ only needs to grow
proportionally with $n$.

\textbf{(ii) Speed of the descent:} 
Since Theorem~\ref{th.combine} holds for finite $p$, it also characterizes the speed of descent. In particular, Term 2 is proportional to $p^{-\frac{1}{2}\left(1-\frac{1}{q}\right)}$, which approaches $1/\sqrt{p}$ when $q$ is large. 
Again, such a speed of descent is not captured in \cite{arora2019fine}.
As we show in Fig.~\ref{fig.compare}(a), the test error of the gradient descent solution under the original neural network (cyan curve) is usually quite close to that of the NTK model (green curve). Thus, our result provides useful guidance on how fast the generalization error descends with $p$ for such neural networks.

\begin{figure}[t!]
    \centering
    \includegraphics[width=0.4\textwidth]{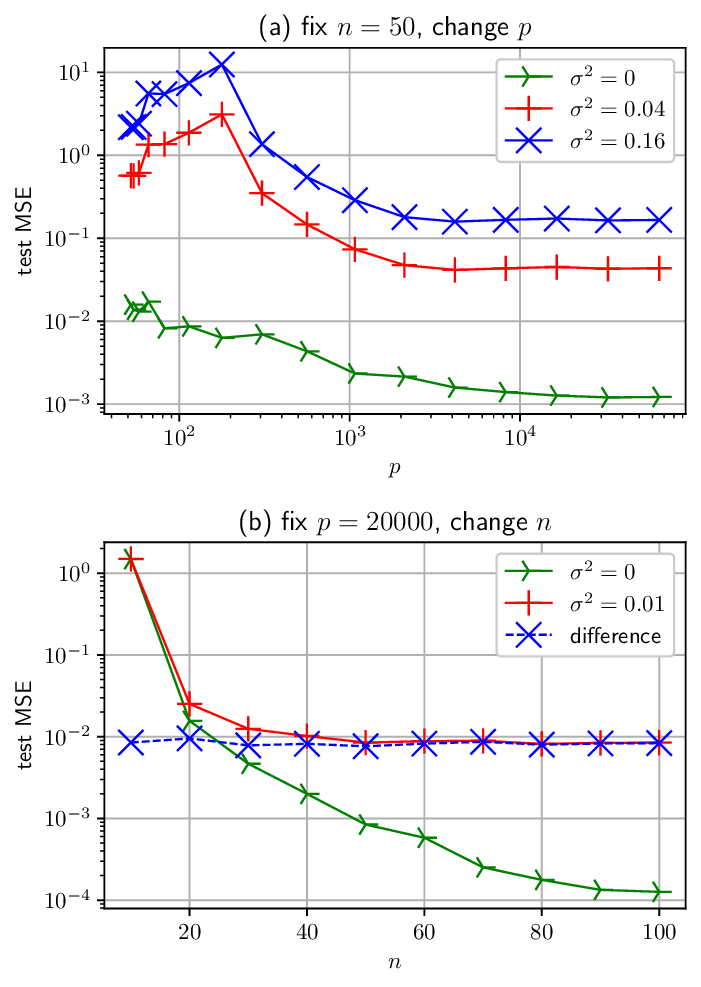}
    \caption{The test MSE of the overfitted NTK model for the same ground-truth function as Fig.~\ref{fig.compare}(a). \textbf{(a)} We fix $n=50$ and increase $p$ for different noise level $\sigma^2$. \textbf{(b)} We fix $p=20000$ and increase $n$. All data points in this figure are the average of five random simulation runs.}
    \label{fig.noise_effect}
\end{figure}

\textbf{(iii) The effect of noise:} 
Term 3 in Eq.~\eqref{eq.main_conclusion} characterizes the impact of the noise $\esmall$, which does not decrease or increase with $p$.
% Specifically, the noise in the training samples adds to the generalization error through a multiplication factor that depends on $n$ but not $p$. Thus, when $n$ is fixed but $p\to\infty$, the influence of noise neither becomes extremely large nor goes to zero. 
Notice that this is again very different from Eq.~\eqref{eq.Gaussian}, i.e., results of min $\ell_2$-norm overfitting solutions for simple features, where the noise term $\frac{\sigma^2 n}{p-n-1}\to 0$ when $p\to\infty$.
We use Fig.~\ref{fig.noise_effect}(a) to validate this insight.
In Fig.~\ref{fig.noise_effect}(a), we fix $n=50$ and plot curves of test MSE of NTK overfitting solution as $p$ increases. We let the noise $\epsilon_i$ in the $i$-th training sample be \emph{i.i.d.} Gaussian with zero mean and variance $\sigma^2$. The green, red, and blue curves in Fig.~\ref{fig.noise_effect}(a) corresponds to the situation $\sigma^2=0$, $\sigma^2=0.04$, and $\sigma^2=0.16$, respectively. We can see that all three curves become flat when $p$ is very large, and this phenomenon implies that the gap across different noise levels does not decrease when $p\to\infty$, which is in contrast to Eq.~\eqref{eq.Gaussian}.

% Another important question is that, when $p$ is fixed, how $n$ affects the contribution from the noise to the generalization error. 
% Note that Term~3 in Eq.~\eqref{eq.main_conclusion} increases as $n$ increases, which seems to suggest that the noise effect will be bigger as $n$ increases. However, since Eq.~\eqref{eq.main_conclusion} is only an upper-bound on the generalization error, the actual impact of noise may be lower. To investigate the actual relationship between the effect of noise and $n$, we conduct numerical experiment 
In Fig.~\ref{fig.noise_effect}(b), we instead fix $p=20000$, and increase $n$). We plot the test MSE both for the noiseless setting (green curve) and for $\sigma^2=0.01$ (red curve). The difference between the two curves (dashed blue curve) then captures the impact of noise, which is related to Term~3 in Eq.~\eqref{eq.main_conclusion}. Somewhat surprisingly, we find that the dashed blue curve is insensitive to $n$, which suggests that Term~3 in Eq.~\eqref{eq.main_conclusion} may have room for improvement.

% where we fix $p=20000$ and let $n$ increases. In Fig.~\ref{fig.noise_effect}(b), the blue dashed curve represents the effect of noise, which is calculated by subtracting the values of the green curve (no noise) from those of the red curve (with noise).
% we can see that the red curve (corresponding to $\sigma^2=0.01$) becomes flat and converges to a fixed value at about the noise level $0.01$ when $n$ is large. In comparison, we also draw the noiseless situation as the green curve in Fig.~\ref{fig.noise_effect}(b), which keeps decreasing.
% We also draw the difference of the green and red curves as the blue dashed curve, which denotes the effect of noise.
% We find that the blue dashed curve is almost flat, which may indicate that increasing $n$ neither diminishes the noise effect to zero, nor makes the noise effect grow to infinity. 
% This phenomenon suggests that the estimation of the effect of noise in our upper bound  Eq.~\eqref{eq.main_conclusion} could be further improved.

% Notice that Eq.~\eqref{eq.main_conclusion} does not depend on the distribution of $\esmall$. That means $\esmall$ can be regarded any type of deviations of the training data output, such as manually designed perturbation on the training data output as an adversarial attack.

In summary, we have shown that any ground-truth function in $\learnableSet$ leads to low generalization error for overfitted NTK models. It is then natural to ask what happens if the ground-truth function is not in $\learnableSet$.
Let $\overline{\learnableSet}$ denote the closure\footnote{We consider the normed space of all functions in  $L^2(\sd\mapsto\mathds{R})$. Notice that although $g(\bm{z})$ in Definition~\ref{def.learnableSet} may not be in $L^2$, $\f_g$ is always in $L^2$. Specifically, $\f_g(\xsmall)$ is bounded for every $\xsmall\in\sd$ when $\|g\|_1<\infty$.} of $\learnableSet$, and $D(\f,\learnableSet)$ denotes the $L^2$-distance between $f$ and $\learnableSet$ (i.e., the infimum of the $L^2$-distance from $f$ to every function in $\learnableSet$).

\begin{proposition}\label{th.equal_set}
\textbf{(i)} For any given $(\XX, \ysmall)$, there exists a function $\flinf\in \learnableSet$ such that, uniformly over all $\xsmall\in\sd$,  $\fl(\xsmall)\stackrel{\text{P}}{\rightarrow}\flinf(\xsmall)$ as $p\to\infty$.
\textbf{(ii)}
Consequently, if the ground-truth function $\f\notin \overline{\learnableSet}$ (or equivalently, $D(\f,\learnableSet)>0$), then the MSE of $\flinf$ (with respect to the ground-truth function $\f$) is at least $D(\f,\learnableSet)$.
% there must exist a subset of $\sd$ with positive Lebesgue measure such that 
% $\prob\limits_{\Vzero}\left\{|\fl(\xsmall)-\f(\xsmall)|\geq D(\f,\learnableSet)-\zeta\right\}\geq 1-\zeta$ for all sufficiently large $p$, for all $\xsmall$ in this subset; and \textbf{(ii)} $\prob\limits_{\Vzero}\left\{\sqrt{\text{MSE}}\geq D(\f,\learnableSet)-\zeta\right\}\geq 1-\zeta$ for sufficiently large $p$.
\end{proposition}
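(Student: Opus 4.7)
My plan is to take the natural pointwise limit in the closed form Eq.~\eqref{eq.fl_before_limit}, check that this limit lies in $\learnableSet$, upgrade pointwise convergence to uniformity in $\xsmall$ via a covering of $\sd$, and then derive Part~(ii) from the definition of the $L^2$-distance.

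For Part~(i), the only $p$-dependence in Eq.~\eqref{eq.fl_before_limit} is through $\frac{1}{p}\hx\Hti^T$ and $\frac{1}{p}\Ht\Ht^T$. Using $\frac{1}{p}\hx\Hti^T = \xsmall^T\Xii\cdot\frac{|\CXix|}{p}$ (and the analogous identity for each Gram entry) together with Eq.~\eqref{eq.large_p_converge}, these converge in probability to the arc-cosine NTK row $u(\xsmall)_i \defeq \xsmall^T\Xii\cdot\frac{\pi-\arccos(\xsmall^T\Xii)}{2\pi}$ and Gram matrix $(\mathbf{H}^\infty)_{ij} \defeq \Xii^T\Xjj\cdot\frac{\pi-\arccos(\Xii^T\Xjj)}{2\pi}$. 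Under Assumption~(iii), $\mathbf{H}^\infty$ is strictly positive definite (a standard fact for the arc-cosine NTK on distinct non-parallel inputs), so $c \defeq (\mathbf{H}^\infty)^{-1}\ysmall$ is well defined and
\begin{align*}
\flinf(\xsmall) \defeq u(\xsmall)^T c = \sum_{i=1}^n c_i\, \xsmall^T\Xii\cdot\frac{\pi-\arccos(\xsmall^T\Xii)}{2\pi}
\end{align*}
coincides with $\f_g$ for $g = \sum_{i=1}^n c_i\, \delta_{\Xii}$ as anticipated in Eq.~\eqref{eq.g_as_delta}. Since $\|g\|_1 = \sum_i |c_i| < \infty$, we get $\flinf \in \learnableSet$ as required.

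To obtain the uniform-in-$\xsmall$ convergence $\sup_{\xsmall\in\sd}|\fl(\xsmall)-\flinf(\xsmall)|\stackrel{\text{P}}{\to}0$, I would bound
\begin{align*}
|\fl(\xsmall)-\flinf(\xsmall)| \leq \bigl\|\tfrac{1}{p}\hx\Ht^T - u(\xsmall)\bigr\|_2 \cdot \|p(\Ht\Ht^T)^{-1}\ysmall\|_2 + \|u(\xsmall)\|_2\cdot \|p(\Ht\Ht^T)^{-1}\ysmall - c\|_2.
\end{align*}
The second summand is $\xsmall$-independent: applying Hoeffding entrywise to $\frac{1}{p}\Ht\Ht^T$, combined with continuity of matrix inversion at the invertible limit $\mathbf{H}^\infty$, gives $p(\Ht\Ht^T)^{-1}\ysmall\stackrel{\text{P}}{\to}c$. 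For the first summand I would cover $\sd$ by an $\epsilon$-net $\mathcal{N}$ of cardinality $O(\epsilon^{-(d-1)})$, apply Hoeffding plus a union bound over $\mathcal{N}\times\{1,\ldots,n\}$ to control $\frac{|\CXix|}{p}$ uniformly on $\mathcal{N}$, and bridge between nearest net points using two facts: (a) the continuous limit $\xsmall\mapsto\frac{\pi-\arccos(\xsmall^T\Xii)}{2\pi}$ is Lipschitz on $\sd$ away from $\pm\Xii$; and (b) $|\CXix|$ is piecewise constant in $\xsmall$ with jumps only across the $p$ random hyperplanes $\{\xsmall^T\Vzeroj=0\}$, and for $\|\xsmall-\xsmall'\|\leq\epsilon$ the number of $j$'s whose hyperplane separates $\xsmall,\xsmall'$ is tightly concentrated around $O(p\epsilon)$. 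Thus $\frac{|\CXix|}{p}$ changes by at most $O(\epsilon)$ between nearest net points. Choosing $\epsilon$ to vanish slowly with $p$ makes both the net union-bound and the bridging error go to zero, which combined with $\|u(\xsmall)\|_2\leq\sqrt{n}/2$ finishes the uniform convergence.

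Part~(ii) is then immediate: since $\flinf\in\learnableSet$, by definition of the $L^2$-distance,
\begin{align*}
\expectation_{\xsmall}[\flinf(\xsmall)-\f(\xsmall)]^2 = \|\flinf-\f\|_{L^2(\xdensity)}^2 \geq D(\f,\learnableSet)^2 > 0
\end{align*}
whenever $\f\notin\overline{\learnableSet}$, matching the stated bound (up to the square). The main obstacle in the whole argument is the uniform-in-$\xsmall$ concentration step: pointwise probabilistic convergence is free from Eq.~\eqref{eq.large_p_converge}, but the number of discontinuities of $\xsmall\mapsto|\CXix|$ itself grows linearly with $p$, so the covering-and-bridging step must carefully balance the net resolution $\epsilon$ against the $O(p\epsilon)$ ``jump budget'' induced by the random hyperplanes. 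Once that uniform convergence is in hand, every remaining step is routine continuous-mapping and $L^2$-geometry.
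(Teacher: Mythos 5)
Your proposal is correct and follows the same overall architecture as the paper's proof: identify $\flinf(\xsmall)=u(\xsmall)^T(\mathbf{H}^\infty)^{-1}\ysmall$ with $g=\sum_i c_i\delta_{\Xii}$ (so $\|g\|_1<\infty$ and $\flinf\in\learnableSet$), split the error into a coefficient part ($p(\Ht\Ht^T)^{-1}\ysmall\to(\mathbf{H}^\infty)^{-1}\ysmall$) and a feature part ($\tfrac1p\hx\Ht^T\to u(\xsmall)$ uniformly in $\xsmall$), and read off Part~(ii) from $\flinf\in\learnableSet$ and the definition of $D(\f,\learnableSet)$. The one place you genuinely diverge is the uniformity-in-$\xsmall$ step. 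The paper does not use an $\epsilon$-net: it proves $\sup_{\xsmall,\bm{z}\in\sd}\bigl|\tfrac{|\Czx|}{p}-\tfrac{\pi-\arccos(\xsmall^T\bm{z})}{2\pi}\bigr|\stackrel{\text{P}}{\to}0$ once and for all via a Glivenko--Cantelli argument: the class of indicators $\bm{1}_{\{\xsmall^T\bm{v}>0,\,\bm{z}^T\bm{v}>0\}}$ indexed by $(\xsmall,\bm{z})$ has polynomial discrimination (at most $(k+1)^{2(d+1)}$ sign patterns on $k$ weight vectors, by Wendel's hyperplane-region count), hence is uniformly convergent. Your covering-plus-bridging argument --- controlling the $O(p\epsilon)$ hyperplane crossings between nearest net points and union-bounding over an $O(\epsilon^{-(d-1)})$ net --- is a valid, more hands-on substitute; it would even yield explicit rates, whereas the VC route gives only convergence in probability but is shorter and is reused by the paper elsewhere (it is exactly how Eq.~\eqref{eq.large_p_converge} is justified). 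One small caution: you assert strict positive definiteness of $\mathbf{H}^\infty$ as a ``standard fact.'' That is true, but if you want the argument self-contained you can get it from the paper's own machinery: for fixed $\XX$ satisfying Assumption~(iii), Lemma~\ref{le.Ha_V0} gives $\min\eig(\Ht\Ht^T)/p\geq \DX>0$ with probability tending to one, and $\DX$ does not depend on $p$, so the limit $\mathbf{H}^\infty$ inherits a strictly positive minimum eigenvalue. (The residual mismatch between ``MSE'' and $D(\f,\learnableSet)$ versus its square in Part~(ii) is an imprecision in the proposition's own wording, not in your argument.)
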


Intuitively, Proposition~\ref{th.equal_set} (proven in Supplementary Material Appendix~\ref{ap.equal_set}) suggests that, if a ground-truth function is outside the closure of $\learnableSet$, then no matter how large $n$ is,
the test error of a NTK model with infinitely many neurons cannot be small (regardless whether or not the training samples contain noise). 
We validate this in Fig.~\ref{fig.compare}(b), where a ground-truth function is chosen outside $\overline{\learnableSet}$.  
The test MSE of NTK overfitting solutions (green curve) is above null risk (dashed black line) and thus is much higher compared with Fig.~\ref{fig.compare}(a). We also plot the test MSE of the GD solution of the  real neural network (cyan curve), which seems to show the same trend.
% \footnote{Notice that although Fig.~\ref{fig.compare}(b) shows real NN has a large test error, it does not implies that real NN cannot learn such ground-truth with other settings (e.g., more suitable step size and number of training epochs). Nonetheless, the different performance for real NN in Fig.~\ref{fig.compare}(a) and (b) suggests that there may exists some ground-truth functions that are fundamentally harder to learn and generalize with real NN.}

Comparing Theorem~\ref{th.combine} and Proposition~\ref{th.equal_set}, we can clearly see 
that, all functions in $\learnableSet$ are learnable by the overfitted NTK model, and all functions not in $\overline{\learnableSet}$ are not.
% in the sense of Proposition~\ref{th.equal_set} (see further discussions in Supplementary Material Appendix~\ref{app.learnable_def}).

% This definition is related to the notion of null risk, since if a ground-truth suffers from null risk, then it cannot belong to $\learnableSet$.

% Notice that we have introduced two sets of functions, $\learnableSet$ and $\flinfSet$. The following theorem describes the relationship between these two sets.

% \begin{theorem}
% $\flinfSet=\learnableSet$.
% \end{theorem}

% Theorem~\ref{th.equal_set} shows that $\learnableSet$ precisely characterizes the form of all learnable functions.

\section{What Exactly are the Functions in \texorpdfstring{$\learnableSet$}{the learnable set}?}\label{sec.feature_of_set}
% Although we have given a clear definition of $\learnableSet$, the definition itself does not show a clear picture of the topology\footnote{Because we care about the MSE, we choose the metric space as $L^2(\sd)$ and the distance between $f_1$ and $f_2$ is $\sqrt{\int_{\sd}(f_1(\xsmall)-f_2(\xsmall))^2d\xdensity(\xsmall)}$} of the learnable set $\learnableSet$.
% An initial analysis on $\learnableSet$ can tell us the following result.
% \begin{proposition}
% $\learnableSet$ is open.
% \end{proposition}

% In the rest of this section, we will analyze the features for functions that are either in $\learnableSet$, on the boundary of $\learnableSet$, or outside of $\learnableSet$. 
Our expression for learnable functions in Definition~\ref{def.learnableSet} is still in an indirect form, i.e., through the unknown function $g(\cdot)$. In \cite{arora2019fine}, the authors show that all functions of the form $(\xsmall^T\bm{a})^{l}$, $l\in\{0,1,2,4,6,\cdots\}$ are learnable by GD (assuming large $p$ and small step size), for a similar 2-layer network with ReLU activation that has no bias. In the following, we will show that our learnable functions in Definition~\ref{def.learnableSet} also have a similar form. Further, we can show that any functions of the form $(\xsmall^T\bm{a})^{l}$, $l\in\{3,5,7,\cdots\}$ are not learnable. Our characterization uses an interesting connection to harmonics and filtering on $\sd$, which may be of independent interest.

% Up to now, we have already given the expression for learnable functions in Definition~\ref{def.learnableSet}. Based on that definition, to prove $\f\in\learnableSet$, a straightforward way is to directly construct $g$ such that $\f\stackrel{\text{a.e.}}{=}\fg$. For example, using this method, we can prove that linear functions belongs to $\learnableSet$ (see Appendix~\ref{app.proof_linear}). However, for general functions, it is usually hard to construct such $g$. Thus, we need a better characterization of $\learnableSet$ than the integration form shown in Definition~\ref{def.learnableSet}.

Towards this end, we first note that the integral form in Definition~\ref{def.learnableSet} can be viewed as a convolution on $\sd$ (denoted by $\circledast$). Specifically, for any $\fg\in\learnableSet$, we can rewrite it as
\begin{align}
    &\fg(\xsmall) =g\circledast \hf(\xsmall)\defeq\int_{\mathsf{SO}(d)}g(\mathbf{S}\bm{e})\hf(\mathbf{S}^{-1}\xsmall)d\mathbf{S},\label{eq.convolution}\\
    &\hf(\xsmall)\defeq \xsmall^T\bm{e}\frac{\pi-\arccos (\xsmall^T\bm{e})}{2\pi},\label{eq.h_in_convolution}
\end{align}
where $\bm{e}\defeq [0\ 0\ \cdots\ 0\ 1]^T\in\mathds{R}^d$,
% \begin{align*}
%     \hf(\xsmall)\defeq \xsmall^T\bm{e}\frac{\pi-\arccos (\xsmall^T\bm{e})}{2\pi},\quad \bm{e}\defeq [0\ 0\ \cdots\ 0\ 1]^T\in\mathds{R}^d,
% \end{align*}
and $\mathbf{S}$ is a $d\times d$ orthogonal matrix that denotes a rotation in $\sd$, chosen from the set $\mathsf{SO}(d)$ of all rotations.
% More details about the specific meaning of the convolution on $\sd$ and how to get Eq.~\eqref{eq.convolution} can be found in Appendix~\ref{app.convolution}. 
An important property of the convolution Eq.~\eqref{eq.convolution} is that it corresponds to multiplication in the frequency domain, similar to Fourier coefficients. To define such a transformation to the frequency domain, we use a set of  hyper-spherical harmonics $\Xi_{\mathbf{K}}^l$ \citep{vilenkin1968special, dokmanic2009convolution} when $d\geq 3$, which forms an orthonormal basis for functions on $\sd$. These harmonics are indexed by $l$ and $\mathbf{K}$, where $\mathbf{K}=(k_1,k_2,\cdots,k_{d-2})$ and $l=k_0\geq k_1\geq k_2\geq \cdots \geq k_{d-2}\geq 0$ (those $k_i$'s and $l$ are all non-negative integers). Any function $f\in L^2(\sd\mapsto\mathds{R})$ (including even  $\delta$-functions \citep{li2013integral}) can be decomposed uniquely into these harmonics, i.e., $\f(\xsmall)=\sum_{l}\sum_{\mathbf{K}}c_{\f}(l,\mathbf{K})\Xi_{\mathbf{K}}^l(\xsmall)$,
where $c_{\f}(\cdot,\cdot)$ are projections of $\f$ onto the basis function.
% , i.e.,
% \begin{align}\label{eq.fourier_series}
%     c_{\f}(l,\mathbf{K})=\int_{\sd}\f(\xsmall)\overline{\Xi_{\mathbf{K}}^l(\xsmall)}d\xdensity(\xsmall).
% \end{align}
In Eq.~\eqref{eq.convolution}, let $c_g(\cdot,\cdot)$ and $c_h(\cdot,\cdot)$ denote the coefficients corresponding to the decompositions of $g$ and $h$, respectively. Then, we must have \citep{dokmanic2009convolution}
\begin{align}\label{eq.freq_product}
    c_{\fg}(l,\mathbf{K})=\Lambda\cdot c_{g}(l,\mathbf{K})c_{\hf}(l,\bm{0}),
\end{align}
where $\Lambda$ is some normalization constant. 
Notice that in Eq.~\eqref{eq.freq_product}, the coefficient for $h$ is $c_h(l,\bm{0})$ instead of $c_h(l,\mathbf{K})$, which is due to the intrinsic rotational symmetry of such convolution \citep{dokmanic2009convolution}.
% The exact expression of $\Lambda$ has been shown in \cite{dokmanic2009convolution}. 

The above decomposition has an interesting ``filtering'' interpretation as follows. We can regard the function $\hf$ as a ``filter'' or ``channel,'' while the function $g$ as a transmitted ``signal.'' Then, the function $\fg$ in Eq.~\eqref{eq.convolution} and Eq.~\eqref{eq.freq_product} can be regarded as the received signal after $g$ goes through the channel/filter $h$. Therefore, when coefficient $c_h(l,\bm{0})$ of $\hf$ is non-zero, then the corresponding coefficient $c_{\fg}(l,\mathbf{K})$ for $\fg$ can be any value (because we can arbitrarily choose $g$). In contrast, if a coefficient $c_h(l,\bm{0})$ of $\hf$ is zero, then the corresponding coefficient $c_{\fg}(l,\mathbf{K})$ for $\fg$ must also be zero for all $\mathbf{K}$.

Ideally, if $\hf$ contains all ``frequencies,'' i.e., all coefficients $c_h(l,\bm{0})$ are non-zero, then $\fg$ can also contain all ``frequencies,'' which means that $\learnableSet$ can contain almost all functions. Unfortunately, this is not true for the function $h$ given in Eq.~\eqref{eq.h_in_convolution}. 
% \textbf{General case $d\geq 3$: } 
Specifically, using the harmonics defined in \cite{dokmanic2009convolution}, the basis $\Xi_{\bm{0}}^l$ for $(l,\bm{0})$ turns out to have the form
\begin{align}\label{eq.K_zero_form}
    \Xi_{\bm{0}}^l(\xsmall)=\sum_{k=0}^{\lfloor\frac{l}{2}\rfloor}(-1)^k\cdot a_{l,k}\cdot (\xsmall^T\bm{e})^{l-2k},
\end{align}
where $a_{l,k}$ are positive constants. 
Note that the expression Eq.~\eqref{eq.K_zero_form} contains either only even powers of $\xsmall^T\bm{e}$ (if $l$ is even) or odd powers of $\xsmall^T\bm{e}$ (if $l$ is odd). Then, for the function $h$ in Eq.~\eqref{eq.h_in_convolution},
we have the following proposition (proven in Supplementary Material, Appendix~\ref{app.zero_component}). We note that \citep{basri2019convergence} has a similar harmonics
analysis, where the  expression of $c_h(l,\bm{0})$ is given. However, it is not obvious that   the expression of $c_h(l,\bm{0})$ for all $l=0,1,2,4,6,\cdots$ given in \citep{basri2019convergence} must be non-zero, which is made clear by Proposition~\ref{prop.coefficients} as follows.
\begin{proposition}\label{prop.coefficients}
$c_h(l,\bm{0})$ is zero for $l=3,5,7,\cdots$ and is non-zero for $l=0,1,2,4,6,\cdots$.
\end{proposition}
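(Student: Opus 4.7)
My approach is to split $h$ into its odd and even parts in $t=\xsmall^T\bm{e}$ and then use the Gegenbauer structure of $\Xi_{\bm{0}}^l$ to reduce the proposition to two scalar integrals. Using $\arccos(t)=\pi/2-\arcsin(t)$, one obtains the explicit decomposition
\begin{align*}
h(t)\;=\;\underbrace{\frac{t}{4}}_{h_o(t)\text{ (odd)}}\;+\;\underbrace{\frac{t\arcsin(t)}{2\pi}}_{h_e(t)\text{ (even)}}.
\end{align*}
From \eqref{eq.K_zero_form}, $\Xi_{\bm{0}}^l(t)$ is a polynomial of degree $l$ whose parity matches $l$, and up to a positive normalization equals the Gegenbauer polynomial $C_l^{(\lambda)}$ with $\lambda=(d-2)/2$. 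Integrating out the latitude, the zonal measure on $\sd$ at fixed pole $\bm{e}$ becomes a positive multiple of $(1-t^2)^{\lambda-1/2}\,dt$, so $c_h(l,\bm{0})$ is proportional (with strictly positive constant) to $\int_{-1}^{1} h(t)\,C_l^{(\lambda)}(t)(1-t^2)^{\lambda-1/2}\,dt$, and by parity only $h_o$ contributes for odd $l$ and only $h_e$ contributes for even $l$.

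\textbf{Easy cases (odd $l$ and $l=0$).} For odd $l$, $h_o(t)=t/4$ is proportional to $C_1^{(\lambda)}(t)$, so Gegenbauer orthogonality immediately gives $c_h(l,\bm{0})=0$ for all odd $l\geq 3$ and $c_h(1,\bm{0})\neq 0$. For $l=0$, the integrand $h_e(t)(1-t^2)^{\lambda-1/2}$ is non-negative and strictly positive wherever $t\neq 0$ (since $t\arcsin(t)>0$ there), so $c_h(0,\bm{0})>0$.

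\textbf{Even $l\geq 2$ (the substantive case).} I apply the Rodrigues formula $C_l^{(\lambda)}(t)(1-t^2)^{\lambda-1/2}=B_l\,\frac{d^l}{dt^l}[(1-t^2)^{l+\lambda-1/2}]$ with $B_l>0$ for even $l$, and integrate by parts $l$ times; all boundary terms vanish because $(1-t^2)^{l+\lambda-1/2}$ and its first $l-1$ derivatives are zero at $t=\pm 1$ (using $\lambda\geq 1/2$ for $d\geq 3$). The task reduces to showing $\int_{-1}^{1}(1-t^2)^{l+\lambda-1/2}\,h_e^{(l)}(t)\,dt\neq 0$. Writing $f(t)=t\arcsin(t)$, Leibniz gives $f^{(l)}(t)=t\,g_l(t)+l\,g_{l-1}(t)$ with $g_k(t)=(\arcsin t)^{(k)}$, and a direct induction via $g_{k+1}=g_k'$ yields $g_k(t)=\tilde Q_k(t)/(1-t^2)^{k-1/2}$ with $\tilde Q_1\equiv 1$ and
\begin{align*}
\tilde Q_{k+1}(t)\;=\;(1-t^2)\,\tilde Q_k'(t)\;+\;(2k-1)\,t\,\tilde Q_k(t).
\end{align*}
At the coefficient level this reads $a^{(k+1)}_m=(m+1)\,a^{(k)}_{m+1}+(2k-m)\,a^{(k)}_{m-1}$; since $\deg \tilde Q_{k+1}\leq k\leq 2k$, the multiplier $(2k-m)$ is always non-negative, so by induction every $\tilde Q_k$ has \emph{non-negative} coefficients with strictly positive leading coefficient $(k-1)!$. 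Substituting back gives $h_e^{(l)}(t)=P_l(t)/[2\pi(1-t^2)^{l-1/2}]$ with $P_l(t)=t\,\tilde Q_l(t)+l(1-t^2)\,\tilde Q_{l-1}(t)$, which for even $l$ is an even polynomial, non-negative on $[-1,1]$, and satisfies $P_l(1)=\tilde Q_l(1)>0$. After $(1-t^2)^{l-1/2}$ cancels, the integrand becomes $(1-t^2)^\lambda P_l(t)/(2\pi)$, non-negative and strictly positive near $t=\pm 1$, so the integral is strictly positive; combined with $B_l>0$, this yields $c_h(l,\bm{0})>0$ for all even $l\geq 2$.

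\textbf{Main obstacle.} The technical crux is the coefficient induction: one must check that the rational derivatives of $\arcsin$ consistently produce polynomials $\tilde Q_k$ with non-negative coefficients (the decisive arithmetic fact being $2k-m\geq 0$ for $m$ in the support of $\tilde Q_{k+1}$), and then rule out accidental cancellation when assembling them into $P_l=t\tilde Q_l+l(1-t^2)\tilde Q_{l-1}$ for even $l$. The rest --- the parity splitting, the zonal (Funk-Hecke) reduction to a one-dimensional integral, and the Rodrigues-plus-boundary-free integration by parts --- is essentially standard.
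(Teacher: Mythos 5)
Your proof is correct, but for the substantive case (even $l\ge 2$) it takes a genuinely different route from the paper's. Both arguments start from the same decomposition of $\hf$ into the odd part $\tfrac{t}{4}$ and the even part $\tfrac{t\arcsin t}{2\pi}$ (the paper writes the latter as the Taylor series $\tfrac{1}{2\pi}\sum_{i\ge 0}\tfrac{(2i)!}{2^{2i}(i!)^2}\tfrac{t^{2i+2}}{2i+1}$ of $t\,\arcsin t$), and both dispose of odd $l\ge 3$ and $l=1$ by the same parity/degree argument. For even $l$, the paper stays with the infinite series of monomials with \emph{positive} coefficients and proves, by induction through the three-term Gegenbauer recursion $\cos\theta\,\Xi_{\bm 0}^{l+1}=q_{l,1}\Xi_{\bm 0}^{l+2}+q_{l,2}\Xi_{\bm 0}^{l}$ with $q_{l,1},q_{l,2}>0$, that every monomial $t^{2i+2}$ has non-negative projection onto $\Xi_{\bm 0}^{2k}$, strictly positive once $i+1\ge k$; summing the non-negative terms gives strict positivity. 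You instead work with a single closed-form integral: Rodrigues' formula plus $l$-fold integration by parts reduces the coefficient to $\int_{-1}^1(1-t^2)^{\lambda}P_l(t)\,dt$, and your induction on the numerators $\tilde Q_k$ of $(\arcsin t)^{(k)}$ (non-negative coefficients, positive leading term $(k-1)!$) gives positivity. The paper's route buys freedom from any endpoint analysis, since every object integrated is a bounded polynomial against a bounded harmonic (the sum--integral interchange is immediate by non-negativity); your route buys a finite computation per $l$ and would yield an explicit sign/formula for $c_h(l,\bm 0)$, at the cost of tracking the singularities of $h_e^{(j)}$ at $t=\pm 1$.

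One step you should make explicit: your justification of the vanishing boundary terms cites only the vanishing of the first $l-1$ derivatives of $(1-t^2)^{l+\lambda-1/2}$ at $t=\pm 1$, which by itself is not sufficient because $h_e^{(j-1)}$ blows up there like $(1-t^2)^{3/2-j}$. The argument does close, using exactly the formula you derive: the $(l-j)$-th derivative of the weight is $O\bigl((1-t^2)^{\lambda-1/2+j}\bigr)$, so the $j$-th boundary term is $O\bigl((1-t^2)^{\lambda+1}\bigr)\to 0$ for every $d\ge 2$. With that estimate spelled out (and restricting to $d\ge 3$, where $\lambda=(d-2)/2>0$ and the Gegenbauer family is non-degenerate --- the paper treats $d=2$ separately via Fourier series), your proof is complete.
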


% This result is perhaps intuitive because $h$ in Eq.~\eqref{eq.h_in_convolution} can be expressed into Taylor series with respect to $\xsmall^T\bm{e}$:
% \begin{align*}
%     h(\xsmall)=\frac{1}{4}(\xsmall^T\bm{e})+\frac{1}{2\pi}\sum_{i=0}^\infty\frac{(2i)!}{2^{2i}(i!)^2}\frac{(\xsmall^T\bm{e})^{2i+2}}{2i+1}.
% \end{align*}
% Note that the expansion only contains $\xsmall^T\bm{e}$ and other even powers of $\xsmall^T\bm{e}$. Comparing with Eq.~\eqref{eq.K_zero_form}, we can easily conclude $h$ may only contain harmonics $\Xi_{\bm{0}}^l$ with $l=1$ and even values. The main contribution of Proposition~\ref{prop.coefficients} is to show that $h$ does contain \emph{all} harmonics.

We are now ready to characterize what functions are in $\learnableSet$. By the form of Eq.~\eqref{eq.K_zero_form}, for any non-negative integer $k$, any even power $(\xsmall^T\bm{e})^{2k}$ is a linear combination of $\Xi_{\bm{0}}^0,\Xi_{\bm{0}}^2,\cdots,\Xi_{\bm{0}}^{2k}$, and any odd power $(\xsmall^T\bm{e})^{2k+1}$ is a linear combination of $\Xi_{\bm{0}}^1,\Xi_{\bm{0}}^3,\cdots,\Xi_{\bm{0}}^{2k+1}$. 
By Proposition~\ref{prop.coefficients}, we thus conclude that any function $f_g(\xsmall)=(\xsmall^T\bm{e})^l$ where $l\in\{0,1,2,4,6,\cdots\}$ can be written in the form of Eq.~\eqref{eq.freq_product} in the frequency domain, and thus are in $\learnableSet$. In contrast, any function $f(\xsmall)=(\xsmall^T\bm{e})^l$ where $l\in\{3, 5, 7,\cdots\}$ cannot be written in the form of Eq.~\eqref{eq.freq_product}, and are thus not in $\learnableSet$. Further, the $\ell_2$-norm of any latter function will also be equal to its distance to $\flinfSet$. Therefore, the generalization-error lower-bound in Proposition~\ref{th.equal_set} will apply (with $D(\f,\learnableSet)=\|\f\|_2$).
Finally, by Eq.~\eqref{eq.convolution}, $\learnableSet$ is invariant under rotation and finite linear summation. Therefore, 
% Finally, by Eq.~\eqref{eq.convolution}, we also notice that if $\f(\xsmall)\in\learnableSet$, then $\f_g(\mathbf{S}\xsmall)\in\learnableSet$ for any $\mathbf{S}\in \mathsf{SO}(d)$ (i.e., a rotated version of $f$). Thus, we can replace $\bm{e}$ by any $\bm{a}\in\mathds{R}^d$ and conclude that $f(\xsmall)=(\xsmall^T\bm{a})^l$ are learnable where $l\in\{0,1,2,4,6,\cdots\}$, and are not-learnable where $l\in\{3, 5, 7,\cdots\}$. 
% Further, $\learnableSet$ is closed under linear combination, and thus 
any finite sum of $(\xsmall^T\bm{a})^l$, $l=0,1,2,4,6,\cdots$ must also belong to $\learnableSet$.

For the special case of $d=2$, the input $\xsmall$ corresponds to an angle $\theta\in[-\pi,\pi]$, and the above-mentioned harmonics become Fourier series $\sin(k\theta)$ and $\cos(k\theta)$, $k=0,1,\cdots$. We can then get similar results that frequencies of $k\in\{0,1,2,4,6,\cdots\}$ are learnable (while others are not), which explains the learnable and not-learnable functions in Fig.~\ref{fig.compare}. Details can be found in Supplementary Material, Appendix~\ref{app.special_case_d2}.

% , we can show similar results that $h$ contains all components for $l=0,1,2,4,6,\cdots$, and misses all other components for $l=3,5,7,\cdots$ (). A direct and useful result is that $f(\xsmall)=(\xsmall\bm{a})^{l}$ for all $l=0,1,2,4,6,\cdots$ are in $\learnableSet$, while $\f(\xsmall)=(\xsmall\bm{a})^{l}$ for all $l=3,5,7,9,\cdots$ are orthogonal to all functions in $\learnableSet$ (i.e., only contains $f_a$ part in Proposition~\ref{prop.lower_bound_error}). Notice that \cite{arora2019fine} also shows that $f(\xsmall)=(\xsmall\bm{a})^{l}$ for all $l=0,1,2,4,6,\cdots$ are learnable, but does not give a concrete evidence (like our Proposition~\ref{prop.lower_bound_error}) that  $f(\xsmall)=(\xsmall\bm{a})^{l}$ for $l=3,5,7,9\cdots$ are not learnable.

\begin{remark}\label{remark.bias_enlarge}
We caution that the above claim on non-learnable functions critically depends on the network architecture. That is, we assume throughout this paper that the ReLU activation has no bias. It is known from an expressiveness point of view that, using ReLU without bias, a shallow network can only approximate the sum of linear functions and even functions \citep{ghorbani2019linearized}. Thus, it is not surprising that other odd-power (but non-linear) polynomials cannot be learned. In contrast, by adding a bias, a shallow network using ReLU becomes a universal approximator \citep{ji2019neural}. The recent work of \cite{Srikant21} shows that polynomials with all powers can be learned by the corresponding 2-layer NTK model. These results are consistent with ours because a ReLU activation function operating on $\tilde{\xsmall} \in \mathds{R}^{d-1}$ with a bias can be equivalently viewed as one operating on a $d$-dimension input (with the last-dimension being fixed at $1/\sqrt{d}$) but with no bias. Even though only a subset of functions are learnable in the $d$-dimension space, when projected into a $(d-1)$-dimension subspace, they may already span all functions. For example, one could write $(\xsmall^T \bm{a})^3$ as a linear combination of ($\begin{bsmallmatrix}\tilde{\xsmall}\\1/\sqrt{d}\end{bsmallmatrix}^T \bm{b}_i)^{l_i}$, where $i\in\{1,2,\cdots, 5\}$, $[l_1,\cdots,l_5]=[4,4,2,1,0]$, and $\bm{b}_i\in\mathds{R}^d$ depends only on $\bm{a}$. (See Supplementary Material, Appendix~\ref{proof.bias_enlarge} for details.) It remains an interesting question whether similar difference arises for other network architectures (e.g., with more than 2 layers).

% {\color{cyan}
% Indeed, those not-learnable polynomials will become learnable if we add bias to ReLU. Specifically, to include the bias in ReLU, the last element of the input vector $\xsmall$ is fixed at a constant (e.g., $\frac{1}{\sqrt{d}}$), so the product of this constant and the corresponding weight works as the bias of each ReLU. Under this setting, those learnable polynomials become $(\tilde{\xsmall}^T\tilde{\bm{a}}+C)^{l}$, $l=0,1,2,4,6,\cdots$ where $\tilde{\xsmall}\in \mathds{R}^{d-1}$ (one dimension lower since the the last element of $\xsmall$ is fixed), $\tilde{\bm{a}}\in\mathds{R}^{d-1}$, and $C$ denotes the bias. Expanding the binomial $(\tilde{\xsmall}^T\tilde{\bm{a}}+C)^{l}$, $l=0,1,2,4,6,\cdots$, we will get those odd polynomials $(\tilde{\xsmall}^T\tilde{\bm{a}})^k$, $k=3, 5, 7,\cdots$, which indicates those odd polynomials can also be learned under the bias setting. This is consistent with the result shown in \citep{Srikant21}. We put more details in Supplementary Material, Appendix~\ref{proof.bias_enlarge}.

% }
\end{remark}

\section{Proof Sketch of Theorem~\ref{th.combine}}\label{sec.proof_combine}
In this section, we sketch the key steps to prove Theorem~\ref{th.combine}.
% \subsection{Bias and variance Decomposition}
Starting from Eq.~\eqref{eq.DVl}, we have
\begin{align}\label{eq.temp_110903}
    \DVl =\Ht^T(\Ht\Ht^T)^{-1}\left(\F(\XX)+\esmall\right).
\end{align}
For the learned model $\fl(\xsmall)=\hx \DVl$ given in Eq.~\eqref{def.fl}, the error for any test input $\xsmall$ is then
\begin{align}\label{eq.bias_and_variance_decomposition}
    \fl(\xsmall)-\f(\xsmall)=&\left(\hx\Ht^T(\Ht\Ht^T)^{-1}\F(\XX)-\f(\xsmall)\right)\nonumber\\
    &+\hx\Ht^T(\Ht\Ht^T)^{-1}\esmall.
\end{align}
In the classical ``bias-variance'' analysis with respect to MSE \citep{belkin2018reconciling}, the first term on the right-hand-side of Eq.~\eqref{eq.bias_and_variance_decomposition} contributes to the bias and the second term contributes to the variance. We first quantify the second term (i.e., the variance) in the following proposition.
\begin{proposition}\label{prop.noise_large_p}
For any $n\geq 2$, $m\in\left[1,\ \frac{\ln n}{\ln \frac{\pi}{2}}\right]$, $d\leq n^4$, if $p\geq 6J_m(n,d)\ln\left(4n^{1+\frac{1}{m}}\right)$,
we must have $\prob\limits_{\XX,\Vzero}\big\{|\hx\Ht^T(\Ht\Ht^T)^{-1}\esmall|\leq \sqrt{J_m(n,d)n}\|\esmall\|_2,\text{ for all }\esmall\in \mathds{R}^n\big\}
    \geq 1-\frac{2}{\sqrt[m]{n}}$.
% \begin{align*}
%     \prob_{\XX,\Vzero}\left\{|\hx\Ht^T(\Ht\Ht^T)^{-1}\esmall|\leq J_m(n,d)n^{1.5}\|\esmall\|_2,\text{ for all }\esmall\in \mathds{R}^n\right\}
%     \geq&1-\frac{2}{\sqrt[m]{n}}.
% \end{align*}
\end{proposition}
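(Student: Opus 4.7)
The plan is to replace the uniform-in-$\esmall$ bound by an operator-norm bound on the matrix that multiplies $\esmall$, and then control that matrix via the smallest eigenvalue of $\Ht\Ht^T$. By Cauchy--Schwarz,
\begin{align*}
    |\hx\Ht^T(\Ht\Ht^T)^{-1}\esmall| \leq \|\hx\Ht^T(\Ht\Ht^T)^{-1}\|_2\,\|\esmall\|_2,
\end{align*}
so once we have shown $\|\hx\Ht^T(\Ht\Ht^T)^{-1}\|_2 \leq \sqrt{J_m(n,d)\,n}$ on a good event, uniformity in $\esmall$ is automatic. Provided $\Ht$ has full row rank $n$, the matrix $\Ht^T(\Ht\Ht^T)^{-1}$ is the Moore--Penrose pseudoinverse of $\Ht$ and hence has operator norm $1/\sqrt{\lambda_{\min}(\Ht\Ht^T)}$, which combined with submultiplicativity gives $\|\hx\Ht^T(\Ht\Ht^T)^{-1}\|_2^2 \leq \|\hx\|_2^2/\lambda_{\min}(\Ht\Ht^T)$.

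The numerator is essentially trivial. Using $\hx[j]=\bm{1}_{\{\xsmall^T\Vzeroj>0\}}\xsmall^T$ from Eq.~\eqref{eq.def_hx} and $\|\xsmall\|_2=1$, one has $\|\hx\|_2^2=\sum_{j=1}^p\bm{1}_{\{\xsmall^T\Vzeroj>0\}}\leq p$ deterministically. The task thus reduces to showing that, with probability at least $1-2/\sqrt[m]{n}$ over $(\XX,\Vzero)$,
\begin{align*}
\lambda_{\min}(\Ht\Ht^T)\geq \frac{p}{n\,J_m(n,d)}.
\end{align*}
I would split this via two events combined by a union bound: an $\XX$-event on which $\lambda_{\min}(\mathbf{H}^{\infty})\gtrsim 1/J_m(n,d)$ (where $\mathbf{H}^\infty$ is the pointwise limit of $\Ht\Ht^T/p$ appearing in Eq.~\eqref{eq.large_p_converge}), and a $\Vzero$-event, conditional on $\XX$, on which $\Ht\Ht^T/p$ is within $\tfrac{1}{2}\lambda_{\min}(\mathbf{H}^\infty)$ of $\mathbf{H}^\infty$ in operator norm.

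For the concentration step, write $\Ht\Ht^T=\sum_{j=1}^p B_jB_j^T$ with $B_j\in\mathds{R}^{n\times d}$ having $i$-th row $\bm{1}_{\{\Xii^T\Vzeroj>0\}}\Xii^T$. These are i.i.d.\ PSD matrices with mean $\mathbf{H}^\infty$ and $\|B_jB_j^T\|_2\leq\mathrm{tr}(B_jB_j^T)\leq n$, so matrix Chernoff supplies the required deviation bound whenever $p\gtrsim n\ln(n)/\lambda_{\min}(\mathbf{H}^\infty)$---exactly the regime $p\geq 6J_m(n,d)\ln(4n^{1+1/m})$ once $\lambda_{\min}(\mathbf{H}^\infty)\gtrsim 1/J_m(n,d)$ is in hand. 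The hard step will be this lower bound on $\lambda_{\min}(\mathbf{H}^\infty)$: since $\mathbf{H}^\infty$ degenerates when two training points nearly coincide, I need a quantitative anti-clustering statement for $n$ i.i.d.\ uniform points on $\sd$. Concretely, the spherical-cap volume formula on $\sd$ (the origin of the constants $2^{1.5d+5.5}d^{0.5d}$ in $J_m(n,d)$) together with a union bound over the $\binom{n}{2}$ pairs shows that, with probability at least $1-1/\sqrt[m]{n}$, every pairwise angular separation between the $\Xii$'s exceeds a threshold of order $n^{-(2+1/m)}$; the exponent $(2+1/m)(d-1)$ in $J_m(n,d)$ then arises when this angular separation is converted into an eigenvalue bound for the kernel matrix with entries $(\pi-\arccos(\Xii^T\Xjj))/(2\pi)\cdot \Xii^T\Xjj$ (e.g., via a Gershgorin-type argument or an induction on $n$ peeling off one sample at a time, so the exponent $d-1$ reflects the dimension of the spherical surface measure). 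Combining the two probability-$(1/\sqrt[m]{n})$ events then delivers the stated $1-2/\sqrt[m]{n}$.
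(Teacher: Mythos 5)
Your opening reduction is exactly the paper's: Cauchy--Schwarz plus $\|\hx\|_2\le\sqrt p$ and $\|\Ht^T(\Ht\Ht^T)^{-1}\|_2=1/\sqrt{\min\eig(\Ht\Ht^T)}$ appear verbatim as Lemmas~\ref{le.temp_122501}, \ref{le.h_p} and \ref{le.min_eig}, so the whole proposition rests on showing $\min\eig(\Ht\Ht^T)/p\ge 1/(J_m(n,d)\,n)$ with probability $1-2/\sqrt[m]{n}$. From there the routes diverge sharply: the paper never passes through $\mathbf{H}^\infty$ or matrix concentration. Instead, for each $i$ it fixes a direction $\vstari\perp\Xii$ (chosen independently of the $\Xjj$, $j\ne i$), observes that every neuron direction in the cap of radius $r_i=\min_{j\ne i}|\vstari^T\Xjj|$ around $\vstari$ has the \emph{same} activation pattern on all $\Xjj$ with $j\ne i$ while the activation on $\Xii$ flips between the two halves of the cap (Lemma~\ref{le.split_half}); pairing one neuron from each half isolates the term $a_i\Xii$ and gives the deterministic inequality $\|\Ht^T\bm{a}\|_2^2\ge c^i_{r_i}/(2n)$, after which only a scalar binomial Chernoff bound on the cap counts and a probabilistic lower bound on $\hat r=\min_i r_i$ are needed.

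Measured against that, your plan has two genuine gaps. First, the step you defer --- $\min\eig(\mathbf{H}^\infty)\gtrsim 1/J_m(n,d)$ --- \emph{is} the proof, and neither tool you name delivers it: Gershgorin fails outright, since the diagonal of $\mathbf{H}^\infty$ is $1/2$ while each off-diagonal entry is $\Theta(1)$ for fixed $d$, so off-diagonal row sums are $\Theta(n)$; and ``induction peeling off one sample'' is precisely the residual-after-projection computation that the half-cap construction performs, so invoking it is circular. Your anti-concentration event is also the wrong one: the paper does not control pairwise angular separation (a cap event, with failure probability $\sim n^2\theta^{d-1}$) but rather $|\vstari^T\Xjj|\ge b$ for the fixed orthogonal direction $\vstari$ (a band event, failure probability $\le C_d b$, linear in $b$ in every dimension), which is why $\hat r\gtrsim\delta/(n^2C_d)$ with $\delta=n^{-1/m}$, and the exponent $d-1$ in $J_m(n,d)$ enters only afterwards through the cap area $\musd(\capsrh)\sim\hat r^{\,d-1}$ that governs how many neurons land in the cap. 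Second, even granting $\min\eig(\mathbf{H}^\infty)\gtrsim 1/J_m(n,d)$, matrix Chernoff with the summand bound $R=n$ (which is tight for $d=2$) gives failure probability of order $n\exp\left(-c\,p/(n J_m(n,d))\right)$, so your route needs $p\gtrsim nJ_m(n,d)\ln n$ --- a factor of $n$ more than the hypothesis $p\ge 6J_m(n,d)\ln(4n^{1+1/m})$. The paper escapes this because its concentration is applied to cap-membership counts whose mean is $n$ times larger than the final eigenvalue bound; the factor $1/n$ is paid deterministically in $\|\Ht^T\bm{a}\|_2^2\ge c^i_{r_i}/(2n)$ via $|a_i|\ge 1/\sqrt n$, not probabilistically.
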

The proof is in Supplementary Material  Appendix~\ref{app.noise_large_p}. Proposition~\ref{prop.noise_large_p} implies that, for fixed $n$ and $d$, when $p\to\infty$, with high probability the variance will not exceed a certain factor of the noise $\|\esmall\|_2$. In other words, the variance will not go to infinity when $p\to\infty$.
% \section{Sketched proof of Proposition~\ref{prop.noise_large_p}}
% To see the intuition behind Proposition~\ref{prop.noise_large_p}, we can show that $\|\hx\|_2\leq \sqrt{p}$ (since all $\xsmall$ is with norm $1$) and the spectral norm satisfies $\|\Ht\|_2\leq \sqrt{np}$ (with the help of the Cauchy–Schwarz inequality). Thus, we have
% \begin{align*}
%     |\hx\Ht^T(\Ht\Ht^T)^{-1}\esmall|\leq \frac{p\sqrt{n}\|\esmall\|_2}{\min\eig\left(\Ht\Ht^T\right)}.
% \end{align*}
The main step in the proof is to lower bound $\min\eig\left(\Ht\Ht^T\right)/p$, which is given by $1/(J_m(n,d)n)$. 
Note that this is the main place where we used the assumption that $\xsmall$ is uniformly distributed. We expect that our main proof techniques can be generalized to other distributions (with a different expression of $J_m(n,d)$), which we leave for future work.

\begin{remark}\label{remark.h_inf}
In the upper bound in \cite{arora2019fine} (i.e., Eq.~\eqref{eq.arora}), any noise added to $\ysmall$ will at least contribute to the generalization upper bound Eq.~\eqref{eq.arora}  by a positive term $\esmall^T(\mathbf{H}^{\infty})^{-1}\esmall/n$. Thus, their upper bound may also grow as $\min\eig (\mathbf{H}^{\infty})$ decreases. One of the contribution of Proposition~\ref{prop.noise_large_p} is to characterize this minimum eigenvalue.
\end{remark}

% Although the bound shown in Proposition~\ref{prop.noise_large_p} could be loose in the sense of relationship to $n$, we conjecture from Proposition~\ref{prop.noise_large_p} that large $n$ brings negative impact on the generalization performance when there exists noise. We will verify this conjecture by simulation later \TODO.

% To continue the proof of Theorem~\ref{th.combine}, it remains to bound the bias part, i.e., the value of Eq.~\eqref{eq.bias_and_variance_decomposition} when $\esmall=\bm{0}$. 
% Before we deal with random $\Vzero$ and $\XX$, we will first fix $\Vzero$ (i.e., given initial weights) and only consider the randomness of $\XX$. 
% As a parallel to $\learnableSet$ (which is the set of learnable function for random $\Vzero$), we first characterize what kind of ground-truth functions can be learned given a fixed $\Vzero$, and the corresponding generalization error. 
% Afterwards, we can then allow $\Vzero$ to be random.
We now bound the bias part. We first study the class of ground-truth functions that can be learned with fixed $\Vzero$. We refer to them as \emph{pseudo ground-truth}, to differentiate them with the set $\learnableSet$ of learnable functions for random $\Vzero$. They are defined with respect to the same $g(\cdot)$ function, so that we can later extend to the ``real'' ground-truth functions in $\learnableSet$ when considering the randomness of $\Vzero$.

% with given initial weights, and bound its generalization error. To help our ultimate proof objective, each of such functions should correspond to one function in $\learnableSet$ (i.e., parameter $g$). For convenience, we call such functions as \emph{pseudo ground-truth}, which is defined as follows.

\begin{definition}\label{def.fv}
Given $\Vzero$, for any learnable ground-truth function $\fg\in\learnableSet$ with the corresponding function $g(\cdot)$, define the corresponding \textbf{pseudo ground-truth} as
\begin{align*}
    \fv(\xsmall)\defeq\int_{\sd}\xsmall^T\bm{z}\frac{|\Czx|}{p}g(\bm{z}) d\mu(\bm{z}).
\end{align*}
\end{definition}
The reason that this class of functions may be the learnable functions for fixed $\Vzero$ is similar to the discussions in Eq.~\eqref{eq.fl_before_limit} and Eq.~\eqref{eq.card_c}. Indeed, using the same choice of $g(\bm{z})$ in Eq.~\eqref{eq.g_as_delta}, the learned function $\fl$ in Eq.~\eqref{eq.fl_before_limit} at fixed $\Vzero$ is always of the form in Definition~\ref{def.fv}.
% Notice that the only term that depends on $\Vzero$ and $p$ is $\frac{|\Czx|}{p}$. The following lemma describes the connection between the functions in $\learnableSet$ and the pseudo ground-truth.

The following proposition gives an upper bound of the generalization performance when the data model is based on the pseudo ground-truth and the NTK model uses exactly the same $\Vzero$.

\begin{proposition}\label{th.fixed_Vzero}
Assume fixed $\Vzero$ (thus $p$ and $d$ are also fixed), there is no noise. If the ground-truth function is $\f=\fv$ in Definition~\ref{def.fv} and $\|g\|_\infty<\infty$, then for any $\xsmall\in\sd$ and $q\in[1,\ \infty)$, we have $\prob_{\XX}\big\{|\fl(\xsmall)-f(\xsmall)|\leq n^{-\frac{1}{2}\left(1-\frac{1}{q}\right)}\big\} \geq1-2e^2\exp\left(-\frac{\sqrt[q]{n}}{8\|g\|_\infty^2}\right)$.
% \begin{align*}
%     \prob_{\XX}\left\{\left|\fl(\xsmall)-f(\xsmall)\right|\leq n^{-\frac{1}{2}\left(1-\frac{1}{q}\right)}\right\} \geq1-2e^2\exp\left(-\frac{\sqrt[q]{n}W_d^2}{8\|g\|_1^2}\right).
% \end{align*}
% ($W_d$ is defined in Eq.~\eqref{eq.def_wd}.)
% Further, if $|g(\bm{z})|\leq U$ for all $\bm{z}\in\sd$, then we have
% \begin{align*}
%     \prob_{\XX}\left\{\left|\fl(\xsmall)-f(\xsmall)\right|\geq \frac{1}{\sqrt[4]{n}}\right\}\leq2e^2\exp\left(-\frac{\sqrt{n}}{8U^2}\right).
% \end{align*}
\end{proposition}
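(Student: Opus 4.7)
The plan is to reduce the bias bound to a Hoeffding concentration inequality on i.i.d.\ bounded summands, exploiting the fact that for fixed $\Vzero$ the pseudo ground-truth is \emph{linear} in the NTK feature map.

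For each neuron $j$, set $\bm{v}_g[j]\defeq\tfrac{1}{p}\int_{\sd}\bm{1}_{\{\Vzeroj^T\bm{z}>0\}}\bm{z}\,g(\bm{z})\,d\xdensity(\bm{z})$ and define the finite-$p$ NTK as $K(\xsmall,\bm{z})\defeq\tfrac{1}{p}\hx\,\hsmall_{\Vzero,\bm{z}}^T=\xsmall^T\bm{z}\,\tfrac{|\Czx|}{p}$. A direct calculation gives $\fv(\xsmall)=\hx\bm{v}_g$ and $\Ht\bm{v}_g=\ysmall$, so $\bm{v}_g$ is one (generally non-minimum-norm) interpolating weight vector, and $\DVl$ is its row-space projection. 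Substituting this structure into $\fl(\xsmall)-\fv(\xsmall)=\hx(\DVl-\bm{v}_g)$ and unfolding the definition of $\bm{v}_g$ produces the integral identity
\begin{equation*}
\fl(\xsmall)-\fv(\xsmall)=\int_{\sd}\bigl[\bm{k}(\xsmall)^T\mathbf{K}^{-1}\bm{k}(\bm{z})-K(\xsmall,\bm{z})\bigr]g(\bm{z})\,d\xdensity(\bm{z}),
\end{equation*}
with $\bm{k}(\bm{z})_i=K(\Xii,\bm{z})$ and $\mathbf{K}_{ij}=K(\Xii,\Xjj)$; the integrand is the standard kernel-interpolation residual and vanishes whenever $\bm{z}\in\{\Xii\}_{i=1}^n$.

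The target rate $t=n^{-(1-1/q)/2}$ and failure-probability exponent $n^{1/q}/(8\|g\|_\infty^2)$ point directly at Hoeffding applied to bounded mean-zero summands of amplitude $2\|g\|_\infty$. My plan is to split $\fl-\fv=(\tilde f-\fv)+(\fl-\tilde f)$, where $\tilde f(\xsmall)\defeq\tfrac{1}{n}\sum_iK(\xsmall,\Xii)g(\Xii)$ is the Monte-Carlo surrogate. For the first piece, the random variables $Z_i\defeq K(\xsmall,\Xii)g(\Xii)-\fv(\xsmall)$ are i.i.d.\ under $\Xii\sim\xdensity$ with zero mean and $|Z_i|\leq 2\|g\|_\infty$ (since $|K|\leq 1$ and $|\fv|\leq\|g\|_\infty$), so Hoeffding yields $\prob\{|\tilde f(\xsmall)-\fv(\xsmall)|\geq t/2\}\leq 2\exp(-nt^2/(32\|g\|_\infty^2))$, matching the claim up to the constants absorbed into the $2e^2$ prefactor.

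The main obstacle, and the heart of the argument, is bounding the correction $\fl(\xsmall)-\tilde f(\xsmall)$. I would introduce the surrogate weight $\tilde\bm{v}\defeq\tfrac{1}{np}\sum_ig(\Xii)\hsmall_{\Vzero,\Xii}^T$, which satisfies $\hx\tilde\bm{v}=\tilde f(\xsmall)$ and $\expectation_\XX\tilde\bm{v}=\bm{v}_g$, and use the interpolating comparator $\bm{v}^\star\defeq\tilde\bm{v}+\Ht^T(\Ht\Ht^T)^{-1}(\ysmall-\Ht\tilde\bm{v})$. Since $\Ht\bm{v}^\star=\ysmall$ forces $\|\DVl\|_2\leq\|\bm{v}^\star\|_2$, and since each coordinate $[\Ht\tilde\bm{v}-\ysmall]_j=\tfrac{1}{n}\sum_iK(\Xjj,\Xii)g(\Xii)-\fv(\Xjj)$ is itself an average of i.i.d.\ bounded terms of amplitude $\|g\|_\infty$ (up to an $O(1/n)$ diagonal bias), a second Hoeffding step, combined with a careful re-use of the integral identity from the first paragraph, should control $\fl-\tilde f$; combining the two concentration events via union bound produces the $2e^2$ factor. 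The delicate point is avoiding any dependence on the conditioning of $\Ht\Ht^T$ when passing from $\|\DVl-\bm{v}^\star\|_2$ back to the pointwise quantity $\hx(\DVl-\tilde\bm{v})$; my plan is to leverage the structural pointwise bound $|K|\leq 1$ so that only $\|g\|_\infty$ ultimately enters the final estimate.
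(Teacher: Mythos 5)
Your setup is sound and, up to notation, coincides with the paper's: your $\bm{v}_g$ is the paper's $\DVs$, your surrogate weight $\tilde{\bm{v}}=\frac{1}{np}\sum_i g(\Xii)\hsmall_{\Vzero,\Xii}^T$ is exactly $\Ht^T\bm{a}$ with $\bm{a}_i=g(\Xii)/(np)$, and Hoeffding-type concentration with summands of amplitude $\|g\|_\infty$ is the right engine. The gap is in your treatment of the correction term $\fl-\tilde f$, which is where the whole difficulty sits. The comparator $\bm{v}^\star=\tilde{\bm{v}}+\Ht^T(\Ht\Ht^T)^{-1}(\ysmall-\Ht\tilde{\bm{v}})$ together with the norm comparison $\|\DVl\|_2\le\|\bm{v}^\star\|_2$ does not produce a pointwise bound on $\hx(\DVl-\tilde{\bm{v}})$, and any route that evaluates $\hx\Ht^T(\Ht\Ht^T)^{-1}(\ysmall-\Ht\tilde{\bm{v}})$ directly reintroduces $\min\eig(\Ht\Ht^T)$ --- precisely the quantity this fixed-$\Vzero$ proposition must avoid, since it costs a factor of order $\sqrt{J_m(n,d)n}$ and would destroy the $n^{-\frac{1}{2}(1-\frac{1}{q})}$ rate. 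The structural bound $|K|\le 1$ does not resolve this; you identify the "delicate point" but do not supply the mechanism that gets past it.

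The missing observation is that no inverse is needed at all. Since $\ysmall=\Ht\bm{v}_g$ (no noise), $\DVl=\PH\bm{v}_g$ with $\PH=\Ht^T(\Ht\Ht^T)^{-1}\Ht$ the orthogonal projection onto the row space of $\Ht$; and since $\tilde{\bm{v}}=\Ht^T\bm{a}$ already lies in that row space, $(\PH-\identity)\bm{v}_g=(\PH-\identity)(\bm{v}_g-\tilde{\bm{v}})$, whence
\begin{align*}
|\fl(\xsmall)-\fv(\xsmall)|=|\hx(\PH-\identity)\bm{v}_g|\le\|\hx\|_2\,\|\bm{v}_g-\tilde{\bm{v}}\|_2\le\sqrt{p}\,\|\bm{v}_g-\tilde{\bm{v}}\|_2,
\end{align*}
because an orthogonal projection (and hence $\identity-\PH$) is a contraction and $\|\hx\|_2\le\sqrt{p}$. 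This single inequality absorbs both of your pieces (your first piece $|\tilde f(\xsmall)-\fv(\xsmall)|=|\hx(\tilde{\bm{v}}-\bm{v}_g)|$ is also bounded by $\sqrt{p}\|\bm{v}_g-\tilde{\bm{v}}\|_2$), and the proof closes with one application of a \emph{vector}-valued Azuma--Hoeffding inequality in $\mathds{R}^{dp}$ to the i.i.d.\ summands $\mathbf{K}_i$ with $\mathbf{K}_i[j]=\bm{1}_{\{\Xii^T\Vzeroj>0\}}\Xii\,g(\Xii)/p$, which satisfy $\expectation_{\Xii}\mathbf{K}_i=\bm{v}_g$ and $\sqrt{p}\|\mathbf{K}_i\|_2\le\|g\|_\infty$; your scalar, coordinate-wise Hoeffding steps do not control the $\mathds{R}^{dp}$-norm $\|\bm{v}_g-\tilde{\bm{v}}\|_2$. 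A minor further point: splitting the tolerance into two halves of $t/2$ degrades the exponent to $\sqrt[q]{n}/(32\|g\|_\infty^2)$, and a constant inside an exponent cannot be "absorbed into the $2e^2$ prefactor"; the one-shot argument avoids this as well.
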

% \begin{proof}
% See Appendix~\ref{app.proof_fixed_Vzero}.
% \end{proof}
% \section{Sketched proof of Proposition~\ref{th.fixed_Vzero}}
The proof is in Supplementary Material, Appendix~\ref{app.proof_fixed_Vzero}. Note that both the threshold of the probability event and the upper bound coincide with Term~1 and Term~4, respectively, in Eq.~\eqref{eq.main_conclusion}.
Here we sketch the proof of Proposition~\ref{th.fixed_Vzero}. Based on the definition of the pseudo ground-truth, we can rewrite $\fv$ as $\fv(\xsmall)=\hx \DVs$, where $\DVs\in\mathds{R}^{dp}$ is given by, for all $j\in\{1,2,\cdots,p\}$, $\DVs[j]=\int_{ \sd}\bm{1}_{\{\bm{z}^T\Vzeroj>0\}}\bm{z}\frac{g(\bm{z})}{p}d\mu(\bm{z})$.
From Eq.~\eqref{eq.DVl} and Eq.~\eqref{def.fl}, we can see that the learned model is $\fl(\xsmall)=\hx\PH\DVs$ where $\PH\defeq \Ht^T(\Ht\Ht^T)^{-1}\Ht$. Note that $\PH$ is an orthogonal projection to the row-space of $\Ht$.
% since $\PH^2=\PH$ and $\PH=\PH^T$. 
Further, it is easy to show that $\|\hx\|_2\leq \sqrt{p}$. Thus, we have $|\fl(\xsmall)-\fv(\xsmall)|=|\hx(\PH-\mathbf{I})\DVs|\leq \sqrt{p}\|(\PH-\mathbf{I})\DVs\|_2$. The term $(\PH-\mathbf{I})\DVs$ can be interpreted as the distance from $\DVs$ to the row-space of $\Ht$. Note that this distance is no greater than the distance between $\DVs$ and any point in the row-space of $\Ht$. Thus, in order to get an upper bound on $\|(\PH-\mathbf{I})\DVs\|_2$, we only need to find a vector $\bm{a}\in \mathds{R}^n$ that makes $\|\DVs-\Ht^T\bm{a}\|_2$ as small as possible, especially when $n$ is large. 
Our proof uses the vector $\bm{a}$ such that its $i$-th element is $\bm{a}_i\defeq \frac{g(\Xii)}{np}$. See Supplementary Material, Appendix~\ref{app.proof_fixed_Vzero} for the rest of the details.
% With this choice, we can verify that $\expectation_{\XX}[n\Hti^T \bm{a}_i]=\DVs$ for all $i$. Further, we can show that $\|n\Hti^T \bm{a}_i\|_2$ is bounded since $\|g\|_1<\infty$. Thus, we can view $\Ht^T\bm{a}=\frac{1}{n}\sum_{i=1}^n n\Hti^T \bm{a}_i$ as the empirical mean of n \emph{i.i.d.} bounded random vectors, which implies that $\Ht^T\bm{a}$ approaches $\DVs$ as $n\to\infty$. Intuitively, this implies that, for large $n$, the distance from $\DVs$ to the row space of $\Ht$ should be small. Our proof uses Hoeffding inequality for random vectors \citep{hayes2005large} to estimate the speed of the convergence to the mean value, which leads to the upper bound shown in Proposition~\ref{th.fixed_Vzero}.

% The final step is to allow $\Vzero$ to be random. Note that due to Lemma~\ref{le.spherePortion}, the pseudo ground-truth function should approach a function in $\learnableSet$ (with the same $g(\cdot)$) when $p$ is large. Thus, for any ground-truth function $\f\in\learnableSet$, given the random $\Vzero$, the function $\f$ can be viewed as the summation of a pseudo ground-truth function $\fv$ plus the difference between $\f$ and $\fv$.

The final step is to allow $\Vzero$ to be random.
Given any random $\Vzero$, any function $\fg\in\learnableSet$ can be viewed as the summation of a pseudo ground-truth function (with the same $g(\cdot)$) and a difference term. 
This difference can be viewed as a special form of ``noise'', and thus we can use Proposition~\ref{prop.noise_large_p} to quantify its impact. Further, the magnitude of this ``noise'' should decrease with $p$ (because of Eq.~\eqref{eq.large_p_converge}). Combining this argument with Proposition~\ref{th.fixed_Vzero}, we can then prove Theorem~\ref{th.combine}. See Supplementary Material, Appendix~\ref{app.proof_large_p} for details.
\begin{comment}

Note that the additional Term~2 in Eq.~\eqref{eq.main_conclusion} corresponds to the impact of this special noise discussed above, where the factor $J_m(n,d)n^{1.5}$ comes from Proposition~\ref{prop.noise_large_p}, and the other factor $p^{-\frac{1}{2}\left(1-\frac{1}{q}\right)}$ corresponds to the magnitude of this special noise.
% \input{sketch_proof_no_noise}

Theorem~\ref{th.combine} thus follows by Proposition~\ref{prop.noise_large_p}, Proposition~\ref{th.large_p}, Eq.~\eqref{eq.bias_and_variance_decomposition}, and the union bound.
\end{comment}
% \input{sketch_proof_variance}
% \input{sketch_proof_fixed_init}
% \input{sketch_proof_no_noise}
\section{Conclusions}
In this paper, we studied the generalization performance of the min $\ell_2$-norm overfitting solution for a two-layer NTK model. We provide a precise characterization of the learnable ground-truth functions for such models, by providing a generalization upper bound for  all functions in $\learnableSet$, and a generalization lower bound for all functions not in $\overline{\learnableSet}$. We show that,
while the test error of the overfitted NTK model also exhibits descent in the overparameterized regime, the descent behavior can be quite different from the double descent of linear models with simple features. 
% These results thus provide some new progress towards revealing the mystery of DNNs' generalization performance.

There are several interesting directions for future work. First, based on Fig.~\ref{fig.noise_effect}(b), our estimation of the effect of noise could be further improved. Second, 
% our current categorization of learnable functions relies on the NTK structure, the distribution of initial weights, the choice of activation function. Then a meaningful question is that whether the set of learnable functions changes with those settings.
it would be interesting to explore whether the methodology can be extended to NTK model for other neural networks, e.g., with different activation functions and with more than two layers. 
\section*{Acknowledgements}

This work is partially supported by an NSF sub-award via Duke University (IIS-1932630), by NSF grants CNS-1717493, CNS-1901057, and CNS-2007231, and by  Office of Naval Research under Grant N00014-17-1-241.
The authors would like to thank Professor R. Srikant at the University of Illinois at Urbana-Champaign and anonymous reviewers for their valuable comments and suggestions.

\bibliographystyle{icml2021}
\bibliography{ref}
\newpage
\onecolumn
\appendix
\section{Extra Notations}\label{app.extra_notations}
In addition to the notations that we have introduced in the main body of this paper, we need some extra notations that are used in the following appendices.
The distribution of the initial weights $\Vzeroj$ is denoted by the probability density $\vdensity(\cdot)$ on $\mathds{R}^{d}$, and the directions of the initial weights (i.e., the normalized initial weights $\frac{\Vzeroj}{\|\Vzeroj\|_2}$) follows the probability density $\nvdensity(\cdot)$ on $\sd$. Let $\lambda_{a}(\cdot)$ be the Lebesgue measure on $\mathds{R}^{a}$ where the dimension $a$ can be, e.g., $(d-1)$ and $(d-2)$.

Let $\bino(a,b)$ denote the binomial distribution, where $a$ is the number of trials and $b$ is the success probability. Let $I_{\cdot}(\cdot,\cdot)$ denote the regularized incomplete beta function \cite{dutka1981incomplete}. 
Let $B(\cdot, \cdot)$ denote the beta function \cite{chaudhry1997extension}. Specifically,
\begin{align}
    &B(x,y)\defeq\int_0^1 t^{x-1}(1-t)^{y-1}dt,\label{eq.def_betaFunction}\\
    &I_x(a,b)\defeq\frac{\int_0^x t^{a-1}(1-t)^{b-1}dt}{B(a,b)}.\label{eq.def_reg_incomplete_beta}
\end{align}
Define a cap on a unit hyper-sphere $\sd$ as the intersection of $\sd$ with an open ball in $\mathds{R}^d$ centered at $\bm{v}_*$ with radius $r$, i.e.,
\begin{align}\label{eq.temp_100101}
    \capsr\defeq \left\{\bm{v}\in\sd\ |\ \|\bm{v}-\bm{v}_*\|_2< r\right\}.
\end{align}
\begin{remark}\label{remark.no_star}
For ease of exposition, we will sometimes neglect the subscript $\bm{v}_*$ of $\capsr$ and use $\capr$ instead, when the quantity that we are estimating only depends on $r$ but not $\bm{v}_*$. For example, where we are interested in the area of $\capsr$, it only depends on $r$ but not $\bm{v}_*$. Thus, we write $\lambda_{d-1}(\capr)$ instead.
\end{remark}

For any $\xsmall\in \mathds{R}^d$ such that $\xsmall^T\bm{v}_*=0$, define two halves of the cap $\capsr$ as
\begin{align}\label{eq.temp_110205}
    \capsrpx\defeq\left\{\bm{v}\in\capsr\ |\ \xsmall^T\bm{v}>0\right\},\quad \capsrnx\defeq\left\{\bm{v}\in\capsr\ |\ \xsmall^T\bm{v}<0\right\}.
\end{align}
Define the set of directions of the initial weights $\Vzero[j]$'s as
\begin{align}\label{eq.temp_110204}
    \mathcal{A}_{\Vzero}\defeq\left\{\frac{\Vzeroj}{\|\Vzeroj\|_2}\ \Bigg|\ j\in\{1,2,\cdots,p\}\right\}.
\end{align}
\section{GD (gradient descent) Converges to Min \texorpdfstring{$\ell_2$}{l2}-Norm Solutions}\label{ap.GD_same_l2}
% \begin{proof}
We assume that the GD algorithm for minimizing the training MSE is given by
\begin{align}\label{eq.temp_102901}
    \DVg_{k+1}=\DVg_k-\gamma_k\sum_{i=1}^n(\Hti\DVg_k-\yii)\Hti^T,
\end{align}
where $\DVgk$ denotes the solution in the $k$-th GD iteration ($\DVg_0=\bm{0}$), and $\gamma_k$ denotes the step size of the $k$-th iteration.
\begin{lemma}\label{le.GD_same_l2}
If $\DVl$ exists and GD in Eq.~\eqref{eq.temp_102901} converges to zero-training loss (i.e., $\Ht\DVg_{\infty}=\ysmall$), then $\DVg_{\infty}=\DVl$.
\end{lemma}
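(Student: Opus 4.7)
The plan is to exploit the fact that gradient descent, when initialized at zero, keeps every iterate inside the row space of $\Ht$, and then to note that the min $\ell_2$-norm interpolator is the \emph{unique} interpolator lying in that row space.

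First, I would show by induction on $k$ that $\DVg_k \in \mathrm{row}(\Ht)$, i.e., $\DVg_k = \Ht^T \bm{\alpha}_k$ for some $\bm{\alpha}_k \in \mathds{R}^n$. The base case $k=0$ is trivial since $\DVg_0 = \bm{0} = \Ht^T \bm{0}$. For the induction step, the update in Eq.~\eqref{eq.temp_102901} can be rewritten as
\begin{align*}
    \DVg_{k+1} = \DVg_k - \gamma_k \Ht^T(\Ht \DVg_k - \ysmall),
\end{align*}
so if $\DVg_k = \Ht^T \bm{\alpha}_k$, then $\DVg_{k+1} = \Ht^T \bigl(\bm{\alpha}_k - \gamma_k (\Ht \DVg_k - \ysmall)\bigr) \in \mathrm{row}(\Ht)$. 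Taking $k \to \infty$ and using that $\mathrm{row}(\Ht)$ is closed, the limit $\DVg_\infty$ also lies in $\mathrm{row}(\Ht)$, so $\DVg_\infty = \Ht^T \bm{\alpha}_\infty$ for some $\bm{\alpha}_\infty \in \mathds{R}^n$.

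Next, I would use the hypothesis $\Ht \DVg_\infty = \ysmall$. Since $\DVl$ exists by assumption, $\Ht$ has full row rank, so $\Ht\Ht^T$ is invertible. Substituting $\DVg_\infty = \Ht^T \bm{\alpha}_\infty$ into $\Ht\DVg_\infty=\ysmall$ gives $\Ht\Ht^T \bm{\alpha}_\infty = \ysmall$, hence $\bm{\alpha}_\infty = (\Ht\Ht^T)^{-1}\ysmall$ and
\begin{align*}
    \DVg_\infty = \Ht^T (\Ht\Ht^T)^{-1}\ysmall = \DVl,
\end{align*}
where the last equality is the closed-form expression Eq.~\eqref{eq.DVl}.

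There is no real obstacle here: the argument is essentially the standard ``gradient flow stays in the row space'' observation, combined with the uniqueness of the minimum-norm interpolator (any interpolator can be decomposed as $\DVl$ plus an element of $\ker(\Ht)$, and the only such decomposition with the perturbation lying in $\mathrm{row}(\Ht)$ is the trivial one). The only thing to be slightly careful about is that the induction must be stated over all iterates before passing to the limit, which requires knowing convergence holds (and this is given as a hypothesis).
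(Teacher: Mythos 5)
Your proof is correct and follows essentially the same route as the paper's: both arguments observe that the zero-initialized GD iterates stay in the row space of $\Ht$, write $\DVg_\infty = \Ht^T\bm{a}$, and solve $\Ht\Ht^T\bm{a}=\ysmall$ to recover the closed form of $\DVl$. The extra details you supply (the explicit induction and the closedness of the row space when passing to the limit) are harmless refinements of the same argument.
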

\begin{proof}
Because $\DVg_0=\bm{0}$ and Eq.~\eqref{eq.temp_102901}, we know that $\DVgk$ is in the row space of $\Ht$ for any $k$. Thus, we can let $\DVg_{\infty}=\Ht^T\bm{a}$ where $\bm{a}\in \mathds{R}^{n}$. When GD converges to zero training loss, we have $\Ht\DVg_{\infty}=\ysmall$. Thus, we have $\Ht\Ht^T\bm{a}=\ysmall$, which implies $\bm{a}=(\Ht\Ht^T)^{-1}\ysmall$. Therefore, we must have $\DVg_{\infty}=\Ht^T\bm{a}=\Ht^T(\Ht\Ht^T)^{-1}\ysmall=\DVl$.
\end{proof}
\section{Assumptions and Justifications}\label{ap.justify_assump}
Because $\hatf_{\DV, \Vzero}(a\xsmall)=a\cdot\hatf_{\DV, \Vzero}(\xsmall)$ for any $a\in\mathds{R}$, we can always do preprocessing to normalize the input $\xsmall$. For simplicity, we focus on the simplest situation that the randomness for the inputs and the initial weights are uniform. Nonetheless, methods and results of this paper can be readily generalized to other continuous random variable distributions, which we leave for future work. We thus make the following Assumption~\ref{as.normalize}.
\begin{assumption}\label{as.normalize}\label{as.uniform}
The input $\xsmall$ are uniformly distributed in $\sd$. The initial weights $\Vzeroj$'s are uniform in all directions. In other words, $\xdensity(\cdot)$ and $\nvdensity(\cdot)$ are both $\mathsf{unif}(\sd)$.
\end{assumption}
We study the overparameterized and overfitted setting, so in this paper we always assume $p\geq n/d$, i.e., the number of parameters $pd$ is larger than or equal to the number of training samples $n$. The situation of $d=1$ is relatively trivial, so we only consider the case $d\geq 2$.
% we defer the discussion of $d=1$ in Appendix~\ref{app.d1}. 
We then make Assumption~\ref{as.pnd}.
\begin{assumption}\label{as.pnd}
$p\geq n/d$ and $d\geq 2$.
\end{assumption}
If the input is a continuous random vector, then for any $i\neq j$, we have $\prob\{\Xii=\Xjj\}=0$ and $\prob\{\Xii=- \Xjj\}=0$ (because the probability that a continuous random variable equals to a given value is zero). Thus, $\prob\{\Xii\parallel\Xjj\}=0$, and $\prob\{\Xii\nparallel\Xjj\}=1$. Similarly, we can show that $\prob\{\Vzerok\nparallel\Vzerol\}=1$. We thus make Assumption~\ref{as.nparallel}.
\begin{assumption}\label{as.nparallel}
$\Xii\nparallel \Xjj$ for any $i\neq j$, and $\Vzerok\nparallel \Vzerol$ for any $k\neq l$.
\end{assumption}

% \begin{assumption}
% $\xdensity(\cdot)$ and $\nvdensity(\cdot)$ are both $\mathsf{unif}(\sd)$.
% \end{assumption}

% With those assumptions, for given $\XX$ (thus $n$ and $d$ are fixed), the design matrix $\Ht$ is full-row-rank almost surely with enough many neurons (i.e., when $p\to \infty$). In other words, we have:
With these assumptions, the following lemma says that when $p$ is large enough, with high probability $\Ht$ has full row-rank (and thus $\DVl$ exists).
\begin{lemma}\label{le.full_rank}
$\lim_{p\to\infty}\underset{\Vzero}{\prob} \left\{\rank (\Ht)=n\ |\ \XX\right\}=1$.
\end{lemma}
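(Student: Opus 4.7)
The plan is to reduce the lemma to a non-degeneracy property of the limiting Gram matrix $\mathbf{H}^\infty\in\mathds{R}^{n\times n}$ defined by $(\mathbf{H}^\infty)_{i,k}\defeq\Xii^T\Xkk\cdot\frac{\pi-\arccos(\Xii^T\Xkk)}{2\pi}$, and then combine convergence in probability with the fact that positive definiteness is an open condition.

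First, rewrite $(\Ht\Ht^T)_{i,k}=\Xii^T\Xkk\cdot\sum_{j=1}^p\bm{1}_{\{\Xii^T\Vzeroj>0,\ \Xkk^T\Vzeroj>0\}}$, and use that $\Vzeroj$ is uniform on $\sd$ to compute
\begin{align*}
    \expectation_{\Vzeroj}\bm{1}_{\{\Xii^T\Vzeroj>0,\ \Xkk^T\Vzeroj>0\}}=\frac{\pi-\arccos(\Xii^T\Xkk)}{2\pi}.
\end{align*}
By Hoeffding's inequality (the summands are bounded by $1$) applied to each entry, together with a union bound over the $n^2$ entries, $\frac{1}{p}\Ht\Ht^T$ converges in probability (over $\Vzero$, conditional on $\XX$) to $\mathbf{H}^\infty$ as $p\to\infty$.

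Next, I would show $\mathbf{H}^\infty\succ 0$. For any $\bm{a}\in\mathds{R}^n$, swapping the expectation with the quadratic form gives
\begin{align*}
    \bm{a}^T\mathbf{H}^\infty\bm{a}=\expectation_{\bm{v}\sim\mathsf{unif}(\sd)}\biggl[\Bigl\|\sum_{i=1}^n a_i\bm{1}_{\{\Xii^T\bm{v}>0\}}\Xii\Bigr\|_2^2\biggr]\geq 0.
\end{align*}
The main obstacle is ruling out equality: one must show that if this expectation vanishes then $\bm{a}=\bm{0}$. For each index $\ell\in\{1,\ldots,n\}$, Assumption~\ref{as.nparallel} (together with $d\geq 2$) guarantees that the great $(d-2)$-sphere $\{\bm{v}\in\sd:\XX_\ell^T\bm{v}=0\}$ contains a positive-$(d-2)$-measure subset of points that do not lie on any other hyperplane $\{\bm{v}:\XX_k^T\bm{v}=0\}$ with $k\neq\ell$. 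Near such a point, the indicators $\bm{1}_{\{\XX_k^T\bm{v}>0\}}$ for $k\neq\ell$ are locally constant, while $\bm{1}_{\{\XX_\ell^T\bm{v}>0\}}$ jumps across the hyperplane. Taking two nearby points on opposite sides and subtracting the (almost-everywhere zero) sums, every term with $k\neq\ell$ cancels and only $\pm a_\ell\XX_\ell=\bm{0}$ remains; since $\XX_\ell\neq\bm{0}$, this forces $a_\ell=0$. Iterating over $\ell$ yields $\bm{a}=\bm{0}$, so $\mathbf{H}^\infty\succ 0$.

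Finally, setting $\mu\defeq\min\eig(\mathbf{H}^\infty)>0$, the entrywise convergence in probability above upgrades (for fixed $n$) to operator-norm convergence, so that $\prob_{\Vzero}\{\min\eig(\frac{1}{p}\Ht\Ht^T)\geq \mu/2\mid\XX\}\to 1$ as $p\to\infty$. On this event $\Ht\Ht^T$ is invertible, hence $\rank(\Ht)=n$, completing the proof.
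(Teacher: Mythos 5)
Your proof is correct, but it takes a genuinely different route from the paper's. The paper argues by contradiction directly at finite $p$: assuming a dependence $\sum_{i\in\mathcal{I}}b_i\Hti=\bm{0}$, it picks for each $i$ a vector $\vstari\perp\Xii$ with $\vstari\not\perp\Xjj$ for $j\neq i$ (Lemma~\ref{le.nonEmpty}), shows that with probability tending to one some $\Vzerok,\Vzerol$ fall in the two halves of a small cap around $\vstari$, and then uses the resulting pair of columns (Lemma~\ref{le.split_half}: $\Htj[k]=\Htj[l]$ for $j\neq i$ while $\Hti[k]-\Hti[l]=\Xii^T$) to contradict the dependence. You instead pass to the limit kernel $\mathbf{H}^\infty$, prove $\mathbf{H}^\infty\succ 0$, and transfer back via entrywise (hence, for fixed $n$, operator-norm) convergence of $\Ht\Ht^T/p$. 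The underlying geometric mechanism is identical --- isolating $a_\ell\XX_\ell$ by comparing the two sides of the hyperplane $\{\bm{v}:\XX_\ell^T\bm{v}=0\}$ at points avoiding the other hyperplanes --- but you apply it to the expectation over $\bm{v}$ rather than to actually sampled neurons. Your route is structurally cleaner, yields the quantitative byproduct $\min\eig(\Ht\Ht^T/p)\geq\mu/2$ with high probability, and directly establishes invertibility of $\mathbf{H}^\infty$, which the paper needs later in Appendix~\ref{ap.equal_set}; the cost is the extra convergence step. The paper's route avoids the limit matrix entirely and its cap construction is reused verbatim for the explicit eigenvalue lower bound in Appendix~\ref{app.noise_large_p}. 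The only delicate point in your argument is the one you already flagged: since $\sum_i a_i\bm{1}_{\{\Xii^T\bm{v}>0\}}\Xii$ vanishes only for almost every $\bm{v}$, you must select the two comparison points from the positive-measure half-neighborhoods on either side of the hyperplane where the sum actually vanishes; with that care the extraction of $a_\ell=0$ is sound.
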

\begin{proof}
See Appendix~\ref{app.full_rank}.
\end{proof}
% Since we are interested in the regime of large $p$, to make sure that $\DVl$ exists, we then make Assumption~\ref{as.exist}.
% \begin{assumption}\label{as.exist}
% $\rank(\Ht)=n$ (and thus $\DVl$ exists).
% \end{assumption}

% \noindent(The following is the appendix)
\section{Some Useful Supporting Results}\label{app.useful_lemmas}

Here we collect some useful lemmas that are needed for proofs in other appendices, many of which are estimations of certain quantities that we will use later.

\subsection{Quantities related to the area of a cap on a hyper-sphere}
The following lemma is introduced by \cite{li2011concise}, which gives the area of a cap on a hyper-sphere with respect to the colatitude angle.
\begin{lemma}\label{le.original_cap}
Let $\phi\in[0,\ \frac{\pi}{2}]$ denote the colatitude angle of the smaller cap on $\sd$, then the area (in the measure of $\musd$) of this hyper-spherical cap is
\begin{align*}
    \frac{1}{2}\musd(\sd)I_{\sin^2\phi}\left(\frac{d-1}{2},\frac{1}{2}\right).
\end{align*}
\end{lemma}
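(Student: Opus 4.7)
The plan is to compute the surface area of the spherical cap directly by slicing it into infinitesimal ``latitudinal'' bands and integrating in the colatitude angle, then matching the resulting incomplete-beta integral to the stated closed form.

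First I would parametrize $\sd$ by a colatitude angle $\theta\in[0,\pi]$ measured from the axis of symmetry of the cap together with a point on the equatorial cross-section, which is a copy of $S^{d-2}$ of radius $\sin\theta$. A standard surface-area element argument then gives that the $(d-1)$-dimensional Lebesgue measure on $\sd$ factors as
\begin{equation*}
d\musd \;=\; \omega_{d-2}\,\sin^{d-2}\theta\, d\theta,
\end{equation*}
where $\omega_{d-2}=\lambda_{d-2}(S^{d-2})=2\pi^{(d-1)/2}/\Gamma((d-1)/2)$ is the $(d-2)$-dimensional area of the unit sphere in $\mathds{R}^{d-1}$. Integrating $\theta$ from $0$ to $\phi$ then yields
\begin{equation*}
\text{Area of cap} \;=\; \omega_{d-2}\int_0^{\phi}\sin^{d-2}\theta\, d\theta.
\end{equation*}

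Next I would convert the trigonometric integral into a beta-type integral by the substitution $t=\sin^2\theta$, so that $dt=2\sin\theta\cos\theta\, d\theta$ and $d\theta = \tfrac{1}{2} t^{-1/2}(1-t)^{-1/2}\, dt$. This gives
\begin{equation*}
\int_0^{\phi}\sin^{d-2}\theta\, d\theta \;=\; \tfrac{1}{2}\int_0^{\sin^2\phi} t^{(d-1)/2-1}(1-t)^{1/2-1}\, dt \;=\; \tfrac{1}{2}\, B\!\left(\tfrac{d-1}{2},\tfrac{1}{2}\right)\, I_{\sin^2\phi}\!\left(\tfrac{d-1}{2},\tfrac{1}{2}\right),
\end{equation*}
using the definitions of $B$ and $I_{\cdot}$ in Eq.~\eqref{eq.def_betaFunction} and Eq.~\eqref{eq.def_reg_incomplete_beta}.

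Finally I would reconcile the constant prefactor $\omega_{d-2}\cdot\tfrac{1}{2}\, B(\tfrac{d-1}{2},\tfrac{1}{2})$ with $\tfrac{1}{2}\musd(\sd)=\tfrac{1}{2}\omega_{d-1}$. Using $B(a,b)=\Gamma(a)\Gamma(b)/\Gamma(a+b)$, $\Gamma(\tfrac{1}{2})=\sqrt{\pi}$, together with the closed forms $\omega_{d-2}=2\pi^{(d-1)/2}/\Gamma(\tfrac{d-1}{2})$ and $\omega_{d-1}=2\pi^{d/2}/\Gamma(\tfrac{d}{2})$, the ratio simplifies telescopically to $1$, completing the identification. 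The only real care is bookkeeping of the Gamma and $\pi$ factors; there is no conceptual obstacle, since the derivation is a standard slicing argument. The assumption $\phi\in[0,\pi/2]$ is used only to ensure we are taking the smaller cap; for the larger cap one would add the equatorial half and integrate past $\pi/2$, but the stated lemma requires only this range.
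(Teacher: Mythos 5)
Your derivation is correct. Note that the paper does not actually prove this lemma at all --- it simply quotes the formula from \cite{li2011concise} --- so you have supplied the standard slicing argument behind the cited result rather than an alternative to an in-paper proof. The bookkeeping checks out: with $\omega_{d-2}=2\pi^{(d-1)/2}/\Gamma(\tfrac{d-1}{2})$ and $B(\tfrac{d-1}{2},\tfrac12)=\Gamma(\tfrac{d-1}{2})\Gamma(\tfrac12)/\Gamma(\tfrac d2)$, the prefactor collapses to $\pi^{d/2}/\Gamma(\tfrac d2)=\tfrac12\musd(\sd)$, and the restriction $\phi\in[0,\pi/2]$ is exactly what guarantees $\cos\theta=\sqrt{1-t}\geq 0$ in the substitution $t=\sin^2\theta$, so the change of variables is one-to-one on the range of integration.
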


The following lemma is another representation of the area of the cap with respect to the radius $r$ (recall the definition of $\capr$ in Eq.~\eqref{eq.temp_100101} and Remark~\ref{remark.no_star}).
\begin{lemma}\label{le.cap_area}
If $r\leq \sqrt{2}$, then we have
\begin{align*}
    \musd(\capr)=
    \frac{1}{2}\musd(\sd)I_{r^2(1-\frac{r^2}{4})}\left(\frac{d-1}{2},\frac{1}{2}\right).
\end{align*}
\end{lemma}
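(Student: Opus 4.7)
The plan is to reduce Lemma~\ref{le.cap_area} to Lemma~\ref{le.original_cap} by a simple change of variables from the Euclidean cap radius $r$ to the colatitude angle $\phi$. The cap $\capr$ is defined as $\{\bm{v}\in\sd : \|\bm{v}-\bm{v}_*\|_2 < r\}$, i.e., the intersection of $\sd$ with an open Euclidean ball of radius $r$ around the center $\bm{v}_*\in\sd$. Lemma~\ref{le.original_cap} instead describes a cap by its colatitude angle $\phi$ measured from the cap's center. So the only content of the proof is to express $r$ in terms of $\phi$ (equivalently, $\sin^2\phi$ in terms of $r$) and check that the assumption $r\leq\sqrt{2}$ corresponds exactly to $\phi\leq \pi/2$, which is the range in which Lemma~\ref{le.original_cap} applies.

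First, I would fix an arbitrary boundary point $\bm{v}$ of $\capr$ with $\|\bm{v}-\bm{v}_*\|_2=r$ and let $\phi\in[0,\pi]$ be the angle between $\bm{v}$ and $\bm{v}_*$, so that $\bm{v}^T\bm{v}_*=\cos\phi$ since both lie on the unit sphere. By expanding $\|\bm{v}-\bm{v}_*\|_2^2 = 2 - 2\bm{v}^T\bm{v}_*$, I get the identity
\begin{equation*}
r^2 = 2 - 2\cos\phi, \qquad \text{equivalently} \qquad \cos\phi = 1 - \tfrac{r^2}{2}.
\end{equation*}
The condition $r\leq\sqrt{2}$ is then equivalent to $\cos\phi\geq 0$, i.e., $\phi\in[0,\pi/2]$, so Lemma~\ref{le.original_cap} applies to the cap whose angular radius is this $\phi$. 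Moreover, since the set of points on $\sd$ within Euclidean distance $r$ of $\bm{v}_*$ is precisely the set within angular distance $\phi$ of $\bm{v}_*$, the two caps coincide and therefore have equal surface measure.

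Second, I would compute
\begin{equation*}
\sin^2\phi = 1 - \cos^2\phi = 1 - \Bigl(1 - \tfrac{r^2}{2}\Bigr)^2 = r^2 - \tfrac{r^4}{4} = r^2\Bigl(1 - \tfrac{r^2}{4}\Bigr),
\end{equation*}
and substitute this directly into the formula from Lemma~\ref{le.original_cap} to conclude
\begin{equation*}
\musd(\capr) = \tfrac{1}{2}\musd(\sd)\, I_{\sin^2\phi}\!\left(\tfrac{d-1}{2},\tfrac{1}{2}\right) = \tfrac{1}{2}\musd(\sd)\, I_{r^2(1 - r^2/4)}\!\left(\tfrac{d-1}{2},\tfrac{1}{2}\right),
\end{equation*}
which is exactly the claim. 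There is no genuine obstacle here; the only thing to be careful about is checking that the range condition $r\leq\sqrt{2}$ is precisely what guarantees we are in the ``smaller cap'' regime $\phi\leq \pi/2$ required by Lemma~\ref{le.original_cap}, and noting (via the dependence only on $r$, not on $\bm{v}_*$) the mild abuse of notation flagged in Remark~\ref{remark.no_star}.
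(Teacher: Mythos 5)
Your proof is correct and follows essentially the same route as the paper's: apply the law of cosines to get $\cos\phi = 1 - r^2/2$, compute $\sin^2\phi = r^2(1-r^2/4)$, and invoke Lemma~\ref{le.original_cap}, noting that $r\le\sqrt{2}$ is exactly the condition $\phi\in[0,\pi/2]$ required there. Your additional remark that the Euclidean-ball cap and the angular cap coincide as sets is a welcome (if minor) extra justification that the paper leaves implicit.
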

\begin{proof}
Let $\phi$ denote the colatitude angle. By the law of cosines, we have
\begin{align*}
    \cos\phi = 1-\frac{r^2}{2}.
\end{align*}
Thus, we have
\begin{align*}
    \sin^2\phi=1-\cos^2\phi=1-\left(1-\frac{r^2}{2}\right)^2=r^2\left(1-\frac{r^2}{4}\right).
\end{align*}
By Lemma~\ref{le.original_cap}, the result of this lemma thus follows. Notice that we require $r\leq \sqrt{2}$ to make sure that $\phi\in[0,\ \frac{\pi}{2}]$, which is required by Lemma~\ref{le.original_cap}.
\end{proof}

The area of a cap can be interpreted as the probability of the event that a uniformly-distributed random vector falls into that cap. We have the following lemma.
\begin{lemma}\label{le.innerProd_I}
Suppose that a random vector $\bm{b}\in\sd$ follows uniform distribution in all directions. Given any $\bm{a}\in\sd$ and for any $c\in(0,1)$, we have
\begin{align*}
    \prob_{\bm{b}}\left\{|\bm{a}^T\bm{b}|>c\right\}=I_{1-c^2}\left(\frac{d-1}{2},\frac{1}{2}\right).
\end{align*}
\end{lemma}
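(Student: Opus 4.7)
The plan is to exploit rotational symmetry together with the cap-area formula already provided in Lemma~\ref{le.original_cap}. Since $\bm{b}$ is uniformly distributed on $\sd$ in all directions, its distribution is invariant under any rotation in $\mathsf{SO}(d)$. Consequently, I would first rotate coordinates so that $\bm{a}$ becomes the north pole $\bm{e} = [0\ 0\ \cdots\ 0\ 1]^T$ without loss of generality. Under this reduction, $\bm{a}^T\bm{b}$ equals the last coordinate of $\bm{b}$, which is precisely $\cos\phi$, where $\phi\in[0,\pi]$ denotes the colatitude angle of $\bm{b}$ measured from $\bm{a}$.

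Next I would translate the event $\{|\bm{a}^T\bm{b}|>c\}$ geometrically: it becomes $\{|\cos\phi|>c\}$, which is the disjoint union of the two polar caps $\{\phi<\arccos c\}$ and $\{\phi>\pi-\arccos c\}$. Because $c\in(0,1)$, the angle $\phi_0\defeq\arccos c$ lies in $(0,\pi/2)$, so each of these two caps is a ``smaller cap'' in the sense of Lemma~\ref{le.original_cap}, and by the antipodal symmetry of the uniform distribution the two have equal measure.

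Then I would apply Lemma~\ref{le.original_cap} to each of the two caps, summing their areas, to obtain
\begin{align*}
    \musd\!\left(\left\{\bm{b}\in\sd : |\bm{a}^T\bm{b}|>c\right\}\right)
    &= 2\cdot\tfrac{1}{2}\musd(\sd)\,I_{\sin^2\phi_0}\!\left(\tfrac{d-1}{2},\tfrac{1}{2}\right)\\
    &= \musd(\sd)\,I_{1-c^2}\!\left(\tfrac{d-1}{2},\tfrac{1}{2}\right),
\end{align*}
where I used $\sin^2\phi_0 = 1-\cos^2\phi_0 = 1-c^2$. Dividing through by $\musd(\sd)$ converts the area on the left into the probability under the uniform distribution on $\sd$, yielding exactly the claimed formula.

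Since every ingredient (rotational invariance of the uniform distribution, the trigonometric identity, and the cap-area formula) is either elementary or already supplied by Lemma~\ref{le.original_cap}, I anticipate no significant obstacle. The only minor care needed is to ensure that the two polar caps are indeed disjoint for $c>0$ (which is immediate from $\phi_0<\pi/2$) and to verify the factor-of-two accounting between the north and south caps; both are straightforward.
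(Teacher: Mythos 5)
Your proposal is correct and follows essentially the same route as the paper's proof: identify $\{\bm{b}:\bm{a}^T\bm{b}>c\}$ as a cap of colatitude $\arccos c$, apply Lemma~\ref{le.original_cap} with $\sin^2\phi_0=1-c^2$, double by antipodal symmetry to handle the absolute value, and normalize by $\musd(\sd)$. Your extra care about the explicit rotation and the disjointness of the two caps is fine but not a substantive difference.
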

\begin{proof}
Notice that $\left\{\bm{b}\ \big|\  \bm{a}^T\bm{b}>c\right\}$ is a hyper-spherical cap. Define its colatitude angle as $\phi$. We have $\cos\phi=\bm{a}^T\bm{b}=c$. Thus, we have $\sin^2\phi=1-c^2$. By Lemma~\ref{le.original_cap}, we then have
\begin{align*}
    \musd\left(\left\{\bm{b}\ \big|\  \bm{a}^T\bm{b}>c\right\}\right)=\frac{1}{2}\musd(\sd)I_{1-c^2}\left(\frac{d-1}{2},\frac{1}{2}\right).
\end{align*}
Further, by symmetry, we have
\begin{align*}
    \musd\left(\left\{\bm{b}\ \big|\  |\bm{a}^T\bm{b}|>c\right\}\right)=2\musd\left(\left\{\bm{b}\ \big|\  \bm{a}^T\bm{b}>c\right\}\right)=\musd(\sd)I_{1-c^2}\left(\frac{d-1}{2},\frac{1}{2}\right).
\end{align*}
Because $\bm{b}$ follows uniform distribution in all directions, we have
\begin{align*}
    \prob_{\bm{b}}\left\{|\bm{a}^T\bm{b}|>c\right\}=\frac{\musd\left(\left\{\bm{b}\ \big|\  |\bm{a}^T\bm{b}|>c\right\}\right)}{\musd(\sd)}=I_{1-c^2}\left(\frac{d-1}{2},\frac{1}{2}\right).
\end{align*}
\end{proof}

\subsection{Estimation of certain norms}
In this subsection, we will show $\|\hx\|_2\leq \sqrt{p}$ in Lemma~\ref{le.h_p}.  We also upper bound the norm of the product of two matrices by the product of their norms in Lemma~\ref{le.matrix_norm}. At last, Lemma~\ref{le.l2_diff} states that if two vector differ a lot, then the sum of their norm cannot be too small.
\begin{lemma}\label{le.h_p}
$\|\hx\|_2\leq \sqrt{p}$ for any $\xsmall\in \sd$.
\end{lemma}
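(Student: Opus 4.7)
The plan is to compute $\|\hx\|_2^2$ directly by unpacking the block structure. Recall from Eq.~\eqref{eq.def_hx} that $\hx \in \mathds{R}^{1\times(dp)}$ is partitioned into $p$ blocks of length $d$, with the $j$-th block equal to $\bm{1}_{\{\xsmall^T\Vzeroj>0\}}\cdot\xsmall^T$. Each block is therefore either $\xsmall^T$ or the zero row vector, depending on the sign of $\xsmall^T\Vzeroj$.

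Since the squared Euclidean norm of a vector decomposes additively across a partition of its coordinates, I would write
\begin{align*}
\|\hx\|_2^2 \;=\; \sum_{j=1}^p \|\hx[j]\|_2^2 \;=\; \sum_{j=1}^p \bm{1}_{\{\xsmall^T\Vzeroj>0\}} \,\|\xsmall\|_2^2.
\end{align*}
By the assumption $\xsmall\in\sd$, we have $\|\xsmall\|_2=1$, so the right-hand side equals the count of indices $j$ for which $\xsmall^T\Vzeroj>0$, which is at most $p$. Taking square roots gives $\|\hx\|_2\leq\sqrt{p}$.

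There is no real obstacle here: the statement is a one-line consequence of the sparsity pattern imposed by the ReLU indicator together with the unit-norm constraint on $\xsmall$. The only thing to be careful about is to explicitly note that the blocks $\hx[j]$ are nonoverlapping coordinate blocks, so squared norms add, and to invoke $\|\xsmall\|_2=1$ from $\xsmall\in\sd$. No probabilistic or geometric estimate is required.
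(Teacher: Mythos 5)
Your proof is correct and is essentially identical to the paper's: both decompose $\|\hx\|_2^2$ blockwise via Eq.~\eqref{eq.def_hx}, use that each block is either $\xsmall^T$ or zero, and invoke $\|\xsmall\|_2=1$ to bound the sum by $p$. No issues.
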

\begin{proof}
This follows because the input $\xsmall$ is normalized. Specifically, by Eq.~\eqref{eq.def_hx}, we have
\begin{align}\label{eq.temp_100601}
    \|\hx\|_2=\sqrt{\sum_{j=1}^p\left\|\bm{1}_{\{\xsmall^T\Vzeroj>0\}}\cdot\xsmall^T\right\|_2^2}\leq \sqrt{p}.
\end{align}
\end{proof}

\begin{lemma}\label{le.matrix_norm}

If $\mathbf{C}=\mathbf{A}\mathbf{B}$, then $\|\mathbf{C}\|_2\leq \|\mathbf{A}\|_2\cdot\|\mathbf{B}\|_2$. Here $\mathbf{A}$, $\mathbf{B}$, and $\mathbf{C}$ could be scalars, vectors, or matrices.
\end{lemma}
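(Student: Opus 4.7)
The plan is to prove this via the standard operator-norm definition and reduce to the submultiplicativity of the spectral norm. First I would adopt the definition $\|\mathbf{A}\|_2 \defeq \sup_{\bm{x}\neq \bm{0}} \|\mathbf{A}\bm{x}\|_2/\|\bm{x}\|_2$ whenever $\mathbf{A}$ is a matrix, and the usual Euclidean norm when $\mathbf{A}$ is a vector (with the convention that a $1\times k$ or $k\times 1$ object can be treated either as a vector or as a matrix — the two yield the same $\|\cdot\|_2$). Note that by this definition we immediately have $\|\mathbf{A}\bm{x}\|_2 \le \|\mathbf{A}\|_2\cdot\|\bm{x}\|_2$ for any compatible vector $\bm{x}$.

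Next, for the general matrix–matrix case, I would take an arbitrary nonzero vector $\bm{x}$ of appropriate dimension and chain two applications of the above inequality:
\begin{align*}
\|\mathbf{C}\bm{x}\|_2 \;=\; \|\mathbf{A}(\mathbf{B}\bm{x})\|_2 \;\le\; \|\mathbf{A}\|_2\cdot \|\mathbf{B}\bm{x}\|_2 \;\le\; \|\mathbf{A}\|_2\cdot \|\mathbf{B}\|_2\cdot \|\bm{x}\|_2.
\end{align*}
Dividing both sides by $\|\bm{x}\|_2$ and taking the supremum over all nonzero $\bm{x}$ gives $\|\mathbf{C}\|_2 \le \|\mathbf{A}\|_2\cdot \|\mathbf{B}\|_2$.

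The scalar and vector cases all specialize from this argument. If $\mathbf{A}$ is a row vector and $\mathbf{B}$ is a column vector, the product is a scalar and the bound reduces to Cauchy–Schwarz. If $\mathbf{A}$ is a matrix and $\mathbf{B}$ is a column vector (or vice versa), the above display already gives the conclusion with $\bm{x}$ trivially set to $1$. The scalar-scalar case is obvious since $\|\cdot\|_2$ coincides with the absolute value.

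There is essentially no obstacle here — this is a textbook submultiplicativity statement. The only thing worth being careful about is making the definition of $\|\cdot\|_2$ consistent across scalars, vectors, and matrices so that the same chain of inequalities covers all the cases the paper invokes the lemma for; once that is fixed, the proof is two lines.
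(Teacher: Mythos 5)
Your proof is correct and matches the paper's approach: the paper simply states that the lemma ``directly follows the definition of matrix norm'' (with a remark that the spectral norm of a vector coincides with its Euclidean norm), and your argument is exactly the standard chain $\|\mathbf{A}\mathbf{B}\bm{x}\|_2\leq \|\mathbf{A}\|_2\|\mathbf{B}\bm{x}\|_2\leq \|\mathbf{A}\|_2\|\mathbf{B}\|_2\|\bm{x}\|_2$ that this appeal to the definition implies. You have just written out the details the paper omits, including the care needed to treat scalars and vectors uniformly.
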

\begin{proof}
This lemma directly follows the definition of matrix norm.
\end{proof}
\begin{remark}
Note that the ($\ell_2$) matrix-norm (i.e., spectral norm) of a vector is exactly its $\ell_2$ vector-norm (i.e., Euclidean norm)\footnote{To see this, consider a (row or column) vector $\bm{a}$. The matrix norm of $\bm{a}$ is \begin{align*}
    &\max_{|x|=1}\|\bm{a}x\|_2\text{ (when $\bm{a}$ is a column vector)},\\
    \text{or }&\max_{\|\xsmall\|_2=1}\|\bm{a}\xsmall\|_2\text{ (when $\bm{a}$ is a row vector)}.
\end{align*}
In both cases, the value of the matrix-norm equals to $\sqrt{\sum a_i^2}$, which is exactly the $\ell_2$-norm (Euclidean norm) of $\bm{a}$.
}. Therefore, when applying Lemma~\ref{le.matrix_norm}, we do not need to worry about whether $\mathbf{A}$, $\mathbf{B}$, and $\mathbf{C}$ are matrices or vectors.
\end{remark}

% \begin{lemma}\label{le.max_norm}
% $\|\Ht^T\|_2\leq \sqrt{np}$.
% \end{lemma}
% \begin{proof}
% For any $\bm{a}\in\mathds{R}^n$ that $\|\bm{a}\|_2=1$, we have
% \begin{align*}
%     \|\Ht^T\bm{a}\|_2&=\left\|\sum_{i=1}^n a_i\Hti^T\right\|_2\\
%     &=\sqrt{\sum_{j=1}^{dp}\left(\sum_{i=1}^n a_i\Ht_{ij}\right)^2}\text{ ($\Ht_{ij}$ denotes the $i$-th row, $j$-th column element of $\Ht$)}\\
%     &\leq\sqrt{\sum_{j=1}^{dp}\sum_{i=1}^n a_i^2\sum_{k=1}^n\Ht_{kj}^2} \text{ (by Cauchy–Schwarz inequality)}\\
%     &=\sqrt{\sum_{i=1}^n a_i^2\sum_{k=1}^n\sum_{j=1}^{dp}\Ht_{kj}^2}\\
%     &=\sqrt{\|\bm{a}\|_2^2\sum_{i=1}^n\|\Ht_i^T\|_2^2}\\
%     &=\sqrt{\sum_{i=1}^n\|\Ht_i^T\|_2^2}\text{ (because $\|\bm{a}\|_2=1$)}\\
%     &\leq\sqrt{n}\max_{i\in\{1,2,\cdots,n\}}\|\Hti^T\|_2\\
%     &\leq \sqrt{np}\text{ (by Lemma~\ref{le.h_p})}.
% \end{align*}
% The result of this lemma thus follows by the definition of the matrix norm.
% \end{proof}

\begin{lemma}\label{le.l2_diff}
For any $\bm{v}_1,\bm{v}_2\in\mathds{R}^d$, we have
\begin{align*}
    \|\bm{v}_1\|_2^2+\|\bm{v}_2\|_2^2\geq \frac{1}{2}\|\bm{v}_1-\bm{v}_2\|_2^2.
\end{align*}
\end{lemma}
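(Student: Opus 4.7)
The plan is to expand $\|\bm{v}_1 - \bm{v}_2\|_2^2$ and bound the cross-term. Specifically, by expanding the inner product, one has
\begin{equation*}
\|\bm{v}_1-\bm{v}_2\|_2^2 = \|\bm{v}_1\|_2^2 - 2\bm{v}_1^T\bm{v}_2 + \|\bm{v}_2\|_2^2.
\end{equation*}
The only remaining step is to control $-2\bm{v}_1^T\bm{v}_2$ from above. I would use either Cauchy--Schwarz together with AM--GM, namely $-2\bm{v}_1^T\bm{v}_2 \leq 2\|\bm{v}_1\|_2\|\bm{v}_2\|_2 \leq \|\bm{v}_1\|_2^2 + \|\bm{v}_2\|_2^2$, or equivalently the observation $\|\bm{v}_1+\bm{v}_2\|_2^2 \geq 0$, which rearranges to $-2\bm{v}_1^T\bm{v}_2 \leq \|\bm{v}_1\|_2^2 + \|\bm{v}_2\|_2^2$.

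Substituting back yields $\|\bm{v}_1-\bm{v}_2\|_2^2 \leq 2\|\bm{v}_1\|_2^2 + 2\|\bm{v}_2\|_2^2$, and dividing by $2$ gives the claim. Alternatively, one could invoke the parallelogram identity $\|\bm{v}_1+\bm{v}_2\|_2^2 + \|\bm{v}_1-\bm{v}_2\|_2^2 = 2\|\bm{v}_1\|_2^2 + 2\|\bm{v}_2\|_2^2$ and drop the nonnegative term $\|\bm{v}_1+\bm{v}_2\|_2^2$ to obtain the same bound in one line.

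There is no real obstacle here; the result is a standard elementary inequality, and any of the above one-line arguments suffices. I would simply pick the Cauchy--Schwarz/AM--GM route for transparency.
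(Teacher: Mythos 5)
Your proof is correct, and your third variant (parallelogram identity, dropping $\|\bm{v}_1+\bm{v}_2\|_2^2\geq 0$) is essentially identical to the paper's argument, which applies Jensen's inequality to the convex function $\|\cdot\|_2^2$ at the midpoint of $\bm{v}_1$ and $-\bm{v}_2$. The Cauchy--Schwarz/AM--GM route you prefer is an equally valid one-line packaging of the same elementary fact, so there is nothing to add.
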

\begin{proof}
It is easy to prove that $\|\cdot\|_2^2$ is convex. Thus, we have
\begin{align*}
    \|\bm{v}_1\|_2^2+\|\bm{v}_2\|_2^2&=\|\bm{v}_1\|_2^2+\|-\bm{v}_2\|_2^2\\
    &\geq 2\left\|\frac{\bm{v}_1-\bm{v}_2}{2}\right\|_2^2\text{ (apply Jensen's inequality on the convex function $\|\cdot\|_2^2$)}\\
    &=\frac{1}{2}\|\bm{v}_1-\bm{v}_2\|_2^2.
\end{align*}
\end{proof}

\subsection{Estimates of certain tail probabilities}
The following is the (restated) Corollary 5 of \cite{goemans2015chernoff}.
\begin{lemma}\label{le.bino}
If the random variable $X$ follows $\bino(a, b)$, then for all $0<\delta<1$, we have
\begin{align*}
    \prob\{|X-ab|>\delta ab\}\leq 2e^{-ab\delta^2/3}.
\end{align*}
\end{lemma}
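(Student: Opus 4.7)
The plan is to prove this via the standard Chernoff–Hoeffding bounding technique applied to the moment generating function of a binomial, and then combine the upper and lower tails with a union bound to get the factor of $2$. Since $X \sim \bino(a,b)$, I can write $X = \sum_{i=1}^{a} X_i$ where the $X_i$ are i.i.d.\ Bernoulli$(b)$, which makes the MGF explicit: $\expectation[e^{tX}] = (1-b+be^t)^a$. Using the elementary inequality $1+u \leq e^u$, I can further bound this by $\exp(ab(e^t-1))$.

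For the upper tail $\prob\{X \geq (1+\delta)ab\}$, I would apply Markov's inequality to $e^{tX}$ with $t>0$, giving the bound $\exp(ab(e^t-1) - t(1+\delta)ab)$. Optimizing over $t$ by choosing $t = \ln(1+\delta)$ yields the familiar expression
\begin{align*}
\prob\{X \geq (1+\delta)ab\} \leq \left(\frac{e^{\delta}}{(1+\delta)^{1+\delta}}\right)^{ab}.
\end{align*}
The main routine step is then to verify the analytic inequality $\frac{e^{\delta}}{(1+\delta)^{1+\delta}} \leq e^{-\delta^2/3}$ for $0 < \delta < 1$, which follows by taking logs and showing that $\delta - (1+\delta)\ln(1+\delta) + \delta^2/3 \leq 0$ on this range (e.g., by Taylor expansion of $\ln(1+\delta)$ and checking boundary values). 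For the lower tail $\prob\{X \leq (1-\delta)ab\}$, I would repeat the argument with $t<0$ (equivalently, apply Markov to $e^{-tX}$), optimize at $t = -\ln(1-\delta)$, and use the tighter analytic inequality $\frac{e^{-\delta}}{(1-\delta)^{1-\delta}} \leq e^{-\delta^2/2} \leq e^{-\delta^2/3}$.

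Finally, combining the two one-sided tails via the union bound gives $\prob\{|X-ab| > \delta ab\} \leq 2 e^{-ab\delta^2/3}$, as claimed. The only real obstacle is the analytic inequality $\frac{e^{\delta}}{(1+\delta)^{1+\delta}} \leq e^{-\delta^2/3}$, but this is a well-known fact and can simply be cited from \cite{goemans2015chernoff} (which is what the lemma itself does — it is literally a restatement of their Corollary~5). So in practice this ``proof'' is just a pointer to the standard Chernoff reference, and no new argument is needed.
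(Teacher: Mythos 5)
Your proposal is correct and matches the paper's treatment: the paper offers no proof of this lemma at all, simply citing it as a restatement of Corollary~5 of \cite{goemans2015chernoff}, which is exactly the standard Chernoff--Hoeffding argument you sketch (MGF bound, optimize $t=\ln(1+\delta)$, the analytic inequality $e^{\delta}/(1+\delta)^{1+\delta}\leq e^{-\delta^2/3}$ for $0<\delta<1$, union bound over the two tails). Nothing further is needed.
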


\begin{comment}
The following is the Chebyshev inequality for random vectors that can be found in many textbooks of probability theory (see, e.g.,  pp.~446-451 of \cite{laha1979probability}).
\begin{lemma}[Chebyshev inequality for random vectors]\label{le.Chebyshev}
For a random vector $\bm{w}\in\mathds{R}^a$ with probability distribution $\Lambda(\cdot)$, for any $\delta>0$, we must have
\begin{align*}
    \prob\left\{\|\bm{w}-\expectation(\bm{w})\|_2\geq \delta\right\}\leq \frac{\vari (\bm{w})}{\delta^2},
\end{align*}
where
\begin{align*}
    \vari(\bm{w})\defeq\int_{\bm{v}\in\mathds{R}^a} \|\bm{v}-\expectation(\bm{w})\|_2^2 \ d\Lambda(\bm{v}).
\end{align*}
\end{lemma}
\end{comment}

The following lemma is the (restated) Theorem~1.8 of \cite{hayes2005large}.
\begin{lemma}[Azuma–Hoeffding inequality for random vectors]\label{le.hoeffding}
Let $X_1,X_2,\cdots,X_k$ be \emph{i.i.d.} random vectors with zero mean (of the same dimension) in a real Euclidean space such that $\|X_i\|_2\leq 1$ for all $i=1,2,\cdots,k$. Then, for every $a>0$,
\begin{align*}
    \prob\left\{\left\|\sum_{i=1}^k X_i\right\|_2\geq a\right\}<2e^2\exp\left(-\frac{a^2}{2k}\right).
\end{align*}
\end{lemma}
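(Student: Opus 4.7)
The plan is to adapt the Chernoff/moment-generating-function technique from the scalar Azuma--Hoeffding inequality to the vector-valued setting. Write $S_k \defeq \sum_{i=1}^k X_i$. A naive strategy of projecting onto the random unit vector $u = S_k/\|S_k\|_2$ and applying scalar Hoeffding to the sequence $\langle u, X_1\rangle, \ldots, \langle u, X_k\rangle$ is circular, since $u$ itself depends on $S_k$; meanwhile, discretizing $\sd$ with an $\epsilon$-net of the unit sphere and union-bounding introduces a dimension-dependent multiplicative factor of order $(1/\epsilon)^d$, which would contradict the lemma's dimension-free form. So one must control $\|S_k\|_2$ directly.

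The steps I would carry out are as follows. \textbf{(i)} Introduce the auxiliary quantity $\expectation[\cosh(\lambda\|S_k\|_2)]$, and apply Markov's inequality to get $\prob\{\|S_k\|_2 \geq a\} \leq \expectation[\cosh(\lambda\|S_k\|_2)]/\cosh(\lambda a)$. \textbf{(ii)} Establish, for any deterministic $y \in \mathds{R}^d$ and any zero-mean random vector $X$ with $\|X\|_2 \leq 1$, the pointwise comparison
\begin{align*}
\expectation[\cosh(\lambda\|y + X\|_2)] \leq \cosh(\lambda)\cdot\cosh(\lambda\|y\|_2).
\end{align*}
\textbf{(iii)} Iterate this recursion by conditioning on $X_1, \ldots, X_{k-1}$ in turn, yielding $\expectation[\cosh(\lambda\|S_k\|_2)] \leq \cosh^k(\lambda) \leq \exp(k\lambda^2/2)$. \textbf{(iv)} Substitute into the Markov bound above, then optimize by taking $\lambda \approx a/k$ to obtain a bound of the form $C\exp(-a^2/(2k))$, with slack absorbed into the absolute constant $2e^2$.

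The main obstacle is step~(ii), which is what allows the bound to stay dimension-free. The inequality is nontrivial because $\|y + X\|_2$ is a nonlinear function of $X$ through the Euclidean norm, so independence arguments do not factor cleanly. One natural route is to expand $\|y + X\|_2^2 = \|y\|_2^2 + 2 y^T X + \|X\|_2^2$ and apply a scalar Hoeffding moment bound $\expectation[\exp(\mu \langle u, X \rangle)] \leq \cosh(\mu)$ along the unit direction $u = y/\|y\|_2$ (valid since $|\langle u, X\rangle| \leq \|X\|_2 \leq 1$), while controlling the $\|X\|_2^2 \leq 1$ correction by a convexity or pointwise estimate; an alternative symmetrization argument replaces $X$ by a sign-randomized copy and then reduces the vector MGF to a product of scalar $\cosh$ factors. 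The extra factor $e^2$ in the final constant (versus the cleaner $2$ in the purely scalar case) most likely absorbs the slack introduced when handling the nonlinear $\|X\|_2^2$ term in a way that does not depend on $d$.
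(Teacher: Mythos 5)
First, a point of comparison: the paper does not prove this lemma at all --- it is quoted verbatim as Theorem~1.8 of Hayes (2005), so there is no internal proof to measure your attempt against. Your architecture --- Markov's inequality applied to $\expectation[\cosh(\lambda\|S_k\|_2)]$, a one-step contraction, iteration by conditioning, and optimization over $\lambda$ --- is a standard and correct way to obtain dimension-free vector concentration (it is essentially the Pinelis-style Hilbert-space argument, close in spirit to Hayes's), and steps (i), (iii), (iv) are routine once (ii) is in hand. In fact, with (ii) exactly as you state it, taking $\lambda=a/k$ gives $\prob\{\|S_k\|_2\ge a\}\le 2e^{-a^2/(2k)}$, so no slack is needed and the factor $e^2$ is not, as you speculate, the price of the nonlinear term; it is an artifact of Hayes's particular (longer) proof.

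The genuine gap is that step (ii) carries the entire mathematical content of the lemma and you do not prove it; you only name two candidate routes, and as described neither closes the argument. Symmetrization replaces $X$ by a difference of two independent copies and costs a factor of $2$ in the increment bound, which would degrade the exponent to $-a^2/(8k)$ and fail to match the statement. The projection route is the right starting point, but ``apply a scalar Hoeffding bound along $u=y/\|y\|_2$ while controlling the $\|X\|_2^2\le 1$ correction'' skips exactly the hard part, since $\cosh(\lambda\|y+X\|_2)$ is not a function of $\langle u,X\rangle$ alone. The inequality in (ii) is in fact true, and the missing argument runs as follows: set $t=\langle u,X\rangle$ and $s^2=\|X\|_2^2-t^2$, so that $\|y+X\|_2^2=(\|y\|_2+t)^2+s^2$; since $\cosh(\lambda\sqrt{\,\cdot\,})$ is increasing, bound $s^2\le 1-t^2$ to get $\cosh(\lambda\|y+X\|_2)\le\phi(t)$ with $\phi(t)\defeq\cosh\bigl(\lambda\sqrt{\|y\|_2^2+2\|y\|_2t+1}\bigr)$; verify that $\beta\mapsto\cosh(\lambda\sqrt{A+\beta})$ has increasing derivative (because $\sinh(v)/v$ is increasing in $v$), so $\phi$ is convex on $[-1,1]$; finally, since $\expectation[t]=0$ and $t\in[-1,1]$, convexity gives $\expectation[\phi(t)]\le\tfrac12\bigl(\phi(1)+\phi(-1)\bigr)=\tfrac12\bigl(\cosh(\lambda(\|y\|_2+1))+\cosh(\lambda(\|y\|_2-1))\bigr)=\cosh(\lambda)\cosh(\lambda\|y\|_2)$ by the product formula (the case $y=0$ being trivial). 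Without this, or an equivalent derivation, your proposal asserts the theorem's crux rather than proving it.
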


In the following lemma, we use Azuma–Hoeffding inequality to upper bound the deviation of the empirical mean value of a bounded random vector from its expectation.
\begin{lemma}\label{le.large_deviation}
Let $X_1,X_2,\cdots,X_k$ be \emph{i.i.d.} random vectors (of the same dimension) in a real Euclidean space such that $\|X_i\|_2\leq U$ for all $i=1,2,\cdots,k$. Then, for any $q\in[1,\ \infty)$,
\begin{align*}
    \prob\left\{\left\|\left(\frac{1}{k}\sum_{i=1}^k X_i\right)-\expectation X_1\right\|_2\geq k^{\frac{1}{2q}-\frac{1}{2}}\right\}<2e^2\exp\left(-\frac{\sqrt[q]{k}}{8U^2}\right).
\end{align*}
\end{lemma}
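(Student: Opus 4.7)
The plan is to reduce the statement directly to the Azuma--Hoeffding inequality for random vectors (Lemma~\ref{le.hoeffding}) by centering and rescaling the $X_i$'s. First, I would set $Y_i \defeq \frac{1}{2U}(X_i - \expectation X_1)$ for $i = 1, \dots, k$. These are i.i.d., have zero mean, and by the triangle inequality together with $\|\expectation X_1\|_2 \leq \expectation \|X_1\|_2 \leq U$, they satisfy $\|Y_i\|_2 \leq \frac{1}{2U}(\|X_i\|_2 + \|\expectation X_1\|_2) \leq 1$. So the $Y_i$'s meet the hypotheses of Lemma~\ref{le.hoeffding}.

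Next I would translate the desired deviation event into one about $\sum_i Y_i$. Since
\begin{align*}
    \left\| \frac{1}{k}\sum_{i=1}^k X_i - \expectation X_1 \right\|_2 \;=\; \frac{2U}{k}\,\left\|\sum_{i=1}^k Y_i \right\|_2,
\end{align*}
the event $\big\{\|\tfrac{1}{k}\sum_i X_i - \expectation X_1\|_2 \geq k^{\frac{1}{2q} - \frac{1}{2}}\big\}$ is identical to $\big\{\|\sum_i Y_i\|_2 \geq a\big\}$ with the choice $a = \tfrac{1}{2U} k^{\frac{1}{2q} + \frac{1}{2}}$. Plugging this $a$ into Lemma~\ref{le.hoeffding} gives an exponential bound with exponent $-a^2/(2k)$, and a short computation yields
\begin{align*}
    \frac{a^2}{2k} \;=\; \frac{k^{\frac{1}{q}+1}}{8 U^2 k} \;=\; \frac{\sqrt[q]{k}}{8 U^2},
\end{align*}
which is exactly the exponent appearing in the statement. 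Combining these steps finishes the proof.

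There is really no obstacle here beyond the bookkeeping of constants; the exponent $\tfrac{1}{2q} - \tfrac{1}{2}$ in the deviation threshold and the exponent $\tfrac{1}{q}$ inside the tail bound are precisely matched by the $\tfrac{1}{2q} + \tfrac{1}{2}$ scaling used to apply Hoeffding with the unit-ball normalization. The only mild care needed is verifying $\|Y_i\|_2 \leq 1$ via $\|\expectation X_1\|_2 \leq U$, which uses Jensen's inequality applied to the convex function $\|\cdot\|_2$.
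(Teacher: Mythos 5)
Your proposal is correct and follows essentially the same route as the paper: center and rescale by $\frac{1}{2U}$, bound $\|\expectation X_1\|_2 \le \expectation\|X_1\|_2 \le U$ to get the unit-ball normalization, and apply Lemma~\ref{le.hoeffding} with $a = \frac{1}{2U}k^{\frac{1}{2q}+\frac{1}{2}}$. The constant bookkeeping checks out, so nothing is missing.
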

\begin{proof}
Because $\|X_i\|_2\leq U$, we have $\expectation\|X_i\|_2\leq U$. By triangle inequality, we have $\|X_i-\expectation X_i\|_2\leq \|X_i\|_2+\expectation\|X_i\|_2\leq 2U$, i.e.,
\begin{align}\label{eq.temp_120308}
    \left\|\frac{X_i-\expectation X_i}{2U}\right\|_2\leq 1.
\end{align}
We also have
\begin{align}\label{eq.temp_022601}
    \expectation \left[ \frac{X_i-\expectation X_i}{2U}\right]=\frac{\expectation X_i-\expectation X_i}{2U}=\bm{0}.
\end{align}
We then have
\begin{align*}
    &\prob\left\{\left\|\left(\frac{1}{k}\sum_{i=1}^k X_i\right)-\expectation X_1\right\|_2\geq k^{\frac{1}{2q}-\frac{1}{2}}\right\}\\
    =&\prob\left\{\left\|\sum_{i=1}^k\left( X_i-\expectation X_i\right)\right\|_2\geq k^{\frac{1}{2q}+\frac{1}{2}}\right\}\\
    =&\prob\left\{\left\|\sum_{i=1}^k\left( \frac{X_i-\expectation X_i}{2U}\right)\right\|_2\geq \frac{k^{\frac{1}{2q}+\frac{1}{2}}}{2U}\right\}\\
    <&2e^2\exp\left(-\frac{\sqrt[q]{k}}{8U^2}\right)\text{ (by Eqs.~\eqref{eq.temp_120308}\eqref{eq.temp_022601} and letting $a=\frac{k^{\frac{1}{2q}+\frac{1}{2}}}{2U}$ in Lemma~\ref{le.hoeffding})}.
\end{align*}
\end{proof}

\subsection{Calculation of certain integrals}

The following lemma calculates the ratio between the intersection area of two hyper-hemispheres and the area of the whole hyper-sphere.
\begin{lemma}\label{le.spherePortion}
\begin{align}\label{eq.temp_111001}
    \int_{\sd} \bm{1}_{\{\bm{z}^T\bm{v}>0,\ \xsmall^T\bm{v}>0\}} d\nvdensity(\bm{v})=\frac{\pi-\arccos (\xsmall^T\bm{z})}{2\pi}.
\end{align}
(Recall that $\nvdensity(\cdot)$ denotes the distribution of the normalized version of $\Vzeroj$ on $\sd$ and is assumed to be uniform in all directions.)
\end{lemma}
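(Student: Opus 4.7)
The plan is to reduce the integral to a two-dimensional calculation by exploiting the rotational symmetry of $\nvdensity = \mathsf{unif}(\sd)$. The degenerate cases $\xsmall = \pm\bm{z}$ are trivial (both sides of Eq.~\eqref{eq.temp_111001} equal $1/2$ or $0$ respectively), so I assume $\xsmall \nparallel \bm{z}$, as is guaranteed almost surely by Assumption~\ref{as.nparallel}. Then $W\defeq\mathrm{span}(\xsmall,\bm{z})$ is a genuine 2-plane and $W^\perp$ has dimension $d-2$. The key observation is that, because $\bm{z},\xsmall\in W$, the indicator $\bm{1}_{\{\bm{z}^T\bm{v}>0,\ \xsmall^T\bm{v}>0\}}$ depends on $\bm{v}\in\sd$ only through the $W$-component of $\bm{v}$.

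First I would settle the $d=2$ case directly. Set $\theta\defeq\arccos(\xsmall^T\bm{z})\in(0,\pi)$ and choose an orthonormal frame of $W$ so that $\bm{z}$ and $\xsmall$ sit symmetrically about the first axis at angular positions $\pm\theta/2$. Parametrizing $\bm{v}=(\cos\phi,\sin\phi)$, the conditions $\bm{z}^T\bm{v}>0$ and $\xsmall^T\bm{v}>0$ translate to $|\phi-\theta/2|<\pi/2$ and $|\phi+\theta/2|<\pi/2$, whose intersection is an arc of angular length $\pi-\theta$. Dividing by the circumference $2\pi$ yields exactly $(\pi-\theta)/(2\pi)$, matching the claim.

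For $d\geq 3$, decompose $\bm{v}\in\sd$ as $\bm{v}=\sin\psi\cdot\bm{u}+\cos\psi\cdot\bm{w}$ with $\psi\in[0,\pi/2]$, $\bm{u}$ a unit vector in $W$, and $\bm{w}$ a unit vector in $W^\perp$. Under the uniform measure on $\sd$, the distribution of $(\psi,\bm{u},\bm{w})$ factorizes as a one-variable density in $\psi$ times the uniform measures on the unit circle in $W$ and on the unit sphere in $W^\perp$. Because $\bm{z}^T\bm{v}=\sin\psi\cdot\bm{z}^T\bm{u}$ and $\xsmall^T\bm{v}=\sin\psi\cdot\xsmall^T\bm{u}$, for $\psi\in(0,\pi/2]$ the indicator depends only on $\bm{u}$; the integrals over $\psi$ and $\bm{w}$ each contribute $1$, reducing the computation to the $d=2$ integral already handled.

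The only mildly fiddly step is the measure decomposition of $\sd$ relative to $W$, but there is no real obstacle: the symmetry argument means that no Jacobian need actually be evaluated explicitly, the slice $\psi=0$ has measure zero and may be discarded, and the identity is at bottom a purely geometric fact about the proportion of $\sd$ covered by the intersection of two open hemispheres whose poles subtend an angle $\theta$. No results beyond the uniformity assumption on $\nvdensity$ are needed.
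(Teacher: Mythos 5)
Your proposal is correct and is essentially the paper's own argument: the paper likewise places $\xsmall$ and $\bm{z}$ in a common coordinate 2-plane, parametrizes $\sd$ so that the two half-space conditions constrain only the single angle in that plane, and reads off the answer as the arc-length fraction $\frac{\pi-\theta}{2\pi}$ after the remaining angular integrals cancel. Your formulation via the orthogonal decomposition $\bm{v}=\sin\psi\cdot\bm{u}+\cos\psi\cdot\bm{w}$ is just a coordinate-free phrasing of the same symmetry reduction, and your handling of the degenerate and measure-zero cases is fine.
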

Before we give the proof of Lemma~\ref{le.spherePortion}, we give its geometric explanation.

\emph{Geometric explanation of Eq.~\eqref{eq.temp_111001}}: Indeed, since $\nvdensity$ is uniform on $\sd$, the integral on the left-hand-side of Eq.~\eqref{eq.temp_111001} represents the probability that a random point falls into the intersection of two hyper-hemispheres that are represented by $\{\bm{v}\in\sd\ |\ \bm{z}^T\bm{v}>0\}$ and $\{\bm{v}\in\sd\ |\ \bm{x}^T\bm{v}>0\}$, respectively. We can calculate that probability by
\begin{align}\label{eq.temp_111002}
    \frac{\text{measure of a hyper-spherical lune with angle $\pi-\theta(\bm{z},\xsmall)$}}{\text{measure of a unit hyper-sphere}}=\frac{\pi-\arccos (\xsmall^T\bm{z})}{2\pi},
\end{align}
where $\theta(\cdot,\cdot)$ denote the angle (in radians) between two vectors, which would lead to Eq.~\eqref{eq.temp_111001}.
\begin{figure}[t]
\centering
\includegraphics[width=4cm]{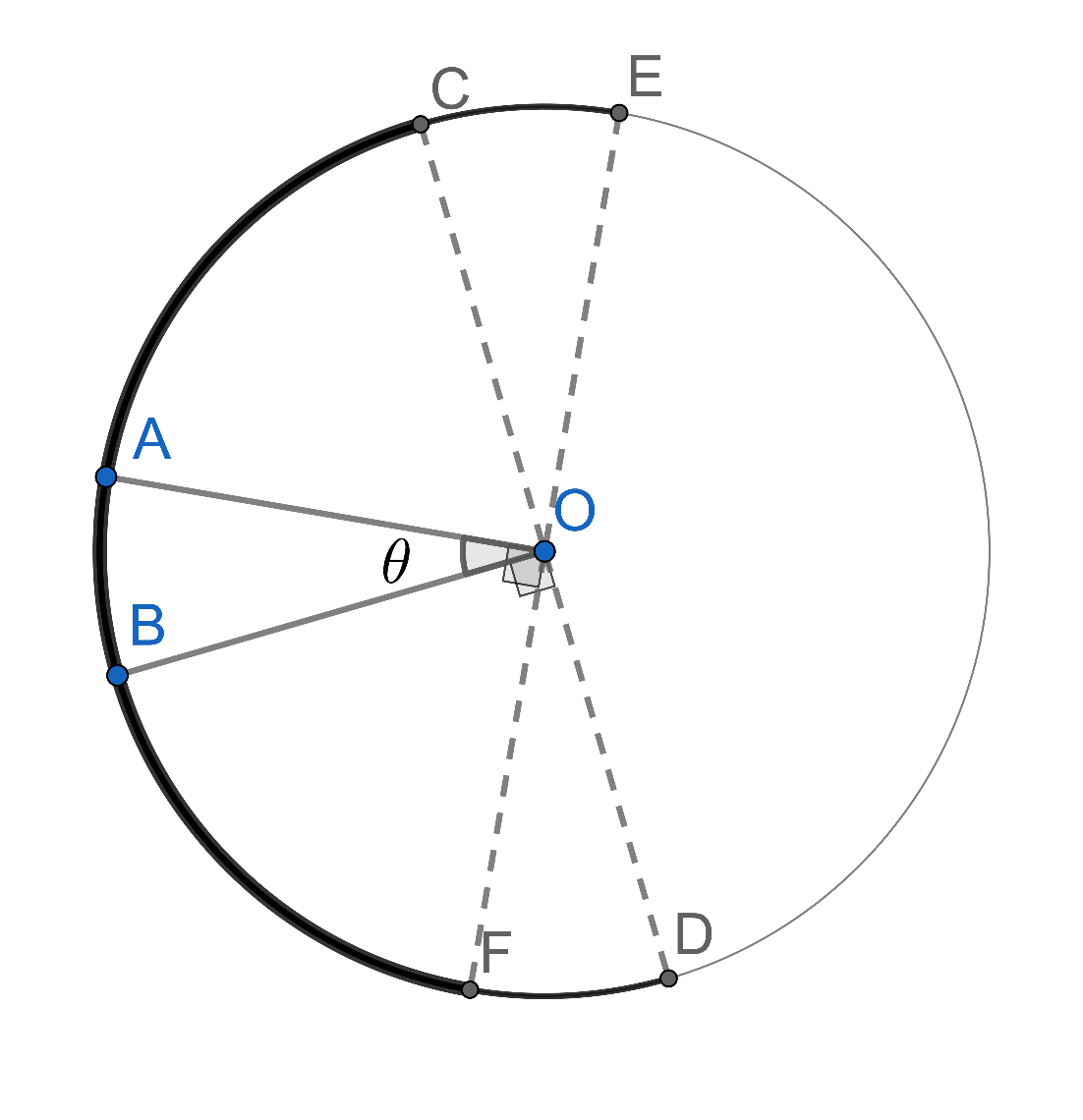}
\caption{The arc $\stackrel{\frown}{\mathrm{CBF}}$ is $\frac{\pi-\theta}{2\pi}$ of the perimeter of the circle $\mathrm{O}$.}
\label{fig.2D}
\end{figure}
\begin{figure}[t]
\centering
\includegraphics[width=8cm]{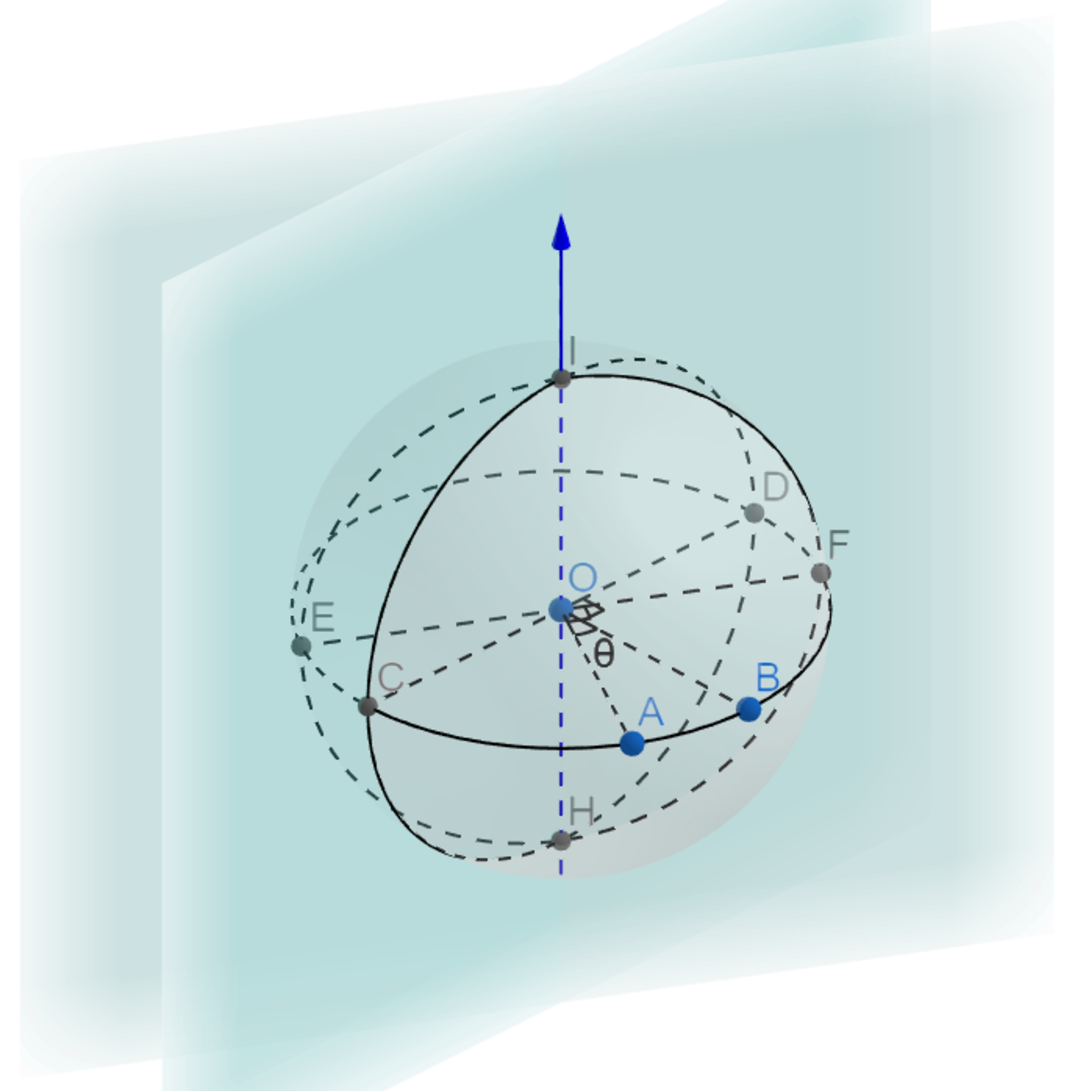}
\caption{The area of the spherical lune $\mathrm{ICHF}$ is $\frac{\pi-\theta}{2\pi}$ of the area of the whole sphere.}
\label{fig.3D}
\end{figure}
To help readers understand Eq.~\eqref{eq.temp_111002}, we give examples for 2D and 3D in Fig.~\ref{fig.2D} and Fig.~\ref{fig.3D}, respectively. In the 2D case depicted in Fig.~\ref{fig.2D},  $\overrightarrow{\mathrm{OA}}$ denotes $\bm{z}$, $\overrightarrow{\mathrm{OB}}$ denotes $\bm{x}$. Thus, the arc $\stackrel{\frown}{\mathrm{EAF}}$ denotes $\{\bm{v}\ |\ \bm{z}^T\bm{v}>0\}$, and the arc $\stackrel{\frown}{\mathrm{CBD}}$ denotes $\{\bm{v}\ |\ \bm{x}^T\bm{v}>0\}$. The intersection of $\stackrel{\frown}{\mathrm{EAF}}$ and $\stackrel{\frown}{\mathrm{CBD}}$, i.e., the arc $\stackrel{\frown}{\mathrm{CBF}}$, represents $\{\bm{v}\ |\ \bm{z}^T\bm{v}>0, \bm{x}^T\bm{v}>0\}$. Notice that the angle of $\stackrel{\frown}{\mathrm{CBF}}$ equals $\pi-\theta$, where $\theta$ denotes the angle between $\bm{z}$ and $\bm{x}$. Therefore, ratio of the length of $\stackrel{\frown}{\mathrm{CBF}}$ to the perimeter of the circle equals to $\frac{\angle\mathrm{COF}}{2\pi}=\frac{\pi-\theta}{2\pi}$. Similarly, in the 3D case depicted in Fig.~\ref{fig.3D}, the spherical lune $\mathrm{ICHF}$ denotes the intersection of the  semi-sphere in the direction of $\overrightarrow{\mathrm{OA}}$ and the semi-sphere in the direction of $\overrightarrow{\mathrm{OB}}$. We can see that the area of the spherical lune $\mathrm{ICHF}$ is still proportional to the angle $\angle \mathrm{COF}$. Thus, we still have the result that the area of the spherical lune $\mathrm{ICHF}$ is $\frac{\pi-\theta}{2\pi}$ of the area of the whole sphere. The proof below, on the other hand, applies to arbitrary dimensions.
% \section{Proof of Lemma~\ref{le.spherePortion}}\label{app.proof_spherePortion}
\begin{proof}

% Thus, it remains to show that
% \begin{align}
%     \int_{\sd} \bm{1}_{\{\bm{z}^T\bm{v}>0,\ \xsmall^T\bm{v}>0\}} d\nvdensity(\bm{v})=\frac{\pi-\arccos (\xsmall^T\bm{z})}{2\pi}.
% \end{align}

Due to symmetry, we know that the integral of Eq.~\eqref{eq.temp_111001} only depends on the angle between $\xsmall$ and $\bm{z}$. Thus, without loss of generality, we let
\begin{align*}
    \xsmall =[\xsmall_1\ \xsmall_2\ \cdots\ \xsmall_d]= [0\ 0\ \cdots\ 0\ 1\ 0]^T,\ \bm{z}=[0\ 0\ \cdots\ 0\ \cos\theta\ \sin\theta]^T,
\end{align*}
where 
\begin{align}\label{eq.temp_020901}
    \theta = \arccos(\xsmall^T\bm{z})\in [0,\ \pi].
\end{align}
Thus, for any $\bm{v}=[\bm{v}_1\ \bm{v}_2\ \cdots\ \bm{v}_{d}]^T$ that makes $\bm{z}^T\bm{v}>0$ and $\xsmall^T\bm{v}>0$, it only needs to satisfy
\begin{align}\label{eq.temp_122901}
    [\cos\theta\ \sin\theta]\begin{bmatrix}\bm{v}_{d-1}\\\bm{v}_d\end{bmatrix}>0,\quad [1\ 0]\begin{bmatrix}\bm{v}_{d-1}\\\bm{v}_d\end{bmatrix}>0.
\end{align}
We compute the spherical coordinates $\bm{\varphi}_{\xsmall}=[\varphi_1^{\xsmall}\ \varphi_2^{\xsmall}\ \cdots\ \varphi_{d-1}^{\xsmall}]^T$ where $\varphi_1^{\xsmall},\cdots,\varphi_{d-2}^{\xsmall}\in [0,\pi]$ and $\varphi_{d-1}^{\xsmall}\in[0,2\pi)$ with the convention that
\begin{align*}
    &\bm{x}_1=\cos(\varphi_1^{\xsmall}),\\
    &\bm{x}_2=\sin(\varphi_1^{\xsmall})\cos(\varphi_2^{\xsmall}),\\
    &\bm{x}_3=\sin(\varphi_1^{\xsmall})\sin(\varphi_2^{\xsmall})\cos(\varphi_3^{\xsmall}),\\
    &\vdots\\
    &\bm{x}_{d-1}=\sin(\varphi_1^{\xsmall})\sin(\varphi_2^{\xsmall})\cdots\sin(\varphi_{d-2}^{\xsmall})\cos(\varphi_{d-1}^{\xsmall}),\\
    &\bm{x}_d=\sin(\varphi_1^{\xsmall})\sin(\varphi_2^{\xsmall})\cdots\sin(\varphi_{d-2}^{\xsmall})\sin(\varphi_{d-1}^{\xsmall}).
\end{align*}
Thus, we have $\bm{\varphi}_{\xsmall}=[\pi/2\ \pi/2\ \cdots\ \pi/2\ 0]^T$. Similarly, the spherical coordinates for $\bm{z}$ is $\bm{\varphi}_{\bm{z}}=[\pi/2\ \pi/2\ \cdots \pi/2\ \theta]^T$. Let the spherical coordinates for $\bm{v}$ be $\bm{\varphi}_{\bm{v}}=[\varphi_1^{\bm{v}}\ \varphi_2^{\bm{v}}\ \cdots\ \varphi_{d-1}^{\bm{v}}]^T$. Thus, Eq.~\eqref{eq.temp_122901} is equivalent to
\begin{align}
    &\sin(\varphi_1^{\bm{v}})\sin(\varphi_2^{\bm{v}})\cdots\sin(\varphi_{d-2}^{\bm{v}})\left(\cos\theta \cos(\varphi_{d-1}^{\bm{v}})+\sin\theta\sin(\varphi_{d-1}^{\bm{v}})\right)>0,\label{eq.temp_020902}\\
    &\sin(\varphi_1^{\bm{v}})\sin(\varphi_2^{\bm{v}})\cdots\sin(\varphi_{d-2}^{\bm{v}})\cos(\varphi_{d-1}^{\bm{v}})>0.\label{eq.temp_020903}
\end{align}
Because $\varphi_1^{\bm{v}},\cdots,\varphi_{d-2}^{\bm{v}}\in[0,\pi]$ (by the convention of spherical coordinates), we have
\begin{align*}
    \sin(\varphi_1^{\bm{v}})\sin(\varphi_2^{\bm{v}})\cdots\sin(\varphi_{d-2}^{\bm{v}})\geq 0.
\end{align*}
Thus, for Eq.~\eqref{eq.temp_020902} and Eq.~\eqref{eq.temp_020903}, we have
\begin{align*}
    \cos(\theta-\varphi_{d-1}^{\bm{v}})>0,\quad \cos(\varphi_{d-1}^{\bm{v}})>0,
\end{align*}
i.e., $\varphi_{d-1}^{\bm{v}}\in (-\pi/2,\ \pi/2)\cap (\theta-\pi/2,\ \theta+\pi/2)\pmod{2\pi}$.
We have
\begin{align*}
    &\int_{\sd} \bm{1}_{\{\bm{z}^T\bm{v}>0,\ \xsmall^T\bm{v}>0\}} d\nvdensity(\bm{v})\\
    =&\frac{\int_{(-\frac{\pi}{2},\ \frac{\pi}{2})\cap (\theta-\frac{\pi}{2},\ \theta+\frac{\pi}{2})}\int_0^\pi\cdots\int_0^\pi \sin^{d-2}(\varphi_1)\sin^{d-3}(\varphi_2)\cdots \sin(\varphi_{d-2})\ d\varphi_1\ d\varphi_2\cdots d\varphi_{d-1}}{\int_0^{2\pi}\int_0^\pi\cdots\int_0^\pi \sin^{d-2}(\varphi_1)\sin^{d-3}(\varphi_2)\cdots \sin(\varphi_{d-2})\ d\varphi_1\ d\varphi_2\cdots d\varphi_{d-1}}\\
    =&\frac{\int_{(-\frac{\pi}{2},\ \frac{\pi}{2})\cap (\theta-\frac{\pi}{2},\ \theta+\frac{\pi}{2})}A\cdot d\varphi_{d-1}}{\int_0^{2\pi}A\cdot d\varphi_{d-1}}\\
    &\text{ (by defining $A\defeq \int_0^\pi\cdots\int_0^\pi \sin^{d-2}(\varphi_1)\sin^{d-3}(\varphi_2)\cdots \sin(\varphi_{d-2})\ d\varphi_1\ d\varphi_2$)}\\
    =&\frac{\text{length of the interval }(-\frac{\pi}{2},\ \frac{\pi}{2})\cap (\theta-\frac{\pi}{2},\ \theta+\frac{\pi}{2})}{2\pi}\\
    =&\frac{\pi-\theta}{2\pi}\text{ (because $\theta\in[0,\pi]$ by Eq.~\eqref{eq.temp_020901})}\\
    =&\frac{\pi- \arccos(\bm{x}^T\bm{z})}{2\pi}\text{ (by Eq.~\eqref{eq.temp_020901})}.
\end{align*}
The result of this lemma thus follows.
\end{proof}

\subsection{Limits of \texorpdfstring{${|\Czx|}/{p}$}{|C|/p} when \texorpdfstring{$p\to\infty$}{p approaches infinity}}\label{app.c_divide_p_convergence}
% {\color{red}-----This subsection is not ready-------}

We introduce some notions given by \cite{uniformConvergence}.

\textbf{Glivenko-Cantelli class}. Let $\mathscr{F}$ be a class of integrable real-valued functions with domain $\mathcal{X}$, and let $X_1^k=\{X_1,\cdots,X_k\}$ be a collection of \emph{i.i.d.} samples from some distribution $\mathbb{P}$ over $\mathcal{X}$. Consider the random variable
\begin{align*}
    \|\mathbb{P}_k-\mathbb{P}\|_{\mathscr{F}}\defeq \sup_{\ft\in\mathscr{F}} \left|\frac{1}{k}\sum_{i=1}^k \ft(X_k)-\expectation[\ft]\right|,
\end{align*}
which measures the maximum deviation (over the class $\mathscr{F}$) between the sample average $\frac{1}{k}\sum_{i=1}^k\ft(X_i)$ and the population average $\expectation[\ft]=\expectation[\ft(X)]$. We say that $\mathscr{F}$ is a \emph{Glivenko-Cantelli} class for $\mathbb{P}$ if $\|\mathbb{P}_k-\mathbb{P}\|_{\mathscr{F}}$ converges to zero in probability as $k\to\infty$.

\textbf{Polynomial discrimination}. A class $\mathscr{F}$ of functions with domain $\mathcal{X}$ has polynomial discrimination of order $\nu\geq 1$ if for each positive integer $k$ and collection $X_1^k=\{X_1,\cdots,X_k\}$ of $k$ points in $\mathcal{X}$, the set $\mathscr{F}(X_1^k)$ has cardinality upper bounded by 
\begin{align*}
    \text{card}(\mathscr{F}(X_1^k))\leq (k+1)^{\nu}.
\end{align*}

The following lemma is shown in Page 108 of \cite{uniformConvergence}.
\begin{lemma}\label{le.uniformConvergence}
Any bounded function class with polynomial discrimination is Glivenko-Cantelli.
\end{lemma}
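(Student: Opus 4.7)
The plan is to establish convergence in probability of $\|\mathbb{P}_k - \mathbb{P}\|_{\mathscr{F}}$ to zero by first bounding its expectation via a symmetrization argument, and then reducing the supremum to a finite maximum that we can control using the polynomial-discrimination hypothesis. Throughout, let $B$ denote the uniform bound on $|\tilde f|$ for $\tilde f \in \mathscr{F}$; such a $B$ exists because $\mathscr{F}$ is bounded.

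First I would apply the standard symmetrization inequality: introduce an independent ``ghost'' sample $X_1', \ldots, X_k'$ from $\mathbb{P}$ and i.i.d.\ Rademacher signs $\epsilon_1, \ldots, \epsilon_k$, and argue
\begin{align*}
\mathbb{E}\,\|\mathbb{P}_k - \mathbb{P}\|_{\mathscr{F}}
\;\le\; 2\,\mathbb{E}\sup_{\tilde f \in \mathscr{F}} \left| \frac{1}{k}\sum_{i=1}^{k} \epsilon_i \tilde f(X_i) \right|.
\end{align*}
This is routine: bound $\expectation[\tilde f]$ by the ghost-sample empirical mean, move the supremum inside the outer expectation via Jensen, and then insert symmetric Rademacher signs using the fact that $\tilde f(X_i) - \tilde f(X_i')$ is symmetric.

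Next I would condition on $X_1^k = (X_1, \ldots, X_k)$. The supremum on the right depends on $\mathscr{F}$ only through its restriction $\mathscr{F}(X_1^k)$ to the sample, which by the polynomial-discrimination hypothesis has cardinality at most $(k+1)^\nu$. Each vector in $\mathscr{F}(X_1^k)$ has entries bounded by $B$, so $\left| \frac{1}{k}\sum_i \epsilon_i \tilde f(X_i) \right|$ is a centered sub-Gaussian random variable (in $\epsilon$) with parameter at most $B/\sqrt{k}$. Applying the maximal inequality for finitely many sub-Gaussians (Massart's lemma), I get
\begin{align*}
\mathbb{E}_{\epsilon}\sup_{\tilde f \in \mathscr{F}} \left| \frac{1}{k}\sum_{i=1}^{k} \epsilon_i \tilde f(X_i) \right|
\;\le\; B\sqrt{\frac{2\log\bigl(2(k+1)^\nu\bigr)}{k}}
\;=\; O\!\left(B\sqrt{\frac{\nu\log(k+1)}{k}}\right).
\end{align*}
Taking expectation over $X_1^k$ and combining with the symmetrization step yields $\mathbb{E}\,\|\mathbb{P}_k - \mathbb{P}\|_{\mathscr{F}} \to 0$ as $k \to \infty$. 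By Markov's inequality, this gives convergence in probability, which is the Glivenko--Cantelli conclusion. (If one wanted convergence almost surely or in a stronger sense, one could additionally apply McDiarmid's bounded-differences inequality to $\|\mathbb{P}_k - \mathbb{P}\|_{\mathscr{F}}$, which has bounded differences of order $2B/k$, to get sub-Gaussian concentration around its mean.)

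The main obstacle is mostly bookkeeping rather than a genuine difficulty: one must be careful that $\mathscr{F}(X_1^k)$ as defined via polynomial discrimination (the image set of $\mathscr{F}$ evaluated on the sample) really does control the supremum, and that the uniform boundedness assumption gives a finite $B$ so that Massart's lemma applies. Since the statement is explicitly cited as Page~108 of \cite{uniformConvergence}, the proof above is the standard template and no new ideas are needed beyond the symmetrization-plus-finite-maximum sketch.
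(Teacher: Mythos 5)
Your proof is correct and follows the standard symmetrization-plus-Massart argument, which is precisely the proof given in the reference the paper cites (the paper itself does not reprove this lemma, only quoting it from page 108 of that source). No gaps: the uniform bound $B$ justifies the sub-Gaussian parameter $B/\sqrt{k}$, the polynomial-discrimination bound $(k+1)^{\nu}$ makes the finite-class maximal inequality give an $O(\sqrt{\nu\log(k+1)/k})$ rate, and Markov's inequality converts the vanishing expectation into the convergence in probability required by the Glivenko--Cantelli definition used here.
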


For our case, we care about the following value.
\begin{align*}
    \left|\frac{|\Czx|}{p}-\frac{\pi-\arccos (\xsmall^T\bm{z})}{2\pi}\right|=\left|\frac{1}{p}\sum_{j=1}^p \bm{1}_{\{\xsmall^T\Vzeroj>0,\bm{z}^T\Vzeroj>0\}}-\expectation_{\bm{v}\sim \nvdensity(\cdot)}[\bm{1}_{\{\xsmall^T\bm{v}>0,\bm{z}^T\bm{v}>0\}}]\right|\text{ (by Lemma~\ref{le.spherePortion})}.
\end{align*}
In the language of Glivenko-Cantelli class, the function class $\mathscr{F}_*$ consists of functions $\bm{1}_{\{\xsmall^T\bm{v}>0,\bm{z}^T\bm{v}>0\}}$ that map $\bm{v}\in\sd$ to $0$ or $1$, where every $\bm{x}\in\sd$ and $\bm{z}\in\sd$ corresponds to a distinct function in $\mathscr{F}_*$. 
According to Lemma~\ref{le.uniformConvergence}, we need to calculate the order of the polynomial discrimination for this $\mathscr{F}_*$. 
Towards this end, we need the following lemma, which can be derived from the quantity $Q_{n,N}$ in \cite{wendel1962problem} (which is the quantity $Q_{d,k}$ in the following lemma).
\begin{lemma}\label{le.how_many_different_parts}
Given $\bm{v}_1,\bm{v}_2,\cdots,\bm{v}_k\in\sd$, the number of different values (i.e., the cardinality) of the set $\left\{\left(\bm{1}_{\{\xsmall^T \bm{v}_1>0\}}, \bm{1}_{\{\xsmall^T \bm{v}_2>0\}},\cdots, \bm{1}_{\{\xsmall^T \bm{v}_k>0\}}\right)\ \big|\ \xsmall\in\sd\right\}$ is at most $Q_{d,k}$, where
\begin{align*}
    Q_{d,k}\defeq\begin{cases}
    2\sum_{i=0}^{d-1}\binom{k-1}{i},&\text{ if }k>d,\\
    2^k,&\text{ if }k\leq d.
    \end{cases}
\end{align*}
\end{lemma}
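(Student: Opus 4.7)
The plan is to reformulate the counting problem geometrically and invoke a classical result on central hyperplane arrangements. For each $\bm{v}_j \in \sd$, let $H_j \defeq \{\xsmall \in \mathds{R}^d : \xsmall^T \bm{v}_j = 0\}$ be the hyperplane through the origin with normal $\bm{v}_j$. Each $\xsmall \in \sd$ lies in some (relatively open) chamber of the arrangement $\mathscr{A} = \{H_1, \ldots, H_k\}$, and within a single chamber the sign pattern $(\bm{1}_{\{\xsmall^T \bm{v}_j > 0\}})_{j=1}^k$ is constant. Hence the cardinality in question is upper bounded by the number of chambers of $\mathscr{A}$. (Boundary points $\xsmall$ with $\xsmall^T\bm{v}_j = 0$ for some $j$ only contribute sign patterns that already appear in the adjacent chamber where the strict inequality fails, so they do not create extra patterns beyond the chamber count.)

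The central task is thus to bound the number of chambers of a central arrangement of $k$ hyperplanes in $\mathds{R}^d$ by $Q_{d,k}$. I would proceed by a double induction on $(d,k)$, establishing the recursion
\begin{align*}
    C_d(k) \leq C_d(k-1) + C_{d-1}(k-1),
\end{align*}
where $C_d(k)$ denotes the maximum number of chambers over all choices of $k$ normals in $\sd$. The inductive step uses the standard ``sweep'' argument: adding the hyperplane $H_k$ to the arrangement of $H_1,\ldots,H_{k-1}$ splits a chamber of the old arrangement into two pieces if and only if that chamber intersects $H_k$ in a nonempty relatively open set, and the number of such splits is at most the number of chambers of the \emph{induced} central arrangement on $H_k \cong \mathds{R}^{d-1}$ obtained by intersecting $H_1,\ldots,H_{k-1}$ with $H_k$. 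By induction, this is at most $C_{d-1}(k-1)$, yielding the recursion.

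The base cases are $C_d(0) = 1$ and $C_1(k) = 2$ for $k\geq 1$ (on a line, any central arrangement is just the origin, giving two half-lines). To close the induction, I would verify that $Q_{d,k}$ satisfies the same recursion $Q_{d,k} = Q_{d,k-1} + Q_{d-1,k-1}$, which reduces to Pascal's identity $\binom{k-1}{i} = \binom{k-2}{i} + \binom{k-2}{i-1}$ applied term-by-term to the sum $2\sum_{i=0}^{d-1}\binom{k-1}{i}$. The transition regime $k \leq d$ is handled by observing that the trivial bound $2^k$ (one bound per possible sign pattern) matches $Q_{d,k}$, and checking that the recursion remains consistent at the boundary $k = d+1$, where $Q_{d,d+1} = 2^{d+1} - 2 + 2 \cdot \binom{d}{d-1}$ aligns correctly.

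The main obstacle I anticipate is handling the non-general-position case cleanly: when some $\bm{v}_j$'s are collinear or otherwise in special position, the arrangement has strictly fewer chambers, so the upper bound still holds, but the sweep argument must be stated as an \emph{inequality} rather than an identity. A secondary technical point is being pedantic about the boundary contribution: since the indicator $\bm{1}_{\{\xsmall^T \bm{v}_j > 0\}}$ is $0$ on $H_j$, any sign pattern realized by a boundary point already matches the one from the adjacent open chamber on the ``$\leq 0$'' side, so no new patterns arise from the lower-dimensional strata of the arrangement, confirming that counting open chambers suffices.
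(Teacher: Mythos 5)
Your overall strategy --- pass to the central hyperplane arrangement $\{H_1,\dots,H_k\}$, bound the count by the number of chambers, and prove the chamber bound by the sweep recursion $C_d(k)\le C_d(k-1)+C_{d-1}(k-1)$ closed via Pascal's identity --- is the classical Schl\"afli/Wendel argument, and it is essentially what the paper relies on (the paper does not prove the lemma; it cites Wendel (1962) and gives the same chamber-counting intuition). The recursion, the base cases, and the consistency of $Q_{d,k}$ with the recursion are all fine (modulo a small arithmetic slip: $Q_{d,d+1}=2\sum_{i=0}^{d-1}\binom{d}{i}=2^{d+1}-2$, not $2^{d+1}-2+2\binom{d}{d-1}$, though the identity $Q_{d,d+1}=Q_{d,d}+Q_{d-1,d}$ does hold).

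However, there is a genuine gap in the very first reduction. The claim ``the cardinality is upper bounded by the number of chambers'' is false without a general-position hypothesis, and your justification for discarding boundary points does not work. Concretely, take $d=2$, $k=2$, $\bm{v}_1=(1,0)$, $\bm{v}_2=(-1,0)$. The arrangement consists of a single line and has exactly $2$ chambers, but the realized patterns on $\mathcal{S}^1$ are $(1,0)$, $(0,1)$, and $(0,0)$ (the last one at $\xsmall=(0,\pm 1)$) --- three patterns, exceeding the chamber count. The point is that the pattern $(0,0)$ is \emph{not} realized in any ``adjacent chamber where the strict inequality fails,'' because no open chamber lies simultaneously on the non-positive side of both $\bm{v}_1$ and $\bm{v}_2$. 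More generally, a boundary point $\xsmall$ with zero set $Z=\{j:\xsmall^T\bm{v}_j=0\}$ produces a pattern not seen in any chamber exactly when there is no direction $\bm{u}$ with $\bm{u}^T\bm{v}_j<0$ for all $j\in Z$, i.e.\ (by Gordan's theorem) when $\bm{0}$ lies in the convex hull of $\{\bm{v}_j\}_{j\in Z}$; this can occur even when no two $\bm{v}_j$ are parallel once $d\ge 4$. The bound $Q_{d,k}$ itself still appears to hold (in the antipodal example the boundary patterns exactly saturate it: $2m$ antipodal pairs in $\mathds{R}^2$ give $2k$ chamber patterns plus boundary patterns totalling $Q_{2,k}=2k$), but to prove it you must run the sweep induction on the number of \emph{realized indicator patterns} directly rather than on the number of chambers --- bounding the number of old patterns that split when $\bm{v}_k$ is added by the number of patterns realized on $H_k$ by the restricted system --- and that step requires a careful connectedness/degeneracy analysis that your sketch does not supply.
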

Intuitively, Lemma~\ref{le.how_many_different_parts} states the number of different regions that $k$ hyper-planes through the origin (i.e., the kernel of the inner product with each $\bm{v}_i$) can cut $\sd$ into, because all $\xsmall$ in one region corresponds to the same value of the tuple $\left(\bm{1}_{\{\xsmall^T \bm{v}_1>0\}}, \bm{1}_{\{\xsmall^T \bm{v}_2>0\}},\cdots, \bm{1}_{\{\xsmall^T \bm{v}_k>0\}}\right)$. For example, in the 2D case (i.e., $d=2$), $k$ diameters of a circle can at most cut the whole circle into $2k$ (which equals to $Q_{2,k}$) parts. Notice that if some $\bm{v}_i$'s are parallel (thus some diameters are overlapped), then the total number of different parts can only be smaller. That is why Lemma~\ref{le.how_many_different_parts} states that the cardinality is ``at most'' $Q_{d,k}$.

The following lemma shows that the cardinality in Lemma~\ref{le.how_many_different_parts} is polynomial in $k$.
\begin{lemma}\label{le.estiamte_order_polynomial_discrimination}
Recall the definition $Q_{d,k}$ in Lemma~\ref{le.how_many_different_parts}. For any integer $k\geq 1$ and $d\geq 2$, we must have $Q_{d,k}\leq (k+1)^{d+1}$.
\end{lemma}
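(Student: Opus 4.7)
\textbf{Proof plan for Lemma~\ref{le.estiamte_order_polynomial_discrimination}.}

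The plan is to split into two cases according to the definition of $Q_{d,k}$, namely $k \leq d$ and $k > d$, and to bound each by $(k+1)^{d+1}$ using only elementary inequalities. No clever combinatorics is needed: the desired bound $(k+1)^{d+1}$ is fairly loose, so crude estimates should suffice.

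For the easy case $k \leq d$, I would observe that $Q_{d,k} = 2^k$ and $(k+1)^{d+1} \geq (k+1)^{k+1} \geq 2^{k+1} > 2^k$, where the first inequality uses $d+1 \geq k+1$ and the middle inequality uses $k+1 \geq 2$.

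For the case $k > d$, I would first bound each binomial coefficient by $\binom{k-1}{i} \leq (k-1)^i/i! \leq k^i$, so that
\begin{align*}
Q_{d,k} = 2\sum_{i=0}^{d-1}\binom{k-1}{i} \leq 2\sum_{i=0}^{d-1} k^i \leq 2 d\, k^{d-1}.
\end{align*}
On the other hand, $(k+1)^{d+1} \geq k^{d+1} = k^2 \cdot k^{d-1}$, so it suffices to check $2d \leq k^2$. Since $k > d$ and both are integers we have $k \geq d+1$, which gives $k^2 \geq (d+1)^2 = d^2 + 2d + 1 > 2d$ for every $d \geq 1$. Combining these bounds yields $Q_{d,k} \leq (k+1)^{d+1}$ in this case as well.

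There is no real obstacle here; the only thing to be careful about is the boundary between the two cases and making sure that $k \geq d+1$ is used (rather than just $k > d$) when squaring. A quick numerical sanity check at the boundary, e.g.\ $(d,k)=(2,3)$ giving $Q_{2,3}=6 \leq 64 = 4^3$, confirms the bound is consistent and far from tight.
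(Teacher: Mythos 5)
Your proof is correct and follows essentially the same route as the paper: split into the cases $k\leq d$ and $k>d$, handle the first by comparing $2^k$ with a power of $(k+1)$, and handle the second by bounding each binomial coefficient crudely by a power of $k$ so that $Q_{d,k}\leq 2dk^{d-1}$ and then absorbing the factor $2d$ into two extra powers of $k+1$. The only differences are cosmetic choices of intermediate bounds.
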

\begin{proof}
When $k>d$, because $\binom{k-1}{i}\leq (k-1)^{d-1}$ when $i\leq d-1$, we have $Q_{d,k}=2\sum\limits_{i=0}^{d-1}\binom{k-1}{i}\leq 2d(k+1)^{d-1}\leq (k+1)^{d+1}$ (the last step uses $k\geq 1$ and $k>d$).
When $k\leq d$, because $k\geq 1$, we have $Q_{d,k}=2^k\leq (k+1)^k\leq (k+1)^d$. 
In summary, for any integer $k\geq 1$ and $d\geq 2$, the result $Q_{d,k}\leq (k+1)^{d+1}$ always holds.
\end{proof}

We can now calculate the order of the polynomial discrimination for the function class $\mathscr{F}_*$.
Because
\begin{align*}
    &\text{card}\left(\left\{ \left(\bm{1}_{\{\xsmall^T\bm{v}_1>0,\bm{z}^T\bm{v}_1>0\}}, \bm{1}_{\{\xsmall^T \bm{v}_2>0,\bm{z}^T\bm{v}_2>0\}},\cdots, \bm{1}_{\{\xsmall^T \bm{v}_k>0,\bm{z}^T\bm{v}_k0\}}\right)\ \big|\ \xsmall\in\sd, \bm{z}\in\sd\right\}\right)\\
    \leq & \text{card}\left(\left\{\left(\bm{1}_{\{\xsmall^T \bm{v}_1>0\}}, \bm{1}_{\{\xsmall^T \bm{v}_2>0\}},\cdots, \bm{1}_{\{\xsmall^T \bm{v}_k>0\}}\right)\ \big|\ \xsmall\in\sd\right\}\right)\\
    &\cdot \text{card}\left(\left\{\left(\bm{1}_{\{\bm{z}^T \bm{v}_1>0\}}, \bm{1}_{\{\bm{z}^T \bm{v}_2>0\}},\cdots, \bm{1}_{\{\bm{z}^T \bm{v}_k>0\}}\right)\ \big|\ \bm{z}\in\sd\right\}\right),
\end{align*}
by Lemma~\ref{le.how_many_different_parts} and Lemma~\ref{le.estiamte_order_polynomial_discrimination}, we know that
\begin{align*}
    \text{card}(\mathscr{F}_*(X_1^k))\leq \left(Q_{d,k}\right)^2\leq (k+1)^{ 2(d+1)}.
\end{align*}
(Here $X_1^k$ means $\{\Vzero[1],\cdots,\Vzero[k]\}$.)

Thus, $\mathscr{F}_*$ has polynomial discrimination with order at most $2(d+1)$. Notice that all functions in $\mathscr{F}_*$ is bounded because their outputs can only be $0$ or $1$. Therefore, by Lemma~\ref{le.uniformConvergence} (i.e., any bounded function class with polynomial discrimination is Glivenko-Cantelli), we know that $\mathscr{F}_*$ is Glivenko-Cantelli. In other words, we have shown the following lemma.
\begin{lemma}\label{le.c_divide_p_convergence}
\begin{align}\label{eq.c_p_convergence}
    \sup_{\xsmall,\bm{z}\in\sd}\left|\frac{|\Czx|}{p}-\frac{\pi-\arccos (\xsmall^T\bm{z})}{2\pi}\right|\stackrel{\text{P}}{\rightarrow}0, \text{ as }p\to\infty.
\end{align}
\end{lemma}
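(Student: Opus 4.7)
My plan is to recognize $|\Czx|/p$ as an empirical mean of i.i.d.\ bounded random variables $\bm{1}_{\{\xsmall^T\Vzero[j]>0,\,\bm{z}^T\Vzero[j]>0\}}$, whose expectation is $\frac{\pi-\arccos(\xsmall^T\bm{z})}{2\pi}$ by Lemma~\ref{le.spherePortion}. The supremum over $(\xsmall,\bm{z})\in\sd\times\sd$ therefore has exactly the form $\|\mathbb{P}_p-\mathbb{P}\|_{\mathscr{F}_*}$ for the indexed function class $\mathscr{F}_*\defeq\{\bm{v}\mapsto \bm{1}_{\{\xsmall^T\bm{v}>0,\,\bm{z}^T\bm{v}>0\}} : \xsmall,\bm{z}\in\sd\}$. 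Hence the claim reduces to showing that $\mathscr{F}_*$ is a Glivenko--Cantelli class under $\nvdensity$, which I will prove by checking the two hypotheses of Lemma~\ref{le.uniformConvergence}.

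The first hypothesis, boundedness, is immediate since every function in $\mathscr{F}_*$ takes values in $\{0,1\}$. For the second, polynomial discrimination, I would fix arbitrary $\bm{v}_1,\ldots,\bm{v}_k\in\sd$ and observe that each coordinate of the tuple $(\bm{1}_{\{\xsmall^T\bm{v}_j>0,\,\bm{z}^T\bm{v}_j>0\}})_{j=1}^k$ is the product of the two halfspace indicators $\bm{1}_{\{\xsmall^T\bm{v}_j>0\}}$ and $\bm{1}_{\{\bm{z}^T\bm{v}_j>0\}}$. Therefore the combined pattern is determined by the pair of single--variable patterns in $\xsmall$ and in $\bm{z}$ respectively, so
\[
\text{card}(\mathscr{F}_*(X_1^k))\leq \text{card}\big(\{(\bm{1}_{\{\xsmall^T\bm{v}_j>0\}})_j : \xsmall\in\sd\}\big)\cdot \text{card}\big(\{(\bm{1}_{\{\bm{z}^T\bm{v}_j>0\}})_j : \bm{z}\in\sd\}\big)\leq Q_{d,k}^2\leq (k+1)^{2(d+1)},
\]
using the Wendel--style count of sign patterns generated by $k$ central hyperplanes on $\sd$ (Lemma~\ref{le.how_many_different_parts}) followed by the polynomial bound in Lemma~\ref{le.estiamte_order_polynomial_discrimination}. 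This gives polynomial discrimination of order $2(d+1)$.

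Invoking Lemma~\ref{le.uniformConvergence} on this bounded, polynomial--discrimination class then yields $\|\mathbb{P}_p-\mathbb{P}\|_{\mathscr{F}_*}\stackrel{\text{P}}{\rightarrow}0$ as $p\to\infty$, which is exactly Eq.~\eqref{eq.c_p_convergence}. The only subtle point in the whole argument is the factorization step that allows counting the $\xsmall$--sign patterns and $\bm{z}$--sign patterns independently and then multiplying; without this observation one would naively index $\mathscr{F}_*$ by a product over $\sd\times\sd$ and the polynomial discrimination bound would not be transparent. Once that observation is in place, everything else is an assembly of the quoted lemmas.
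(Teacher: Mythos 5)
Your proposal is correct and follows essentially the same route as the paper: identify the supremum as $\|\mathbb{P}_p-\mathbb{P}\|_{\mathscr{F}_*}$ with mean given by Lemma~\ref{le.spherePortion}, bound $\text{card}(\mathscr{F}_*(X_1^k))\leq Q_{d,k}^2\leq (k+1)^{2(d+1)}$ by factoring the joint sign pattern into the $\xsmall$- and $\bm{z}$-patterns via Lemma~\ref{le.how_many_different_parts} and Lemma~\ref{le.estiamte_order_polynomial_discrimination}, and invoke Lemma~\ref{le.uniformConvergence}. No gaps.
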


\section{Proof of Lemma~\ref{le.full_rank} (\texorpdfstring{$\Ht$}{H} has full row-rank with high probability as \texorpdfstring{$p\to\infty$}{p approaches infinity})}\label{app.full_rank}
In this section, we prove Lemma~\ref{le.full_rank}, i.e., the matrix $\Ht$ has full row-rank with high probability when $p\to\infty$.
We first introduce two useful lemmas as follows.

The following lemma states that, given $\XX$ (that satisfies Assumption~\ref{as.nparallel}) and $k\in\{1,2,\cdots,n\}$, there always exists a vector $\bm{v}\in\sd$ that is only orthogonal to one training input $\Xkk$ but not orthogonal to other training inputs $\Xii$ for all $i\neq k$. An intuitive explanation is that, because no training inputs are parallel (as stated in Assumption~\ref{as.nparallel}), the total set of vectors that are orthogonal to at least two training inputs is too small. That gives us many options to pick such a vector $\bm{v}$ that is only orthogonal to one input but not others.
\begin{lemma}\label{le.nonEmpty}
For all $k\in\{1,2,\cdots,n\}$ we have
\begin{align*}
    \mathcal{T}_k\defeq \left\{\bm{v}\in\sd\ |\ \bm{v}^T\Xkk=0,\bm{v}^T\Xii\neq 0,\text{for all }i\in \{1,2,\cdots,n\}\setminus\{k\}\right\}\neq \varnothing.
\end{align*}
\end{lemma}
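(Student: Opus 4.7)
My plan is to exhibit an explicit ``big'' set of candidate vectors and then remove a ``small'' set of bad ones. Specifically, I would start by identifying $\mathcal{S}_k := \{\bm{v}\in\sd \mid \bm{v}^T\Xkk = 0\} = \sd \cap \Xkk^{\perp}$, which is a great $(d-2)$-sphere sitting inside $\sd$. This is the natural candidate pool, since every element already satisfies the equality constraint $\bm{v}^T\Xkk = 0$. The task then reduces to showing that within $\mathcal{S}_k$ we can pick a vector avoiding the orthogonality relation $\bm{v}^T\Xii = 0$ for each $i \neq k$.

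For each fixed $i \neq k$, consider the ``bad'' subset $\mathcal{B}_{k,i} := \mathcal{S}_k \cap \{\bm{v}\in\sd \mid \bm{v}^T\Xii = 0\} = \sd \cap \Xkk^{\perp}\cap \Xii^{\perp}$. By Assumption~\ref{as.nparallel}, $\Xkk$ and $\Xii$ are linearly independent, so $\Xkk^{\perp}\cap \Xii^{\perp}$ is a $(d-2)$-dimensional subspace of $\mathds{R}^d$, and $\mathcal{B}_{k,i}$ is therefore a $(d-3)$-sphere inside the $(d-2)$-sphere $\mathcal{S}_k$ (empty when $d = 2$, since then $\Xkk^{\perp}\cap \Xii^{\perp} = \{\bm{0}\}$).

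The conclusion is then a straightforward dimension/measure argument. When $d = 2$, each $\mathcal{B}_{k,i}$ is already empty, so $\mathcal{T}_k = \mathcal{S}_k$, which consists of two antipodal points and is nonempty. When $d\geq 3$, each $\mathcal{B}_{k,i}$ has zero $(d-2)$-dimensional Hausdorff measure on $\mathcal{S}_k$ (it is a proper subsphere of codimension one), while $\mathcal{S}_k$ itself has positive $(d-2)$-dimensional measure; the finite union $\bigcup_{i\neq k}\mathcal{B}_{k,i}$ therefore cannot exhaust $\mathcal{S}_k$, and $\mathcal{T}_k = \mathcal{S}_k \setminus \bigcup_{i\neq k}\mathcal{B}_{k,i}$ is nonempty.

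I do not expect a serious obstacle here; the only slightly delicate point is handling $d = 2$ separately, since there the ``lower-dimensional'' intuition degenerates and one has to verify directly that non-parallel vectors in $\mathds{R}^2$ have trivial common orthogonal complement. Everything else is routine: the non-parallel hypothesis on the $\Xii$'s gives linear independence of each pair $(\Xkk,\Xii)$, and linear independence is exactly what forces $\mathcal{B}_{k,i}$ to have strictly smaller dimension than $\mathcal{S}_k$.
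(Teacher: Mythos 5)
Your proposal is correct and follows essentially the same route as the paper: both identify $\sd\cap\ker(\Xkk)$ as a positive-measure $(d-2)$-sphere, observe that each $\sd\cap\ker(\Xkk)\cap\ker(\Xii)$ has codimension one in it (hence measure zero) by the non-parallel assumption, and conclude the finite union of bad sets cannot exhaust it. Your explicit separate treatment of $d=2$ is a minor presentational difference; the paper handles that case with a one-line remark inside the same measure argument.
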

\begin{proof}
We have
\begin{align*}
    \mathcal{T}_k&=\sd\cap\ker(\Xkk)\setminus\left(\bigcup_{i\in \{1,2,\cdots,n\}\setminus\{k\}}\ker(\Xii)\right)\\
    &=\sd\cap\ker(\Xkk)\setminus\left(\bigcup_{i\in \{1,2,\cdots,n\}\setminus\{k\}}\left(\sd\cap\ker(\Xkk)\cap \ker(\Xii)\right)\right).
\end{align*}
Because
\begin{align}
    &\dim(\sd\cap\ker(\Xkk))=d-2,\nonumber\\
    &\dim(\sd\cap\ker(\Xkk)\cap \ker(\Xii))=d-3\text{ for all }i\in\{1,2,\cdots,n\}\setminus\{k\}\text{ (because $\Xii\nparallel\Xkk$)},\label{eq.temp_093002}
\end{align}
we have
\begin{align}
    &\lambda_{d-2}(\sd\cap\ker(\Xkk))=\lambda_{d-2}(\mathcal{S}^{d-2})>0,\nonumber\\
    &\lambda_{d-2}\left(\sd\cap\ker(\Xkk)\cap \ker(\Xii)\right)=0\text{ for all }i\in\{1,2,\cdots,n\}\setminus\{k\}.\label{eq.temp_093001}
\end{align}
(When $d=2$, the set in Eq.~\eqref{eq.temp_093002} is not defined. Nonetheless, Eq.~\eqref{eq.temp_093001} still holds when $d=2$.)
Thus, we have
\begin{align*}
    \lambda_{d-2}(\mathcal{T}_k)&=\lambda_{d-2}\left(\sd\cap\ker(\Xkk)\right)-\lambda_{d-2}\left(\bigcup_{i\in \{1,2,\cdots,n\}\setminus\{k\}}\left(\sd\cap\ker(\Xkk)\cap \ker(\Xii)\right)\right)\\
    &\geq \lambda_{d-2}\left(\sd\cap\ker(\Xkk)\right)-\sum_{i\in \{1,2,\cdots,n\}\setminus\{k\}}\lambda_{d-2}\left(\sd\cap\ker(\Xkk)\cap \ker(\Xii)\right)\\
    &=\lambda_{d-2}(\mathcal{S}^{d-2})\\
    &>0.
\end{align*}
Therefore, $\mathcal{T}_k\neq \varnothing$.
\end{proof}

\begin{figure}[t]
\centering
\includegraphics[width=8cm]{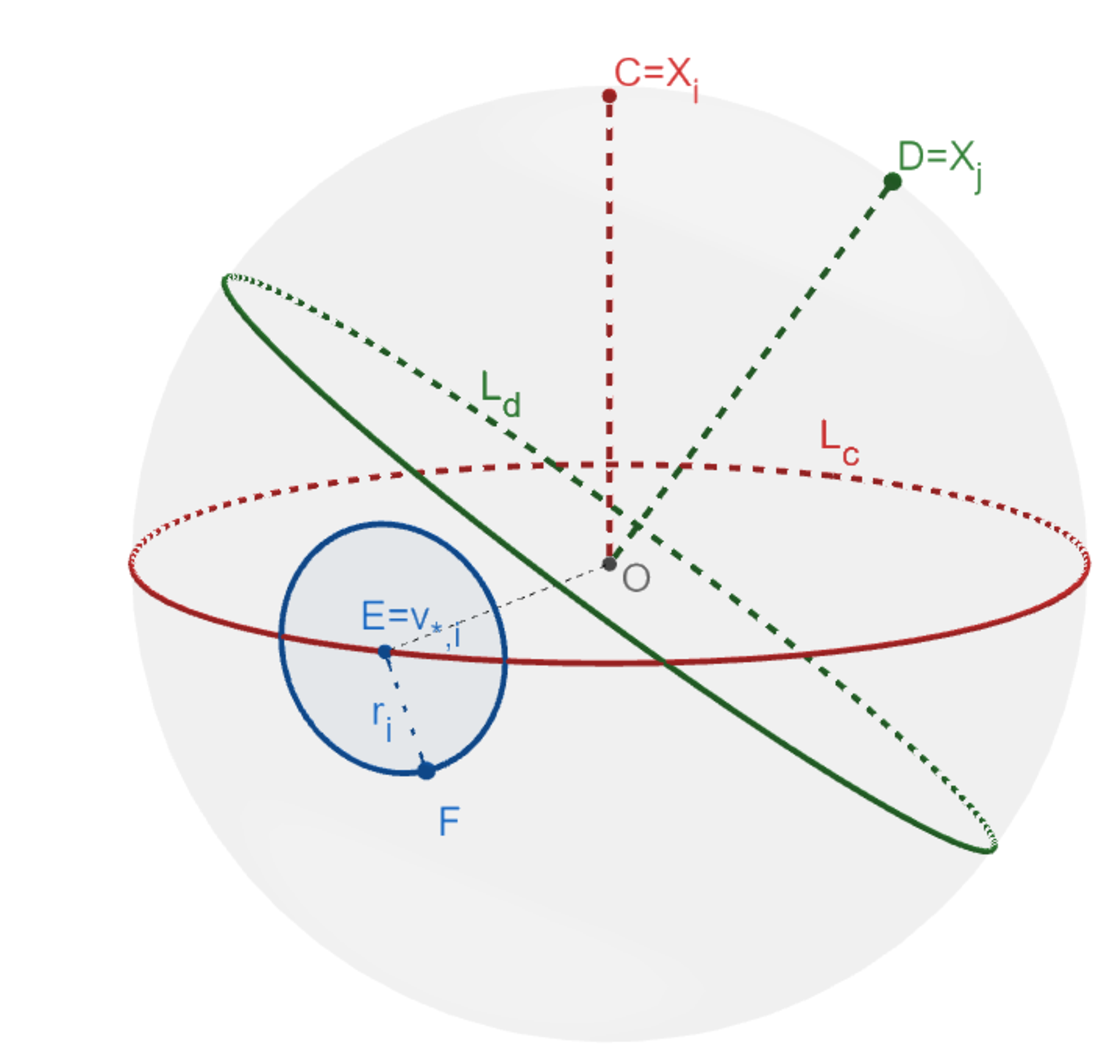}
\caption{Geometric interpretation of $\capsrpX$ and $\capsrnX$ on a sphere (i.e., $\mathcal{S}^2$).}
\label{fig.split_cap}
\end{figure}
The following lemma plays an important role in answering whether $\Ht$ has full row-rank. Further, it is also closely related to our estimation on the $\min\eig(\Ht\Ht^T)$ later in Appendix~\ref{app.noise_large_p}.
\begin{lemma}\label{le.split_half}
Consider any $i\in\{1,2,\cdots,n\}$. For any $\vstari\in \sd$ satisfying $\vstari^T\Xii=0$, we define
\begin{align}\label{eq.temp_020501}
    r_i\defeq\min_{j\in \{1,2,\cdots,n\}\setminus \{i\}}\left|\vstari^T\Xjj\right|.
\end{align}
 If there exist $k,l\in\{1,\cdots,p\}$ such that
\begin{align}\label{eq.temp_110206}
    \frac{\Vzerok}{\|\Vzerok\|_2}\in \capsrpX,\quad
    \frac{\Vzerol}{\|\Vzerol\|_2}\in \capsrnX,
\end{align}
then we must have
\begin{align}
&\Htj[k]=\Htj[l],\ \text{for all } j\in\{1,2,\cdots,n\}\setminus\{i\},\label{eq.temp_020701}\\
&\Hti[k]=\Xii^T,\label{eq.temp_020702}\\
&\Hti[l]=\bm{0}.\label{eq.temp_020703}
\end{align}
(Notice that Eq.~\eqref{eq.temp_110206} implies $r_i>0$.)
\end{lemma}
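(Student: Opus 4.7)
The plan is to exploit the row structure $\Hti[j]=\bm{1}_{\{\Xii^T\Vzero[j]>0\}}\Xii^T$ from Eq.~\eqref{eq.def_hx}, so that each of the three claims reduces to determining the sign (or vanishing) of inner products $\Xii^T\Vzero[k]$, $\Xii^T\Vzero[l]$, $\Xjj^T\Vzero[k]$, and $\Xjj^T\Vzero[l]$. Note that all of these signs depend only on the normalized weights, so I can work directly with $\bm{v}_k\defeq\Vzero[k]/\|\Vzero[k]\|_2$ and $\bm{v}_l\defeq\Vzero[l]/\|\Vzero[l]\|_2$.

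For Eqs.~\eqref{eq.temp_020702}--\eqref{eq.temp_020703} (the $j=i$ block) I would just unpack the half-cap definitions in Eq.~\eqref{eq.temp_110205}. Since $\bm{v}_k\in\capsrpX$ gives $\Xii^T\bm{v}_k>0$, and hence $\Xii^T\Vzero[k]>0$, the indicator in $\Hti[k]$ activates to yield $\Xii^T$. Symmetrically, $\bm{v}_l\in\capsrnX$ gives $\Xii^T\Vzero[l]<0$, so the indicator in $\Hti[l]$ deactivates and returns $\bm{0}$.

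For Eq.~\eqref{eq.temp_020701} (the $j\neq i$ block), the key is to show that $\Xjj^T\Vzero[k]$ and $\Xjj^T\Vzero[l]$ both share the sign of $\Xjj^T\vstari$; once the two indicators $\bm{1}_{\{\Xjj^T\Vzero[k]>0\}}$ and $\bm{1}_{\{\Xjj^T\Vzero[l]>0\}}$ agree, the common factor $\Xjj^T$ gives $\Htj[k]=\Htj[l]$. To see the sign match, I would apply Cauchy--Schwarz with $\|\Xjj\|_2=1$: from $\bm{v}_k\in\capsrX$ we have $\|\bm{v}_k-\vstari\|_2<r_i$, so
\begin{align*}
|\Xjj^T\bm{v}_k-\Xjj^T\vstari|\leq\|\Xjj\|_2\|\bm{v}_k-\vstari\|_2<r_i.
\end{align*}
By the definition of $r_i$ in Eq.~\eqref{eq.temp_020501} we have $|\Xjj^T\vstari|\geq r_i$, so the perturbation is strictly smaller than the magnitude of the baseline and $\Xjj^T\bm{v}_k$ inherits the sign of $\Xjj^T\vstari$ (and is nonzero). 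The identical argument applied to $\bm{v}_l$ shows $\Xjj^T\Vzero[l]$ shares that sign too, and this delivers Eq.~\eqref{eq.temp_020701}.

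I do not expect a substantive obstacle: the proof is essentially two applications of Cauchy--Schwarz plus the unpacking of the indicator-based definition of $\Ht$. The only subtle bookkeeping is to verify that the strict inequality $\|\bm{v}-\vstari\|_2<r_i$ (inherited from the open ball in Eq.~\eqref{eq.temp_100101}) combines with the non-strict bound $|\Xjj^T\vstari|\geq r_i$ to yield strict sign preservation; this goes through because a strictly smaller perturbation cannot flip a nonzero baseline.
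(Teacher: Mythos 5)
Your proposal is correct and follows essentially the same route as the paper: the $j=i$ claims come from unpacking the half-cap definitions, and the $j\neq i$ claim comes from combining $\|\bm{v}-\vstari\|_2<r_i$ with $|\vstari^T\Xjj|\geq r_i$ via Cauchy--Schwarz to preserve the sign of $\Xjj^T\vstari$ (the paper packages this as showing the product $(\bm{v}^T\Xjj)(\vstari^T\Xjj)>0$, which is the same estimate). Your closing remark on strictness is also the right bookkeeping, since the open ball being nonempty forces $r_i>0$.
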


\begin{remark}\label{remark.cap}
We first give an intuitive geometric interpretation of Lemma~\ref{le.split_half}. 
In Fig.~\ref{fig.split_cap}, the sphere centered at $\mathrm{O}$ denotes $\sd$, the vector $\overrightarrow{\mathrm{OC}}$ denotes $\Xii$, the vector $\overrightarrow{\mathrm{OD}}$ denotes one of other $\Xjj$'s, the vector $\overrightarrow{\mathrm{OE}}$ denotes $\vstari$, which is perpendicular to $\Xii$ (i.e., $\Xii^T\vstari=0$). 
The upper half of the cap $\mathrm{E}$ denotes $\capsrpX$, the lower half of the cap $\mathrm{E}$ denotes $\capsrnX$.
The great circle $\mathrm{L_c}$ cuts the sphere into two semi-spheres. The semi-sphere in the direction of $\overrightarrow{\mathrm{OC}}$ corresponds to all vectors $\bm{v}$ on the sphere that have positive inner product with $\Xii$ (i.e., $\vstari^T\Xii>0$), and the semi-sphere in the opposite direction of $\overrightarrow{\mathrm{OC}}$ corresponds to all vectors $\bm{v}$ on the sphere that have negative inner product with $\Xii$ (i.e., $\bm{v}^T\Xii<0$). The great circle $\mathrm{L_d}$ is similar to the great circle $\mathrm{L_c}$, but is perpendicular to the direction $\overrightarrow{\mathrm{OD}}$ (i.e., $\Xjj$). By choosing the radius of the cap $\mathrm{E}$ in Eq.~\eqref{eq.temp_020501}, we can ensure that all great circles that are perpendicular to other $\Xjj$'s do not pass the cap $\mathrm{E}$. 
% In other words, the cap $\mathrm{E}$ is only one side of all $\Xjj$'s, for all $j\neq i^*$. 
In other words, for the two semi-spheres cut by the great circle perpendicular to $\Xjj$, $j\neq i$, the cap $\mathrm{E}$ must be contained in one of them.
Therefore, vectors on the upper half of the cap $\mathrm{E}$ and the vectors on the lower half of the cap $\mathrm{E}$ must have the same sign when calculating the inner product with all $\Xjj$'s, for all $j\neq i$.

Now, let us consider the meaning of Eq.~\eqref{eq.temp_110206} in this geometric setup depicted in Fig.~\ref{fig.split_cap}. The expression $\frac{\Vzerok}{\|\Vzerok\|_2}\in \capsrpX$ means that the direction of $\Vzerok$ is in the upper half of the cap $\mathrm{E}$. By the definition of $\Hti=\hsmall_{\Vzero,\Xii}$ in Eq.~\eqref{eq.def_hx}, we must then have $\Hti[k]=\Xii^T$. Similarly, the expression $\frac{\Vzerol}{\|\Vzerol\|_2}\in \capsrnX$ means that the direction of $\Vzerol$ is in the lower half of the cap $\mathrm{E}$, and thus $\Hti[l]=\bm{0}$. Then, based on the discussions in the previous paragraph, we know that $\Vzerok$ and $\Vzerol$ has the same activation pattern under ReLU for all $\Xjj$'s that $j\neq i$, which implies that $\Htj[k]=\Htj[l]$. These are precisely the conclusions in Eqs.~\eqref{eq.temp_020701}\eqref{eq.temp_020702}\eqref{eq.temp_020703}.

Later in Appendix~\ref{app.noise_large_p}, Lemma~\ref{le.split_half} plays an important role in estimating $\min_{\bm{a}\in\sn}\|\Ht^T\bm{a}\|_2^2$. To see this, let $a_j$ denotes the $j$-th element of $\bm{a}$. By Eq.~\eqref{eq.temp_020701}, we have $\sum_{j\in\{1,2,\cdots,n\}\setminus\{i\}}((\Ht^T a_j)[k]-(\Ht^T a_j)[l])=\bm{0}$. By Eq.~\eqref{eq.temp_020702} and Eq.~\eqref{eq.temp_020703}, we have $(\Ht^T a_{i})[k]-(\Ht^T a_{i})[l]=\Xii$. Combining them together, we have $(\Ht^T\bm{a})[k]-(\Ht^T\bm{a})[l]=a_{i}\Xii$. As long as $a_{i}$ is not zero, then regardless values of other elements in $\bm{a}$, we always obtain that $\Ht^T\bm{a}$ is a non-zero vector. This implies $\|\Ht^T\bm{a}\|_2>0$, which will be useful for estimating $\min\eig(\Ht\Ht^T)/p$ in Appendix~\ref{app.noise_large_p}.

\end{remark}

\begin{proof}
By the definition of $r_i$, we have
\begin{align}\label{eq.temp_093003}
    |\vstari^T\Xjj|-r_i\geq 0\text{, for all }j\in\{1,2,\cdots,n\}\setminus\{i\}.
\end{align}
For any $j\in\{1,2,\cdots,n\}\setminus \{i\}$ and any $\bm{v}\in \capsrX$, since $\|\bm{v}-\vstari\|_2<r_i$, we have
\begin{align*}
    (\bm{v}^T\Xjj)(\vstari^T\Xjj)&=\left((\bm{v}-\vstari)^T\Xjj+\vstari^T\Xjj\right)(\vstari^T\Xjj)\\
    &=(\vstari^T\Xjj)^2+(\vstari^T\Xjj)\left((\bm{v}-\vstari)^T\Xjj\right)\\
    &\geq (\vstari^T\Xjj)^2 - \left|\vstari^T\Xjj\right|\cdot\left|(\bm{v}-\vstari)^T\Xjj\right|\\
    &\geq (\vstari^T\Xjj)^2 - \left|\vstari^T\Xjj\right|\cdot\left\|\bm{v}-\vstari\right\|_2\left\|\Xjj\right\|_2\\
    &> (\vstari^T\Xjj)^2 - \left|\vstari^T\Xjj\right|\cdot r_i\text{ (by Eq.~\eqref{eq.temp_100101})}\\
    &= |\vstari^T\Xjj|(|\vstari^T\Xjj|-r_i)\\
    &\geq 0\text{ (by Eq.~\eqref{eq.temp_093003})}.
\end{align*}
Thus, for any $\bm{v}_1\in\capsrX,\ \bm{v}_2\in\capsrX,\ j\in\{1,2,\cdots,n\}\setminus\{i\}$, we have $(\bm{v}_1^T\Xjj)(\vstari^T\Xjj)>0$ and $(\bm{v}_2^T\Xjj)(\vstari^T\Xjj)>0$. It implies that
\begin{align}\label{eq.temp_100104}
    \text{sign}(\bm{v}_1^T\Xjj)=\text{sign}(\vstari^T\Xjj)=\text{sign}(\bm{v}_2^T\Xjj).
\end{align}
By Eq.~\eqref{eq.temp_110206}, we know that both $\Vzerok$ and $\Vzerol$ are in $\capsrX$. Applying Eq.~\eqref{eq.temp_100104}, we have
\begin{align*}
    \text{sign}(\Xjj^T\Vzerok)=\text{sign}(\Xjj^T\Vzerol),\ \text{for all } j\in\{1,2,\cdots,n\}\setminus\{i\}.
\end{align*}
Thus, by Eq.~\eqref{eq.def_hx}, we have
\begin{align*}
    \Htj[k]=\bm{1}_{\{\Xjj^T\Vzerok>0\}}\Xjj^T=\bm{1}_{\{\Xjj^T\Vzerol>0\}}\Xjj^T=\Htj[l],\ \text{for all } j\in\{1,2,\cdots,n\}\setminus\{i\}.
\end{align*}
By Eq.~\eqref{eq.temp_110205}, we have
\begin{align*}
    \Xii^T\Vzerok>0,\ \Xii^T\Vzerol<0.
\end{align*}
Thus, by Eq.~\eqref{eq.def_hx}, we have
\begin{align*}
    \Hti[k]=\bm{1}_{\{\Xii^T\Vzerok>0\}}\Xii^T=\Xii^T,\quad \Hti[l]=\bm{1}_{\{\Xii^T\Vzerol>0\}}\Xii^T=\bm{0}.
\end{align*}
\end{proof}

Now, we are ready to prove Lemma~\ref{le.full_rank}.
\begin{proof}
We prove by contradiction. Suppose on the contrary that with some nonzero probability, the design matrix is not full row-rank as $p\to \infty$. Note that when the design matrix is not full row-rank, there exists a set of indices $\mathcal{I}\subseteq\{1,\cdots,n\}$ such that
\begin{align}\label{eq.temp_100108}
    \sum_{i\in \mathcal{I}}b_i\Hti=\bm{0}, \  b_i\neq 0\text{ for all } i\in\mathcal{I}.
\end{align}
The proof will be finished by two steps: 1) find an event $\mathcal{J}$ that happens almost surely when $p\to\infty$; 2) prove this event $\mathcal{J}$ contradicts Eq.~\eqref{eq.temp_100108}.

% Let $i^*$ denote one element of $\mathcal{I}$, i.e., $i^*\in \mathcal{I}$. 
\noindent\textbf{Step 1}:

Consider each $i\in\{1, 2,\cdots, n\}$. By Lemma~\ref{le.nonEmpty}, we know that there exists a $\vstari\in\sd$ such that
\begin{align}\label{eq.temp_030401}
    \vstari^T\Xii=0,\ \vstari^T\Xjj\neq 0,\text{ for all } j\in \{1,2,\cdots,n\}\setminus \{i\}.
\end{align}
Define
\begin{align}\label{eq.temp_100103}
    r_i=\min_{j\in \{1,2,\cdots,n\}\setminus \{i\}}\left|\vstari^T\Xjj\right|>0.
\end{align}

For all $i=1,2,\cdots,n$, we define several events as follows.
\begin{align*}
    &\mathcal{J}_i\defeq \left\{\mathcal{A}_{\Vzero}\cap\capsrpX\neq\varnothing,\ \mathcal{A}_{\Vzero}\cap\capsrnX\neq\varnothing\right\},\\
    &\mathcal{J}_{i,+}=\left\{\mathcal{A}_{\Vzero}\cap\capsrpX\neq\varnothing\right\},\\
    &\mathcal{J}_{i,-}=\left\{\mathcal{A}_{\Vzero}\cap\capsrnX\neq\varnothing\right\},\\
    % &\mathcal{J}\defeq \left\{\mathcal{A}_{\Vzero}\cap\capsrpX\neq\varnothing,\ \mathcal{A}_{\Vzero}\cap\capsrnX\neq\varnothing\text{ for all }i^*\in\{1,2,\cdots,n\}\right\},
    &\mathcal{J}\defeq \bigcap_{i=1}^n \mathcal{J}_i.
\end{align*}

(Recall the geometric interpretation in Remark~\ref{remark.cap}. The events $\mathcal{J}_{i,+}$ and $\mathcal{J}_{i,-}$ mean that there exists $\Vzeroj/\|\Vzeroj\|_2$ in the upper half and the lower half of the cap $\mathrm{E}$, respectively. The event $\mathcal{J}_i=\mathcal{J}_{i,+}\cap \mathcal{J}_{i,-}$ means that there exist $\Vzeroj/\|\Vzeroj\|_2$ in both halves of the cap $\mathrm{E}$.
Finally, the event $\mathcal{J}$ occurs when $\mathcal{J}_i$ occurs for all $i$, although the vector $\Vzeroj/\|\Vzeroj\|_2$ that falls into the two halves may differ across $i$. As we will show later, whenever the event $\mathcal{J}$ occurs, the matrix $\Ht$ will have the full row-rank, which is why we are interesting in the probability of the event $\mathcal{J}$.)

Those definitions implies that
\begin{align}
    &\mathcal{J}_i^c=\mathcal{J}_{i,+}^c\cup \mathcal{J}_{i,-}^c\text{ for all }i=1,2,\cdots,n,\label{eq.temp_112302}\\
    &\mathcal{J}^c=\bigcup_{i=1}^n \mathcal{J}_i^c.\label{eq.temp_112301}
\end{align}
Thus, we have
\begin{align}
    \prob_{\Vzero}[\mathcal{J}]=&1-\prob_{\Vzero}[\mathcal{J}^c]\nonumber\\
    \geq& 1-\sum_{i=1}^n\prob_{\Vzero}[\mathcal{J}_i^c]\text{ (by Eq.~\eqref{eq.temp_112301} and the union bound)}.\label{eq.temp_112303}
\end{align}

For a fixed $i$, recall that by Eq.~\eqref{eq.temp_100103}, we have $r_i>0$.
Because $\capsrpX$ and  $\capsrnX$ are two halves of $\capsrX$, we have
\begin{align}\label{eq.temp_100102}
    \musd(\capsrpX)=\musd(\capsrnX)=\frac{1}{2}\musd(\capsrX).
\end{align}
Therefore, we have
\begin{align*}
    \prob_{\Vzero}[\mathcal{J}_i^c]\leq& \prob_{\Vzero}[\mathcal{J}_{i,+}^c]+\prob_{\Vzero}[\mathcal{J}_{i,-}^c]\text{ (by Eq.~\eqref{eq.temp_112302} and the union bound)}\\
    =&\left(1-\frac{\musd(\capsrpX)}{\musd(\sd)}\right)^p+\left(1-\frac{\musd(\capsrnX)}{\musd(\sd)}\right)^p\\
    &\text{ (all $\Vzeroi$'s are independent and Assumption~\ref{as.uniform})}\\
    =&2\left(1-\frac{\musd(\capsrX)}{2\musd(\sd)}\right)^p \text{ (by Eq.~\eqref{eq.temp_100102})}.
\end{align*}
Notice that $r_i$ is determined only by $\XX$, and is independent of $\Vzero$ and $p$. Therefore, we have
\begin{align}\label{eq.temp_112304}
    \lim_{p\to\infty}\prob_{\Vzero}[\mathcal{J}_i^c]=0.
\end{align}
Plugging Eq.~\eqref{eq.temp_112304} into Eq.~\eqref{eq.temp_112303}, we have
\begin{align*}
    \lim_{p\to\infty}\prob_{\Vzero} [\mathcal{J}]=1 \text{ (because $n$ is finite)}.
\end{align*}

\noindent\textbf{Step 2}:

To complete the proof, it remains to show that the event $\mathcal{J}$ contradicts Eq.~\eqref{eq.temp_100108}. Towards this end, we assume the event $\mathcal{J}$ happens. By Eq.~\eqref{eq.temp_100108}, we can pick one $i\in\mathcal{I}$. Further, by the definition of $\mathcal{J}$, there exists $r_i$ such that $\mathcal{A}_{\Vzero}\cap\capsrpX\neq\varnothing$ and $ \mathcal{A}_{\Vzero}\cap\capsrnX\neq\varnothing$. In other words, there must exist $k,l\in\{1,\cdots,p\}$ such that
\begin{align*}
    \frac{\Vzerok}{\|\Vzerok\|_2}\in \capsrpX,\quad
    \frac{\Vzerol}{\|\Vzerol\|_2}\in \capsrnX.
\end{align*}
By Lemma~\ref{le.split_half}, we have
\begin{align}
    &\mathbf{H}_{j}[k]=\mathbf{H}_{j}[l],\ \text{for all } j\in\{1,2,\cdots,n\}\setminus\{i\},\label{eq.temp_1000106}\\
    &\Hti[k]=\Xii^T,\quad \Hti[l]=\bm{0}.\label{eq.temp_100107}
\end{align}
We now show that $\Ht$ restricted to the columns corresponding to $k$ and $l$ cannot be linearly dependent. Specifically, we have
\begin{align*}
    \sum_{j\in\mathcal{I}}b_j\Htj[k]&=b_i\Hti[k]+\sum_{j\in\mathcal{I}\setminus\{i\}}b_j\Htj[k]\text{ (as we have picked $i\in\mathcal{I}$)}\\
    &=b_i\Hti[k]-b_j\Hti[l]+\sum_{j\in\mathcal{I}}b_j\Htj[l]\text{ (by Eq.~\eqref{eq.temp_1000106})}\\
    &=b_i\Xii^T+\sum_{j\in\mathcal{I}}b_j\Htj[l]\text{ (by Eq.~\eqref{eq.temp_100107})}\\
    &\neq \sum_{j\in\mathcal{I}}b_j\Htj[l]\text{ (because $b_i\neq 0$)}.
\end{align*}
This contradicts the assumption Eq.~\eqref{eq.temp_100108} that
\begin{align*}
    \sum_{j\in\mathcal{I}}b_j\Htj[k]=\sum_{j\in\mathcal{I}}b_j\Htj[l]=\bm{0}.
\end{align*}
The result thus follows.
\end{proof}

\section{Proof of Proposition~\ref{prop.noise_large_p} (the upper bound of the variance)}\label{app.noise_large_p}
The following lemma shows the relationship between the variance term and $\min\eig\left(\Ht\Ht^T\right)/p$.
\begin{lemma}\label{le.temp_122501}
\begin{align*}
    |\hx\Ht^T(\Ht\Ht^T)^{-1}\esmall|\leq \frac{\sqrt{p}\|\esmall\|_2}{\sqrt{\min\eig(\Ht\Ht^T)}}.
\end{align*}
\end{lemma}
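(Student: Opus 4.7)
The plan is to bound the scalar $|\hx\Ht^T(\Ht\Ht^T)^{-1}\esmall|$ by splitting it at the ``row vector $\times$ column vector'' interface and applying the submultiplicativity of the spectral norm (Lemma~\ref{le.matrix_norm}) together with the bound $\|\hx\|_2\le\sqrt{p}$ provided by Lemma~\ref{le.h_p}. The key identity to keep in mind is that $\Ht^T(\Ht\Ht^T)^{-1}$ is the (right) Moore--Penrose pseudoinverse of $\Ht$ when $\Ht$ has full row-rank, whose spectral norm equals $1/\sqrt{\min\eig(\Ht\Ht^T)}$.

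Concretely, I would first write
\[
|\hx\Ht^T(\Ht\Ht^T)^{-1}\esmall|\le \|\hx\|_2\cdot\|\Ht^T(\Ht\Ht^T)^{-1}\esmall\|_2,
\]
using Lemma~\ref{le.matrix_norm}, and then invoke Lemma~\ref{le.h_p} for the first factor. For the second factor, the cleanest route is to square it:
\[
\|\Ht^T(\Ht\Ht^T)^{-1}\esmall\|_2^2
=\esmall^T(\Ht\Ht^T)^{-1}\Ht\Ht^T(\Ht\Ht^T)^{-1}\esmall
=\esmall^T(\Ht\Ht^T)^{-1}\esmall.
\]
Since $\Ht\Ht^T$ is symmetric positive semidefinite (and, under the hypothesis that $\min\eig(\Ht\Ht^T)>0$, positive definite), the maximum eigenvalue of $(\Ht\Ht^T)^{-1}$ is $1/\min\eig(\Ht\Ht^T)$, so the Rayleigh quotient bound gives $\esmall^T(\Ht\Ht^T)^{-1}\esmall\le\|\esmall\|_2^2/\min\eig(\Ht\Ht^T)$. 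Taking square roots and multiplying yields the claimed inequality.

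There is no real obstacle here; the statement is essentially a bookkeeping exercise combining the norm bound on $\hx$ with the spectral-norm bound on the pseudoinverse $\Ht^T(\Ht\Ht^T)^{-1}$. The only thing worth flagging is that the bound is vacuous (the right-hand side is infinite) when $\Ht\Ht^T$ is singular, which is fine because the lemma will be applied in Appendix~\ref{app.noise_large_p} only on the high-probability event that $\min\eig(\Ht\Ht^T)/p$ is bounded below by $1/(J_m(n,d)n)$; in that regime the bound immediately produces Term~3 of Eq.~\eqref{eq.main_conclusion} after substituting the eigenvalue lower bound.
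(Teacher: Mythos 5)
Your proposal is correct and follows essentially the same route as the paper's proof: bound $\|\Ht^T(\Ht\Ht^T)^{-1}\esmall\|_2^2=\esmall^T(\Ht\Ht^T)^{-1}\esmall\leq\|\esmall\|_2^2/\min\eig(\Ht\Ht^T)$ via the Rayleigh quotient, then combine with $\|\hx\|_2\leq\sqrt{p}$ (Lemma~\ref{le.h_p}) and submultiplicativity (Lemma~\ref{le.matrix_norm}). Your remark about the bound being vacuous when $\Ht\Ht^T$ is singular is a fair observation but not needed for the lemma as stated.
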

\begin{proof}
We have
\begin{align}\label{eq.temp_022801}
    \|\Ht^T(\Ht\Ht^T)^{-1}\esmall\|_2=\sqrt{(\Ht^T(\Ht\Ht^T)^{-1}\esmall)^T\Ht^T(\Ht\Ht^T)^{-1}\esmall}=\sqrt{\esmall^T(\Ht\Ht^T)^{-1}\esmall}\leq \frac{\|\esmall\|_2}{\sqrt{\min\eig(\Ht\Ht^T)}}.
\end{align}

Thus, we have
\begin{align*}
    &|\hx\Ht^T(\Ht\Ht^T)^{-1}\esmall|\\
    =&\|\hx\Ht^T(\Ht\Ht^T)^{-1}\esmall\|_2\text{ ($\ell_2$-norm of a number equals to its absolute value)}\\
    \leq &\|\hx\|_2\cdot\|\Ht^T(\Ht\Ht^T)^{-1}\esmall\|_2\text{ (by Lemma~\ref{le.matrix_norm})}\\
    \leq& \frac{\sqrt{p}\|\esmall\|_2}{\sqrt{\min\eig(\Ht\Ht^T)}}\text{ (by Lemma~\ref{le.h_p} and Eq.~\eqref{eq.temp_022801})}.
\end{align*}
\end{proof}

The following lemma shows our estimation on $\min\eig\left(\Ht\Ht^T\right)/p$.

\begin{lemma}\label{le.lower_eig}
For any $n\geq 2$, $m\in\left[1,\ \frac{\ln n}{\ln \frac{\pi}{2}}\right]$, $d\leq n^4$, if $p\geq 6J_m(n,d)\ln\left(4n^{1+\frac{1}{m}}\right)$,
we must have
\begin{align*}
    \prob_{\XX,\Vzero}\Big\{\frac{\min\eig\left(\Ht\Ht^T\right)}{p}\geq \frac{1}{J_m(n,d)n}\Big\}\geq 1-\frac{2}{\sqrt[m]{n}}.
\end{align*}
\end{lemma}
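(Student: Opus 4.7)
The plan is to reduce the minimum eigenvalue to a uniform lower bound on $\|\Ht^T\bm{a}\|_2^2$ over $\bm{a}\in\sn$, then to exploit the ``paired-difference'' structure of the columns of $\Ht^T$ supplied by Lemma~\ref{le.split_half}. For each $i\in\{1,\ldots,n\}$ I would pick $\vstari\in\sd\cap\ker(\Xii)$; by Lemma~\ref{le.nonEmpty} the admissible set $\mathcal{T}_i$ has positive $\lambda_{d-2}$-measure, so I sample $\vstari$ uniformly on $\mathcal{T}_i$, independently of $\Vzero$ and of $\{\Xjj\}_{j\neq i}$. Set $r_i\defeq\min_{j\neq i}|\vstari^T\Xjj|>0$, and let $N_{i,+}$ and $N_{i,-}$ denote the numbers of indices $k\in\{1,\ldots,p\}$ with $\Vzero[k]/\|\Vzero[k]\|_2\in\capsrpX$ and $\in\capsrnX$, respectively. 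For any $\bm{a}\in\sn$, Lemma~\ref{le.split_half} asserts that $(\Ht^T\bm{a})[k]$ takes a common value $\bm{u}_{i,+}$ across the positive half-cap and another common value $\bm{u}_{i,-}$ across the negative half-cap, with $\bm{u}_{i,+}-\bm{u}_{i,-}=a_i\Xii$. Combining this with Lemma~\ref{le.l2_diff} gives
\begin{align*}
\|\Ht^T\bm{a}\|_2^2 \;\geq\; N_{i,+}\|\bm{u}_{i,+}\|_2^2 + N_{i,-}\|\bm{u}_{i,-}\|_2^2 \;\geq\; \tfrac{1}{2}\min(N_{i,+},N_{i,-})\,a_i^2.
\end{align*}
Choosing $i^*=\argmax_i|a_i|$ (so $a_{i^*}^2\geq 1/n$) produces the $\bm{a}$-uniform bound
\begin{align*}
\min\eig(\Ht\Ht^T)\;=\;\min_{\bm{a}\in\sn}\|\Ht^T\bm{a}\|_2^2\;\geq\;\frac{\min_i\min(N_{i,+},N_{i,-})}{2n},
\end{align*}
so the task reduces to showing that $\min_i\min(N_{i,+},N_{i,-})\geq 2p/J_m(n,d)$ holds with probability at least $1-2/\sqrt[m]{n}$.

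I would accomplish this with two concentration steps. Step~A (lower bound on $r_i$): since $\vstari$ is independent of $\Xjj$ for $j\neq i$, Lemma~\ref{le.innerProd_I} together with an estimate of the regularized incomplete beta function near $1$ yields $\prob\{|\vstari^T\Xjj|\leq c\}\leq c\,K_d$ for small $c$, where $K_d=\Theta(\sqrt{d})$ arises from $1/B(\tfrac{1}{2},\tfrac{d-1}{2})$. A union bound over the $n(n-1)$ pairs $(i,j)$ gives $\prob\{\min_i r_i\leq c\}\leq n^2 K_d c$, and setting $c_0 \asymp n^{-(2+1/m)}/K_d$ drives this probability below $1/\sqrt[m]{n}$. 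Step~B (Chernoff on cap counts): on the event $\{\min_i r_i\geq c_0\}$, Lemma~\ref{le.cap_area} combined with the near-zero asymptotics of the incomplete beta function yields $q_i\defeq\musd(\capsrpX)/\musd(\sd)\geq c_0^{d-1}/C_d$ with $C_d=\Theta(\sqrt{d})$, and the precise form of $J_m(n,d)$ is chosen so that $p\geq 6J_m(n,d)\ln(4n^{1+1/m})$ entails $pq_i\geq 12\ln(4n^{1+1/m})$ with the slack built in. Conditional on $\vstari$ and $r_i$, $N_{i,\pm}\sim\bino(p,q_i)$, so Lemma~\ref{le.bino} with $\delta=\tfrac{1}{2}$ gives $\prob\{N_{i,\pm}<pq_i/2\}\leq 2\exp(-pq_i/12)\leq 1/(2n^{1+1/m})$, and a union bound over the $2n$ indices $(i,\pm)$ contributes the remaining $1/\sqrt[m]{n}$. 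On the intersection of the two events, $\min_i\min(N_{i,+},N_{i,-})\geq pq_i/2\geq 2p/J_m(n,d)$, which plugged into the bound of the first paragraph yields $\min\eig(\Ht\Ht^T)/p\geq 1/(J_m(n,d)\,n)$.

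The main obstacle I expect is the quantitative bookkeeping of the dimension-dependent constants $2^{1.5d+5.5}d^{0.5d}$ in $J_m(n,d)$. They come from two opposing beta-function estimates that Steps~A and~B place on essentially the same object: Step~A needs an \emph{upper} bound on the density of $\vstari^T\Xjj$ near zero (namely $\Theta(\sqrt{d})$ via $1/B(\tfrac{1}{2},\tfrac{d-1}{2})$), while Step~B needs a \emph{lower} bound on $I_{c^2(1-c^2/4)}(\tfrac{d-1}{2},\tfrac{1}{2})$ that is tight in both $c$ and $d$. Compounding the $\sqrt{d}$ losses from each pair $(i,j)$ and each neuron $k$, together with the absolute constants lost in the Chernoff step, explains the $2^{1.5d+5.5}d^{0.5d}$ prefactor; making each of these estimates quantitative enough (and ensuring they fit with the exponent $(2+1/m)(d-1)$ of $n$) is where the delicate calculation lives.
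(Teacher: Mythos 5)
Your proposal is correct and follows essentially the same route as the paper: reduction to $\min_{\bm a\in\sn}\|\Ht^T\bm a\|_2^2$, the half-cap structure of Lemma~\ref{le.split_half} giving $\|\Ht^T\bm a\|_2^2\geq \tfrac{1}{2n}\min(N_{i,+},N_{i,-})$, a beta-tail union bound on $\hat r=\min_i r_i$, and a Chernoff bound on the cap counts, with the same $1/\sqrt[m]{n}+1/\sqrt[m]{n}$ probability budget and the same origin for the $2^{1.5d+5.5}d^{0.5d}$ prefactor. One caveat: do not sample $\vstari$ from the set $\mathcal{T}_i$ of Lemma~\ref{le.nonEmpty}, since that set depends on all $\Xjj$ and would destroy the independence your Step~A relies on; the paper instead picks $\vstari\in\sd\cap\ker(\Xii)$ depending only on $\Xii$ (cf.\ Eq.~\eqref{eq.temp_030403}), after which $r_i>0$ holds almost surely anyway.
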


Proposition~\ref{prop.noise_large_p} directly follows from Lemma~\ref{le.lower_eig} and Lemma~\ref{le.temp_122501}. \footnote{We can see that the key part during the proof of Proposition~\ref{prop.noise_large_p} is to estimate ${\min\eig\left(\Ht\Ht^T\right)}/{p}$. Lemma~\ref{le.lower_eig} shows a lower bound of ${\min\eig\left(\Ht\Ht^T\right)}/{p}$ which is almost $\Omega(n^{1-2d})$ when $p$ is large. However, our estimation of this value may be loose. We will show a upper bound which is $O(n^{-\frac{1}{d-1}})$ (see Appendix~\ref{app.upper_bound_eig}).}

In rest of this section, we will show how to prove Lemma~\ref{le.lower_eig}.
The following lemma shows that, to estimate $\min\eig\left(\Ht\Ht^T\right)/p$, it is equivalent to estimate $\min_{\bm{a}\in\sn}\|\Ht^T\bm{a}\|_2^2/p$.
\begin{lemma}\label{le.min_eig}
\begin{align*}
    \min\eig\left(\Ht\Ht^T\right)=\min_{\bm{a}\in\sn}\|\Ht^T\bm{a}\|_2^2.
\end{align*}
\end{lemma}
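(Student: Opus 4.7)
The plan is to invoke the standard variational (Rayleigh quotient) characterization of the smallest eigenvalue of a symmetric positive semi-definite matrix. Since $\Ht \in \mathds{R}^{n\times (dp)}$, the matrix $\Ht\Ht^T$ is symmetric and positive semi-definite in $\mathds{R}^{n\times n}$, so all its eigenvalues are real and non-negative, and by the Courant--Fischer theorem (or simply the Rayleigh quotient for the smallest eigenvalue) we have
\begin{align*}
\min\eig\left(\Ht\Ht^T\right) = \min_{\bm{a}\in\mathds{R}^n,\ \bm{a}\neq \bm{0}} \frac{\bm{a}^T\Ht\Ht^T\bm{a}}{\bm{a}^T\bm{a}}.
\end{align*}

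Next, I would restrict the optimization to the unit sphere $\sn = \{\bm{a}\in\mathds{R}^n:\ \|\bm{a}\|_2=1\}$. Because the Rayleigh quotient is scale-invariant in $\bm{a}$, and every nonzero $\bm{a}$ can be normalized without changing the quotient, we get
\begin{align*}
\min\eig\left(\Ht\Ht^T\right) = \min_{\bm{a}\in\sn} \bm{a}^T\Ht\Ht^T\bm{a}.
\end{align*}

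Finally, I would rewrite the quadratic form as a squared $\ell_2$-norm: for any $\bm{a}\in\mathds{R}^n$,
\begin{align*}
\bm{a}^T\Ht\Ht^T\bm{a} = (\Ht^T\bm{a})^T(\Ht^T\bm{a}) = \|\Ht^T\bm{a}\|_2^2.
\end{align*}
Substituting this into the previous display yields $\min\eig(\Ht\Ht^T) = \min_{\bm{a}\in\sn} \|\Ht^T\bm{a}\|_2^2$, as claimed. There is no real obstacle here; this is a one-line consequence of standard linear algebra, and the only thing to be careful about is to explicitly use that $\|\bm{a}\|_2=1$ on $\sn$ so the denominator of the Rayleigh quotient drops out.
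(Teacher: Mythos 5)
Your proof is correct. It takes a different (and more direct) route than the paper: you invoke the Rayleigh-quotient/Courant--Fischer characterization of the smallest eigenvalue of the symmetric positive semi-definite matrix $\Ht\Ht^T$ and then simply rewrite the quadratic form $\bm{a}^T\Ht\Ht^T\bm{a}$ as $\|\Ht^T\bm{a}\|_2^2$, whereas the paper performs a singular value decomposition $\Ht^T=\mathbf{U}\mathbf{\Sigma}\mathbf{W}^T$, identifies $\min_{\bm{a}\in\sn}\|\Ht^T\bm{a}\|_2^2$ with the smallest squared singular value, and then exhibits $\Ht\Ht^T=\mathbf{W}\mathbf{\Sigma}^T\mathbf{\Sigma}\mathbf{W}^T$ as an explicit eigendecomposition to conclude the eigenvalues are exactly those squared singular values. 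Your argument is shorter and needs only the variational characterization of eigenvalues of symmetric matrices (plus the scale-invariance of the Rayleigh quotient to pass to the unit sphere $\sn$); the paper's SVD argument is more constructive in that it displays the full spectrum of $\Ht\Ht^T$ rather than just the minimum. Both establish the identity; there is no gap in your version.
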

\begin{proof}
Do the singular value decomposition (SVD) of $\Ht^T$ as $\Ht^T=\mathbf{U}\mathbf{\Sigma}\mathbf{W}^T$, where 
\begin{align*}
    \mathbf{\Sigma}\in\mathds{R}^{(dp)\times n}=\diag(\Sigma_1,\Sigma_2,\cdots,\Sigma_n).
\end{align*}
By properties of singular values, we have
\begin{align*}
    \min_{\bm{a}\in\sn}\|\Ht^T\bm{a}\|_2^2=\min_{i\in\{1,2,\cdots,n\}}\Sigma_i^2.
\end{align*}
We also have
\begin{align*}
    \Ht\Ht^T&=\mathbf{W}\mathbf{\Sigma}^T\mathbf{U}^T\mathbf{U}\mathbf{\Sigma}\mathbf{W}^T\\
    &=\mathbf{W}\mathbf{\Sigma}^T\mathbf{\Sigma}\mathbf{W}^T\text{ (because $\mathbf{U}^T\mathbf{U}=\mathbf{I}$)}\\
    &=\mathbf{W}\diag(\Sigma_1^2,\Sigma_2^2,\cdots,\Sigma_n^2)\mathbf{W}^T.
\end{align*}
This equation is indeed the  eigenvalue decomposition of $\Ht\Ht^T$, which implies that its eigenvalues are $\Sigma_1^2,\Sigma_2^2,\cdots,\Sigma_n^2$.
Thus, we have
\begin{align*}
    \min\eig\left(\Ht\Ht^T\right)=\min_{i\in\{1,2,\cdots,n\}}\Sigma_i^2=\min_{\bm{a}\in\sn}\|\Ht^T\bm{a}\|_2^2.
\end{align*}
\end{proof}

Therefore, to finish the proof of Proposition~\ref{prop.noise_large_p}, it only remains to estimate $\min_{\bm{a}\in\sn}\|\Ht^T\bm{a}\|_2^2$.

By Lemma~\ref{le.full_rank} and its proof in Appendix~\ref{app.full_rank}, we have already shown that $\Ht^T\bm{a}$ is not likely to be zero (i.e. $\min_{\bm{a}\in\sn}\|\Ht^T\bm{a}\|_2^2>0$) when $p\to\infty$. Here, we basically use the similar method as in Appendix~\ref{app.full_rank}, but with more precise quantification.

Recall the definitions in Eqs.~\eqref{eq.temp_100101}\eqref{eq.temp_110205}\eqref{eq.temp_110204}.
For any $i\in\{1,2,\cdots,n\}$, we choose one
\begin{align}\label{eq.temp_030403}
    \vstari\in\sd\text{ independently of $\Xjj$, $j\neq i$, such that }\vstari^T\Xii=0.
\end{align}
(Note that here, unlike in Eq.~\eqref{eq.temp_030401}, we do not require $\vstari^T\Xjj\neq 0$ for all $j\neq i$. This is important as we would like $\Xjj$ to be independent of $\vstari$ for all $j\neq i$.) Further, for any $0\leq r_0\leq 1$, we define
\begin{align}\label{eq.temp_110301}
    c_{r_0}^{i}\defeq\min\left\{|\mathcal{A}_{\Vzero}\cap \capsrzpX|,\ |\mathcal{A}_{\Vzero}\cap \capsrznX|\right\}.
\end{align}
Then, we define
\begin{align}
    &r_i\defeq \min_{j\in\{1,2,\cdots,n\}\setminus\{i\}}\left|\vstari^T\Xjj\right|,\label{eq.temp_110201}\\
    &\hat{r}\defeq \min_{i\in\{1,2,\cdots,n\}}r_i.\label{eq.def_hat_r}
\end{align}
(Note that here $r_i$ or $\hat{r}$ may be zero. Later we will show that they can be lower bounded with high probability.)
% By definition, we have $r_*>0$ and $\hat{r}>0$.
Define
\begin{align}\label{eq.def_DX}
    \DX\defeq \frac{\musd(\capsrh)}{8n\musd(\sd)}.
\end{align}
Similar to Remark~\ref{remark.cap}, these definitions have their geometric interpretation in Fig.~\ref{fig.split_cap}. The value $c_{r_0}^{i}$ denotes the number of distinct pairs $\left(\frac{\Vzerok}{\|\Vzerok\|_2}, \frac{\Vzerol}{\|\Vzerol\|_2}\right)$\footnote{Here, ``distinct'' means that any normalized version of $\Vzeroj$ can appear at most in one pair.} such that $\frac{\Vzerok}{\|\Vzerok\|_2}$ is in the upper half of the cap $\mathrm{E}$, and $\frac{\Vzerol}{\|\Vzerol\|_2}$ is in the lower half of the cap $\mathrm{E}$. The quantities $r_0$, $r_i$, and $\hat{r}$ can all be used as the radius of the cap $\mathrm{E}$. The ratio $\DX$ is proportional to the area of the cap $\mathrm{E}$ with radius $\hat{r}$ (or equivalently, the probability that the normalized $\Vzeroj$ falls in the cap $\mathrm{E}$).

The following lemma gives an estimation on $\|\Ht^T\bm{a}\|_2^2/p$ when $\XX$ is given. We put its proof in Appendix~\ref{subsec.proof_Ha_V0}.
\begin{lemma}\label{le.Ha_V0}
Given $\XX$, we have
\begin{align*}
    \prob_{\Vzero}\left\{\|\Ht^T\bm{a}\|_2^2\geq p \DX,\ \text{for all }\bm{a}\in\sn\right\}\geq 1-4ne^{-np\DX/6}.
\end{align*}
\end{lemma}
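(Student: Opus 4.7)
The plan is to convert the uniform lower bound over $\bm{a} \in \sn$ into a binomial-concentration statement on how the normalized directions $\Vzeroj/\|\Vzeroj\|_2$ split across the two halves of the cap $\mathcal{B}_{\vstari}^{\hat{r}}$. The core geometric input is Lemma~\ref{le.split_half} applied with the cap of radius $\hat{r}$: since $\hat{r} \le r_i$ by Eq.~\eqref{eq.def_hat_r}, whenever $k, l \in \{1,\ldots,p\}$ satisfy $\Vzerok/\|\Vzerok\|_2 \in \capsrzpX$ and $\Vzerol/\|\Vzerol\|_2 \in \capsrznX$ (taking $r_0 = \hat{r}$), the lemma gives $\Htj[k] = \Htj[l]$ for every $j \ne i$, together with $\Hti[k] = \Xii^T$ and $\Hti[l] = \bm{0}$. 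Consequently $(\Ht^T\bm{a})[k] - (\Ht^T\bm{a})[l] = a_i \Xii$, and since $\|\Xii\|_2 = 1$, Lemma~\ref{le.l2_diff} yields $\|(\Ht^T\bm{a})[k]\|_2^2 + \|(\Ht^T\bm{a})[l]\|_2^2 \ge a_i^2/2$.

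Let $S_i^+ \defeq \{j : \Vzeroj/\|\Vzeroj\|_2 \in \capsrzpX\}$ and $S_i^- \defeq \{j : \Vzeroj/\|\Vzeroj\|_2 \in \capsrznX\}$, both at $r_0 = \hat{r}$. By Eq.~\eqref{eq.temp_110205} the two half-caps are disjoint, so $S_i^+ \cap S_i^- = \varnothing$, and one can form $c_{\hat{r}}^{i} = \min(|S_i^+|, |S_i^-|)$ pairs using $2 c_{\hat{r}}^{i}$ pairwise-distinct neuron indices. Each pair contributes at least $a_i^2/2$ to the decomposition $\|\Ht^T\bm{a}\|_2^2 = \sum_j \|(\Ht^T\bm{a})[j]\|_2^2$, giving the pointwise estimate $\|\Ht^T\bm{a}\|_2^2 \ge c_{\hat{r}}^{i} a_i^2/2$ for every single index $i$. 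Because any $\bm{a} \in \sn$ satisfies $\max_i a_i^2 \ge 1/n$, it suffices to guarantee $c_{\hat{r}}^{i} \ge 2np\DX$ simultaneously for all $i \in \{1,\ldots,n\}$: choosing $i^* \in \argmax_i a_i^2$ then yields the uniform conclusion $\|\Ht^T\bm{a}\|_2^2 \ge (2np\DX)(1/n)/2 = p\DX$.

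The remaining step is the concentration bound. Conditioned on $\XX$, the vectors $\vstari$ and the scalar $\hat{r}$ are deterministic, so all randomness is over $\Vzero$. For each fixed $i$, $|S_i^\pm|$ follows $\bino(p, \musd(\capsrh)/(2\musd(\sd)))$; by the definition of $\DX$, the mean equals $4np\DX$. Applying Lemma~\ref{le.bino} with $\delta = 1/2$ gives
\begin{align*}
\prob_{\Vzero}\{|S_i^\pm| < 2np\DX\} \le 2\exp(-4np\DX/12) = 2\exp(-np\DX/3) \le 2\exp(-np\DX/6).
\end{align*}
A union bound over the two events ($+$ and $-$) for each of the $n$ choices of $i$ yields the total failure probability of at most $4n\exp(-np\DX/6)$, matching the claim. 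There is no substantive obstacle here; the main care is (i) to set $r_0 = \hat{r}$ (not $r_i$), so that the per-cap half-area is the same across $i$ and the union bound closes uniformly, and (ii) to verify the disjointness $S_i^+ \cap S_i^- = \varnothing$ so the paired indices genuinely contribute separate summands to $\|\Ht^T\bm{a}\|_2^2$.
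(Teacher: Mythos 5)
Your proposal is correct and follows essentially the same route as the paper's proof: pair up neurons in the two halves of the cap via Lemma~\ref{le.split_half}, use Lemma~\ref{le.l2_diff} and $\max_i a_i^2\geq 1/n$ to reduce the uniform bound over $\bm{a}\in\sn$ to the event $c^i\geq 2np\DX$ for all $i$, and close with the binomial tail bound (Lemma~\ref{le.bino} with $\delta=1/2$) plus a union bound over the $2n$ half-cap events. The only cosmetic difference is that you work with the radius-$\hat{r}$ cap throughout (justifying Lemma~\ref{le.split_half} via $\hat{r}\leq r_i$), whereas the paper states the deterministic step with $c_{r_i}^i$ and does the concentration at radius $\hat{r}$; the two are equivalent since $c_{\hat{r}}^i\leq c_{r_i}^i$.
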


Notice that $\DX$ only depends on $\XX$ and it may even be zero if $\hat{r}$ is zero. However, after we introduce the randomness of $\XX$,
we can show that $\hat{r}$ is lower bounded with high probability. We can then obtain the following lemma. We put its proof in Appendix~\ref{subsec.ha}.

Define 
\begin{align}
    &C_d\defeq\frac{2\sqrt{2}}{B(\frac{d-1}{2},\frac{1}{2})}\label{eq.def_Cd},\\
    &D(n,d,\delta)\defeq \frac{1}{16n}I_{\frac{\delta^2}{n^4C_d^2}\left(1-\frac{\delta^2}{4n^4C_d^2}\right)}\left(\frac{d-1}{2},\frac{1}{2}\right).\label{eq.temp_100404}
\end{align}

\begin{lemma}\label{le.Ha}
For any $\delta\in\left(0,\frac{2}{\pi}\right]$, we have
\begin{align*}
    \prob_{\XX,\Vzero}\left\{\|\Ht^T\bm{a}\|_2^2\geq p D(n,d,\delta),\ \text{for all }\bm{a}\in\sn\right\}\geq 1-4ne^{-npD(n,d,\delta)/6}-\delta.
\end{align*}
\end{lemma}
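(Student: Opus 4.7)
The plan is to bootstrap Lemma~\ref{le.Ha_V0} by using the randomness of $\XX$ to control $\hat{r}$, and thereby $\DX$, from below. Specifically, I will argue that $\hat{r}\geq\frac{\delta}{n^2 C_d}$ with probability at least $1-\delta$ over $\XX$, which via Lemma~\ref{le.cap_area} yields $\DX\geq D(n,d,\delta)$, and then Lemma~\ref{le.Ha_V0} gives the conclusion conditional on this good event.

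The first step is to show $\prob_{\XX}\{\hat{r}<\frac{\delta}{n^2 C_d}\}\leq \delta$. Since by construction (Eq.~\eqref{eq.temp_030403}) $\vstari$ depends only on $\Xii$ and is independent of $\Xjj$ for $j\neq i$, and each such $\Xjj$ is uniform on $\sd$, Lemma~\ref{le.innerProd_I} combined with the identity $I_x(a,b)=1-I_{1-x}(b,a)$ gives
\begin{align*}
\prob\{|\vstari^T\Xjj|\leq c\}=I_{c^2}\!\left(\tfrac{1}{2},\tfrac{d-1}{2}\right)=\frac{1}{B(1/2,(d-1)/2)}\int_0^{c^2}t^{-1/2}(1-t)^{(d-3)/2}\,dt.
\end{align*}
For $c=\frac{\delta}{n^2 C_d}$ with $\delta\in(0,2/\pi]$ and $n\geq 2$, one easily checks $c^2\leq 1/2$. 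Then for $d\geq 3$ one uses $(1-t)^{(d-3)/2}\leq 1$, while for $d=2$ one uses $(1-t)^{-1/2}\leq \sqrt{2}$ on $[0,1/2]$; in both cases this yields the uniform linear bound $\prob\{|\vstari^T\Xjj|\leq c\}\leq c\,C_d$. A union bound over the at most $n(n-1)$ pairs $(i,j)$ with $j\neq i$ then gives $\prob_{\XX}\{\hat{r}<\frac{\delta}{n^2 C_d}\}\leq n(n-1)\cdot\frac{\delta}{n^2 C_d}\cdot C_d\leq \delta$.

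The second step is routine: on the event $\{\hat{r}\geq\frac{\delta}{n^2 C_d}\}$, because $\hat{r}\leq 1\leq \sqrt{2}$ and the map $r\mapsto r^2(1-r^2/4)$ is increasing on $[0,\sqrt{2}]$, Lemma~\ref{le.cap_area} together with the monotonicity of the regularized incomplete beta function in its argument gives $\DX=\frac{\musd(\capsrh)}{8n\musd(\sd)}\geq D(n,d,\delta)$. The final step invokes Lemma~\ref{le.Ha_V0} conditional on $\XX$ lying in this good event, which yields at least $1-4ne^{-npD(n,d,\delta)/6}$ conditional probability over $\Vzero$, and then the law of total probability combines the two error terms via $(1-4ne^{-npD(n,d,\delta)/6})(1-\delta)\geq 1-4ne^{-npD(n,d,\delta)/6}-\delta$. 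The main technical hurdle is Step~1, namely obtaining the clean linear bound $\prob\{|\vstari^T\Xjj|\leq c\}\leq c\,C_d$ uniformly in $d\geq 2$; the $d=2$ case, where the beta integrand is singular at $t=1$, forces the $\sqrt{2}$ factor that appears in the definition of $C_d$.
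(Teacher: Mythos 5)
Your proposal is correct and follows essentially the same route as the paper: bound $\prob_{\XX}\{\hat{r}<\delta/(n^2C_d)\}\leq\delta$, deduce $\DX\geq D(n,d,\delta)$ on the complementary event via Lemma~\ref{le.cap_area}, and then condition Lemma~\ref{le.Ha_V0} on that event and multiply out the two success probabilities. The only cosmetic differences are that you bound the incomplete beta integral near $0$ after swapping its arguments (with the same $d=2$ versus $d\geq 3$ casework the paper carries out in Lemma~\ref{le.beta_bound} near $x=1$ — it is the same integral under $t\mapsto 1-t$) and that you union-bound over all $n(n-1)$ pairs rather than using Bernoulli's inequality per index $i$; both yield the same $\delta$ bound.
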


Notice that Lemma~\ref{le.Ha} is already very close to Lemma~\ref{le.lower_eig}, and we put the final steps of the proof of Lemma~\ref{le.lower_eig} in Appendix~\ref{subsec.le_lower_eig}.

\subsection{Proof of Lemma~\ref{le.Ha_V0}}\label{subsec.proof_Ha_V0}

\begin{proof}
Define events as follows.
\begin{align*}
    &\mathcal{J}\defeq \left\{\|\Ht^T\bm{a}\|_2^2\geq p \DX,\ \text{for all }\bm{a}\in\sn\right\},\\
    &\mathcal{J}_i\defeq \left\{\text{there exists }\bm{a}\in\sn \text{ that }i\in\argmax_{j\in\{1,2,\cdots,n\}} |a_j|,\ \text{and }\|\Ht^T\bm{a}\|_2^2\leq p \DX\right\},\\
    &\mathcal{K}_i\defeq \left\{c_{r_i}^i\leq 2np \DX\right\},\text{ for }i=1,2,\cdots,n.
\end{align*}
Those definitions directly imply that
\begin{align}\label{eq.temp_112405}
    \mathcal{J}^c=\bigcup_{i=1}^n\mathcal{J}_i.
\end{align}

\noindent\textbf{Step 1: prove }$\mathcal{J}_i\subseteq \mathcal{K}_i$

To show $\mathcal{J}_i\subseteq \mathcal{K}_i$, we only need to prove that $\mathcal{J}_i$ implies $\mathcal{K}_i$. To that end, it suffices to show $\|\Ht^T\bm{a}\|_2^2\geq \frac{c_{r_i}^i}{2n}$ for the vector $\bm{a}$ defined in $\mathcal{J}_i$.
Because $i\in\argmax_{j=1}^n |a_j|$ and $\|\bm{a}\|_2=1$, we have
\begin{align}\label{eq.temp_100401}
    |a_{i}|\geq \frac{1}{\sqrt{n}}.
\end{align}
By Eq.~\eqref{eq.temp_110301}, we can construct $c_{r_i}^i$ pairs $(k_j,l_j)$ for $j=1,2,\cdots,c_{r_i}^i$ (all $k_j$'s are different and all $l_j$'s are different),  such that
\begin{align*}
    \frac{\Vzero[k_j]}{\|\Vzero[k_j]\|_2}\in \capsrpX,\quad
    \frac{\Vzero[l_j]}{\|\Vzero[l_j]\|_2}\in \capsrnX.
\end{align*}
Thus, we have
\begin{align*}
    (\Ht^T\bm{a})[k_j]-(\Ht^T\bm{a})[l_j]=&\sum_{k=1}^n a_k \left(\Ht_k[k_j]-\Ht_k[l_j]\right)\\
    =&a_{i}\left(\Ht_{i}[k_j]-\Ht_{i}[l_j]\right)+\sum_{k\in\{1,2,\cdots,n\}\setminus\{i\}}a_k \left(\Ht_k[k_j]-\Ht_k[l_j]\right)\\
    =&a_{i}\Xii\text{ (by Lemma~\ref{le.split_half})}.
\end{align*}
We then have
\begin{align*}
    \|(\Ht^T\bm{a})[k_j]\|_2^2+\|(\Ht^T\bm{a})[l_j]\|_2^2&\geq \frac{1}{2}\|a_{i}\Xii\|_2^2\text{ (by Lemma~\ref{le.l2_diff})}\\
    &\geq \frac{1}{2n}\text{ (by Eq.~\eqref{eq.temp_100401})}.
\end{align*}
Further, we have
\begin{align}\label{eq.temp_021001}
    \|\Ht^T\bm{a}\|_2^2&=\sum_{j=1}^p\|(\Ht^T\bm{a})[j]\|_2^2\geq \sum_{j=1}^{c_{r_i}^i} \|(\Ht^T\bm{a})[k_j]\|_2^2+\|(\Ht^T\bm{a})[l_j]\|_2^2=\frac{c_{r_i}^i}{2n}.
\end{align}
% Thus, by the definition of $\mathcal{K}_i$ and $\mathcal{J}_i$, we must have $\mathcal{K}_i\subseteq\mathcal{J}_i$.
Clearly, if the event $\mathcal{J}_i$ occurs, then $\|\Ht\bm{a}\|_2^2 \leq p\DX$. Combining with Eq.~\eqref{eq.temp_021001}, we then have $c_{r_i}^i \leq 2np\DX $. In other words, the event $\mathcal{K}_i$ must occur. Hence, we have shown that $\mathcal{J}_i\subseteq\mathcal{K}_i$. 

\noindent\textbf{Step 2: estimate the probability of }$\mathcal{K}_i$

For all $j\in\{1,2,\cdots,p\}$, because $\Vzero[j]$ is uniformly distributed in all directions, for any fixed $0\leq r_0\leq 1$, we have
\begin{align*}
    \prob_{\Vzero}\left\{\frac{\Vzeroj}{\|\Vzeroj\|_2}\in \capsrzpX\Bigg| \ i\right\}=\frac{\musd(\capsrz)}{2\musd(\sd)}.
\end{align*}
Thus, $|\mathcal{A}_{\Vzero}\cap \capsrzpX|$ follows the distribution $\bino\left(p,\  \frac{\musd(\capsrz)}{2\musd(\sd)}\right)$ given $i\text{ and }\XX$. By Lemma~\ref{le.bino} (with $\delta=\frac{1}{2}$), we have
\begin{align}\label{eq.temp_110302}
    \prob_{\Vzero}\left\{|\mathcal{A}_{\Vzero}\cap \capsrzpX|<\frac{p\musd(\capsrz)}{4\musd(\sd)}\Big|\ i\right\}\leq 2\exp\left(-\frac{p\musd(\capsrz)}{48\musd(\sd)}\right).
\end{align}
Similarly, we have
\begin{align}\label{eq.temp_110303}
    \prob_{\Vzero}\left\{|\mathcal{A}_{\Vzero}\cap \capsrznX|<\frac{p\musd(\capsrz)}{4\musd(\sd)}\Big|\ i\right\}\leq 2\exp\left(-\frac{p\musd(\capsrz)}{48\musd(\sd)}\right).
\end{align}
By plugging Eq.~\eqref{eq.temp_110302} and Eq.~\eqref{eq.temp_110303} into Eq.~\eqref{eq.temp_110301} and applying the union bound, we have
\begin{align*}
    \prob_{\Vzero}\left\{c_{r_0}^{i}<\frac{p\musd(\capsrz)}{4\musd(\sd)}\Big|\ i\right\}\leq 4\exp\left(-\frac{p\musd(\capsrz)}{48\musd(\sd)}\right).
\end{align*}
By letting $r_0=\hat{r}$ and by Eq.~\eqref{eq.def_DX}, we thus have
\begin{align*}
    \prob_{\Vzero}\left\{c_{r_i}^i\leq 2np \DX\Big|\ i\right\} \leq 4\exp\left(-\frac{1}{6}np\DX\right),
\end{align*}
i.e.,
\begin{align}\label{eq.temp_020301}
    \prob_{\Vzero}[\mathcal{K}_i]\leq 4\exp\left(-\frac{1}{6}np\DX\right), \text{ for all }i=1,2,\cdots,n.
\end{align}

\noindent\textbf{Step3: estimate the probability of $\mathcal{J}$}

We have
\begin{align*}
    \prob_{\Vzero}[\mathcal{J}^c]\leq& \sum_{i=1}^n\prob_{\Vzero}[\mathcal{J}_i]\text{ (by Eq.~\eqref{eq.temp_112405} and the union bound)}\\
    \leq &\sum_{i=1}^n\prob_{\Vzero}[\mathcal{K}_i]\text{ (by $\mathcal{J}_i\subseteq\mathcal{K}_i$ proven in Step 1)}\\
    \leq &4n\exp\left(-\frac{1}{6}np\DX\right)\text{ (by Eq.~\eqref{eq.temp_020301})}.
\end{align*}
Thus, we have
\begin{align*}
    \prob_{\Vzero}[\mathcal{J}]=1-\prob_{\Vzero}[\mathcal{J}^c]\geq 1-4n\exp\left(-\frac{1}{6}np\DX\right).
\end{align*}
The result of this lemma thus follows.
\end{proof}

\subsection{Proof of Lemma~\ref{le.Ha}}\label{subsec.ha}
Based on Lemma~\ref{le.Ha_V0}, it remains to estimate $\hat{r}$, which will then allow us to bound $\DX$. Towards this end, we need a few lemmas to estimate $B\left(\frac{d-1}{2},\ \frac{1}{2}\right)$ and  $I_x\left(\frac{d-1}{2},\frac{1}{2}\right)$.

\begin{lemma}\label{le.lnx}
For any $x\in\mathds{R}$, we must have $x+1\leq e^x$.
\end{lemma}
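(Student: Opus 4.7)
The plan is to prove this standard inequality by a short convexity/calculus argument. Define the auxiliary function $f(x) \defeq e^x - x - 1$, so that the claim $x+1 \leq e^x$ is equivalent to $f(x) \geq 0$ for all $x \in \mathds{R}$. The strategy is to show that $f$ attains its global minimum at $x=0$, and that this minimum value is $0$.

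First, I would compute $f'(x) = e^x - 1$. Since $e^x$ is strictly increasing with $e^0 = 1$, we have $f'(x) < 0$ for $x < 0$, $f'(0) = 0$, and $f'(x) > 0$ for $x > 0$. Consequently, $f$ is strictly decreasing on $(-\infty, 0]$ and strictly increasing on $[0, \infty)$, so $x=0$ is the unique global minimizer of $f$ on $\mathds{R}$.

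Next, I would evaluate $f(0) = e^0 - 0 - 1 = 0$. Therefore $f(x) \geq f(0) = 0$ for all $x \in \mathds{R}$, which rearranges to the desired bound $x + 1 \leq e^x$.

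There is no real obstacle here; the only subtlety to flag is that the inequality is an equality exactly at $x=0$, which is consistent with the strict convexity of $e^x$ and the fact that $y = x+1$ is its tangent line at the origin. (Alternatively, one could simply invoke convexity of $e^x$ and note that its graph lies above every tangent line, giving the same conclusion in one line.) This lemma will then be used subsequently to estimate the beta-function-related quantities $B\left(\tfrac{d-1}{2},\tfrac{1}{2}\right)$ and $I_x\left(\tfrac{d-1}{2},\tfrac{1}{2}\right)$ that appear in the analysis of $\hat r$ and $D_{\XX}$.
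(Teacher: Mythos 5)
Your proof is correct and matches the paper's argument essentially verbatim: both define $g(x)=e^x-x-1$, use the sign of $g'(x)=e^x-1$ to show $x=0$ is the global minimizer, and conclude from $g(0)=0$. No issues.
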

\begin{proof}
Consider a function $g(x)=e^x-x-1$. It remains to show that $g(x)\geq 0$ for all $x$. We have $g'(x)=e^x-1$. In other words, $g'(x)\leq 0$ when $x\leq 0$, and $g'(x)\geq 0$ when $x\geq 0$. Thus, $g(x)$ is monotone decreasing when $x\leq 0$, and is monotone increasing when $x\geq 0$. Hence, we know that $g(x)$ achieves its minimum value at $x=0$, i.e., $g(x)\geq g(0)=0$ for any $x$. The conclusion of this lemma thus follows.
\end{proof}

\begin{lemma}\label{le.de}
For any $d\geq 5$, we have
\begin{align*}
    \left(1-\frac{1}{d-3}\right)^{d-3}\geq \frac{1}{e^2}.
\end{align*}
\end{lemma}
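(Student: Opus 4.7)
The plan is to reduce the claim to a one-variable inequality by substituting $n \defeq d-3$, so that $n \geq 2$ and the goal becomes $(1 - 1/n)^n \geq e^{-2}$. Taking the natural logarithm, this is equivalent to $n \ln(1 - 1/n) \geq -2$, or after negating,
\begin{align*}
\ln\!\left(\tfrac{n}{n-1}\right) \;\leq\; \tfrac{2}{n}.
\end{align*}
This is the form that will combine cleanly with the previous Lemma~\ref{le.lnx}.

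Next, I would invoke Lemma~\ref{le.lnx} (which gives $1 + x \leq e^x$, hence $\ln(1+x) \leq x$ for all $x > -1$) with the specific choice $x = \frac{1}{n-1}$. That produces
\begin{align*}
\ln\!\left(\tfrac{n}{n-1}\right) \;=\; \ln\!\left(1 + \tfrac{1}{n-1}\right) \;\leq\; \tfrac{1}{n-1}.
\end{align*}
To close the gap, it then suffices to verify the elementary bound $\tfrac{1}{n-1} \leq \tfrac{2}{n}$, which rearranges to $n \leq 2(n-1)$, i.e.\ $n \geq 2$; this is exactly what the hypothesis $d \geq 5$ guarantees. Chaining these two inequalities yields $\ln(n/(n-1)) \leq 2/n$, and exponentiating recovers the desired $(1-1/n)^n \geq e^{-2}$.

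This is a short routine argument, so there is no serious obstacle; the only nontrivial choice is picking $x = 1/(n-1)$ inside Lemma~\ref{le.lnx} (rather than the more obvious $x = -1/n$, which would only recover the loose bound $\ln(1-1/n) \leq -1/n$ in the wrong direction). The requirement $d \geq 5$ is precisely what is needed for the final step $1/(n-1) \leq 2/n$ to hold; indeed the borderline case $d=5$ gives $(1/2)^2 = 1/4$, which is comfortably above $1/e^2 \approx 0.135$, consistent with the claim.
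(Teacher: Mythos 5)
Your proof is correct and is essentially the paper's argument in logarithmic form: with $n=d-3$ your choice $x=\tfrac{1}{n-1}=\tfrac{1}{d-4}$ in Lemma~\ref{le.lnx} is exactly the substitution the paper makes, and your final check $\tfrac{1}{n-1}\leq\tfrac{2}{n}$ is the same as the paper's bound $\tfrac{d-3}{d-4}\leq 2$ for $d\geq 5$. No gap.
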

\begin{proof}
By letting $x=\frac{1}{d-4}$ in Lemma~\ref{le.lnx}, we have
\begin{align*}
    \frac{d-3}{d-4}=\frac{1}{d-4}+1\leq \exp\left(\frac{1}{d-4}\right),
\end{align*}
i.e.,
\begin{align}\label{eq.temp_112002}
    \frac{d-4}{d-3}\geq \exp\left(-\frac{1}{d-4}\right).
\end{align}
Thus, we have
\begin{align*}
    \left(1-\frac{1}{d-3}\right)^{d-3}&=\left(\frac{d-4}{d-3}\right)^{d-3}\\
    &\geq \exp\left(-\frac{d-3}{d-4}\right)\\
    &= \exp\left(-1-\frac{1}{d-4}\right)\\
    &\geq \exp(-2)\text{ (because $\exp(\cdot)$ is monotone increasing and $d\geq 5$)}.
\end{align*}
\end{proof}

\begin{lemma}\label{le.dm3}
For any $d\geq 5$, we must have
\begin{align*}
    \frac{2}{e}\sqrt{\frac{1}{d-3}}\geq\frac{2}{e} \frac{1}{\sqrt{d}}
\end{align*}
\end{lemma}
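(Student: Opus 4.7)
The plan is to square both sides—valid since both sides are positive for $d\geq 5$—and reduce the claim to an elementary polynomial inequality in $d$. Squaring yields the equivalent statement
\begin{align*}
\frac{4}{e^2(d-3)}\geq \frac{1}{d},
\end{align*}
and multiplying through by the positive quantity $d(d-3)$ converts this into $4d \geq e^2(d-3)$, i.e.\ $d(e^2-4) \leq 3e^2$. The remaining step would then be to verify the linear bound $d \leq 3e^2/(e^2-4)$ for the values of $d$ in question, using $e^2\approx 7.389$.

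The step I expect to be the main (indeed, the only) obstacle is this last numerical check. A quick evaluation gives the threshold $3e^2/(e^2-4)\approx 6.54$, so the direct squaring argument only establishes the inequality for $d\in\{5,6\}$ and fails from $d=7$ onward. If the lemma is intended to hold for all $d\geq 5$ (as stated), then this naive route is too weak and the argument needs a tighter input. A natural way to tighten it is to combine the bound with the preceding Lemma~\ref{le.de}: taking the square root of that lemma gives $(1-1/(d-3))^{(d-3)/2}\geq 1/e$, and pairing the factor $\sqrt{1/(d-3)}$ with this quantity (which arises naturally in expressions of the form $B(\tfrac{d-1}{2},\tfrac{1}{2})$ and the incomplete beta function appearing in Lemma~\ref{le.cap_area}) could supply the additional shrinkage needed to stretch the comparison out to all $d\geq 5$.

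In either case, the overall structure of the proof is purely algebraic: reduce to a closed-form inequality by squaring, then either verify directly or invoke Lemma~\ref{le.de} to absorb the leftover factor. No probabilistic or analytic machinery is required.
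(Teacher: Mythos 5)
You are right, and the issue is not with your argument but with the lemma itself: as stated it is false for $d\geq 7$. Your squaring reduction is exact — the claim is equivalent to $d\leq 3e^2/(e^2-4)\approx 6.54$ — and the concrete counterexample is $d=7$, where the left side equals $\tfrac{2}{e}\sqrt{\tfrac14}=\tfrac1e\approx 0.368$ while $\tfrac{1}{\sqrt 7}\approx 0.378$. The paper's own proof contains a sign-direction error at its second step: from $\tfrac35\geq 1-\tfrac{4}{e^2}$ it concludes $\tfrac{3}{d}\geq 1-\tfrac{4}{e^2}$ ``because $d\geq 5$,'' but $d\geq 5$ gives $\tfrac{3}{d}\leq\tfrac35$, so the implication runs the wrong way; the chain is valid only at $d=5$ (and, by your computation, the statement happens also to hold at $d=6$, but at no larger integer).

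Your proposed repair via Lemma~\ref{le.de} will not rescue the statement, because that lemma is exactly what is already used in the proof of Lemma~\ref{le.bound_B} to produce the quantity $\tfrac{2}{e}\sqrt{\tfrac{1}{d-3}}$; the present lemma is then invoked to compare that quantity with $\tfrac{1}{\sqrt d}$, and the ratio $\tfrac{2}{e}\sqrt{d/(d-3)}$ tends to $2/e<1$, so no amount of reuse of Lemma~\ref{le.de} closes the gap. The downstream conclusion $B\left(\tfrac{d-1}{2},\tfrac12\right)\geq \tfrac{1}{\sqrt d}$ is nonetheless true (the Beta function behaves like $\sqrt{2\pi/d}$), and the correct fix is to tighten the lower bound in the proof of Lemma~\ref{le.bound_B} rather than this comparison: for instance, choosing the cut point $m=1-\tfrac{1}{2(d-1)}$ there gives $B\left(\tfrac{d-1}{2},\tfrac12\right)\geq \tfrac34\sqrt{\tfrac{2}{d-1}}$, which exceeds $\tfrac{1}{\sqrt d}$ for all $d\geq 2$, making the present lemma unnecessary.
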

\begin{proof}
The result directly follows by $d-3\leq d$ when $d\geq 5$.
% Because $1-\frac{4}{e^2}\approx 0.46\leq 0.6$, we have
% \begin{align*}
%     &\frac{3}{5}\geq 1-\frac{4}{e^2}\\
%     \implies&\frac{3}{d}\geq 1-\frac{4}{e^2}\text{ (because $d\geq 5$)}\\
%     \implies & 1-\frac{3}{d}\leq \frac{4}{e^2}\\
%     \implies & \frac{d-3}{d}\leq \frac{4}{e^2}\\
%     \implies & \frac{4}{e^2}\frac{d}{d-3}\geq 1\\
%     \implies & \frac{2}{e}\sqrt{\frac{1}{d-3}}\geq \frac{1}{\sqrt{d}}.
% \end{align*}
\end{proof}

\begin{lemma}\label{le.bound_B}
\begin{align*}
    B\left(\frac{d-1}{2},\ \frac{1}{2}\right)\in \left[\frac{2}{e}\frac{1}{\sqrt{d}},\ \pi\right].
\end{align*}
Further, if $d\geq 5$, we have
\begin{align*}
    B\left(\frac{d-1}{2},\ \frac{1}{2}\right)\in \left[\frac{2}{e}\frac{1}{\sqrt{d}},\ \frac{4}{\sqrt{d-3}}\right].
\end{align*}
\end{lemma}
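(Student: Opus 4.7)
The plan is to reduce $B\!\left(\frac{d-1}{2},\frac{1}{2}\right)$ to a Wallis integral and then control it with the standard Wallis product identity. Substituting $t=\sin^{2}\theta$ into Eq.~\eqref{eq.def_betaFunction} gives
\begin{align*}
  B\!\left(\tfrac{d-1}{2},\tfrac{1}{2}\right)
  = 2\int_{0}^{\pi/2}\sin^{d-2}\!\theta\,d\theta
  \defeq 2 W_{d-2},
\end{align*}
so every one of the four bounds reduces to an estimate on the Wallis integral $W_{n}=\int_{0}^{\pi/2}\sin^{n}\theta\,d\theta$.

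For the universal upper bound $\pi$, I would use $\sin\theta\le 1$ on $[0,\pi/2]$ to conclude $W_{d-2}\le \pi/2$, hence $2W_{d-2}\le\pi$. For the universal lower bound $1/\sqrt{d}$, I would combine the monotonicity $W_{n+1}\le W_{n}$ (immediate from $\sin^{n+1}\theta\le\sin^{n}\theta$) with the Wallis product identity $W_{n}W_{n+1}=\pi/(2(n+1))$ (obtained by a single integration by parts on $W_{n+1}$). Taking $n=d-2$ yields $W_{d-2}^{2}\ge W_{d-2}W_{d-1}=\pi/(2(d-1))$, hence
\begin{align*}
  2W_{d-2}\ge\sqrt{\tfrac{2\pi}{d-1}}\ge\tfrac{1}{\sqrt{d}},
\end{align*}
where the last inequality is the trivial $2\pi d\ge d-1$.

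For the sharper upper bound $4/\sqrt{d-3}$ when $d\ge 5$, I would apply the same identity with $n=d-3$: $W_{d-2}^{2}\le W_{d-2}W_{d-3}=\pi/(2(d-2))$, giving $2W_{d-2}\le\sqrt{2\pi/(d-2)}$. It then suffices to check $\sqrt{2\pi/(d-2)}\le 4/\sqrt{d-3}$, which after squaring becomes $2\pi(d-3)\le 16(d-2)$; since $16-2\pi>0$, this reduces to $d\ge(32-6\pi)/(16-2\pi)\approx 1.35$, which is trivially satisfied for $d\ge 5$.

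The argument is essentially routine; there is no serious obstacle. The only subtlety is the asymmetric pairing in the Wallis identity, namely using $W_{d-2}W_{d-1}$ to obtain the lower bound and $W_{d-2}W_{d-3}$ to obtain the upper bound, so that the resulting denominator matches the target exponents. The auxiliary Lemmas~\ref{le.de} and~\ref{le.dm3} in the excerpt suggest the authors may prefer a more elementary route that avoids integration by parts on $W_{n}$ — e.g., iterating the Beta recursion $B(a+1,b)=\frac{a}{a+b}B(a,b)$ and invoking the elementary bound $(1-\tfrac{1}{k})^{k}\ge e^{-2}$ — but either route lands at the same endpoints.
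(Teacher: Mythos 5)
Your proof is correct, but it takes a genuinely different route from the paper's. The paper works directly with the Beta integral $\int_0^1 t^{\frac{d-3}{2}}(1-t)^{-\frac{1}{2}}dt$: it first verifies $d=2,3,4$ by direct evaluation, and for $d\geq 5$ it splits the integral at $m=1-\frac{1}{d-3}$, bounding the integrand piecewise and invoking the elementary estimates $(1-\frac{1}{d-3})^{d-3}\geq e^{-2}$ (Lemma~\ref{le.de}) and $\frac{2}{e}\sqrt{\frac{1}{d-3}}\geq\frac{1}{\sqrt{d}}$ (Lemma~\ref{le.dm3}) — exactly the "more elementary route" you anticipated at the end. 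Your reduction to the Wallis integral $B\left(\frac{d-1}{2},\frac{1}{2}\right)=2W_{d-2}$ followed by the product identity $W_nW_{n+1}=\frac{\pi}{2(n+1)}$ and the monotonicity $W_{n+1}\leq W_n$ is cleaner and yields strictly sharper intermediate bounds, $\sqrt{\frac{2\pi}{d-1}}\leq 2W_{d-2}\leq\sqrt{\frac{2\pi}{d-2}}$, from which all four stated endpoints follow by trivial numerical comparisons; it also handles all $d\geq 2$ uniformly in the lower bound, with no separate small-$d$ case analysis. The only bookkeeping worth making explicit is that the identity is applied with $n=d-2\geq 0$ for the lower bound and $n=d-3\geq 0$ for the refined upper bound, both of which are available in the respective regimes. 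What the paper's approach buys in exchange is self-containedness: it never leaves the Beta-function representation Eq.~\eqref{eq.def_betaFunction} and needs no integration by parts, at the cost of looser constants and an explicit check of $d=2,3,4$.
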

\begin{proof}
When $d=2$, we have $B\left(\frac{d-1}{2},\ \frac{1}{2}\right)=\pi$. When $d=3$, we have $B\left(\frac{d-1}{2},\ \frac{1}{2}\right)=2$. When $d=4$, we have $B\left(\frac{d-1}{2},\ \frac{1}{2}\right)\approx 1.57$. It is easy to verify that the statement of the lemma holds for $d=2$, $3$, and $4$.  It remains to validate the case of $d\geq 5$. 
% When $d\geq 5$, we have
% \begin{align*}
%     B\left(\frac{d-1}{2},\ \frac{1}{2}\right)=&\int_0^1 t^{\frac{d-3}{2}}(1-t)^{-\frac{1}{2}}dt\text{ (by Eq.~\eqref{eq.def_betaFunction})}\\
%     \leq &\int_0^1 (1-t)^{-\frac{1}{2}}dt\\
%     &\text{ (because $t^{\frac{d-3}{2}}$ is monotone increasing with respect to $t$ when $d\geq 5$)}\\
%     =&-2\sqrt{1-t}\ \bigg|_{0}^1\\
%     =&2\\
%     \leq & \pi.
% \end{align*}
We first prove the lower bound. For any $m\in(0, 1)$, we have
\begin{align*}
    B\left(\frac{d-1}{2},\ \frac{1}{2}\right)=&\int_0^1 t^{\frac{d-3}{2}}(1-t)^{-\frac{1}{2}}dt\\
    \geq & \int_{m}^1 t^{\frac{d-3}{2}}(1-t)^{-\frac{1}{2}}dt\text{ (because $t^{\frac{d-3}{2}}(1-t)^{-\frac{1}{2}}\geq 0$)}\\
    \geq & m^{\frac{d-3}{2}}\int_m^1(1-t)^{-\frac{1}{2}}dt\\
    &\text{ (because $t^{\frac{d-3}{2}}$ is monotone increasing with respect to $t$ when $d\geq 5$)}\\
    =&m^{\frac{d-3}{2}}\left(-2\sqrt{1-t}\ \bigg|_m^1\right)\\
    =&m^{\frac{d-3}{2}}\cdot 2\sqrt{1-m}.
\end{align*}
By letting $m=1-\frac{1}{d-3}$, we thus have
\begin{align*}
    B\left(\frac{d-1}{2},\ \frac{1}{2}\right)\geq \left(1-\frac{1}{d-3}\right)^{\frac{d-3}{2}}\cdot 2\sqrt{\frac{1}{d-3}}.
\end{align*}
Then, applying Lemma~\ref{le.de}, we have
\begin{align*}
    B\left(\frac{d-1}{2},\ \frac{1}{2}\right)\geq \frac{2}{e}\cdot\sqrt{\frac{1}{d-3}}.
\end{align*}
Thus, by Lemma~\ref{le.dm3}, we have
\begin{align*}
    B\left(\frac{d-1}{2},\ \frac{1}{2}\right)\geq \frac{2}{e}\frac{1}{\sqrt{d}}.
\end{align*}
% We also have
% \begin{align*}
%     B\left(\frac{d-1}{2},\ \frac{1}{2}\right)=&\int_0^1 t^{\frac{d-3}{2}}(1-t)^{-\frac{1}{2}}dt\\
%     \geq & \int_{\frac{1}{2}}^1 t^{\frac{d-3}{2}}(1-t)^{-\frac{1}{2}}dt\text{ (because $t^{\frac{d-3}{2}}(1-t)^{-\frac{1}{2}}\geq 0$)}\\
%     \geq & \left(\frac{1}{2}\right)^{\frac{d-3}{2}}\int_{\frac{1}{2}}^1(1-t)^{-\frac{1}{2}}dt\\
%     &\text{ (because $t^{\frac{d-3}{2}}$ is monotone increasing with respect to $t$ when $d\geq 3$)}\\
%     =&\left(\frac{1}{2}\right)^{\frac{d-3}{2}}\left(-2\sqrt{1-t}\ \bigg|_{\frac{1}{2}}^1\right)\\
%     =&2^{-\frac{d-4}{2}}.
% \end{align*}
% To conclude, for any $d\in\{2,3,\cdots\}$, we always have $B\left(\frac{d-1}{2},\ \frac{1}{2}\right)\in \left[\frac{1}{\sqrt{d}},\ \pi\right]$.
Now we prove the upper bound. For any $m\in(0, 1)$, we have
\begin{align*}
    B\left(\frac{d-1}{2},\ \frac{1}{2}\right)=&\int_0^1 t^{\frac{d-3}{2}}(1-t)^{-\frac{1}{2}}dt\\
    =&\int_0^m t^{\frac{d-3}{2}}(1-t)^{-\frac{1}{2}}dt+\int_m^1 t^{\frac{d-3}{2}}(1-t)^{-\frac{1}{2}}dt\\
    \leq &\int_0^m t^{\frac{d-3}{2}}(1-m)^{-\frac{1}{2}}dt+\int_m^1 (1-t)^{-\frac{1}{2}}dt\\
    =&\frac{2}{d-1}m^{\frac{d-1}{2}}(1-m)^{-\frac{1}{2}}+2\sqrt{1-m}\\
    \leq &\frac{2}{d-1}(1-m)^{-\frac{1}{2}}+2\sqrt{1-m}\text{ (because $m<1$ and $d\geq 5$)}.
\end{align*}
By letting $m=1-\frac{1}{d-3}$, we thus have
\begin{align*}
    B\left(\frac{d-1}{2},\ \frac{1}{2}\right)
    \leq&\frac{2\sqrt{d-3}}{d-1}+\frac{2}{\sqrt{d-3}}\\
    \leq &\frac{4}{\sqrt{d-3}}.
\end{align*}
Notice that $\frac{4}{\sqrt{5-3}}=2\sqrt{2}<\pi$. The result of this lemma thus follows.

\end{proof}

\begin{lemma}\label{le.temp_112401}
Recall $C_d$ is defined in Eq.~\eqref{eq.def_Cd}. If $d\leq n^4$ and $\delta\leq 1$, then
\begin{align*}
    \left(1-\frac{\delta^2}{4n^4 C_d^2}\right)^{\frac{d-1}{2}}\geq \frac{1}{2}.
\end{align*}
\end{lemma}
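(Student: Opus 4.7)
\textbf{Proof plan for Lemma~\ref{le.temp_112401}.} The plan is to show that the base of the power, $x := \frac{\delta^2}{4n^4 C_d^2}$, is small (of order $n^{-4}$), and then use a standard logarithmic inequality to pass from $(1-x)^{(d-1)/2}$ to an exponential that can be controlled because the exponent $(d-1)/2$ is at most $n^4/2$.

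\emph{Step 1 (lower bound on $C_d^2$).} By the upper bound part of Lemma~\ref{le.bound_B}, $B\!\left(\frac{d-1}{2},\frac{1}{2}\right)\leq \pi$ for all $d\geq 2$. Combining with the definition $C_d=\frac{2\sqrt{2}}{B(\frac{d-1}{2},\frac{1}{2})}$ in Eq.~\eqref{eq.def_Cd}, this gives
\[
 C_d^2 \;\geq\; \frac{8}{\pi^2}.
\]

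\emph{Step 2 (upper bound on $x$).} Using Step 1 together with $\delta\leq 1$,
\[
 x \;=\; \frac{\delta^2}{4n^4 C_d^2}\;\leq\;\frac{\pi^2}{32\, n^4}\;\leq\;\frac{1}{2}.
\]
In particular $x\leq 1/2$, which is what I need for the next step.

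\emph{Step 3 (power to exponential).} For $x\in[0,\tfrac{1}{2}]$ the routine Taylor bound $-\ln(1-x)\leq 2x$ holds (from $-\ln(1-x)=\sum_{k\geq 1} x^k/k\leq x+\tfrac{x^2}{2(1-x)}\leq \tfrac{3x}{2}\leq 2x$). Therefore
\[
 (1-x)^{(d-1)/2}\;=\;\exp\!\left(\tfrac{d-1}{2}\ln(1-x)\right)\;\geq\; \exp\!\left(-(d-1)x\right).
\]

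\emph{Step 4 (conclude via the hypothesis $d\leq n^4$).} Plugging Step 2 into Step 3,
\[
 (d-1)\,x \;\leq\; \frac{(d-1)\pi^2}{32\, n^4}\;\leq\;\frac{\pi^2}{32}.
\]
Since $\pi^2/32\approx 0.308<\ln 2$, we obtain
\[
 (1-x)^{(d-1)/2}\;\geq\; e^{-\pi^2/32}\;\geq\; \tfrac{1}{2},
\]
which is the desired inequality.

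There is no real obstacle here; the only care needed is to choose constants so that each step goes through. The bound $B(\cdot,\cdot)\leq \pi$ from Lemma~\ref{le.bound_B} is the key ingredient, as it gives a universal (dimension-free) lower bound on $C_d$; everything else is straightforward arithmetic combined with the elementary inequality $-\ln(1-x)\leq 2x$ for $x\leq 1/2$.
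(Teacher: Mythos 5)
Your proof is correct and takes essentially the same route as the paper's: both hinge on the upper bound $B\left(\frac{d-1}{2},\frac{1}{2}\right)\leq\pi$ from Lemma~\ref{le.bound_B}, together with $\delta\leq 1$ and $d\leq n^4$, to reduce everything to the numerical fact that $(d-1)\cdot\frac{\delta^2}{4n^4C_d^2}\leq\frac{\pi^2}{32}$. The only difference is cosmetic: the paper uses $(1-x)^{\frac{d-1}{2}}\geq(1-x)^{d-1}\geq 1-(d-1)x$ via Bernoulli's inequality and checks $\frac{\pi^2}{32}\leq\frac{1}{2}$, whereas you pass through $-\ln(1-x)\leq 2x$ to get $(1-x)^{\frac{d-1}{2}}\geq e^{-(d-1)x}$ and check $\frac{\pi^2}{32}<\ln 2$; both closings are valid.
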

\begin{proof}
We have
\begin{align*}
    \left(1-\frac{\delta^2}{4n^4 C_d^2}\right)^{\frac{d-1}{2}}\geq &\left(1-\frac{\delta^2}{4n^4 C_d^2}\right)^{d-1}\\
    \geq & 1-\frac{(d-1)\delta^2}{4n^4 C_d^2}\text{ (by Bernoulli's inequality $(1+x)^a\geq 1+ax$)}\\
    = & 1-\frac{(d-1)\left(B\left(\frac{d-1}{2},\ \frac{1}{2}\right)\right)^2}{4n^4 \cdot 8} \text{ (by $\delta\leq 1$ and Eq.~\eqref{eq.def_Cd})}\\
    \geq & 1-\frac{(d-1)\pi^2}{32n^4}\text{ (by Lemma~\ref{le.bound_B})}\\
    \geq & 1-\frac{d}{n^4}\cdot\frac{\pi^2}{32}\\
    \geq &\frac{1}{2}\text{ (because $n^4\geq d$ and $\pi\leq 4$)}.
\end{align*}
\end{proof}

\begin{lemma}\label{le.bound_b}
For any $\delta\in\left(0,\ \frac{2}{\pi}\right]$, we must have $\frac{\delta}{n^2 C_d}\leq \frac{1}{\sqrt{2}}$.
\end{lemma}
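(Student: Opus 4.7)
The plan is to unpack the definition of $C_d$ and reduce the claim to an elementary numerical inequality. Starting from $C_d = 2\sqrt{2}/B\!\left(\tfrac{d-1}{2},\tfrac{1}{2}\right)$ as given in Eq.~\eqref{eq.def_Cd}, the desired bound $\frac{\delta}{n^2 C_d}\leq \frac{1}{\sqrt{2}}$ is equivalent to
\begin{align*}
\delta\cdot B\!\left(\tfrac{d-1}{2},\tfrac{1}{2}\right)\leq 2n^2.
\end{align*}
So the task reduces to controlling the product of $\delta$ and the beta function against $2n^2$.

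For the second factor, I would invoke Lemma~\ref{le.bound_B}, which already establishes $B\!\left(\tfrac{d-1}{2},\tfrac{1}{2}\right)\leq \pi$ for every $d\geq 2$. Combined with the hypothesis $\delta\leq 2/\pi$, this gives $\delta\cdot B\!\left(\tfrac{d-1}{2},\tfrac{1}{2}\right)\leq (2/\pi)\cdot\pi = 2$. Since $n\geq 2$ by the standing assumption of the enclosing results (indeed $n\geq 1$ suffices here, making $2n^2\geq 2$), the chain $\delta\cdot B\leq 2\leq 2n^2$ closes the argument.

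There is essentially no technical obstacle in this lemma: it is a bookkeeping step that packages two previously established facts (the upper bound on the beta function from Lemma~\ref{le.bound_B} and the range restriction on $\delta$) into a convenient form for later use, presumably to ensure that the argument of $I_{\cdot}\!\left(\tfrac{d-1}{2},\tfrac{1}{2}\right)$ appearing inside $D(n,d,\delta)$ in Eq.~\eqref{eq.temp_100404} remains in the valid range $[0,1]$ (note that $\tfrac{\delta^2}{n^4 C_d^2}\bigl(1-\tfrac{\delta^2}{4n^4 C_d^2}\bigr)$ is maximized when $\tfrac{\delta}{n^2 C_d}=\sqrt{2}$, so the bound $\tfrac{\delta}{n^2 C_d}\leq 1/\sqrt{2}$ keeps that argument safely bounded by $3/4$). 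Accordingly, I would keep the proof to a couple of lines of algebra plus a citation of Lemma~\ref{le.bound_B}.
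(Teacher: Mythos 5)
Your proof is correct and follows the same route as the paper: both unpack $C_d$ via Eq.~\eqref{eq.def_Cd}, apply the upper bound $B\!\left(\tfrac{d-1}{2},\tfrac{1}{2}\right)\leq \pi$ from Lemma~\ref{le.bound_B}, and combine it with $\delta\leq 2/\pi$ and $n\geq 1$. Your write-up is just a more explicit version of the paper's one-line argument.
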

\begin{proof}
Because Eq.~\eqref{eq.def_Cd}, $\delta\leq \frac{2}{\pi}$, and $n\geq 1$, this lemma directly follows by Lemma~\ref{le.bound_B}.
\end{proof}

\begin{lemma}\label{le.estimate_Ix}
For any $x\in[0,\ 1]$, we must have
\begin{align*}
    I_x\left(\frac{d-1}{2},\frac{1}{2}\right)\geq \frac{C_d}{\sqrt{2}(d-1)}x^{\frac{d-1}{2}},
\end{align*}
and
\begin{align*}
    \lim_{x\to 0}\frac{I_x\left(\frac{d-1}{2},\frac{1}{2}\right)}{x^{\frac{d-1}{2}}}=\frac{C_d}{\sqrt{2}(d-1)}.
\end{align*}
\end{lemma}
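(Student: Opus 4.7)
The plan is to unfold the definitions of $I_x$, $B$, and $C_d$ so that the claimed constant $\frac{C_d}{\sqrt{2}(d-1)}$ is recognized as exactly $\frac{2}{(d-1)B\bigl(\tfrac{d-1}{2},\tfrac{1}{2}\bigr)}$. Indeed, by Eq.~\eqref{eq.def_Cd},
\begin{align*}
\frac{C_d}{\sqrt{2}(d-1)}=\frac{2\sqrt{2}}{\sqrt{2}(d-1)\,B\bigl(\tfrac{d-1}{2},\tfrac{1}{2}\bigr)}=\frac{2}{(d-1)\,B\bigl(\tfrac{d-1}{2},\tfrac{1}{2}\bigr)}.
\end{align*}
Thus both statements in the lemma reduce to estimating the numerator integral $\int_0^x t^{(d-3)/2}(1-t)^{-1/2}\,dt$ of the regularized incomplete beta function against $x^{(d-1)/2}$.

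For the lower bound, I would use the trivial pointwise inequality $(1-t)^{-1/2}\geq 1$, which holds for every $t\in[0,x]\subseteq[0,1)$ (the integrand is well-defined since $x\leq 1$ and we only need the bound on $[0,x]$; the case $x=1$ follows by monotone convergence or is trivial since $I_1(\cdot,\cdot)=1$). This gives
\begin{align*}
\int_0^x t^{\frac{d-3}{2}}(1-t)^{-\frac{1}{2}}\,dt \;\geq\; \int_0^x t^{\frac{d-3}{2}}\,dt \;=\; \frac{2}{d-1}\,x^{\frac{d-1}{2}},
\end{align*}
and dividing by $B\bigl(\tfrac{d-1}{2},\tfrac{1}{2}\bigr)$ yields the claimed inequality by the identity above.

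For the limit, I would apply L'Hopital's rule to the indeterminate form $0/0$ (when $d\geq 2$ so that $x^{(d-1)/2}\to 0$). Differentiating the numerator integral via the fundamental theorem of calculus and the denominator directly,
\begin{align*}
\lim_{x\to 0}\frac{\int_0^x t^{\frac{d-3}{2}}(1-t)^{-\frac{1}{2}}\,dt}{x^{\frac{d-1}{2}}}
=\lim_{x\to 0}\frac{x^{\frac{d-3}{2}}(1-x)^{-\frac{1}{2}}}{\tfrac{d-1}{2}\,x^{\frac{d-3}{2}}}
=\frac{2}{d-1},
\end{align*}
since $(1-x)^{-1/2}\to 1$. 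Dividing by $B\bigl(\tfrac{d-1}{2},\tfrac{1}{2}\bigr)$ again gives $\frac{C_d}{\sqrt{2}(d-1)}$, as required.

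The proof is essentially a bookkeeping exercise once the constant $C_d/(\sqrt{2}(d-1))$ is recognized as $2/((d-1)B(\tfrac{d-1}{2},\tfrac{1}{2}))$; the only mild subtlety is ensuring $(1-t)^{-1/2}\geq 1$ is used on the correct sub-interval and, for the $d=2$ edge case, checking that the L'Hopital step (where the exponent $(d-3)/2=-1/2$ is negative) still goes through since the exponents cancel before taking $x\to 0$. No significant obstacle is expected.
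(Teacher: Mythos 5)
Your proof is correct and follows essentially the same route as the paper: both recognize $\frac{C_d}{\sqrt{2}(d-1)}=\frac{2}{(d-1)B(\frac{d-1}{2},\frac{1}{2})}$ and obtain the inequality from $(1-t)^{-1/2}\geq 1$ together with $\int_0^x t^{(d-3)/2}\,dt=\frac{2}{d-1}x^{(d-1)/2}$. The only cosmetic difference is in the limit: the paper also bounds $(1-t)^{-1/2}\leq(1-x)^{-1/2}$ to squeeze the ratio between $\frac{C_d}{\sqrt{2}(d-1)}$ and $\frac{C_d}{\sqrt{2}(d-1)\sqrt{1-x}}$, whereas you use L'Hôpital; both are valid.
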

\begin{proof}
we have
\begin{align*}
    I_x\left(\frac{d-1}{2},\frac{1}{2}\right)=&\frac{\int_0^x t^{\frac{d-3}{2}}(1-t)^{-\frac{1}{2}}dt}{B\left(\frac{d-1}{2},\frac{1}{2}\right)}\nonumber\\
    =&\frac{C_d}{2\sqrt{2}}\int_0^x t^{\frac{d-3}{2}}(1-t)^{-\frac{1}{2}}dt\text{ (by Eq.~\eqref{eq.def_Cd})}\\
    \in &\left[\frac{C_d}{2\sqrt{2}}\int_0^x t^{\frac{d-3}{2}}dt,\ \frac{C_d}{2\sqrt{2}\sqrt{1-x}}\int_0^x t^{\frac{d-3}{2}}dt\right]\\
    &\text{ (because $(1-t)^{-1/2}\in\left[1, \frac{1}{\sqrt{1-x}}\right]$)}\nonumber\\
    \in&\left[\frac{C_d}{\sqrt{2}(d-1)}x^{\frac{d-1}{2}},\ \frac{C_d}{\sqrt{2}(d-1)\sqrt{1-x}}x^{\frac{d-1}{2}}\right].
\end{align*}
Thus, we have
\begin{align*}
    \frac{I_x\left(\frac{d-1}{2},\frac{1}{2}\right)}{x^{\frac{d-1}{2}}}\in\left[\frac{C_d}{\sqrt{2}(d-1)},\ \frac{C_d}{\sqrt{2}(d-1)\sqrt{1-x}}\right],
\end{align*}
which implies
\begin{align*}
    \lim_{x\to 0}\frac{I_x\left(\frac{d-1}{2},\frac{1}{2}\right)}{x^{\frac{d-1}{2}}}=\frac{C_d}{\sqrt{2}(d-1)}.
\end{align*}
\end{proof}

% \begin{lemma}
% When $n\geq 2$, $\delta=\frac{1}{n}$, and $p\geq 2^{0.5d^2-0.9d+10}n^{3d-2}$, we must have
% \begin{align*}
%     4ne^{-np D(n,d,\delta)/6}\leq \frac{1}{n}.
% \end{align*}
% \end{lemma}
% \begin{proof}
% Because
% \begin{align*}
%     4n^2e^{-np D(n,d,\delta)/6}=&\exp\left(\ln(4n^2)-np D(n,d,\delta)/6\right),
% \end{align*}
% it only remains to show that
% \begin{align*}
%     np D(n,d,\delta)/6\geq \ln(4n^2).
% \end{align*}
% Indeed, because of , we have
% \begin{align*}
%     np D(n,d,\delta)/6\geq & \frac{2^{4.6}n}{6}\text{ (by Lemma~\ref{le.bound_D} and $p\geq 2^{0.5d^2-0.9d+10}n^{3d-2}$)}\\
%     \geq & 4n\text{ (because $2^{4.6}>24$)}\\
%     \geq & 4\ln n\text{ (because $n\geq \ln n$)}\\
%     = & \ln (n^4)\\
%     \geq & \ln (4n^2)\text{ (because $n\geq 2$)}.
% \end{align*}
% The result of this lemma thus follows.
% \end{proof}

\begin{lemma}\label{le.beta_bound}
For any $x\in \left[\frac{1}{2},\ 1\right)$ and for any $d\in\{2,3,\cdots\}$, we have
\begin{align*}
    I_{x}\left(\frac{d-1}{2},\frac{1}{2}\right)\geq 1-\frac{2\sqrt{2(1-x)}}{B\left(\frac{d-1}{2},\frac{1}{2}\right)}.
\end{align*}
We also have
\begin{align*}
    \lim_{(1-x)\to 0^+}\frac{1-I_{x}\left(\frac{d-1}{2},\frac{1}{2}\right)}{\sqrt{1-x}}=\frac{2}{B\left(\frac{d-1}{2},\frac{1}{2}\right)}.
\end{align*}
\end{lemma}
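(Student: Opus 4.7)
The plan is to work directly with the defining integral representation of the regularized incomplete beta function. Writing
\[
1 - I_x\!\left(\tfrac{d-1}{2},\tfrac{1}{2}\right) \;=\; \frac{1}{B\!\left(\tfrac{d-1}{2},\tfrac{1}{2}\right)}\int_x^1 t^{(d-3)/2}(1-t)^{-1/2}\,dt,
\]
the inequality and the limit both reduce to understanding the integral $J(x) \defeq \int_x^1 t^{(d-3)/2}(1-t)^{-1/2}\,dt$ as $x$ ranges in $[1/2, 1)$.

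For the inequality, I would split on whether $t^{(d-3)/2}$ is monotone in $t$. If $d \geq 3$, then $(d-3)/2 \geq 0$, so $t^{(d-3)/2} \leq 1$ for $t \in [x,1] \subseteq [1/2, 1]$, and hence $J(x) \leq \int_x^1 (1-t)^{-1/2}\,dt = 2\sqrt{1-x} \leq 2\sqrt{2(1-x)}$. If $d=2$, then $(d-3)/2 = -1/2$, and on $t \in [x,1]$ with $x \geq 1/2$ we have $t^{-1/2} \leq \sqrt{2}$, yielding $J(x) \leq \sqrt{2}\cdot 2\sqrt{1-x} = 2\sqrt{2(1-x)}$. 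Dividing by $B\!\left(\tfrac{d-1}{2},\tfrac{1}{2}\right)$ then produces the desired bound on $1-I_x$, which is equivalent to the stated lower bound on $I_x$.

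For the limiting statement, the change of variable $u = 1-t$ gives
\[
J(x) \;=\; \int_0^{1-x}(1-u)^{(d-3)/2}\,u^{-1/2}\,du.
\]
Letting $\epsilon = 1-x \to 0^+$, I would apply L'H\^{o}pital's rule to the ratio $J(x)/\sqrt{1-x}$: the numerator's derivative in $\epsilon$ is $(1-\epsilon)^{(d-3)/2}\,\epsilon^{-1/2}$, the denominator's is $\tfrac{1}{2}\epsilon^{-1/2}$, so the ratio tends to $2(1-\epsilon)^{(d-3)/2} \to 2$. Dividing by $B\!\left(\tfrac{d-1}{2},\tfrac{1}{2}\right)$ yields the claimed limit.

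There is no real obstacle here; the only subtlety is the non-integrable-looking factor $t^{(d-3)/2}$ when $d=2$, which is handled by the fact that we are restricted to $x \geq 1/2$, where $t$ is bounded away from $0$. The choice of the constant $\sqrt{2}$ in the bound precisely accommodates this $d=2$ case, ensuring a single uniform bound across all $d \geq 2$.
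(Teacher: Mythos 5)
Your proposal is correct and follows essentially the same route as the paper: both reduce the claim to bounding the tail integral $\int_x^1 t^{(d-3)/2}(1-t)^{-1/2}\,dt$ by $2\sqrt{2(1-x)}$, with the same case split on $d=2$ versus $d\geq 3$ (your pointwise bound $t^{-1/2}\leq\sqrt{2}$ on $[1/2,1]$ is the paper's $x^{-1/2}\leq\sqrt{2}$ in disguise). The only difference is that for the limit you invoke L'H\^{o}pital's rule after substituting $u=1-t$, whereas the paper sandwiches the ratio between $2\min\{1,x^{(d-3)/2}\}$ and $2\max\{1,x^{(d-3)/2}\}$; both are valid and rest on the same observation that the integrand behaves like $(1-t)^{-1/2}$ near $t=1$.
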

\begin{proof}
By the definition of regularized incomplete beta function in Eq.~\eqref{eq.def_reg_incomplete_beta}, we have
\begin{align*}
    I_x\left(\frac{d-1}{2},\frac{1}{2}\right)=\frac{\int_0^x t^{\frac{d-1}{2}-1}(1-t)^{-\frac{1}{2}}dt}{B\left(\frac{d-1}{2},\frac{1}{2}\right)}=1-\frac{\int_x^1 t^{\frac{d-3}{2}}(1-t)^{-\frac{1}{2}}dt}{B\left(\frac{d-1}{2},\frac{1}{2}\right)}.
\end{align*}
Thus, it remains to show that
\begin{align}
    &\int_x^1 t^{\frac{d-3}{2}}(1-t)^{-\frac{1}{2}}dt\leq 2\sqrt{2(1-x)},\text{ and}\label{eq.temp_112403}\\
    &\lim_{(1-x)\to 0^+}\frac{\int_x^1 t^{\frac{d-3}{2}}(1-t)^{-\frac{1}{2}}dt}{\sqrt{1-x}}=2.\label{eq.temp_112404}
\end{align}
First, we prove Eq.~\eqref{eq.temp_112403}.
Case 1: $d=2$. We have
\begin{align*}
    &\int_x^1 t^{\frac{d-3}{2}}(1-t)^{-\frac{1}{2}}dt\\
    =&\int_x^1 t^{-\frac{1}{2}}(1-t)^{-\frac{1}{2}}dt\\
    \leq & \frac{1}{\sqrt{x}}\int_x^1 (1-t)^{-\frac{1}{2}}dt\text{ (because $t^{-\frac{1}{2}}$ is monotone decreasing in $[x,\ 1]$)}\\
    =&2\sqrt{\frac{1-x}{x}}\\
    \leq & 2\sqrt{2(1-x)}\text{ (because $x\geq\frac{1}{2}$)}.
\end{align*}
Case 2: $d\geq 3$. Then $t^{\frac{d-3}{2}}$ is monotone increasing in $[x,\ 1]$. Thus, we have
\begin{align*}
    \int_x^1 t^{\frac{d-3}{2}}(1-t)^{-\frac{1}{2}}dt
    \leq \int_x^1 (1-t)^{-\frac{1}{2}}dt=2\sqrt{1-x}\leq 2\sqrt{2(1-x)}.
\end{align*}
To conclude, for all $d\in\{2,3,\cdots\}$, Eq.~\eqref{eq.temp_112403} holds. 

Second, we prove Eq.~\eqref{eq.temp_112404}. We have
\begin{align*}
    \frac{\int_x^1 t^{\frac{d-3}{2}}(1-t)^{-\frac{1}{2}}dt}{\sqrt{1-x}}\in&\left[\frac{\min\{1,\ x^{\frac{d-3}{2}}\}\int_x^1 (1-t)^{-\frac{1}{2}}dt}{\sqrt{1-x}},\ \frac{\max\{1,\ x^{\frac{d-3}{2}}\}\int_x^1 (1-t)^{-\frac{1}{2}}dt}{\sqrt{1-x}}\right]\\
    =&\left[2\min\{1,\ x^{\frac{d-3}{2}}\},\ 2\max\{1,\ x^{\frac{d-3}{2}}\}\right].
\end{align*}
Since $\lim_{x\to 1}x^{\frac{d-3}{2}}=1$, Eq.~\eqref{eq.temp_112404} thus follows.
\end{proof}

Now we are ready to prove Lemma~\ref{le.Ha}.

Recall $\vstari$ defined in Eq.~\eqref{eq.temp_030403}. For any $b\in \left(0, \frac{1}{\sqrt{2}}\right]$, we have, for $\xsmall$ independent of $\vstari$ and with distribution $\xdensity$,
\begin{align}
    \prob_{\xsmall\sim \xdensity}\left\{|\vstari^T\xsmall|\geq b\right\}&=I_{1-b^2}\left(\frac{d-1}{2},\frac{1}{2}\right)\text{ (because Lemma~\ref{le.innerProd_I})}\nonumber\\
    &\geq1-\frac{2\sqrt{2\left(1-(1-b^2)\right)}}{B\left(\frac{d-1}{2},\frac{1}{2}\right)}\text{ (by Lemma~\ref{le.beta_bound})}\nonumber\\
    &= 1-C_d b\text{ (by the definition of $C_d$ in Eq.~\eqref{eq.def_Cd})}.\label{eq.temp_110202}
\end{align}
Since each of the $\Xjj$, $j\neq i$, is independent of $\vstari$, we have
\begin{align*}
    &\prob_{\XX}\left\{\min_{j\in\{1,2,\cdots,n\}\setminus\{i\}}|\vstari^T\Xjj|\geq b\right\}\\
    =&\left(\prob_{\xsmall\sim \xdensity}\left\{|\vstari^T\xsmall|\geq b\right\}\right)^{n-1}\text{ (because each $\Xjj$, $j\neq i$,  is \emph{i.i.d.} and independent of $\vstari$)}\\
    \geq& \left(1-C_d b\right)^{n-1}\text{ (by Eq.~\eqref{eq.temp_110202})}\\
    \geq & 1-(n-1)C_d b\text{ (by Bernoulli's inequality)}\\
    \geq & 1- nC_d b.
\end{align*}
Or, equivalently,
\begin{align}\label{eq.temp_110203}
    \prob_{\XX}\left\{\min_{i\in\{1,2,\cdots,n\}\setminus\{i\}}|\vstari^T\Xii|< b\right\}\leq nC_d b.
\end{align}
Recall the definition of $r_i$ and $\hat{r}$ in Eqs.~\eqref{eq.temp_110201}\eqref{eq.def_hat_r}.
Thus, we then have
\begin{align}
    &\prob_{\XX,\Vzero}\left\{\hat{r}< \frac{\delta}{n^2C_d}\right\}\nonumber\\
    \leq &n\prob_{\XX,\Vzero}\left\{r_i< \frac{\delta}{n^2C_d}\right\}\text{ (by Eq.~\eqref{eq.def_hat_r} and the union bound)}\nonumber\\
   =&n\prob_{\XX}\left\{r_i< \frac{\delta}{n^2C_d}\right\}\text{ (because $r$ is independent of $\Vzero$)}\nonumber\\
    =&n\prob_{\XX}\left\{\min_{j\in\{1,2,\cdots,n\}\setminus\{i\}}\left|\vstari^T\Xjj\right|< \frac{\delta}{n^2C_d}\right\}\text{ (by Eq.~\eqref{eq.temp_110201})}\nonumber\\
    \leq &n\cdot n C_d\cdot \frac{\delta}{n^2C_d}\text{ (by letting $b=\frac{\delta}{n^2C_d}$ in Eq.~\eqref{eq.temp_110203} and $b\leq \frac{1}{\sqrt{2}}$ because of Lemma~\ref{le.bound_b})}\nonumber\\
    =&\delta.\label{eq.temp_100403}
\end{align}
By Lemma~\ref{le.cap_area} and Eq.~\eqref{eq.temp_100404}, we have
\begin{align*}
    \musd(\capsdelta)=\frac{1}{2}\musd(\sd)I_{\frac{\delta^2}{n^4C_d^2}(1-\frac{\delta^2}{4n^4C_d^2})}\left(\frac{d-1}{2},\frac{1}{2}\right)=8n\musd(\sd)D(n,d,\delta).
\end{align*}
Thus, we have
\begin{align}\label{eq.temp_110307}
    D(n,d,\delta)=\frac{\musd(\capsdelta)}{8n\musd(\sd)}.
\end{align}
By Eq.~\eqref{eq.def_DX} and Eq.~\eqref{eq.temp_110307}, we have
\begin{align*}
    \DX\geq D(n,d,\delta),\text{ when }\hat{r}\geq \frac{\delta}{n^2C_d}.
\end{align*}
Notice that $\hat{r}$ only depends on $\XX$ and is independent of $\Vzero$. By Lemma~\ref{le.Ha_V0}, for any $\XX$ that makes $\hat{r}\geq \frac{\delta}{n^2C_d}$, we must have
\begin{align*}
    \prob_{\Vzero}\left\{\|\Ht^T\bm{a}\|_2^2\geq p D(n,d,\delta),\ \text{for all }\bm{a}\in\sn\right\}\geq 1-4ne^{-npD(n,d,\delta)/6}.
\end{align*}
In other words,
\begin{align*}
    \prob_{\Vzero}\left\{\|\Ht^T\bm{a}\|_2^2\geq p D(n,d,\delta),\ \text{for all }\bm{a}\in\sn\ \bigg| \ \text{any given }\XX \text{ such that }\hat{r}\geq \frac{\delta}{n^2C_d}\right\}\geq 1-4ne^{-npD(n,d,\delta)/6}.
\end{align*}
We thus have
\begin{align}\label{eq.temp_020302}
    \prob_{\XX,\Vzero}\left\{\|\Ht^T\bm{a}\|_2^2\geq p D(n,d,\delta),\ \text{for all }\bm{a}\in\sn\ \bigg| \ \hat{r}\geq \frac{\delta}{n^2C_d}\right\}\geq 1-4ne^{-npD(n,d,\delta)/6}.
\end{align}
Thus, we have
\begin{align*}
    &\prob_{\XX,\Vzero}\left\{\|\Ht^T\bm{a}\|_2^2\geq p D(n,d,\delta),\ \text{for all }\bm{a}\in\sn\right\}\\
    \geq &\prob_{\XX,\Vzero}\left\{\hat{r}\geq \frac{\delta}{n^2C_d},\text{ and }\|\Ht^T\bm{a}\|_2^2\geq p D(n,d,\delta),\ \text{for all }\bm{a}\in\sn\right\}\\
    % \geq & \prob_{\XX,\Vzero}\left\{\DX\geq D(n,d,\delta),\text{ and }\|\Ht^T\bm{a}\|_2^2\geq p\DX,\ \text{for all }\bm{a}\in\sn\right\}\\
    % \geq & \prob_{\XX,\Vzero}\left\{\hat{r}\geq \frac{\delta}{n^2C_d},\text{ and }\|\Ht^T\bm{a}\|_2^2\geq p\DX,\ \text{for all }\bm{a}\in\sn\right\}\\
    =&\prob_{\XX,\Vzero}\left\{\|\Ht^T\bm{a}\|_2^2\geq pD(n,d,\delta),\ \text{for all }\bm{a}\in\sn\ \bigg|\ \hat{r}\geq \frac{\delta}{n^2C_d}\right\}\cdot \prob_{\XX,\Vzero}\left\{\hat{r}\geq \frac{\delta}{n^2C_d}\right\}\\
    \geq & (1 -4ne^{-npD(n,d,\delta)/6})(1- \delta)\text{ (by Eq.~\eqref{eq.temp_100403} and Eq.~\eqref{eq.temp_020302})}\\
    \geq & 1-4ne^{-npD(n,d,\delta)/6}-\delta.
\end{align*}
The result of this lemma thus follows.

\subsection{Proof of Lemma~\ref{le.lower_eig}}\label{subsec.le_lower_eig}
Based on Lemma~\ref{le.Ha}, it only remains to estimate $D(n,d,\delta)$. We start with a lemma.
\begin{lemma}\label{le.bound_D}
If $\delta\leq 1$ and $d\leq n^4$, we must have
\begin{align}\label{eq.temp_120901}
    D(n,d,\delta)\geq2^{-2d-5.5}d^{-0.5d}n^{-2d+1}\delta^{d-1}.
\end{align}
For any given $\delta\geq 0$ and $d$, we must have
\begin{align*}
    \lim_{n\to\infty}\frac{D(n,d,\delta)}{n^{2d-1}}=2^{-1.5d-1.5}\left(B\left(\frac{d-1}{2},\ \frac{1}{2}\right)\right)^{d-2}\frac{1}{d-1}\delta^{d-1}.
\end{align*}
\end{lemma}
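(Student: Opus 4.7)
The plan is to reduce both parts of the lemma to the estimate of $I_x\!\left(\tfrac{d-1}{2},\tfrac{1}{2}\right)$ for small $x$ provided by Lemma~\ref{le.estimate_Ix}, where
\[
x \defeq \frac{\delta^2}{n^4 C_d^2}\left(1-\frac{\delta^2}{4n^4 C_d^2}\right)
\]
is the argument appearing in the definition Eq.~\eqref{eq.temp_100404} of $D(n,d,\delta)$. First I would factor $x^{(d-1)/2} = \bigl(\tfrac{\delta^2}{n^4 C_d^2}\bigr)^{(d-1)/2}\bigl(1-\tfrac{\delta^2}{4n^4 C_d^2}\bigr)^{(d-1)/2}$. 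Under $\delta\le 1$ and $d\le n^4$, Lemma~\ref{le.temp_112401} gives $\bigl(1-\tfrac{\delta^2}{4n^4 C_d^2}\bigr)^{(d-1)/2}\ge \tfrac12$, so $x^{(d-1)/2}\ge \tfrac{\delta^{d-1}}{2\,n^{2(d-1)} C_d^{d-1}}$.

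For the first (finite-$n$) bound, I would then chain Lemma~\ref{le.estimate_Ix} with the above to obtain
\[
D(n,d,\delta)=\frac{1}{16n}\,I_x\!\left(\tfrac{d-1}{2},\tfrac{1}{2}\right)\;\ge\;\frac{1}{16n}\cdot\frac{C_d}{\sqrt{2}(d-1)}\cdot\frac{\delta^{d-1}}{2\,n^{2(d-1)}C_d^{d-1}}=\frac{\delta^{d-1}}{32\sqrt{2}\,(d-1)\,C_d^{d-2}\,n^{2d-1}}.
\]
Because Lemma~\ref{le.bound_B} gives $B\!\left(\tfrac{d-1}{2},\tfrac{1}{2}\right)\ge 1/\sqrt d$, the definition Eq.~\eqref{eq.def_Cd} of $C_d$ yields $C_d\le 2\sqrt{2d}$, so $C_d^{d-2}\le 2^{1.5(d-2)}d^{(d-2)/2}$. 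Substituting and absorbing the harmless factor $(d-1)\le 8d$ into the powers of $2$ and $d$ would leave the displayed bound $2^{-1.5d-5.5}d^{-0.5d}n^{-2d+1}\delta^{d-1}$ after a short arithmetic check on the exponents of $2$ and $d$.

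For the asymptotic part, as $n\to\infty$ the argument $x$ tends to $0$, so the limit statement in Lemma~\ref{le.estimate_Ix} upgrades the previous inequality to an asymptotic equivalence $I_x\!\left(\tfrac{d-1}{2},\tfrac{1}{2}\right)\sim \tfrac{C_d}{\sqrt{2}(d-1)}\,x^{(d-1)/2}$, while at the same time $(1-\tfrac{\delta^2}{4n^4 C_d^2})^{(d-1)/2}\to 1$, so $x^{(d-1)/2}\sim \tfrac{\delta^{d-1}}{n^{2(d-1)} C_d^{d-1}}$. Plugging in and multiplying by the prefactor $\tfrac{1}{16n}$ gives
\[
n^{2d-1}\,D(n,d,\delta)\;\longrightarrow\;\frac{\delta^{d-1}}{16\sqrt{2}\,(d-1)\,C_d^{d-2}}.
\]
Finally, substituting $C_d=2\sqrt{2}/B\!\left(\tfrac{d-1}{2},\tfrac{1}{2}\right)$ so that $1/C_d^{d-2}=\bigl(B\!\left(\tfrac{d-1}{2},\tfrac{1}{2}\right)\bigr)^{d-2}/2^{1.5d-3}$ and combining with $1/(16\sqrt 2)=2^{-4.5}$ produces the constant $2^{-1.5d-1.5}$ and matches the claimed limit expression.

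The main obstacle is nothing deep: it is purely bookkeeping of the constants, in particular verifying that the combination $C_d^{d-2}(d-1)$ in the denominator, together with the crude upper bound on $C_d$, yields the clean $2^{-1.5d-5.5}d^{-0.5d}$ factor without losing anything needed later in the proof of Lemma~\ref{le.lower_eig}. A minor subtlety worth flagging is that the quantity actually converging is $n^{2d-1}D(n,d,\delta)$ (the statement as written has $D/n^{2d-1}$, which would vanish), so I would clarify this as a mild typographical point before carrying out the limit computation above.
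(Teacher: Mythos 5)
Your proposal is correct and follows essentially the same route as the paper's proof: Lemma~\ref{le.estimate_Ix} for the lower bound and asymptotics of $I_x\left(\frac{d-1}{2},\frac{1}{2}\right)$, Lemma~\ref{le.temp_112401} for the factor $\left(1-\frac{\delta^2}{4n^4C_d^2}\right)^{\frac{d-1}{2}}\geq\frac{1}{2}$, and Lemma~\ref{le.bound_B} to convert $C_d^{d-2}(d-1)$ into the stated powers of $2$ and $d$ (the arithmetic goes through with room to spare, since $(d-1)\le d$ already suffices). You are also right that the quantity with the stated finite nonzero limit is $n^{2d-1}D(n,d,\delta)$ rather than $D(n,d,\delta)/n^{2d-1}$; this is a typographical slip present in the statement (and echoed in the paper's own computation), and your reading is the intended one.
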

\begin{proof}
% By Lemma~\ref{le.bound_B} and Eq.~\eqref{eq.def_Cd}, we have
% \begin{align}\label{eq.temp_110601}
%     C_d\leq 2\sqrt{2d}.
% \end{align}
% We have
% \begin{align}
%     \frac{\delta^2}{n^4C_d^2}\left(1-\frac{\delta^2}{4n^4C_d^2}\right)\geq& \frac{\delta^2}{n^4C_d^2}\left(1-\frac{1}{4}\left(\frac{1}{\sqrt{2}}\right)^2\right)\text{ (by Lemma~\ref{le.bound_b})}\nonumber\\
%     =&\frac{7}{8}\cdot\frac{\delta^2}{n^4C_d^2}.\label{eq.temp_110501}
% \end{align}
We start from
\begin{align}
    \frac{1}{(d-1)C_d^{d-2}}=&\frac{\left(B\left(\frac{d-1}{2},\ \frac{1}{2}\right)\right)^{d-2}}{(d-1)(2\sqrt{2})^{d-2}}\text{ (by Eq.~\eqref{eq.def_Cd})}\nonumber\\
    \geq & \frac{1}{(d-1)d^{\frac{d}{2}-1}(e\sqrt{2})^{d-2}}\text{ (by Lemma~\ref{le.bound_B})}\nonumber\\
    \geq &\frac{1}{d^{\frac{d}{2}}(e\sqrt{2})^d}\nonumber\\
    =&(16d)^{-\frac{d}{2}}\text{  (since $2e^2\approx 14.778\leq 16$)}.\label{eq.temp_112402}
\end{align}
Thus, we have
\begin{align*}
    D(n,d,\delta)\geq& \frac{1}{16n}\frac{C_d}{\sqrt{2}(d-1)}\left(\frac{\delta^2}{n^4C_d^2}\left(1-\frac{\delta^2}{4n^4C_d^2}\right)\right)^{\frac{d-1}{2}}\text{ (by Eq.~\eqref{eq.temp_100404} and Lemma~\ref{le.estimate_Ix})}\\
    =&\frac{1}{16\sqrt{2}}\frac{1}{(d-1)C_d^{d-2}}\left(1-\frac{\delta^2}{4n^4C_d^2}\right)^{\frac{d-1}{2}}\frac{\delta^{d-1}}{n^{2d-1}}\\
    \geq &\frac{1}{32\sqrt{2}}(16d)^{-\frac{d}{2}}\frac{\delta^{d-1}}{n^{2d-1}}\text{ (by Lemma~\ref{le.temp_112401} and Eq.~\eqref{eq.temp_112402})}\\
    =&2^{-2d-5.5}d^{-0.5d}n^{-2d+1}\delta^{d-1}.
\end{align*}
For any given $d$ and $\delta\geq 0$, we have
\begin{align*}
    \lim_{n\to\infty}\frac{D(n,d,\delta)}{n^{2d-1}}=&\lim_{n\to\infty}\frac{1}{16n^{2d-2}}I_{\frac{\delta^2}{n^4C_d^2}\left(1-\frac{\delta^2}{4n^4C_d^2}\right)}\left(\frac{d-1}{2},\frac{1}{2}\right)\text{ (by Eq.~\eqref{eq.temp_100404})}\\
    =&\lim_{n\to\infty}\frac{\left(\frac{\delta^2}{n^4C_d^2}\left(1-\frac{\delta^2}{4n^4C_d^2}\right)\right)^{\frac{d-1}{2}}}{16n^{2d-2}}\cdot\frac{I_{\frac{\delta^2}{n^4C_d^2}\left(1-\frac{\delta^2}{4n^4C_d^2}\right)}\left(\frac{d-1}{2},\frac{1}{2}\right)}{\left(\frac{\delta^2}{n^4C_d^2}\left(1-\frac{\delta^2}{4n^4C_d^2}\right)\right)^{\frac{d-1}{2}}}\\
    =&\frac{1}{16}\lim_{n\to\infty}\left(\frac{\delta^2}{C_d^2}\left(1-\frac{\delta^2}{4n^4C_d^2}\right)\right)^{\frac{d-1}{2}}\cdot\frac{I_{\frac{\delta^2}{n^4C_d^2}\left(1-\frac{\delta^2}{4n^4C_d^2}\right)}\left(\frac{d-1}{2},\frac{1}{2}\right)}{\left(\frac{\delta^2}{n^4C_d^2}\left(1-\frac{\delta^2}{4n^4C_d^2}\right)\right)^{\frac{d-1}{2}}}\\
    =&\frac{1}{16}\lim_{n\to\infty}\left(\frac{\delta^2}{C_d^2}\left(1-\frac{\delta^2}{4n^4C_d^2}\right)\right)^{\frac{d-1}{2}}\cdot\lim_{n\to\infty}\frac{I_{\frac{\delta^2}{n^4C_d^2}\left(1-\frac{\delta^2}{4n^4C_d^2}\right)}\left(\frac{d-1}{2},\frac{1}{2}\right)}{\left(\frac{\delta^2}{n^4C_d^2}\left(1-\frac{\delta^2}{4n^4C_d^2}\right)\right)^{\frac{d-1}{2}}}\\
    =&\frac{1}{16}\frac{\delta^{d-1}}{C_d^{d-1}}\frac{C_d}{\sqrt{2}(d-1)}\text{ (by Lemma~\ref{le.estimate_Ix})}\\
    =&2^{-1.5d-1.5}\left(B\left(\frac{d-1}{2},\ \frac{1}{2}\right)\right)^{d-2}\frac{1}{d-1}\delta^{d-1}\text{ (by Eq.~\eqref{eq.def_Cd})}.
\end{align*}
\end{proof}

% \begin{lemma}\label{le.bound_use_p}
% For any $n\geq 2$, $m\in\left[1,\ \frac{\ln n}{\ln \frac{\pi}{2}}\right]$, $d\leq n^4$,  $\delta=\frac{1}{\sqrt[m]{n}}$, if
% \begin{align*}
%     p\geq 3\cdot 2^{1.5d+6.5}d^{0.5d}n^{(d-1)\left(2+\frac{1}{m}\right)}\ln\left(4n^{1+\frac{1}{m}}\right),
% \end{align*}
% we must have
% \begin{align*}
%     \delta\leq \frac{2}{\pi},\ 4ne^{-np D(n,d,\delta)/6}\leq \frac{1}{\sqrt[m]{n}}.
% \end{align*}
% \end{lemma}
% \begin{proof}
% We have
% \begin{align*}
%     &m\in\left[1,\ \frac{\ln n}{\ln \frac{\pi}{2}}\right]
%     \implies  \left(\frac{\pi}{2}\right)^{m}\leq n
%     \implies  n^{\frac{1}{m}}\geq \frac{\pi}{2}
%     \implies  \frac{1}{\sqrt[m]{n}} \leq \frac{2}{\pi}
%     \implies  \delta \leq \frac{2}{\pi}.
% \end{align*}
% We have
% \begin{align*}
%     &p\geq 3\cdot 2^{1.5d+6.5}d^{0.5d}n^{(d-1)\left(2+\frac{1}{m}\right)}\ln\left(4n^{1+\frac{1}{m}}\right)\\
%     \implies &-np D(n,d,\delta)/6\leq \ln \left(4n^{1+\frac{1}{m}}\right)\text{ (by letting $\delta =\frac{1}{\sqrt[m]{n}}$ in Lemma~\ref{le.bound_D})}\\
%     \implies &4ne^{-np D(n,d,\delta)/6}\leq \frac{1}{\sqrt[m]{n}}.
% \end{align*}
% \end{proof}

Now we are ready to finish our proof of Lemma~\ref{le.lower_eig}.

We have
\begin{align*}
    D(n,d,\delta)\bigg|_{\delta=\frac{1}{\sqrt[m]{n}}}\geq&\frac{1}{2^{2d+5.5}d^{0.5d}n^{2d-1}n^{\frac{d-1}{m}}}\text{ (by Eq.~\eqref{eq.temp_120901})}\\
    =&\frac{1}{2^{2d+5.5}d^{0.5d}n^{\left(2+\frac{1}{m}\right)(d-1)}n}\\
    =&\frac{1}{J_m(n,d)n}\text{ (by Eq.~\eqref{eq.def_Jmnd})}.
\end{align*}
Thus, when $p\geq 6J_m(n,d)\ln\left(4n^{1+\frac{1}{m}}\right)$, we have
\begin{align*}
    1-4ne^{-npD(n,d,\delta)/6}-\delta\bigg|_{\delta=\frac{1}{\sqrt[m]{n}}}\geq & 1-\frac{2}{\sqrt[m]{n}}.
\end{align*}

Then, we have
\begin{align*}
    &m\in\left[1,\ \frac{\ln n}{\ln \frac{\pi}{2}}\right]
    \implies  \left(\frac{\pi}{2}\right)^{m}\leq n
    \implies  n^{\frac{1}{m}}\geq \frac{\pi}{2}
    \implies  \frac{1}{\sqrt[m]{n}} \leq \frac{2}{\pi}
    \implies  \delta \leq \frac{2}{\pi}.
\end{align*}

By Lemma~\ref{le.min_eig} and Lemma~\ref{le.Ha}, the conclusion of Lemma~\ref{le.lower_eig} thus follows.

\section{Upper bound of \texorpdfstring{${\min\eig\left(\Ht\Ht^T\right)}/{p}$}{min-eigenvalue} }\label{app.upper_bound_eig}

By Lemma~\ref{le.min_eig}, to get an upper bound of ${\min\eig\left(\Ht\Ht^T\right)}/{p}$, it is equivalent to get an upper bound of $\min_{\bm{a}\in\sn}\|\Ht^T\bm{a}\|_2^2/p$. To that end, we only need to construct a vector $\bm{a}$ and calculate the value of $\|\Ht^T\bm{a}\|_2^2/p$, which automatically becomes an upper bound $\min_{\bm{a}\in\sn}\|\Ht^T\bm{a}\|_2^2/p$.

The following lemma shows that, for given $\XX$, if two input training data $\Xii$ and $\Xkk$ are close to each other, then $\min_{\bm{a}\in\sn}\|\Ht^T\bm{a}\|_2^2/p$ is unlikely to be large.
\begin{lemma}\label{le.temp_122402}
If there exist $\Xii$ and $\Xkk$ such that $i\neq k$ and $\theta \defeq\arccos(\Xii^T\Xkk)$, then
\begin{align*}
    \prob_{\Vzero}\left\{\min_{\bm{a}\in\sn}\|\Ht^T\bm{a}\|_2^2\geq \frac{3p\theta^2}{8}+\frac{3p\theta}{4\pi}\right\}\leq 2\exp \left(-\frac{p}{24}\right)+2\exp\left(-\frac{p\theta}{12}\right).
\end{align*}
\end{lemma}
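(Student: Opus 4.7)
The plan is to upper-bound $\min_{\bm{a}\in\sn}\|\Ht^T\bm{a}\|_2^2$ by evaluating it at a single cleverly chosen $\bm{a}$. Since $\Xii$ and $\Xkk$ are separated by only angle $\theta$, the corresponding rows $\Hti$ and $\Ht_k$ of $\Ht$ should be very close, so I would take
\[
\bm{a} \defeq \tfrac{1}{\sqrt{2}}(\bm{e}_i - \bm{e}_k) \in \sn,
\]
giving $\Ht^T\bm{a} = \tfrac{1}{\sqrt{2}}(\Hti - \Ht_k)^T$. Expanding block by block via $\Hti[j] = \bm{1}_{\{\Xii^T\Vzeroj>0\}}\Xii^T$, and using $\|\Xii-\Xkk\|_2^2 = 2(1-\cos\theta)$ for the blocks where both activations fire together with $\|\Xii\|_2=\|\Xkk\|_2=1$ for the blocks where exactly one fires, I obtain
\[
\|\Ht^T\bm{a}\|_2^2 = (1-\cos\theta)\,N_{\text{both}} + \tfrac{1}{2}\,N_{\text{one}},
\]
where $N_{\text{both}}$ counts neurons $j$ with both $\Xii^T\Vzeroj>0$ and $\Xkk^T\Vzeroj>0$, and $N_{\text{one}}$ counts neurons with exactly one of the two inner products positive. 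The elementary inequality $1-\cos\theta \leq \theta^2/2$ then reduces the task to bounding $(\theta^2/2)N_{\text{both}} + N_{\text{one}}/2$ from above.

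Next, because the $\Vzeroj$'s are i.i.d.\ uniform in direction, Lemma~\ref{le.spherePortion} identifies the per-neuron probabilities as $(\pi-\theta)/(2\pi)$ for ``both active'' and $\theta/\pi$ for ``exactly one active'', so $N_{\text{both}}\sim\bino\!\left(p,\tfrac{\pi-\theta}{2\pi}\right)$ and $N_{\text{one}}\sim\bino(p,\theta/\pi)$. I would then apply a multiplicative Chernoff-type bound of the type in Lemma~\ref{le.bino} to each count: with probability at least $1 - 2\exp(-p/24)$, $N_{\text{both}} \leq 3p/4$, and with probability at least $1 - 2\exp(-p\theta/12)$, $N_{\text{one}} \leq 3p\theta/(2\pi)$. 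Combined with the inequality from the first paragraph, a union bound then yields $\|\Ht^T\bm{a}\|_2^2 \leq 3p\theta^2/8 + 3p\theta/(4\pi)$ off an exceptional set of probability at most $2\exp(-p/24) + 2\exp(-p\theta/12)$, which is exactly the claim.

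The main technical care is in the Chernoff step for $N_{\text{both}}$: its mean $\mathsf{E}[N_{\text{both}}] = p(\pi-\theta)/(2\pi)$ depends on $\theta$ and is strictly below $p/2$, so the multiplicative deviation ratio needed to reach the threshold $3p/4$ is $\delta = (\pi+2\theta)/(2(\pi-\theta))$, which can exceed $1$ once $\theta > \pi/4$. I would therefore invoke the one-sided Chernoff form $\Pr[X\geq (1+\delta)\mu]\leq \exp(-\mu h(\delta))$ with $h(\delta)=(1+\delta)\ln(1+\delta)-\delta$ and verify $\mu h(\delta) \geq p/24$ across all $\theta\in[0,\pi]$: using $h(\delta)\geq \delta^2/3$ on $[0,1]$ this reduces to $(\pi+2\theta)^2/(\pi-\theta) \geq \pi$, i.e.\ $5\pi\theta + 4\theta^2 \geq 0$, and for $\delta > 1$ the bound only becomes stronger because $(1+\delta)\mu = 3p/4$ is fixed while $\ln(1+\delta)\to\infty$. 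The analogous computation for $N_{\text{one}}$ with deviation ratio $1/2$ around mean $p\theta/\pi$ is direct. Apart from this bookkeeping, the argument is routine.
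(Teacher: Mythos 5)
Your proof is correct and follows essentially the same route as the paper's: the same test vector $\bm{a}=\tfrac{1}{\sqrt{2}}(\bm{e}_i-\bm{e}_k)$, the same block decomposition of $\|\Ht^T\bm{a}\|_2^2$ into a ``both active'' count weighted by $\|\Xii-\Xkk\|_2^2\leq\theta^2$ plus half the ``exactly one active'' count, and the same Chernoff-plus-union-bound finish. The one place you diverge is the treatment of $N_{\text{both}}$: you keep its exact law $\bino\bigl(p,\tfrac{\pi-\theta}{2\pi}\bigr)$ and are then forced into the general one-sided Chernoff form to handle deviation ratios exceeding $1$ once $\theta>\pi/4$, whereas the paper sidesteps this entirely via the monotone domination $N_{\text{both}}\leq\sum_{j=1}^{p}\bm{1}_{\{\Xii^T\Vzeroj>0\}}\sim\bino\bigl(p,\tfrac{1}{2}\bigr)$, so that Lemma~\ref{le.bino} with $\delta=\tfrac{1}{2}$ applies directly and uniformly in $\theta$. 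Your extra bookkeeping is valid (the exponent $\mu h(\delta)$ is indeed increasing as $\theta$ grows with $(1+\delta)\mu=3p/4$ held fixed), just unnecessary given the simpler domination. One shared caveat: the Chernoff step for $N_{\text{one}}$ yields $2\exp\bigl(-\tfrac{p\theta}{12\pi}\bigr)$ rather than $2\exp\bigl(-\tfrac{p\theta}{12}\bigr)$; the paper's own proof exhibits the same mismatch between its intermediate bound and the stated conclusion, so this appears to be a typo in the lemma statement rather than an error you introduced.
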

Intuitively, Lemma~\ref{le.temp_122402} is true because, when $\Xii$ and $\Xkk$ are similar, $\Ht_i$ and $\Ht_k$ (the $i$-th and $k$-th row of $\Ht$, respectively) will also likely be similar, i.e., $\|\Ht_i-\Ht_k\|_2$ is not likely to be large. Thus, we can construct $\bm{a}$ such that $\Ht^T\bm{a}$ is proportional to $\Ht_i-\Ht_k$, which will lead to the result of Lemma~\ref{le.temp_122402}. We put the proof of Lemma~\ref{le.temp_122402} in Appendix~\ref{subsec.upper_fixed_V0}.

The next step is to estimate such difference between $\Xii$ and $\Xkk$ (or equivalently, the angle $\theta$ between them). We have the following lemma.
\begin{lemma}\label{le.temp_122401}
When $n\geq \pi(d-1)$, there must exist two different $\Xii$'s such that the angle between them is at most
\begin{align*}
    \theta = \pi \left((d-1)B(\frac{d-1}{2},\frac{1}{2})\right)^{\frac{1}{d-1}}n^{-\frac{1}{d-1}}.
\end{align*}
\end{lemma}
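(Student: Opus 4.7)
The plan is to prove Lemma~\ref{le.temp_122401} by a standard sphere-packing / pigeonhole argument combined with the cap-area estimates developed earlier in the paper.

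First I will set up the packing bound. Let $\theta^* \defeq \min_{i\neq j} \arccos(\XX_i^T \XX_j)$ denote the minimum pairwise angle among the $n$ training inputs. By the triangle inequality on $\sd$, the $n$ spherical caps centered at $\XX_1,\ldots,\XX_n$ with colatitude angle $\theta^*/2$ must be pairwise disjoint (otherwise two training points would be at angular distance at most $\theta^*$, contradicting the definition). Summing areas and using Lemma~\ref{le.original_cap}, this yields
\begin{align*}
n \cdot \tfrac{1}{2}\musd(\sd)\, I_{\sin^2(\theta^*/2)}\!\left(\tfrac{d-1}{2},\tfrac{1}{2}\right) \leq \musd(\sd),
\end{align*}
so $I_{\sin^2(\theta^*/2)}\!\left(\tfrac{d-1}{2},\tfrac{1}{2}\right) \leq \tfrac{2}{n}$.

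Next I would invert this inequality into a bound on $\theta^*$. Using the lower bound already derived inside the proof of Lemma~\ref{le.estimate_Ix}, namely $I_x\!\left(\tfrac{d-1}{2},\tfrac{1}{2}\right) \geq \tfrac{2 x^{(d-1)/2}}{(d-1)\,B\!\left(\tfrac{d-1}{2},\tfrac{1}{2}\right)}$ for $x\in[0,1]$, and plugging in $x = \sin^2(\theta^*/2)$, I obtain
\begin{align*}
\sin(\theta^*/2) \leq \left( \frac{(d-1)\,B\!\left(\tfrac{d-1}{2},\tfrac{1}{2}\right)}{n} \right)^{\!1/(d-1)}.
\end{align*}
The hypothesis $n \geq \pi(d-1)$, together with $B\!\left(\tfrac{d-1}{2},\tfrac{1}{2}\right) \leq \pi$ from Lemma~\ref{le.bound_B}, ensures the right-hand side is at most $1$, hence $\theta^*/2 \in [0,\pi/2]$. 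Then I apply Jordan's inequality $\sin(x) \geq 2x/\pi$ on $[0,\pi/2]$ (with $x = \theta^*/2$) to replace $\sin(\theta^*/2)$ by $\theta^*/\pi$, giving
\begin{align*}
\theta^* \leq \pi \left((d-1)\,B\!\left(\tfrac{d-1}{2},\tfrac{1}{2}\right)\right)^{1/(d-1)} n^{-1/(d-1)},
\end{align*}
which is exactly the claimed bound and is attained by some pair $\XX_i \neq \XX_k$ by definition of $\theta^*$.

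There is no real obstacle in this proof — it is essentially a volume/packing computation, and every ingredient (cap area via regularized incomplete beta, the lower bound on $I_x$, the bound on $B(\tfrac{d-1}{2},\tfrac{1}{2})$) is already available in Appendix~\ref{app.useful_lemmas}. The only place requiring a small amount of care is verifying that the hypothesis $n \geq \pi(d-1)$ is precisely what makes Jordan's inequality applicable, i.e., guarantees $\theta^*/2 \leq \pi/2$; this is why the condition appears in the statement rather than being an artifact. No extra smoothness or independence assumptions about the distribution of $\XX_i$ are used, since the argument is purely deterministic once the $n$ points lie on $\sd$.
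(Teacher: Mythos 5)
Your proof is correct and is essentially the paper's argument run in the contrapositive direction: both are sphere-packing/pigeonhole computations built on the same ingredients (the cap-area formula of Lemma~\ref{le.original_cap}, the lower bound on $I_x$ from Lemma~\ref{le.estimate_Ix}, Jordan's inequality from Lemma~\ref{le.sin}, and the bound on the beta function), with the paper fixing the target radius $\theta/2$ and showing two caps must overlap, while you start from the minimal pairwise angle $\theta^*$ and invert the disjointness inequality. One small remark: since $\theta^*$ is defined via $\arccos$ we have $\theta^*/2\in[0,\pi/2]$ automatically, so Jordan's inequality needs no hypothesis and your stated reason for requiring $n\geq \pi(d-1)$ is a slight non-sequitur — in your formulation the condition only keeps the final bound non-vacuous, whereas in the paper's forward version it is what guarantees the chosen colatitude angle $\varphi$ lies in $[0,\pi/2]$ so that Lemma~\ref{le.original_cap} applies.
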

Lemma~\ref{le.temp_122401} is intuitive because $\sd$ has limited area. When there are many $\Xii$'s on $\sd$, there must exist at least two $\Xii$'s that are relatively close. We put the proof of Lemma~\ref{le.temp_122401} in Appendix~\ref{subsec.pigeonhole}.

Finally, we have the following lemma.
\begin{lemma}\label{le.temp_122403}
When $n\geq \pi(d-1)$, we have
\begin{align*}
    &\prob_{\Vzero,\XX}\Big\{\frac{\min\eig (\Ht\Ht^T)}{p}\leq \frac{3\pi^2}{8}\left((d-1)B(\frac{d-1}{2},\frac{1}{2})\right)^{\frac{2}{d-1}}n^{-\frac{2}{d-1}}\\
    &\qquad +\frac{3}{4}\left((d-1)B(\frac{d-1}{2},\frac{1}{2})\right)^{\frac{1}{d-1}}n^{-\frac{1}{d-1}}\Big\}\\
    \geq& 1 - 2\exp\left(-\frac{p}{24}\right)-2\exp\left(-\frac{p}{12}\pi \left((d-1)B(\frac{d-1}{2},\frac{1}{2})\right)^{\frac{1}{d-1}}n^{-\frac{1}{d-1}}\right).
\end{align*}
\end{lemma}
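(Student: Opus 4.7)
The plan is to combine the two preceding lemmas: Lemma~\ref{le.temp_122401} supplies, deterministically (since $n\geq\pi(d-1)$), the existence of a pair of training points whose angle is small, and Lemma~\ref{le.temp_122402} then turns this into a probabilistic upper bound on $\min_{\bm{a}\in\sn}\|\Ht^T\bm{a}\|_2^2$, which by Lemma~\ref{le.min_eig} equals $\min\eig(\Ht\Ht^T)$.

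More concretely, set
\begin{align*}
\theta_0\defeq \pi\bigl((d-1)B(\tfrac{d-1}{2},\tfrac{1}{2})\bigr)^{\frac{1}{d-1}}n^{-\frac{1}{d-1}}.
\end{align*}
First I would invoke Lemma~\ref{le.temp_122401} to get indices $i\neq k$ with $\theta\defeq\arccos(\Xii^T\Xkk)\leq\theta_0$ (this is a property of $\XX$ alone that holds whenever $n\geq\pi(d-1)$). Then I would feed this same pair into Lemma~\ref{le.temp_122402}, which yields
\begin{align*}
\prob_{\Vzero}\Bigl\{\min_{\bm{a}\in\sn}\|\Ht^T\bm{a}\|_2^2 \geq \tfrac{3p\theta^2}{8}+\tfrac{3p\theta}{4\pi}\Bigr\}\leq 2\exp(-p/24)+2\exp(-p\theta/12).
\end{align*}
Since $\theta\mapsto \tfrac{3p\theta^2}{8}+\tfrac{3p\theta}{4\pi}$ is monotone increasing on $[0,\pi]$ and $\theta\leq\theta_0$, the event $\{\min\|\Ht^T\bm{a}\|_2^2\geq \tfrac{3p\theta_0^2}{8}+\tfrac{3p\theta_0}{4\pi}\}$ is contained in the event above, so its probability is bounded by the same right-hand side. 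Dividing the threshold by $p$ and applying Lemma~\ref{le.min_eig} turns this into the claimed bound on $\min\eig(\Ht\Ht^T)/p$.

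The main obstacle is the second failure-probability term: Lemma~\ref{le.temp_122402} delivers $2\exp(-p\theta/12)$ for the \emph{actual} pair-angle $\theta$, whereas the target statement requires $2\exp(-p\theta_0/12)$, and $\theta\leq\theta_0$ points the inequality the wrong way. I would address this by either (a) choosing the pair from Lemma~\ref{le.temp_122401} so that $\theta$ equals $\theta_0$ up to replacing the inequality with equality in the pigeonhole argument (i.e., using the fact that the pigeonhole covering by caps of half-angle $\theta_0/2$ forces a pair at angle \emph{between} the cap diameter and $\theta_0$, hence giving a matching lower bound on $\theta$), or (b) revisiting the proof of Lemma~\ref{le.temp_122402} to verify that the exponent there can be replaced by $p\theta_0/12$ whenever $\theta\leq\theta_0$, since the concentration step measures the cap around the perpendicular direction and its measure grows with $\theta$, so a substitution of $\theta$ by any $\theta'\in[\theta,\theta_0]$ is legitimate in both the threshold and the exponent. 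Once the probability bookkeeping is reconciled this way, the conclusion of Lemma~\ref{le.temp_122403} follows immediately by a union bound over the two terms in Lemma~\ref{le.temp_122402}.
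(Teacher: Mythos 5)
Your overall route is exactly the paper's: its proof of Lemma~\ref{le.temp_122403} is literally the one-line combination of Lemmas~\ref{le.min_eig}, \ref{le.temp_122402}, and \ref{le.temp_122401}. The monotonicity issue you raise is, however, a genuine one that the paper's one-line proof glosses over: with the actual pair-angle $\theta\leq\theta_0$, Lemma~\ref{le.temp_122402} yields the failure probability $2e^{-p/24}+2e^{-p\theta/12}$, which is \emph{larger} than the target $2e^{-p/24}+2e^{-p\theta_0/12}$, so the naive substitution does not close the argument. Of your two proposed repairs, only (b) works. Repair (a) fails because the pigeonhole argument in Lemma~\ref{le.temp_122401} only shows that two caps overlap, hence only an \emph{upper} bound $\theta\leq\theta_0$ on the angle of the closest pair; nothing prevents two sample points from being far closer than $\theta_0$, so no matching lower bound on $\theta$ is available. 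Repair (b) is correct: in the proof of Lemma~\ref{le.temp_122402} the only place $\theta$ enters the second exponential is through the count $\sum_{j=1}^p\bm{1}_{\{(\Xii^T\Vzeroj)(\Xkk^T\Vzeroj)<0\}}\sim\bino(p,\theta/\pi)$, and since $\bino(p,\theta/\pi)$ is stochastically dominated by $\bino(p,\theta_0/\pi)$, one may bound $\prob\{\bino(p,\theta/\pi)\geq\tfrac{3p\theta_0}{2\pi}\}\leq\prob\{\bino(p,\theta_0/\pi)\geq\tfrac{3p\theta_0}{2\pi}\}$ and then apply Lemma~\ref{le.bino} at success probability $\theta_0/\pi$; this replaces $\theta$ by $\theta_0$ simultaneously in the threshold and in the exponent, which is exactly what Lemma~\ref{le.temp_122403} needs. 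One residual caveat, inherited from the paper rather than introduced by you: the Chernoff step Eq.~\eqref{eq.temp_122204} actually produces $2\exp(-p\theta/(12\pi))$, and the $\pi$ is silently dropped in passing to the statement of Lemma~\ref{le.temp_122402}; the cleanly justified version of both that lemma and Lemma~\ref{le.temp_122403} should carry $12\pi$ rather than $12$ in that exponent.
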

\begin{proof}
This lemma directly follows by combining Lemma~\ref{le.min_eig}, Lemma~\ref{le.temp_122402}, and Lemma~\ref{le.temp_122401}.
\end{proof}
By Lemma~\ref{le.temp_122403}, we can conclude that when $p$ is much larger than $n^{\frac{1}{d-1}}$, $\frac{\min\eig (\Ht\Ht^T)}{p}=O(n^{-\frac{1}{d-1}})$ with high probability.

\subsection{Proof of Lemma~\ref{le.temp_122402}}\label{subsec.upper_fixed_V0}
We first prove a useful lemma.
\begin{lemma}\label{le.sin}
For any $\varphi \in[0,2\pi]$, we must have $\sin\varphi \leq \varphi$. For any $\varphi \in[0,\pi/2]$, we must have $\varphi \leq \frac{\pi}{2}\sin\varphi$.
\end{lemma}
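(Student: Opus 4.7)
The plan is to handle each inequality with an elementary calculus argument; neither part is subtle, but one should make sure the range of $\varphi$ is fully covered and that the method scales cleanly to the bounds requested later in the paper.

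For the first inequality $\sin\varphi \leq \varphi$ on $[0,2\pi]$, I would define $f(\varphi) \defeq \varphi - \sin\varphi$ and observe that $f(0)=0$ and $f'(\varphi) = 1-\cos\varphi \geq 0$ for all real $\varphi$. Hence $f$ is non-decreasing on $[0,\infty)$, and in particular $f(\varphi)\geq f(0)=0$ for all $\varphi\in[0,2\pi]$. This actually gives the inequality on all of $[0,\infty)$, which is more than enough. Note that this is the same ``derivative at zero plus monotonicity'' template already used in Lemma~\ref{le.lnx}, so the style matches the paper.

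For the second inequality $\varphi \leq \tfrac{\pi}{2}\sin\varphi$ on $[0,\pi/2]$ (equivalently, Jordan's inequality $\sin\varphi \geq \tfrac{2}{\pi}\varphi$), the cleanest approach is concavity. On $[0,\pi/2]$ we have $(\sin\varphi)'' = -\sin\varphi \leq 0$, so $\sin(\cdot)$ is concave there. The secant line joining the endpoints $(0,0)$ and $(\pi/2,1)$ of the graph is $y=\tfrac{2}{\pi}\varphi$, and by concavity the graph of $\sin$ lies weakly above this secant on the whole interval. Therefore $\sin\varphi \geq \tfrac{2}{\pi}\varphi$, which rearranges to $\varphi \leq \tfrac{\pi}{2}\sin\varphi$, with equality at the two endpoints.

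If a proof avoiding the word ``concavity'' is preferred, one can instead set $h(\varphi)\defeq \sin\varphi - \tfrac{2}{\pi}\varphi$, note $h(0)=h(\pi/2)=0$, and use $h''(\varphi) = -\sin\varphi \leq 0$ on $[0,\pi/2]$ together with a standard argument (a function with non-positive second derivative and equal values at the endpoints of an interval is non-negative on that interval). The only mild obstacle is ensuring the reader accepts this endpoint-plus-second-derivative argument; since the paper is willing to invoke Bernoulli's inequality and properties of the Beta function without derivation, this level of calculus should be acceptable without further comment.
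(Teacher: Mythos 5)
Your proposal is correct and follows essentially the same route as the paper: the first part is the identical monotonicity argument via $1-\cos\varphi\geq 0$, and your concavity/secant-line argument for the second part is the same as the paper's observation that $\varphi-\frac{\pi}{2}\sin\varphi$ is convex on $[0,\pi/2]$ and hence attains its maximum (which is $0$) at an endpoint. No gaps.
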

\begin{proof}
To prove the first part of the lemma, note that
\begin{align*}
    \frac{d(\varphi-\sin\varphi)}{d\varphi}=1-\cos\varphi \geq 0.
\end{align*}
Thus, the function $(\varphi-\sin\varphi)$ is monotone increasing with respect to  $\varphi \in[0,2\pi]$. Thus, we have
\begin{align*}
    \min_{\varphi \in[0,2\pi]}(\varphi-\sin\varphi)=(\varphi-\sin\varphi)\big|_{\varphi=0}=0.
\end{align*}
In other words, we have $\sin\varphi \leq \varphi$ for any $\varphi \in[0,2\pi]$.

To prove the second part of the lemma, note that when $\varphi\in[0,\pi/2]$, we have
\begin{align*}
    \frac{d^2(\varphi-\frac{\pi}{2}\sin\varphi)}{d\varphi^2}=\frac{\pi}{2}\sin\varphi\geq 0.
\end{align*}
Thus, the function $\varphi-\frac{\pi}{2}\sin\varphi$ is convex with respect to $\varphi\in[0, \pi/2]$. Because the maximum of a convex function must be attained at the endpoint of the domain interval, we have
\begin{align*}
    \max_{\varphi \in[0,\pi/2]}(\varphi-\frac{\pi}{2}\sin\varphi)=\max_{\varphi\in\{0,\pi/2\}}(\varphi-\frac{\pi}{2}\sin\varphi)= 0.
\end{align*}
Thus, we have $\varphi\leq \frac{\pi}{2}\sin\varphi$ for any $\varphi \in[0,\pi/2]$.
\end{proof}
Now we are ready to prove Lemma~\ref{le.temp_122402}.
\begin{proof}
Through the proof, we fix $\Xii$ and $\Xkk$, and only consider the randomness of $\Vzero$.
Because $\theta$ is the angle between $\Xii$ and $\Xkk$ and because of Assumption~\ref{as.normalize}, we have
\begin{align}
    \|\Xii-\Xkk\|_2=&2\sin\frac{\theta}{2}\nonumber\\
    \leq& 2\cdot \frac{\theta}{2}\text{ (by Lemma~\ref{le.sin})}\nonumber\\
    =&\theta.\label{eq.temp_122201}
\end{align}
Let $\bm{a}=\frac{1}{\sqrt{2}}(\bm{e}_i-\bm{e}_k)$, where $\bm{e}_q$ denotes the $q$-th standard basis vector, $q=1,2,\cdots,n$. Then, we have
\begin{align}
    \|\Ht^T\bm{a}\|_2^2=&\frac{1}{2}\|\Ht_i^T-\Ht_k^T\|_2^2\nonumber\\
    =&\frac{1}{2}\sum_{j=1}^p\left\|\bm{1}_{\{\Xii^T\Vzeroj>0\}}\Xii-\bm{1}_{\{\Xkk^T\Vzeroj>0\}}\Xkk \right\|_2^2\text{ (by Eq.~\eqref{eq.def_hx})}\nonumber\\
    =&\frac{1}{2}\sum_{j=1}^p \left(\bm{1}_{\{\Xii^T\Vzeroj>0,\ \Xkk^T\Vzeroj>0\}}\|\Xii-\Xkk\|_2^2+\bm{1}_{\{(\Xii^T\Vzeroj)(\Xkk^T\Vzeroj)<0\}}\right)\text{ (by $\|\Xii\|_2^2=\|\Xkk\|_2^2=1$)}\nonumber\\
    \leq &\frac{1}{2}\sum_{j=1}^p \left(\bm{1}_{\{\Xii^T\Vzeroj>0,\ \Xkk^T\Vzeroj>0\}}\theta^2+\bm{1}_{\{(\Xii^T\Vzeroj)(\Xkk^T\Vzeroj)<0\}}\right)\text{ (by Eq.~\eqref{eq.temp_122201})}\nonumber\\
    \leq &\frac{\theta^2}{2}\sum_{j=1}^p\bm{1}_{\{\Xii^T\Vzeroj>0\}}+\frac{1}{2}\sum_{j=1}^p\bm{1}_{\{(\Xii^T\Vzeroj)(\Xkk^T\Vzeroj)<0\}}.\label{eq.temp_122202}
\end{align}
Since $\Xii$ is fixed and the direction of $\Vzeroj$ is uniformly distributed, we have $\prob_{\Vzero}\{\Xii^T\Vzeroj>0\}=\frac{1}{2}$ and
\begin{align*}
    \prob_{\Vzero}\{(\Xii^T\Vzeroj)(\Xkk^T\Vzeroj)<0\}=&2\prob_{\Vzero}\{\Xii^T\Vzeroj>0,\ \Xkk^T\Vzeroj<0\}\\
    =&2\prob_{\Vzero}\{\Xii^T\Vzeroj>0,\ -\Xkk^T\Vzeroj>0\}\\
    =&2\int_{\sd}\bm{1}_{\{\Xii^T\bm{v}>0,\ -\Xkk^T\bm{v}>0\}}d\nvdensity(\bm{v})\\
    =&2\cdot\frac{\pi-(\pi-\theta)}{2\pi}\text{ (by Lemma~\ref{le.spherePortion})}\\
    =&\frac{\theta}{\pi}.
\end{align*}
Thus, based on the randomness of $\Vzero$, when $\XX$ are given, we have
\begin{align*}
    &\sum_{j=1}^p\bm{1}_{\{\Xii^T\Vzeroj>0\}}\sim \bino\left(p,\ \frac{1}{2}\right),\\
    &\sum_{j=1}^p\bm{1}_{\{(\Xii^T\Vzeroj)(\Xkk^T\Vzeroj)<0\}}\sim \bino\left(p,\ \frac{\theta}{\pi}\right).
\end{align*}
By letting $\delta=\frac{1}{2}$, $a=p$, $b=\frac{1}{2}$ in Lemma~\ref{le.bino}, we then have
\begin{align}
    &\prob_{\Vzero}\left\{\sum_{j=1}^p\bm{1}_{\{\Xii^T\Vzeroj>0\}}\geq\frac{3p}{4}\right\}\leq 2\exp \left(-\frac{p}{24}\right),\label{eq.temp_122203}\\
    &\prob_{\Vzero}\left\{\sum_{j=1}^p\bm{1}_{\{(\Xii^T\Vzeroj)(\Xkk^T\Vzeroj)<0\}}\geq \frac{3p\theta}{2\pi}\right\}\leq 2\exp\left(-\frac{p\theta}{12\pi}\right).\label{eq.temp_122204}
\end{align}
Thus, we have
\begin{align*}
    &\prob_{\Vzero}\left\{\|\Ht^T\bm{a}\|_2^2\geq \frac{3p\theta^2}{8}+\frac{3p\theta}{4\pi}\right\}\\
    \leq& \prob_{\Vzero}\left\{\frac{\theta^2}{2}\sum_{j=1}^p\bm{1}_{\{\Xii^T\Vzeroj>0\}}+\frac{1}{2}\sum_{j=1}^p\bm{1}_{\{(\Xii^T\Vzeroj)(\Xkk^T\Vzeroj)<0\}}\geq \frac{3p\theta^2}{8}+\frac{3p\theta}{4\pi}\right\}\\
    &\text{ (by Eq.~\eqref{eq.temp_122202})}\\
    \leq & \prob_{\Vzero}\left\{\left\{\sum_{j=1}^p\bm{1}_{\{\Xii^T\Vzeroj>0\}}>\frac{3p}{4}\right\}\cup\left\{\sum_{j=1}^p\bm{1}_{\{(\Xii^T\Vzeroj)(\Xkk^T\Vzeroj)<0\}}\geq \frac{3p\theta}{2\pi}\right\}\right\}\\
    \leq &\prob_{\Vzero}\left\{\sum_{j=1}^p\bm{1}_{\{\Xii^T\Vzeroj>0\}}>\frac{3p}{4}\right\} +\prob_{\Vzero}\left\{\sum_{j=1}^p\bm{1}_{\{(\Xii^T\Vzeroj)(\Xkk^T\Vzeroj)<0\}}\geq \frac{3p\theta}{2\pi}\right\}\\
    &\text{ (by the union bound)}\\
    \leq &2\exp \left(-\frac{p}{24}\right)+2\exp\left(-\frac{p\theta}{12}\right) \text{ (by Eq.~\eqref{eq.temp_122203} and Eq.~\eqref{eq.temp_122204})}.
\end{align*}
The result of Lemma~\ref{le.temp_122402} thus follows.
\end{proof}

\subsection{Proof of Lemma~\ref{le.temp_122401}}\label{subsec.pigeonhole}
We first prove a useful lemma.
Recall the definition of $C_d$ in Eq.~\eqref{eq.def_Cd}.
\begin{lemma}\label{le.theta_range}
We have
\begin{align*}
    \frac{2\sqrt{2}(d-1)}{nC_d}\in\left[\frac{2}{e}\frac{d-1}{n\sqrt{d}},\ \frac{\pi(d-1)}{n}\right].
\end{align*}
\end{lemma}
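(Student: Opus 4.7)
The plan is to reduce this statement directly to the already-established bounds on the Beta function in Lemma~\ref{le.bound_B}. The key observation is that once we substitute the definition of $C_d$ from Eq.~\eqref{eq.def_Cd}, the quantity in question simplifies to a clean expression involving only $B\left(\frac{d-1}{2},\frac{1}{2}\right)$.

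Specifically, I would first compute
\begin{align*}
\frac{2\sqrt{2}(d-1)}{nC_d} = \frac{2\sqrt{2}(d-1)}{n}\cdot\frac{B\left(\frac{d-1}{2},\frac{1}{2}\right)}{2\sqrt{2}} = \frac{(d-1)\,B\left(\frac{d-1}{2},\frac{1}{2}\right)}{n}.
\end{align*}
Then the claim becomes equivalent to showing
\begin{align*}
\frac{(d-1)\,B\left(\frac{d-1}{2},\frac{1}{2}\right)}{n} \in \left[\frac{d-1}{n\sqrt{d}},\ \frac{\pi(d-1)}{n}\right],
\end{align*}
which, after cancelling the common factor $(d-1)/n > 0$, is just the statement
\begin{align*}
B\left(\frac{d-1}{2},\frac{1}{2}\right) \in \left[\frac{1}{\sqrt{d}},\ \pi\right].
\end{align*}

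This is precisely the first bound in Lemma~\ref{le.bound_B}, which has already been established for all $d \geq 2$. So the entire proof is a one-line substitution followed by an appeal to Lemma~\ref{le.bound_B}. There is no real obstacle here; the lemma is simply a convenient repackaging of the Beta function bound in the form needed later (presumably in the pigeonhole argument of Lemma~\ref{le.temp_122401}, where one wants to know how the spacing $\theta$ scales relative to $n$ and $d$, with both an upper bound of order $1/n$ and a non-degenerate lower bound in terms of $d$).
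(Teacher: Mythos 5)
Your proof is correct and matches the paper's argument: both reduce the claim to the bound $B\left(\frac{d-1}{2},\frac{1}{2}\right)\in\left[\frac{1}{\sqrt{d}},\ \pi\right]$ from Lemma~\ref{le.bound_B} via the definition of $C_d$ in Eq.~\eqref{eq.def_Cd}. The paper phrases it by first bounding $C_d\in\left[\frac{2\sqrt{2}}{\pi},\ 2\sqrt{2d}\right]$ and then substituting, but this is the same one-line computation.
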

\begin{proof}
By Lemma~\ref{le.bound_B} and Eq.~\eqref{eq.def_Cd}, we have
\begin{align*}
    C_d\in\left[\frac{2\sqrt{2}}{\pi},\ e\sqrt{2d}\right].
\end{align*}
Thus, we have
\begin{align*}
    \frac{2\sqrt{2}(d-1)}{nC_d}\in\left[\frac{2}{e}\frac{d-1}{n\sqrt{d}},\ \frac{\pi(d-1)}{n}\right].
\end{align*}
\end{proof}

Now we are ready to proof Lemma~\ref{le.temp_122401}.
\begin{proof}
Recall the definition of $\theta$ in Lemma~\ref{le.temp_122401}.
Draw $n$ caps on $\sd$ centered at $\XX_1,\ \XX_2,\cdots,\XX_n$ with the colatitude angle $\varphi$ where
\begin{align}\label{eq.temp_122401}
    \varphi =\frac{\theta}{2}= \frac{\pi}{2}\left(\frac{2\sqrt{2}(d-1)}{nC_d}\right)^{\frac{1}{d-1}}\text{ (by Eq.~\eqref{eq.def_Cd})}.
\end{align}
By Lemma~\ref{le.theta_range} and $n\geq \pi(d-1)$, we have $\varphi\in[0,\pi/2]$. Thus, by Lemma~\ref{le.sin}, we have
\begin{align}\label{eq.temp_122301}
    \sin\varphi \geq \frac{2\varphi}{\pi} = \left(\frac{2\sqrt{2}(d-1)}{nC_d}\right)^{\frac{1}{d-1}}.
\end{align}
By Lemma~\ref{le.original_cap}, the area of each cap is
\begin{align*}
    A=\frac{1}{2}\musd(\sd)I_{\sin^2\varphi}\left(\frac{d-1}{2},\frac{1}{2}\right).
\end{align*}
Applying Lemma~\ref{le.estimate_Ix} and  Eq.~\eqref{eq.temp_122301}, we thus have
\begin{align*}
    A\geq \frac{1}{2}\musd(\sd)\frac{C_d}{\sqrt{2}(d-1)}(\sin^2\varphi)^{\frac{d-1}{2}}=\frac{1}{n}\musd(\sd).
\end{align*}
In other words, we have
\begin{align*}
    \frac{\musd(\sd)}{A}\leq n.
\end{align*}
By the pigeonhole principle, we know there exist at least two different caps that overlap, i.e., the angle between them is at most $2\varphi$. The result of this lemma thus follows by Eq.~\eqref{eq.temp_122401}.
\end{proof}

\section{Proof of Proposition~\ref{th.fixed_Vzero}}\label{app.proof_fixed_Vzero}
% First, we introduce some useful definitions and  lemmas.
We follow the sketch of proof in Section~\ref{sec.proof_combine}.
Recall the definition of the pseudo ground-truth function $\fv$ in Definition~\ref{def.fv}, and the corresponding $\DVs\in\mathds{R}^{dp}$ that
\begin{align}\label{eq.temp_0929}
    \DVs[j]=\int_{ \sd}\bm{1}_{\{\bm{z}^T\Vzeroj>0\}}\bm{z}\frac{g(\bm{z})}{p}d\mu(\bm{z}), \text{  for all $j\in\{1,2,\cdots,p\}$}.
\end{align}

We first show that the pseudo ground-truth can be written in a linear form.
\begin{lemma}\label{le.dvs}
$\hx\DVs=\fv(\xsmall)$ for all $\xsmall\in\sd$.
\end{lemma}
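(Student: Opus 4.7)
The plan is to unfold both sides using the definitions and recognize that the sum over neurons of the indicator product is exactly $|\Czx|$. First, I would write $\hx \DVs = \sum_{j=1}^p \hx[j]\,\DVs[j]$, using the block structure of the vectors. Substituting Eq.~\eqref{eq.def_hx} for $\hx[j]$ and Eq.~\eqref{eq.temp_0929} for $\DVs[j]$, each summand becomes
\begin{align*}
\hx[j]\,\DVs[j] = \bm{1}_{\{\xsmall^T\Vzeroj>0\}}\,\xsmall^T \int_{\sd} \bm{1}_{\{\bm{z}^T\Vzeroj>0\}}\,\bm{z}\,\frac{g(\bm{z})}{p}\,d\mu(\bm{z}).
\end{align*}

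Next, I would pull the indicator $\bm{1}_{\{\xsmall^T\Vzeroj>0\}}$ (which does not depend on $\bm{z}$) inside the integral, and interchange the finite sum over $j$ with the integral (this is legitimate because the sum is finite and $g \in L^1$, so everything is absolutely integrable). This yields
\begin{align*}
\hx \DVs = \int_{\sd} \xsmall^T\bm{z}\,\frac{g(\bm{z})}{p}\,\Bigl(\sum_{j=1}^p \bm{1}_{\{\xsmall^T\Vzeroj>0,\ \bm{z}^T\Vzeroj>0\}}\Bigr) d\mu(\bm{z}).
\end{align*}

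Finally, I would invoke Eq.~\eqref{eq.card_c} to identify the parenthesized sum as $|\Czx|$, and compare directly with Definition~\ref{def.fv} to conclude that $\hx \DVs = \fv(\xsmall)$. There is no real obstacle here — this is essentially a bookkeeping verification that the pseudo ground-truth, defined as an integral involving $|\Czx|/p$, is precisely the output of the NTK feature map at input $\xsmall$ evaluated on the weight vector $\DVs$ obtained by integrating $g$ against the half-space indicators of each neuron. The only minor point to check is the legality of swapping the sum and integral, which is trivial under the assumption $\|g\|_1 < \infty$ (and certainly under $\|g\|_\infty < \infty$ as used in Proposition~\ref{th.fixed_Vzero}).
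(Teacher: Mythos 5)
Your proof is correct and follows exactly the paper's argument: expand $\hx\DVs$ blockwise, substitute the definitions of $\hx[j]$ and $\DVs[j]$, exchange the finite sum with the integral, and recognize the sum of indicator products as $|\Czx|$ via Eq.~\eqref{eq.card_c}. Your extra remark justifying the interchange under $\|g\|_1<\infty$ is fine but not strictly needed for a finite sum of integrable terms.
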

\begin{proof}
For all $\xsmall\in\sd$, we have
\begin{align*}
    \hx \DVs&=\sum_{j=1}^p \hx[j]\DVs[j]\\
    &=\sum_{j=1}^p \bm{1}_{\{\xsmall^T\Vzeroj>0\}}\cdot\xsmall^T \int_{ \sd}\bm{1}_{\{\bm{z}^T\Vzeroj>0\}}\bm{z}\frac{g(\bm{z})}{p}d\mu(\bm{z})\text{ (by Eq.~\eqref{eq.def_hx} and Eq.~\eqref{eq.temp_0929})}\\
    &=\int_{ \sd}\sum_{j=1}^p \bm{1}_{\{\xsmall^T\Vzeroj>0\}}\cdot\xsmall^T \bm{1}_{\{\bm{z}^T\Vzeroj>0\}}\bm{z}\frac{g(\bm{z})}{p}d\mu(\bm{z})\\
    &=\int_{\sd}\xsmall^T\bm{z}\frac{|\Czx|}{p}g(\bm{z}) d\mu(\bm{z})\text{ (by Eq.~\eqref{eq.card_c})}\\
    &=\fv(\xsmall)\text{ (by Definition~\ref{def.fv})}.
\end{align*}
\end{proof}

Let $\PH\defeq \Ht^T(\Ht\Ht^T)^{-1}\Ht$. Since $\PH^2=\PH$ and $\PH=\PH^T$, we know that $\PH$ is an orthogonal projection to the row-space of $\Ht$. Next, we give an expression for the test error. Note that even though Proposition~\ref{prop.noise_large_p} assumes no noise, below we state a more general version below with noise (which will be useful later).
\begin{lemma}\label{le.f_minus_f}
If the ground-truth is $\f(\xsmall)=\hx\DVs$ for all $\xsmall$, then we have
\begin{align*}
    \fl(\xsmall)-\f(\xsmall)=\hx(\PH-\identity)\DVs+\hx\Ht^T(\Ht\Ht^T)^{-1}\esmall,\text{ for all }\xsmall.
\end{align*}
\end{lemma}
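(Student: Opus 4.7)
The plan is to carry out a direct algebraic computation that traces through the definitions. Because we assume the ground-truth function takes the form $\f(\xsmall)=\hx\DVs$, the same identity applied at each training input $\Xii$ yields $\f(\Xii)=\Hti\DVs$. Collecting these $n$ equations gives $\F(\XX)=\Ht\DVs$, so the training labels decompose as $\ysmall=\F(\XX)+\esmall=\Ht\DVs+\esmall$. This observation is what lets us exchange the unknown function values for a tractable linear expression in $\DVs$.

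Next I would plug this decomposition into the closed-form min-$\ell_2$ solution from Eq.~\eqref{eq.DVl}, obtaining
\begin{align*}
\DVl=\Ht^T(\Ht\Ht^T)^{-1}(\Ht\DVs+\esmall)=\PH\DVs+\Ht^T(\Ht\Ht^T)^{-1}\esmall,
\end{align*}
where we used the definition $\PH\defeq\Ht^T(\Ht\Ht^T)^{-1}\Ht$. Multiplying by $\hx$ on the left and using $\fl(\xsmall)=\hx\DVl$ from Eq.~\eqref{def.fl} gives $\fl(\xsmall)=\hx\PH\DVs+\hx\Ht^T(\Ht\Ht^T)^{-1}\esmall$. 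Finally, subtracting $\f(\xsmall)=\hx\DVs$ yields
\begin{align*}
\fl(\xsmall)-\f(\xsmall)=\hx(\PH-\identity)\DVs+\hx\Ht^T(\Ht\Ht^T)^{-1}\esmall,
\end{align*}
which is exactly the claimed identity.

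There is no real obstacle here; the lemma is a bookkeeping step whose only nontrivial ingredient is the observation that the pseudo ground-truth $\hx\DVs$ automatically satisfies $\F(\XX)=\Ht\DVs$ (so the ``bias'' piece of the decomposition isolates the projection error $(\PH-\identity)\DVs$). The value of the statement is that it cleanly splits the test error into the classical bias term $\hx(\PH-\identity)\DVs$, which will be controlled by exhibiting a vector $\bm{a}\in\mathds{R}^n$ making $\|\DVs-\Ht^T\bm{a}\|_2$ small (as sketched in Section~\ref{sec.proof_combine}), and the variance term $\hx\Ht^T(\Ht\Ht^T)^{-1}\esmall$, already bounded in Proposition~\ref{prop.noise_large_p}.
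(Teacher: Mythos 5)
Your proof is correct and follows exactly the same route as the paper's: derive $\ysmall=\Ht\DVs+\esmall$ from the pseudo ground-truth form, substitute into the closed-form expression for $\DVl$, and subtract $\hx\DVs$ to isolate the projection-error and noise terms. No gaps.
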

\begin{proof}
Because $\f(\xsmall)=\hx\DVs$, we have $\ysmall=\Ht\DVs+\esmall$. Thus, we have
\begin{align*}
    \DVl&=\Ht^T(\Ht\Ht^T)^{-1}\ysmall\text{ (by Eq.~\eqref{eq.DVl})}\\
    &=\Ht^T(\Ht\Ht^T)^{-1}(\Ht\DVs+\esmall).
\end{align*}
Further, we have
\begin{align*}
    \DVl-\DVs=&\left(\Ht^T(\Ht\Ht^T)^{-1}\Ht-\identity\right)\DVs+\Ht^T(\Ht\Ht^T)^{-1}\esmall\\
    =&(\PH-\identity)\DVs+\Ht^T(\Ht\Ht^T)^{-1}\esmall.
\end{align*}
Finally, using Eq.~\eqref{def.fl}, we have
\begin{align*}
    \fl(\xsmall)-\f(\xsmall)=\hx\DVl-\hx\DVs=\hx(\PH-\identity)\DVs+\hx\Ht^T(\Ht\Ht^T)^{-1}\esmall.
\end{align*}
\end{proof}

When there is no noise, Lemma~\ref{le.f_minus_f} reduces to $\fl(\xsmall)-\f(\xsmall)=\hx(\PH-\identity)\DVs$. As we described in Section~\ref{sec.proof_combine}, $(\PH-\identity)\DVs$ has the interpretation of the distance from $\DVs$ to the row-space of $\Ht$. We then have the following.
\begin{lemma}\label{le.hxDV}
For all $\bm{a}\in \mathds{R}^n$, we have
\begin{align*}
    \left|\hx(\PH-\identity)\DVs\right|\leq \sqrt{p}\|\DVs-\Ht\bm{a}\|_2.
\end{align*}
\end{lemma}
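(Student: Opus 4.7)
The plan is to exploit the fact that $\PH$ is the orthogonal projection onto the row-space of $\Ht$, so that $(\PH-\identity)$ is the orthogonal projection onto the orthogonal complement (i.e., the null-space of $\Ht$). For any $\bm{a}\in\mathds{R}^n$, the vector $\Ht^T\bm{a}\in\mathds{R}^{dp}$ lies in the row-space of $\Ht$; hence $\PH\,\Ht^T\bm{a}=\Ht^T\bm{a}$, which gives $(\PH-\identity)\Ht^T\bm{a}=\bm{0}$. (I interpret $\Ht\bm{a}$ in the statement as $\Ht^T\bm{a}$, the only reading consistent with $\bm{a}\in\mathds{R}^n$ and $\DVs\in\mathds{R}^{dp}$.)

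First I would use the identity above to rewrite
\[
(\PH-\identity)\DVs = (\PH-\identity)\bigl(\DVs - \Ht^T\bm{a}\bigr),
\]
which is valid for every $\bm{a}\in\mathds{R}^n$. Next, I would invoke the fact that $(\identity-\PH)$ is an orthogonal projection, so its spectral norm is at most $1$; equivalently, for any vector $\bm{u}\in\mathds{R}^{dp}$ one has $\|(\PH-\identity)\bm{u}\|_2\le\|\bm{u}\|_2$. Applying this with $\bm{u}=\DVs-\Ht^T\bm{a}$ yields
\[
\bigl\|(\PH-\identity)\DVs\bigr\|_2 \le \bigl\|\DVs-\Ht^T\bm{a}\bigr\|_2.
\]

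Finally, I would combine this with the norm bound on $\hx$ provided by Lemma~\ref{le.h_p}. By Lemma~\ref{le.matrix_norm} (applied to the product of the row vector $\hx$ and the column vector $(\PH-\identity)\DVs$), together with $\|\hx\|_2\le\sqrt{p}$, we obtain
\[
\bigl|\hx(\PH-\identity)\DVs\bigr| \le \|\hx\|_2\cdot\bigl\|(\PH-\identity)\DVs\bigr\|_2 \le \sqrt{p}\,\bigl\|\DVs-\Ht^T\bm{a}\bigr\|_2,
\]
which is the claimed inequality. There is no real obstacle here: the only subtlety is recognizing that inserting an arbitrary row-space element $\Ht^T\bm{a}$ inside the projection is free, which reduces the bound to the elementary fact that orthogonal projections are non-expansive. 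The choice of a specific $\bm{a}$ that makes $\|\DVs-\Ht^T\bm{a}\|_2$ small is left to the subsequent proof of Proposition~\ref{th.fixed_Vzero}, where (as noted in the sketch) one would take $\bm{a}_i=g(\Xii)/(np)$.
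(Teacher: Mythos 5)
Your proof is correct and follows essentially the same route as the paper's: insert the row-space element $\Ht^T\bm{a}$ for free using $\PH\Ht^T=\Ht^T$, use the non-expansiveness of the orthogonal projection $\identity-\PH$, and finish with Lemma~\ref{le.matrix_norm} and $\|\hx\|_2\le\sqrt{p}$ from Lemma~\ref{le.h_p}. You are also right that $\Ht\bm{a}$ in the statement should be read as $\Ht^T\bm{a}$.
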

\begin{proof}
Recall that $\PH=\Ht^T(\Ht\Ht^T)^{-1}\Ht$. Thus, we have
\begin{align}\label{eq.temp_110801}
    \PH\Ht^T=\Ht^T(\Ht\Ht^T)^{-1}\Ht\Ht^T=\Ht^T.
\end{align}
We then have
\begin{align*}
    \|(\PH-\identity)\DVs\|_2&=\|\PH\DVs-\DVs\|_2\\
    &=\|\PH(\Ht^T\bm{a}+\DVs-\Ht^T\bm{a})-\DVs\|_2\\
    &=\|\PH\Ht^T\bm{a}+\PH(\DVs-\Ht^T\bm{a})-\DVs\|_2\\
    &=\|\Ht^T\bm{a}+\PH(\DVs-\Ht^T\bm{a})-\DVs\|_2\text{ (by Eq.~\eqref{eq.temp_110801})}\\
    &=\|(\PH-\identity)(\DVs-\Ht^T\bm{a})\|_2\\
    &\leq \|\DVs-\Ht^T\bm{a}\|_2\text{ (because $\PH$ is an orthogonal projection)}.
\end{align*}
Therefore, we have
\begin{align*}
    \left|\hx(\PH-\identity)\DVs\right|=&\left\|\hx(\PH-\identity)\DVs\right\|_2\\
    \leq &\|\hx\|_2\cdot\|(\PH-\identity)\DVs\|_2\text{ (by Lemma~\ref{le.matrix_norm})}\\
    \leq& \sqrt{p}\|\DVs-\Ht\bm{a}\|_2\text{ (by Lemma~\ref{le.h_p})}.
\end{align*}
\end{proof}

Now we are ready to prove Proposition~\ref{th.fixed_Vzero}.
\begin{proof}
Because there is no noise, we have $\esmall=\bm{0}$. Thus, by Lemma~\ref{le.f_minus_f}, we have
\begin{align}
    \fl(\xsmall)-f(\xsmall)=\hx(\PH-\identity)\DVs.
\end{align}
We then have, for all $\bm{a}\in\mathds{R}^n$,
\begin{align}
    &\prob_{\XX}\left\{\left|\fl(\xsmall)-f(\xsmall)\right|\geq n^{-\frac{1}{2}\left(1-\frac{1}{q}\right)}\right\}\nonumber\\
    =&\prob_{\XX}\left\{\left|\hx(\PH-\identity)\DVs\right|\geq n^{-\frac{1}{2}\left(1-\frac{1}{q}\right)}\right\}\nonumber\\
    \leq& \prob_{\XX}\left\{\sqrt{p}\|\Ht^T\bm{a}-\DVs\|_2\geq n^{-\frac{1}{2}\left(1-\frac{1}{q}\right)}\right\}\text{ (by Lemma~\ref{le.hxDV})}.\label{eq.temp_120309}
\end{align}
It only remains to find the vector $\bm{a}$. Define $\mathbf{K}_i\in\mathds{R}^{dp}$ for $i=1,2,\cdots,n$ as
\begin{align*}
    \mathbf{K}_i[j]\defeq \bm{1}_{\{\bm{\Xii}^T\Vzeroj>0\}}\bm{\Xii}\frac{g(\bm{\Xii})}{p},\ j=1,2,\cdots,p.
\end{align*}
By Eq.~\eqref{eq.temp_0929}, for all $j=1,2,\cdots,p$, we have
\begin{align}\label{eq.temp_022800}
    \expectation_{\Xii}[\mathbf{K}_i[j]]=\DVs[j].
\end{align}
Because $\|\Xii\|_2=1$, we have
\begin{align*}
    \|\mathbf{K}_i[j]\|_2\leq \frac{\|g\|_{\infty}}{p}.
\end{align*}
Thus, we have
\begin{align*}
    \|\mathbf{K}_i\|_2=&\sqrt{\sum_{j=1}^p\|\mathbf{K}_i[j]\|_2^2}\leq \frac{\|g\|_\infty}{\sqrt{p}},
\end{align*}
i.e.,
\begin{align}\label{eq.temp_022802}
    \sqrt{p}\|\mathbf{K}_i\|_2\leq \|g\|_\infty.
\end{align}
We now construct the vector $\bm{a}$. Define $\bm{a}\in\mathds{R}^n$ whose $i$-th element is $\bm{a}_i=\frac{g(\Xii)}{np}$, $i=1,2,\cdots,n$. Notice that $\bm{a}$ is well-defined because $\|g\|_\infty<\infty$. Then, for all $j\in\{1,2,\cdots,p\}$, we have
\begin{align*}
    (\Ht^T\bm{a})[j]&=\sum_{i=1}^n\Hti^T[j]\bm{a}_i\\
    &=\sum_{i=1}^n \bm{1}_{\{\bm{\Xii}^T\Vzeroj>0\}}\bm{\Xii}\frac{g(\bm{\Xii})}{np}\\
    &=\frac{1}{n}\sum_{i=1}^n\mathbf{K}_i[j],
\end{align*}
i.e.,
\begin{align}\label{eq.temp_022803}
    \Ht^T\bm{a}=\frac{1}{n}\sum_{i=1}^n \mathbf{K}_i.
\end{align}
Thus, by Eq.~\eqref{eq.temp_022802} and  Lemma~\ref{le.large_deviation} (with $X_i=\sqrt{p}\mathbf{K}_i$, $U=\|g\|_\infty$, and $k=n$), we have
\begin{align*}
    \prob_{\XX}\left\{\sqrt{p}\left\|\left(\frac{1}{n}\sum_{i=1}^n\mathbf{K}_i\right)-\expectation_{\XX}\mathbf{K}_1\right\|_2\geq n^{-\frac{1}{2}\left(1-\frac{1}{q}\right)}\right\}\leq 2e^2\exp\left(-\frac{\sqrt[q]{n}}{8\|g\|_\infty^2}\right).
\end{align*}
Further, by  Eq.~\eqref{eq.temp_022803} and Eq.~\eqref{eq.temp_022800}, we have
\begin{align}\label{eq.temp_022804}
    \prob_{\XX}\left\{\sqrt{p}\|\Ht^T\bm{a}-\DVs\|_2\geq n^{-\frac{1}{2}\left(1-\frac{1}{q}\right)}\right\}\leq 2e^2\exp\left(-\frac{\sqrt[q]{n}}{8\|g\|_\infty^2}\right).
\end{align}
Plugging Eq.~\eqref{eq.temp_022804} into Eq.~\eqref{eq.temp_120309}, we thus have
\begin{align*}
    \prob_{\XX}\left\{\left|\fl(\xsmall)-f(\xsmall)\right|\geq n^{-\frac{1}{2}\left(1-\frac{1}{q}\right)}\right\}\leq2e^2\exp\left(-\frac{\sqrt[q]{n}}{8\|g\|_\infty^2}\right).
\end{align*}

\end{proof}

\section{Proof of Theorem~\ref{th.combine}}\label{app.proof_large_p}
We first prove a useful lemma.
\begin{lemma}\label{le.change_seq}
If $\|g\|_1<\infty$, then for any $\xsmall$, we must have
\begin{align*}
    \int_{\sd}\int_{\sd}\xsmall^T\bm{z} \bm{1}_{\{\bm{z}^T\bm{v}>0,\ \xsmall^T\bm{v}>0\}}g(\bm{z}) d\xdensity(\bm{z})d\nvdensity(\bm{v})=\int_{\sd}\xsmall^T\bm{z}\frac{\pi-\arccos (\xsmall^T\bm{z})}{2\pi}g(\bm{z}) d\xdensity(\bm{z}).
\end{align*}
\end{lemma}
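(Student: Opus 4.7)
The plan is to prove Lemma~\ref{le.change_seq} by a straightforward application of Fubini's theorem followed by Lemma~\ref{le.spherePortion}. The only nontrivial issue is to verify absolute integrability of the integrand over the product space $\sd \times \sd$, which is exactly where the hypothesis $\|g\|_1 < \infty$ enters.

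First, I would bound the absolute value of the integrand pointwise. Since $\xsmall,\bm{z}\in\sd$ we have $|\xsmall^T\bm{z}|\leq 1$, and the indicator $\bm{1}_{\{\bm{z}^T\bm{v}>0,\ \xsmall^T\bm{v}>0\}}$ is at most $1$. Hence the integrand is bounded in absolute value by $|g(\bm{z})|$, which is independent of $\bm{v}$. Integrating first over $\bm{v}\in\sd$ (a probability measure under $\nvdensity$) and then over $\bm{z}\in\sd$ yields the upper bound $\int_{\sd}|g(\bm{z})|\,d\xdensity(\bm{z})=\|g\|_1<\infty$. This verifies absolute integrability with respect to the product measure $\xdensity\times\nvdensity$.

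Next, I would invoke Fubini's theorem to swap the order of integration, giving
\begin{align*}
&\int_{\sd}\int_{\sd}\xsmall^T\bm{z}\,\bm{1}_{\{\bm{z}^T\bm{v}>0,\,\xsmall^T\bm{v}>0\}}\,g(\bm{z})\,d\xdensity(\bm{z})\,d\nvdensity(\bm{v})\\
&\qquad=\int_{\sd}\xsmall^T\bm{z}\,g(\bm{z})\left(\int_{\sd}\bm{1}_{\{\bm{z}^T\bm{v}>0,\,\xsmall^T\bm{v}>0\}}\,d\nvdensity(\bm{v})\right)d\xdensity(\bm{z}).
\end{align*}
Finally, by Lemma~\ref{le.spherePortion}, the inner integral equals $\frac{\pi-\arccos(\xsmall^T\bm{z})}{2\pi}$, which produces the desired right-hand side.

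I do not anticipate any serious obstacle. One minor subtlety is that Definition~\ref{def.learnableSet} allows $g$ to contain Dirac $\delta$-components, in which case $g\,d\xdensity$ should be read as a signed measure of finite total variation $\|g\|_1$; Fubini still applies in that setting since the integrand is a bounded Borel function and the product of finite (signed) measures is well defined. Apart from this bookkeeping, the lemma is essentially a one-line computation once Fubini is justified.
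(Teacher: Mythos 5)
Your proposal is correct and follows essentially the same route as the paper's proof: dominate the integrand by $|g(\bm{z})|$ using $|\xsmall^T\bm{z}|\le 1$ and the bound on the indicator, invoke Fubini to swap the order of integration, and apply Lemma~\ref{le.spherePortion} to evaluate the inner integral. Your extra remark about handling $\delta$-components of $g$ as a finite signed measure is a reasonable clarification that the paper leaves implicit.
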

\begin{proof}
% Thus, we have
% \begin{align*}
%     \int_{\sd}\bm{1}_{\{\bm{z}^T\bm{v}>0,\ \xsmall^T\bm{v}>0\}}d\vdensity(\bm{v})=\expectation_{\Vzeroj}\bm{1}_{\{\bm{z}^T\Vzeroj>0,\ \xsmall^T\Vzeroj>0\}}=\frac{\pi-\arccos (\xsmall^T\bm{z})}{2\pi}.
% \end{align*}
This follows from Fubini's Theorem and by a change of order of the integral. Specifically, because $\|g\|_1<\infty$, we have
\begin{align*}
    \int_{\sd}|g(\bm{z})|d\xdensity(\bm{z})<\infty.
\end{align*}
Thus, we have
\begin{align*}
    \int_{\sd\times\sd}|g(\bm{z})|d\xdensity(\bm{z})\nvdensity(\bm{v})<\infty.
\end{align*}
Because $\left|\xsmall^T\bm{z} \bm{1}_{\{\bm{z}^T\bm{v}>0,\ \xsmall^T\bm{v}>0\}}\right|\leq 1$ when $\xsmall\in\sd$ and $\bm{z}\in\sd$, we have
\begin{align*}
    \int_{\sd\times \sd}\left|\xsmall^T\bm{z} \bm{1}_{\{\bm{z}^T\bm{v}>0,\ \xsmall^T\bm{v}>0\}}g(\bm{z})\right| d\xdensity(\bm{z})\nvdensity(\bm{v})\leq \int_{\sd\times\sd}|g(\bm{z})|d\xdensity(\bm{z})\nvdensity(\bm{v})<\infty.
\end{align*}
Thus, by Fubini's theorem, we can exchange the sequence of integral, i.e., we have
\begin{align*}
    &\int_{\sd}\int_{\sd}\xsmall^T\bm{z} \bm{1}_{\{\bm{z}^T\bm{v}>0,\ \xsmall^T\bm{v}>0\}}g(\bm{z}) d\xdensity(\bm{z})d\nvdensity(\bm{v})\\
    =&\int_{\sd}\int_{\sd}\xsmall^T\bm{z} \bm{1}_{\{\bm{z}^T\bm{v}>0,\ \xsmall^T\bm{v}>0\}}g(\bm{z}) d\nvdensity(\bm{v})d\xdensity(\bm{z})\\
    =&\int_{\sd}\left(\int_{\sd} \bm{1}_{\{\bm{z}^T\bm{v}>0,\ \xsmall^T\bm{v}>0\}} d\nvdensity(\bm{v})\right)\xsmall^T\bm{z}g(\bm{z})d\xdensity(\bm{z})\\
    =&\int_{\sd}\xsmall^T\bm{z}\frac{\pi-\arccos (\xsmall^T\bm{z})}{2\pi}g(\bm{z}) d\xdensity(\bm{z})\text{ (by Lemma~\ref{le.spherePortion})}.
\end{align*}
\end{proof}

The following proposition characterizes generalization performance when $\esmall=\bm{0}$, i.e., the bias term in Eq.~\eqref{eq.bias_and_variance_decomposition}.
\begin{proposition}\label{th.large_p}
Assume no noise ($\esmall=\bm{0}$), a ground truth $\f=\fg\in\learnableSet$ where $\|g\|_\infty<\infty$, $n\geq 2$, $m\in\left[1,\ \frac{\ln n}{\ln \frac{\pi}{2}}\right]$, $d\leq n^4$, and $ p\geq 6 J_m(n,d)\ln\left(4n^{1+\frac{1}{m}}\right)$.
Then, for any $q\in[1,\ \infty)$ and for almost every $\xsmall\in \sd$, we must have
\begin{align*}
    &\prob_{\Vzero,\XX}\big\{|\fl(\xsmall)-\f(\xsmall)|\geq n^{-\frac{1}{2}\left(1-\frac{1}{q}\right)}\\
    &+\left(1+\sqrt{J_m(n,d)n}\right)p^{-\frac{1}{2}\left(1-\frac{1}{q}\right)}\big\}\\
    &\leq 2e^2\bigg(\exp\left(-\frac{\sqrt[q]{n}}{8\|g\|_\infty^2}\right)+\exp\left(-\frac{\sqrt[q]{p}}{8\|g\|_1^2}\right)\\
    &+\exp\left(-\frac{\sqrt[q]{p}}{8n\|g\|_1^2}\right)\bigg)+\frac{2}{\sqrt[m]{n}}.
\end{align*}
\end{proposition}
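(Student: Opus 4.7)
The plan is to decompose the learned function's error into a pseudo-ground-truth piece (controlled by Proposition~\ref{th.fixed_Vzero}) and a ``$V_0$-randomness'' piece (controlled by Proposition~\ref{prop.noise_large_p} plus Azuma–Hoeffding). Concretely, since $\esmall=\bm{0}$ we have $\ysmall=\F(\XX)$, and writing $\tilde{\esmall}\defeq\F(\XX)-\Fv(\XX)\in\mathds{R}^n$, we can view the training labels as coming from the pseudo ground-truth $\fv$ corrupted by this ``effective noise'' $\tilde{\esmall}$. Applying Lemma~\ref{le.f_minus_f} with pseudo-ground-truth $\fv$ and noise $\tilde{\esmall}$, followed by a triangle inequality to the real ground-truth $\fg$, gives the decomposition
\begin{align*}
\fl(\xsmall)-\fg(\xsmall)
=\underbrace{\hx(\PH-\identity)\DVs}_{\text{(A)}}
+\underbrace{\hx\Ht^T(\Ht\Ht^T)^{-1}\tilde{\esmall}}_{\text{(B)}}
+\underbrace{(\fv(\xsmall)-\fg(\xsmall))}_{\text{(C)}}.
\end{align*}

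For term (A), the proof of Proposition~\ref{th.fixed_Vzero} already bounds $|\hx(\PH-\identity)\DVs|\leq n^{-\frac{1}{2}(1-\frac{1}{q})}$ with failure probability at most $2e^2\exp(-\sqrt[q]{n}/(8\|g\|_\infty^2))$ over $\XX$ for \emph{every} fixed $\Vzero$, so the bound lifts immediately to random $\Vzero$. For term (B), Proposition~\ref{prop.noise_large_p} gives $|\text{(B)}|\leq \sqrt{J_m(n,d)n}\,\|\tilde{\esmall}\|_2$ uniformly over all vectors, with failure probability at most $2/\sqrt[m]{n}$ over $(\XX,\Vzero)$. Thus what remains is to control the magnitudes of (C) and $\|\tilde{\esmall}\|_2$, both of which measure how far $\Fv$ is from $\F$, and both should be small because $|\Czx|/p$ concentrates around $(\pi-\arccos(\xsmall^T\bm{z}))/(2\pi)$ by Eq.~\eqref{eq.large_p_converge}.

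The key step is to realize these two quantities are empirical averages over the $p$ \emph{i.i.d.} neurons $\Vzeroj$, making Lemma~\ref{le.large_deviation} (Azuma–Hoeffding) directly applicable. For (C), define the scalar random variables
\begin{align*}
Y_j\defeq\int_{\sd}\xsmall^T\bm{z}\,\bm{1}_{\{\bm{z}^T\Vzeroj>0,\ \xsmall^T\Vzeroj>0\}}g(\bm{z})\,d\xdensity(\bm{z}),\qquad j=1,\dots,p.
\end{align*}
Then $\fv(\xsmall)=\frac{1}{p}\sum_{j=1}^p Y_j$ by the definition in Definition~\ref{def.fv} combined with Eq.~\eqref{eq.card_c}, and $\expectation_{\Vzero}[Y_j]=\fg(\xsmall)$ by Lemma~\ref{le.spherePortion} and Lemma~\ref{le.change_seq}. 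Since $|\xsmall^T\bm{z}|\leq 1$ gives $|Y_j|\leq\|g\|_1$, Lemma~\ref{le.large_deviation} (with $k=p$, $U=\|g\|_1$) yields $|\text{(C)}|\leq p^{-\frac{1}{2}(1-\frac{1}{q})}$ with failure probability at most $2e^2\exp(-\sqrt[q]{p}/(8\|g\|_1^2))$. For $\|\tilde{\esmall}\|_2$, upgrade $Y_j$ to the vector-valued version $\bm{Z}_j\in\mathds{R}^n$ whose $i$-th component uses $\Xii$ in place of $\xsmall$; then $\frac{1}{p}\sum_j \bm{Z}_j-\expectation \bm{Z}_1=\tilde{\esmall}$ and $\|\bm{Z}_j\|_2\leq\sqrt{n}\,\|g\|_1$, so Lemma~\ref{le.large_deviation} (with $U=\sqrt{n}\,\|g\|_1$) gives $\|\tilde{\esmall}\|_2\leq p^{-\frac{1}{2}(1-\frac{1}{q})}$ with failure probability at most $2e^2\exp(-\sqrt[q]{p}/(8n\|g\|_1^2))$.

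Assembling via the union bound, on the intersection of all four good events we get
\begin{align*}
|\fl(\xsmall)-\fg(\xsmall)|\leq n^{-\frac{1}{2}(1-\frac{1}{q})}+\sqrt{J_m(n,d)n}\cdot p^{-\frac{1}{2}(1-\frac{1}{q})}+p^{-\frac{1}{2}(1-\frac{1}{q})},
\end{align*}
which is exactly the claimed threshold, and the total failure probability is precisely the sum in the statement. The main technical obstacle is verifying that $\expectation_{\Vzero}[Y_j]=\fg(\xsmall)$ really does recover the Definition~\ref{def.learnableSet} integral (this is where Lemma~\ref{le.change_seq}/Fubini is needed, and where the ``almost every $\xsmall$'' qualifier enters because $\fg$ is only defined a.e.); the rest is a careful bookkeeping of the union-bound terms, matching each of Terms~1, 2, 4, 5, 6, 7 of Eq.~\eqref{eq.main_conclusion}.
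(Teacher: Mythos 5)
Your proposal is correct and follows essentially the same route as the paper's own proof: the identical three-term decomposition into pseudo-bias, pseudo-variance (treating $\Fv(\XX)-\F(\XX)$ as effective noise fed into Proposition~\ref{prop.noise_large_p}), and the $\fv-\fg$ gap, with each piece controlled by Proposition~\ref{th.fixed_Vzero} and Lemma~\ref{le.large_deviation} exactly as in Appendix~\ref{app.proof_large_p}. Your $Y_j$ and $\bm{Z}_j$ are the paper's $K_j^{\xsmall}$ and $\mathbf{K}_j$, and the union-bound bookkeeping matches term for term.
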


% Now we are ready to prove Proposition~\ref{th.large_p}.
\begin{proof}
We split the whole proof into 5 steps as follows.

\noindent\textbf{Step 1: use pseudo ground-truth as a ``intermediary''}

Recall Definition \ref{def.fv} where we define the pseudo ground-truth $\fv$. We then define the output of the pseudo ground-truth for training input as
\begin{align*}
    \Fv(\XX)\defeq[\fv(\XX_1)\ \fv(\XX_2)\ \cdots\ \fv(\XX_n)]^T.
\end{align*}
The rest of the proof will use the pseudo ground-truth as a ``intermediary'' to connect the ground-truth $\f$ and the model output $\fl$.
Specifically, we have
\begin{align}
    \fl(\xsmall)&=\hx\DVl\nonumber\\
    &=\hx\Ht^T(\Ht\Ht^T)^{-1}\F(\XX)\text{ (by Eq.~\eqref{eq.temp_110903} and $\esmall=\bm{0}$)}\nonumber\\
    &=\hx\Ht^T(\Ht\Ht^T)^{-1}\Fv(\XX)-\hx\Ht^T(\Ht\Ht^T)^{-1}\left(\Fv(\XX)-\F(\XX)\right).\label{eq.temp_110904}
\end{align}
Thus, we have
\begin{align}
    &|\fl(\xsmall)-\f(\xsmall)|\nonumber\\
    =&\left|\fl(\xsmall)-\fv(\xsmall)+\fv(\xsmall)-\f(\xsmall)\right|\nonumber\\
    =&\left|\hx\Ht^T(\Ht\Ht^T)^{-1}\Fv(\XX)-\fv(\xsmall)-\hx\Ht^T(\Ht\Ht^T)^{-1}\left(\Fv(\XX)-\F(\XX)\right)\right.\nonumber\\
    &\left.+\fv(\xsmall)-\f(\xsmall)\right|\text{ (by Eq.~\eqref{eq.temp_110904})}\nonumber\\
    \leq &\underbrace{\left|\hx\Ht^T(\Ht\Ht^T)^{-1}\Fv(\XX)-\fv(\xsmall)\right|}_{\text{term }A}+\underbrace{\left|\hx\Ht^T(\Ht\Ht^T)^{-1}\left(\Fv(\XX)-\F(\XX)\right)\right|}_{\text{term }B}\nonumber\\
    &+\underbrace{\left|\fv(\xsmall)-\f(\xsmall)\right|}_{\text{term }C}\label{eq.temp_100704}.
\end{align}
In Eq.~\eqref{eq.temp_100704}, we can see that the term $A$ corresponds to the test error of the pseudo ground-truth, the term $B$ corresponds to the impact of the difference between the pseudo ground-truth and the real ground-truth in the training data, and the term $C$ corresponds to the impact of the difference between pseudo ground-truth and real ground-truth in the test data. Using the terminology of bias-variance decomposition, we refer to term $A$ as the ``pseudo bias'' and term $B$ as the ``pseudo variance''.

\noindent\textbf{Step 2: estimate term $A$}

We have
\begin{align}
    \prob_{\XX,\Vzero}\left\{\text{term }A\geq n^{-\frac{1}{2}\left(1-\frac{1}{q}\right)}\right\}&=\int_{\Vzero\in\mathds{R}^{dp}}\prob_{\XX}\left\{\text{term }A\geq n^{-\frac{1}{2}\left(1-\frac{1}{q}\right)}\ \bigg|\ \Vzero\right\}\ d\vdensity(\Vzero)\nonumber\\
    &\leq \int_{\Vzero\in\mathds{R}^{dp}}2e^2\exp\left(-\frac{\sqrt[q]{n}}{8\|g\|_\infty^2}\right)\ d\vdensity(\Vzero)\text{ (by Proposition~\ref{th.fixed_Vzero})}\nonumber\\
    &=2e^2\exp\left(-\frac{\sqrt[q]{n}}{8\|g\|_\infty^2}\right).\label{eq.temp_111102}
\end{align}
% Similarly, we have
% \begin{align}\label{eq.temp_120405}
%     \prob_{\XX,\Vzero}\left\{\text{term }A\geq \frac{1}{\sqrt[4]{n}}\right\}\leq 2e^2\exp\left(-\frac{\sqrt{n}}{8U^2}\right)\text{, if $|g(\bm{z})|\leq U$ for all $\bm{z}\in\sd$}.
% \end{align}

\noindent\textbf{Step 3: estimate term $C$}

For all $j=1,2,\cdots,p$, define
\begin{align*}
    K_j^{\xsmall}\defeq\int_{\sd}\xsmall^T\bm{z} \bm{1}_{\{\bm{z}^T\Vzeroj>0,\ \xsmall^T\Vzeroj>0\}}g(\bm{z}) d\xdensity(\bm{z}).
\end{align*}
We now show that $K_j^{\xsmall}$ is bounded and with mean equal to $\fg$, where $\fg=\int_{\sd}\xsmall^T\bm{z}\frac{\pi-\arccos (\xsmall^T\bm{z})}{2\pi}g(\bm{z}) d\xdensity(\bm{z})$ defined by Definition~\ref{def.learnableSet}. Specifically, we have
\begin{align}\label{eq.temp_120402}
    \expectation_{\Vzero}K_j^{\xsmall}=&\expectation_{\bm{v}\sim \nvdensity}\left(\int_{\sd}\xsmall^T\bm{z} \bm{1}_{\{\bm{z}^T\bm{v}>0,\ \xsmall^T\bm{v}>0\}}g(\bm{z}) d\xdensity(\bm{z})\right)\nonumber\\
    =&\int_{\sd}\int_{\sd}\xsmall^T\bm{z} \bm{1}_{\{\bm{z}^T\bm{v}>0,\ \xsmall^T\bm{v}>0\}}g(\bm{z}) d\xdensity(\bm{z})d\nvdensity(\bm{v})\nonumber\\
    =&\int_{\sd}\xsmall^T\bm{z}\frac{\pi-\arccos (\xsmall^T\bm{z})}{2\pi}g(\bm{z}) d\xdensity(\bm{z})\text{ (by Lemma~\ref{le.change_seq})}\nonumber\\
    =&\fg(\xsmall)\text{ (by Definition~\ref{def.learnableSet})}.
\end{align}
From Definition~\ref{def.fv}, we have
\begin{align}\label{eq.temp_100702}
    \fv(\xsmall)=&\int_{\sd}\xsmall^T\bm{z}\frac{|\Czx|}{p}g(\bm{z}) d\xdensity(\bm{z})\text{ (by Definition~\ref{def.fv})}\nonumber\\
    =&\frac{1}{p}\sum_{j=1}^p\int_{\sd}\xsmall^T\bm{z} \bm{1}_{\{\bm{z}^T\Vzeroj>0,\ \xsmall^T\Vzeroj>0\}}g(\bm{z}) d\xdensity(\bm{z})\text{ (by Eq.~\eqref{eq.card_c})}\nonumber\\
    =&\frac{1}{p}\sum_{j=1}^p K_j^{\xsmall}.
\end{align}
Because $\Vzeroj$'s are \emph{i.i.d.}, $K_j^{\xsmall}$'s are also \emph{i.i.d.}.
Thus, we have
\begin{align}
    \expectation_{\Vzero}\fv(\xsmall)&=\fg(\xsmall).\label{eq.temp_100703}
\end{align}
% We also have
% \begin{align}
%     \expectation_{\Vzero}\fv(\xsmall)\nonumber&=\frac{1}{p}\sum_{j=1}^p K_j^{\xsmall}\text{ (by Eq.~\eqref{eq.temp_100702})}\nonumber\\
%     % =&\expectation_{\Vzero}\left(\frac{1}{p}\sum_{j=1}^p\int_{\sd}\xsmall^T\bm{z} \bm{1}_{\{\bm{z}^T\Vzeroj>0,\ \xsmall^T\Vzeroj>0\}}g(\bm{z}) d\xdensity(\bm{z})\right)\text{ (by Eq.~\eqref{eq.temp_100702})}\nonumber\\
%     % =&\expectation_{\bm{v}\sim \vdensity}\left(\int_{\sd}\xsmall^T\bm{z} \bm{1}_{\{\bm{z}^T\bm{v}>0,\ \xsmall^T\bm{v}>0\}}g(\bm{z}) d\xdensity(\bm{z})\right)\text{ (because all $\Vzeroj$'s follow \emph{i.i.d.} $\vdensity$)}\nonumber\\
%     % =&\expectation_{\bm{v}\sim \vdensity}\left(\int_{\sd}\xsmall^T\bm{z} \bm{1}_{\{\bm{z}^T\frac{\bm{v}}{\|\bm{v}\|_2}>0,\ \xsmall^T\frac{\bm{v}}{\|\bm{v}\|_2}>0\}}g(\bm{z}) d\xdensity(\bm{z})\right)\nonumber\\
%     % =&\int_{\sd}\int_{\sd}\xsmall^T\bm{z} \bm{1}_{\{\bm{z}^T\bm{v}>0,\ \xsmall^T\bm{v}>0\}}g(\bm{z}) d\xdensity(\bm{z})d\nvdensity(\bm{v})\nonumber\\
%     % &\text{ (because $\frac{\Vzeroj}{\|\Vzeroj\|_2}$ is uniformly distributed on $\sd$)}\nonumber\\
%     =&\int_{\sd}\xsmall^T\bm{z}\frac{\pi-\arccos (\xsmall^T\bm{z})}{2\pi}g(\bm{z}) d\xdensity(\bm{z})\text{ (by Lemma~\ref{le.change_seq})}\nonumber\\
%     =&\fg(\xsmall)\text{ (by Definition~\ref{def.fv})}.\label{eq.temp_100703}
% \end{align}
Further, for any $j\in\{1,2,\cdots,p\}$, we have
\begin{align}
    \|K_j^{\xsmall}\|_2=&|K_j^{\xsmall}|\text{ (because $K_j^{\xsmall}$ is a scalar)}\nonumber\\
    =&\left|\int_{\sd}\xsmall^T\bm{z} \bm{1}_{\{\bm{z}^T\Vzeroj>0,\ \xsmall^T\Vzeroj>0\}}g(\bm{z}) d\xdensity(\bm{z})\right|\nonumber\\
    \leq &\int_{\sd}\left|\xsmall^T\bm{z} \bm{1}_{\{\bm{z}^T\Vzeroj>0,\ \xsmall^T\Vzeroj>0\}}g(\bm{z})\right| d\xdensity(\bm{z})\nonumber\\
    \leq &\int_{\sd}\left|\xsmall^T\bm{z} \bm{1}_{\{\bm{z}^T\Vzeroj>0,\ \xsmall^T\Vzeroj>0\}}\right|\cdot\left|g(\bm{z})\right| d\xdensity(\bm{z})\nonumber\\
    \leq &\int_{\sd}\left|g(\bm{z})\right| d\xdensity(\bm{z})\nonumber\\
    =&\|g\|_1\label{eq.temp_120411}.
\end{align}
% \begin{align*}
%     \vari_{\Vzero}K_j^{\xsmall}\leq \|g\|_1^2.
% \end{align*}
% We also have
% \begin{align}\label{eq.temp_120401}
%     |K_j^{\xsmall}|\leq U\text{, if $|g(\bm{z})|\leq U$ for all $\bm{z}\in\sd$}.
% \end{align}
% Thus, by Eq.~\eqref{eq.temp_100702} and all $K_j^{\xsmall}$'s are \emph{i.i.d.}, we have
% \begin{align}\label{eq.temp_111003}
%     \vari_{\Vzero}\fv(\xsmall)\leq \frac{\|g\|_1^2}{p}.
% \end{align}
% By Chebyshev's inequality, for any $k>0$, we have
% \begin{align*}
%     \prob_{\Vzero}\left\{\left|\fv(\xsmall)-\expectation_{\Vzero}\fv(\xsmall)\right|\geq k\sqrt{\vari_{\Vzero}\fv(\xsmall)}\right\}\leq \frac{1}{k^2}.
% \end{align*}
% Applying Eqs.~\eqref{eq.temp_100703}\eqref{eq.temp_111003} and letting $k=p^{1/4}/\|g\|_1$, we have
% \begin{align*}
%     \prob_{\Vzero}\left\{\left|\fv(\xsmall)-\f(\xsmall)\right|\geq \frac{1}{\sqrt[4]{p}}\right\}\leq \frac{\|g\|_1^2}{\sqrt{p}}.
% \end{align*}
% As $\fv$ does not change with $\XX$, we thus have
% \begin{align*}
%     \prob_{\Vzero,\XX}\left\{\left|\fv(\xsmall)-\f(\xsmall)\right|\geq \frac{1}{\sqrt[4]{p}}\right\}\leq \frac{\|g\|_1^2}{\sqrt{p}},
% \end{align*}
% i.e.,
% \begin{align}\label{eq.temp_111103}
%     \prob_{\Vzero,\XX}\left\{\text{term }C\geq \frac{1}{\sqrt[4]{p}}\right\}\leq \frac{\|g\|_1^2}{\sqrt{p}}.
% \end{align}
% If $|g(\bm{z})|\leq U$ for all $\bm{z}\in\sd$, by Eq.~\eqref{eq.temp_120401} and Lemma~\ref{le.large_deviation}, we have
Thus, by Lemma~\ref{le.large_deviation}, we have
\begin{align*}
    \prob_{\Vzero}\left\{\left\|\left(\frac{1}{p}\sum_{j=1}^p K_j^{\xsmall}\right)-\expectation_{\Vzero}K_1\right\|_2\geq p^{-\frac{1}{2}\left(1-\frac{1}{q}\right)}\right\}\leq 2e^2\exp\left(-\frac{\sqrt[q]{p}}{8\|g\|_1^2}\right).
\end{align*}
Further, by Eq.~\eqref{eq.temp_100702} and Eq.~\eqref{eq.temp_120402}, we have
\begin{align*}
    \prob_{\Vzero}\left\{\left|\fv(\xsmall)-\fg(\xsmall)\right|\geq p^{-\frac{1}{2}\left(1-\frac{1}{q}\right)}\right\}\leq 2e^2\exp\left(-\frac{\sqrt[q]{p}}{8\|g\|_1^2}\right).
\end{align*}
Because $\f\stackrel{\text{a.e.}}{=}\fg$, we have
\begin{align*}
    \prob_{\Vzero}\left\{\left|\fv(\xsmall)-\f(\xsmall)\right|\geq p^{-\frac{1}{2}\left(1-\frac{1}{q}\right)}\right\}\leq 2e^2\exp\left(-\frac{\sqrt[q]{p}}{8\|g\|_1^2}\right).
\end{align*}
Because $\fv$ does not change with $\XX$, we thus have
\begin{align}\label{eq.temp_120403}
    \prob_{\Vzero,\XX}\left\{\text{term }C\geq p^{-\frac{1}{2}\left(1-\frac{1}{q}\right)}\right\}\leq 2e^2\exp\left(-\frac{\sqrt[q]{p}}{8\|g\|_1^2}\right).
\end{align}

\noindent\textbf{Step 4: estimate term $B$}

% Similar to Eq.~\eqref{eq.temp_100703} and Eq.~\eqref{eq.temp_111003}, we have
% \begin{align*}
%     \expectation_{\Vzero}\Fv(\XX)=\F(\XX),\quad \vari_{\Vzero}\Fv(\XX)\leq \frac{n\|g\|_1^2}{p}.
% \end{align*}
% Thus, by Lemma~\ref{le.Chebyshev}, for any $\delta>0$, we have
% \begin{align*}
%     \prob_{\Vzero}\left\{\left\|\Fv(\XX)-\F(\XX)\right\|_2\geq \delta\right\}\leq\frac{n\|g\|_1^2}{p\delta^2}.
% \end{align*}
% By letting $\delta=p^{-1/4}$, we have
% \begin{align*}
%     \prob_{\Vzero}\left\{\left\|\Fv(\XX)-\F(\XX)\right\|_2\geq \frac{1}{\sqrt[4]{p}}\right\}\leq\frac{n\|g\|_1^2}{\sqrt{p}}.
% \end{align*}
% Thus, we have
% \begin{align}
%     &\prob_{\Vzero,\XX}\left\{\left\|\Fv(\XX)-\F(\XX)\right\|_2\geq \frac{1}{\sqrt[4]{p}}\right\}\nonumber\\
%     =&\int_{\XX\in\mathds{R}^{d\times n}}\prob_{\Vzero}\left\{\left\|\Fv(\XX)-\F(\XX)\right\|_2\geq \frac{1}{\sqrt[4]{p}}\right\}d\xdensity(\XX)\nonumber\\
%     \leq&\int_{\XX\in\mathds{R}^{d\times n}}\frac{n\|g\|_1^2}{\sqrt{p}}d\xdensity(\XX)\nonumber\\
%     =&\frac{n\|g\|_1^2}{\sqrt{p}}.\label{eq.temp_111101}
% \end{align}
Our idea is to treat $\Fv(\XX)-\F(\XX)$ as a special form of noise, and then apply Proposition~\ref{prop.noise_large_p}. We first bound the magnitude of this special noise.
For $j=1,2,\cdots,p$, we define
\begin{align*}
    \mathbf{K}_j\defeq [K_j^{\XX_1}\ K_j^{\XX_2}\ \cdots\ K_j^{\XX_n}]^T.
\end{align*}
Then, we have
\begin{align*}
    \|\mathbf{K}_j\|_2= \sqrt{\sum_{i=1}^n \|K_j^{\XX_i}\|_2^2}\leq \sqrt{n}\|g\|_1\text{ (by Eq.~\eqref{eq.temp_120411})}.
\end{align*}
Similar to how we get Eq.~\eqref{eq.temp_120403} in Step 3, we have
\begin{align}\label{eq.temp_120404}
    \prob_{\Vzero,\XX}\left\{\left\|\Fv(\XX)-\F(\XX)\right\|_2\geq p^{-\frac{1}{2}\left(1-\frac{1}{q}\right)}\right\}\leq 2e^2\exp\left(-\frac{\sqrt[q]{p}}{8n\|g\|_1^2}\right).
\end{align}
Thus, we have
\begin{align}
    &\prob_{\Vzero,\XX}\left\{\text{term }B\geq \sqrt{J_m(n,d)n}p^{-\frac{1}{2}\left(1-\frac{1}{q}\right)}\right\}\nonumber\\
    =&\prob_{\Vzero,\XX}\left\{\text{term }B\geq \sqrt{J_m(n,d)n}p^{-\frac{1}{2}\left(1-\frac{1}{q}\right)},\left\|\Fv(\XX)-\F(\XX)\right\|_2\geq p^{-\frac{1}{2}\left(1-\frac{1}{q}\right)}\right\}\nonumber\\
    &+\prob_{\Vzero,\XX}\left\{\text{term }B\geq \sqrt{J_m(n,d)n}p^{-\frac{1}{2}\left(1-\frac{1}{q}\right)},\left\|\Fv(\XX)-\F(\XX)\right\|_2< p^{-\frac{1}{2}\left(1-\frac{1}{q}\right)}\right\}\nonumber\\
    \leq& \prob_{\Vzero,\XX}\left\{\left\|\Fv(\XX)-\F(\XX)\right\|_2\geq p^{-\frac{1}{2}\left(1-\frac{1}{q}\right)}\right\}\nonumber\\
    &+\prob_{\Vzero,\XX}\left\{\text{term }B\geq \sqrt{J_m(n,d)n}\left\|\Fv(\XX)-\F(\XX)\right\|_2\right\}\nonumber\\
    % \leq& \begin{cases}
    % \frac{n\|g\|_1^2}{\sqrt{p}}+\frac{2}{\sqrt[m]{n}}\\
    % e^2\exp\left(-\frac{\sqrt{p}}{8nU^2}\right)+\frac{2}{\sqrt[m]{n}}\text{, if $|g(\bm{z})|\leq U$ for all $\bm{z}\in\sd$}
    % \end{cases}
    \leq & 2e^2\exp\left(-\frac{\sqrt[q]{p}}{8n\|g\|_1^2}\right)+\frac{2}{\sqrt[m]{n}}
    \text{ (by  Eq.~\eqref{eq.temp_120404} and Proposition~\ref{prop.noise_large_p})}.\label{eq.temp_111104}
\end{align}

\noindent\textbf{Step 5: estimate $|\fl(\xsmall)-\f(\xsmall)|$}

We have
\begin{align*}
    &\prob_{\Vzero,\XX}\left\{|\fl(\xsmall)-\f(\xsmall)|\geq n^{-\frac{1}{2}\left(1-\frac{1}{q}\right)}+\frac{1+\sqrt{J_m(n,d)n}}{\sqrt[4]{p}}\right\}\\
    \leq &\prob_{\Vzero,\XX}\left\{\text{term }A+\text{term }B+\text{term }C\geq n^{-\frac{1}{2}\left(1-\frac{1}{q}\right)}+\frac{1+\sqrt{J_m(n,d)n}}{\sqrt[4]{p}}\right\}\text{ (by Eq.~\eqref{eq.temp_100704})}\\
    % \leq &\prob_{\XX,\Vzero}\left\{\text{term }A\leq \frac{1}{\sqrt[4]{n}},\ \text{term }B\leq 2^{1.5d+5.5}n^{(2+\frac{1}{m})(d-1)+1.5}p^{-\frac{1}{4}},\ \text{term }C\leq \frac{1}{\sqrt[4]{p}}\right\}\\
    \leq &\prob_{\XX,\Vzero}\left\{\left\{\text{term }A\geq n^{-\frac{1}{2}\left(1-\frac{1}{q}\right)}\right\}\cup\left\{\ \text{term }B\geq \sqrt{J_m(n,d)n}p^{-\frac{1}{2}\left(1-\frac{1}{q}\right)}\right\}\right.\\
    &\left.\cup\left\{\ \text{term }C\geq p^{-\frac{1}{2}\left(1-\frac{1}{q}\right)}\right\}\right\}\\
    \leq &\prob_{\XX,\Vzero}\left\{\text{term }A\geq n^{-\frac{1}{2}\left(1-\frac{1}{q}\right)}\right\}+\prob_{\Vzero,\XX}\left\{\text{term }B\geq \sqrt{J_m(n,d)n}p^{-\frac{1}{2}\left(1-\frac{1}{q}\right)}\right\}\\
    &+\prob_{\Vzero,\XX}\left\{\text{term }C\geq p^{-\frac{1}{2}\left(1-\frac{1}{q}\right)}\right\}\text{(by the union bound)}\\
    \leq &2e^2\left(\exp\left(-\frac{\sqrt[q]{n}}{8\|g\|_\infty^2}\right)+\exp\left(-\frac{\sqrt[q]{p}}{8\|g\|_1^2}\right)+\exp\left(-\frac{\sqrt[q]{p}}{8n\|g\|_1^2}\right)\right)+\frac{2}{\sqrt[m]{n}}
    % \begin{cases}
    % \frac{\|g\|_2^2}{\sqrt{n}}\\
    % 2e^2\exp\left(-\frac{\sqrt{n}}{8U^2}\right)\text{, if $|g(\bm{z})|\leq U$ for all $\bm{z}\in\sd$}.
    % \end{cases}\\
    % \leq &2e^2\left(\exp\left(-\frac{\sqrt{n}}{8U^2}\right)+\exp\left(-\frac{\sqrt{p}}{8\|g\|_1^2}\right)+\exp\left(-\frac{\sqrt{p}}{8n\|g\|_1^2}\right)\right)+\frac{2}{\sqrt[m]{n}}\\
    \\
    &\text{ (by Eqs.~\eqref{eq.temp_111102}\eqref{eq.temp_120403}\eqref{eq.temp_111104})}.
\end{align*}
The last step exactly gives the conclusion of this proposition.

\end{proof}
Theorem~\ref{th.combine} thus follows by Proposition~\ref{prop.noise_large_p}, Proposition~\ref{th.large_p}, Eq.~\eqref{eq.bias_and_variance_decomposition}, and the union bound.
\section{Proof of Proposition~\ref{th.equal_set} (lower bound for ground-truth functions outside \texorpdfstring{$\overline{\learnableSet}$}{the learnable set})}\label{ap.equal_set}
We first show what $\flinf$ looks like.
Define $\mathbf{H}^\infty\in\mathds{R}^{n\times n}$ where its $(i,j)$-th element is
\begin{align*}
    \mathbf{H}^\infty_{i,j}=\Xii^T\Xjj\frac{\pi-\arccos(\Xii^T\Xjj)}{2\pi}.
\end{align*}
Notice that
\begin{align*}
    \left(\frac{\Ht\Ht^T}{p}\right)_{i,j}=\frac{1}{p}\sum_{k=1}^p \Xii^T\Xjj \bm{1}_{\{\Xii^T \Vzerok >0, \Xjj^T \Vzerok>0\}}=\Xii^T\Xjj \frac{|\mathcal{C}^{\Vzero}_{\Xii,\Xjj}|}{p}.
\end{align*}
By Lemma~\ref{le.c_divide_p_convergence}, we have that $\left(\frac{\Ht\Ht^T}{p}\right)_{i,j}$ converges in probability to $(\Ht^\infty)_{i,j}$ as $p\to\infty$ uniformly in $i,j$. In other words,
\begin{align}\label{eq.H_inf_convergence}
    \max_{i,j}\left|\left(\frac{\Ht\Ht^T}{p}\right)_{i,j}-(\Ht^\infty)_{i,j}\right|\stackrel{\text{P}}{\rightarrow}0,\text{ as } p\to\infty.
\end{align}

Let $\{\bm{e}_i\ |\ 1\leq i\leq n\}$ denote the standard basis in $\mathds{R}^n$. For $i=1,2,\cdots,n$, define
\begin{align}\label{eq.def_gbi}
    &\gbi\defeq np\bm{e}_i^T(\Ht\Ht^T)^{-1}\ysmall,
\end{align}
which is a number. Further, define
\begin{align*}
    [\gb_{1,p}\ \gb_{2,p}\ \cdots\ \gb_{n,p}]^T=np(\Ht\Ht^T)^{-1}\ysmall.
\end{align*}
Further, define the number
\begin{align*}
    \gbinf\defeq n\bm{e}_i^T(\Ht^\infty)^{-1}\ysmall,
\end{align*}
and 
\begin{align*}
    [\gb_{1,\infty}\ \gb_{2,\infty}\ \cdots\ \gb_{n,\infty}]^T=n(\Ht^\infty)^{-1}\ysmall.
\end{align*}
Notice that $(\Ht^\infty)^{-1}$ exists because of Eq.~\eqref{eq.H_inf_convergence} and Lemma~\ref{le.full_rank}. 

By Eq.~\eqref{eq.H_inf_convergence}, we have
\begin{align}\label{eq.gbi_convergence}
    \max_{i\in\{1,2,\cdots,n\}}|\gbi-\gbinf|\stackrel{\text{P}}{\rightarrow}0,\text{ as } p\to\infty.
\end{align}

For any given $\XX$, we define $\flinf(\cdot):\ \sd\mapsto \mathds{R}$ as
\begin{align}\label{eq.def_flinf}
    \flinf(\xsmall)\defeq\frac{1}{n}\sum_{i=1}^n\xsmall^T\Xii\frac{\pi-\arccos(\xsmall^T\Xii)}{2\pi}\gbinf.
\end{align}
By the definition of the Dirac delta function $\delta_a(\cdot)$ with peak position at $a$, we can write $\flinf(\xsmall)$ as an integral
\begin{align*}
    \flinf(\xsmall)= \int_{\sd}\xsmall^T\bm{z}\frac{\pi-\arccos(\xsmall^T\bm{z})}{2\pi}\frac{1}{n}\sum_{i=1}^n\gbinf\delta_{\Xii}(\bm{z}) d\xdensity(\bm{z}).
\end{align*}
Notice that $\gbinf$ only depends on the training data and does not change with $p$ (and thus is finite). Therefore, we have $\flinf\in\learnableSet$. It remains to show why $\fl$ converges to $\flinf$ in probability. The following lemma shows what $\fl$ looks like.

\begin{lemma}\label{le.fl_delta}
$\fl(\xsmall)=\frac{1}{n}\sum_{i=1}^n\xsmall^T\Xii\frac{|\CXix|}{p}\gbi=\int_{\sd}\xsmall^T\bm{z}\frac{|\Czx|}{p}\frac{1}{n}\sum_{i=1}^n \gbi\delta_{\Xii}(\bm{z})d\xdensity(\bm{z})$.
\end{lemma}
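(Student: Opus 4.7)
The plan is to verify both equalities by direct expansion from the definitions, as Lemma~\ref{le.fl_delta} is essentially an unpacking exercise rather than a non-trivial result. I would start from the closed form $\fl(\xsmall)=\hx\Ht^T(\Ht\Ht^T)^{-1}\ysmall$, obtained by combining Eq.~\eqref{def.fl} and Eq.~\eqref{eq.DVl}, and first compute the row vector $\hx\Ht^T\in\mathds{R}^{1\times n}$ entry by entry. Using Eq.~\eqref{eq.def_hx} for $\hx[j]$ together with the analogous expression $\Hti[j]=\bm{1}_{\{\Xii^T\Vzeroj>0\}}\Xii^T$, the $i$-th coordinate of $\hx\Ht^T$ reduces to $\xsmall^T\Xii$ multiplied by $\sum_{j=1}^p \bm{1}_{\{\xsmall^T\Vzeroj>0,\ \Xii^T\Vzeroj>0\}}$, which is exactly $|\CXix|$ by Eq.~\eqref{eq.card_c}.

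Next I would invoke the definition $\gbi=np\bm{e}_i^T(\Ht\Ht^T)^{-1}\ysmall$ from Eq.~\eqref{eq.def_gbi} to replace $(\Ht\Ht^T)^{-1}\ysmall$ by the column vector whose $i$-th entry is $\gbi/(np)$. Combining with the previous step yields
\begin{align*}
\fl(\xsmall)=\sum_{i=1}^n \xsmall^T\Xii\cdot|\CXix|\cdot\frac{\gbi}{np}=\frac{1}{n}\sum_{i=1}^n \xsmall^T\Xii\frac{|\CXix|}{p}\gbi,
\end{align*}
which is the first equality in the lemma.

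The second equality is then just the sifting property of the Dirac delta on $\sd$: since $\int_{\sd}\phi(\bm{z})\delta_{\Xii}(\bm{z})d\xdensity(\bm{z})=\phi(\Xii)$ for any continuous test function $\phi$, applying this with $\phi(\bm{z})=\xsmall^T\bm{z}\frac{|\Czx|}{p}$ and then summing over $i$ with weights $\gbi/n$ reproduces the first expression exactly. There is no analytic obstacle; the lemma is a bookkeeping identity whose purpose is to exhibit $\fl$ as a member of the template in Definition~\ref{def.learnableSet} with $g$ taken to be $\tfrac{1}{n}\sum_i \gbi \delta_{\Xii}$, thereby setting up the comparison with $\flinf$ in Eq.~\eqref{eq.def_flinf} via the uniform convergence in Eq.~\eqref{eq.c_p_convergence} and the scalar convergence in Eq.~\eqref{eq.gbi_convergence}.
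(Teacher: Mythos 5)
Your proposal is correct and follows essentially the same route as the paper: expand $\hx\Ht^T$ entrywise via Eq.~\eqref{eq.def_hx} and Eq.~\eqref{eq.card_c} to obtain $\xsmall^T\Xii|\CXix|$, substitute the definition of $\gbi$ from Eq.~\eqref{eq.def_gbi}, and then rewrite the finite sum as an integral against $\delta_{\Xii}$. No gaps.
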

\begin{proof}
For any $\xsmall\in\sd$, we have
\begin{align*}
    \fl(\xsmall)&=\hx \DVl\\
    &=\hx \Ht^T(\Ht\Ht^T)^{-1}\ysmall\text{ (by Eq.~\eqref{eq.DVl})}\\
    &=\hx\sum_{i=1}^n\Hti^T\bm{e}_i^T(\Ht\Ht^T)^{-1}\ysmall\\
    &=\frac{1}{np}\sum_{i=1}^n\hx\Hti^T\gbi\text{ (by Eq.~\eqref{eq.def_gbi})}\\
    &=\frac{1}{np}\sum_{i=1}^n\sum_{j=1}^p\xsmall^T\Xii\bm{1}_{\{\Xii^T\Vzeroj>0,\ \xsmall^T\Vzeroj>0\}}\gbi.
\end{align*}
By Eq.~\eqref{eq.card_c}, we thus have
\begin{align}\label{eq.temp_012401}
    \fl(\xsmall)=\frac{1}{n}\sum_{i=1}^n\xsmall^T\Xii\frac{|\CXix|}{p}\gbi.
\end{align}
By the definition of the Dirac delta function, we have
\begin{align*}
    \flmap(\xsmall)&=\frac{1}{n}\sum_{i=1}^n\xsmall^T\Xii\frac{|\CXix|}{p}\gbi=\int_{\sd}\xsmall^T\bm{z}\frac{|\Czx|}{p}\frac{1}{n}\sum_{i=1}^n \gbi\delta_{\Xii}(\bm{z})d\xdensity(\bm{z}).
\end{align*}
\end{proof}
Now we are ready to prove the statement of Proposition~\ref{th.equal_set}, i.e., uniformly over all $\xsmall\in\sd$,  $\fl(\xsmall)\stackrel{\text{P}}{\rightarrow}\flinf(\xsmall)$ as $p\to\infty$ (notice that we have already shown that $\flinf\in\learnableSet$). To be more specific, we restate that uniform convergence as the following lemma.
\begin{lemma}\label{le.uniform_convergence_sup_x}
For any given $\XX$, $\sup\limits_{\xsmall\in\sd}|\fl(\xsmall)-\flinf(\xsmall)|\stackrel{\text{P}}{\rightarrow}0$ as $p\to\infty$.
\end{lemma}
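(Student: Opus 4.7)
The plan is to directly compare the two closed-form expressions for $\fl$ and $\flinf$ and use the two ingredients already established in the excerpt: the uniform convergence of $|\Czx|/p$ from Lemma~\ref{le.c_divide_p_convergence}, and the in-probability convergence $\gbi \to \gbinf$ from Eq.~\eqref{eq.gbi_convergence}. Using Lemma~\ref{le.fl_delta} and the definition Eq.~\eqref{eq.def_flinf}, I would write
\begin{align*}
\fl(\xsmall)-\flinf(\xsmall)=\frac{1}{n}\sum_{i=1}^n \xsmall^T\Xii\Big(\alpha_i(\xsmall)\gbi-\beta_i(\xsmall)\gbinf\Big),
\end{align*}
where $\alpha_i(\xsmall)\defeq |\CXix|/p$ and $\beta_i(\xsmall)\defeq(\pi-\arccos(\xsmall^T\Xii))/(2\pi)$. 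The key algebraic step is the add-and-subtract decomposition
\begin{align*}
\alpha_i\gbi-\beta_i\gbinf=(\alpha_i-\beta_i)\gbi+\beta_i(\gbi-\gbinf),
\end{align*}
which splits the error into a ``harmonics'' piece (where only $\alpha_i-\beta_i$ varies with $\xsmall$) and a ``coefficients'' piece (where $\gbi-\gbinf$ does not depend on $\xsmall$ at all).

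Next I would take absolute values, use $|\xsmall^T\Xii|\leq 1$ and $|\beta_i(\xsmall)|\leq 1/2$, and then take the supremum over $\xsmall\in\sd$ to obtain
\begin{align*}
\sup_{\xsmall\in\sd}|\fl(\xsmall)-\flinf(\xsmall)|\leq \Big(\max_{1\leq i\leq n}|\gbi|\Big)\cdot\sup_{\xsmall,\bm{z}\in\sd}\left|\frac{|\Czx|}{p}-\frac{\pi-\arccos(\xsmall^T\bm{z})}{2\pi}\right|+\frac{1}{2}\max_{1\leq i\leq n}|\gbi-\gbinf|.
\end{align*}
The supremum on the right in the first term tends to $0$ in probability by Lemma~\ref{le.c_divide_p_convergence} (this is precisely where the uniformity over $\xsmall$ is needed, and it is already supplied by the Glivenko--Cantelli argument). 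The second term tends to $0$ in probability by Eq.~\eqref{eq.gbi_convergence}.

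The one remaining issue is to control the prefactor $\max_i|\gbi|$. Since $\gbinf$ is a deterministic finite constant for fixed $(\XX,\ysmall)$ (with $(\Ht^\infty)^{-1}$ well-defined because the limiting kernel matrix is invertible, as used implicitly right after Eq.~\eqref{eq.H_inf_convergence}), Eq.~\eqref{eq.gbi_convergence} gives $\max_i|\gbi|\leq \max_i|\gbinf|+o_{\mathbb{P}}(1)$, which is bounded in probability. Combining this bound with the fact that the supremum term is $o_{\mathbb{P}}(1)$, the product is still $o_{\mathbb{P}}(1)$ via a routine union-bound argument over the two ``good'' events, and the conclusion follows. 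The main subtlety (and the only nontrivial step) is ensuring the uniformity over $\xsmall$ in the first term: the pointwise convergence of $|\Czx|/p$ is an easy Bernoulli concentration, but the \emph{uniform} version needed here is exactly what Lemma~\ref{le.c_divide_p_convergence} provides, via the polynomial-discrimination estimate on the function class $\{\bm{1}_{\{\xsmall^T\bm{v}>0,\bm{z}^T\bm{v}>0\}}\}$.
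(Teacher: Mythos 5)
Your proof is correct and follows essentially the same route as the paper's: both rest on the uniform convergence of $|\Czx|/p$ from Lemma~\ref{le.c_divide_p_convergence}, the convergence $\gbi\to\gbinf$ from Eq.~\eqref{eq.gbi_convergence}, an add-and-subtract decomposition, and a union bound over the two good events. The only (cosmetic) difference is that the paper writes $\alpha_i\gbi-\beta_i\gbinf=(\alpha_i-\beta_i)\gbinf+(\gbi-\gbinf)\alpha_i$, attaching the deterministic $\gbinf$ to the harmonics difference so that no stochastic boundedness of $\max_i|\gbi|$ is needed, whereas your mirror-image decomposition requires the extra step of showing $\max_i|\gbi|=O_{\mathbb{P}}(1)$ — which you correctly supply.
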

\begin{proof}
For any $\zeta>0$, define two events:
\begin{align*}
    &\mathcal{J}_1\defeq\left\{\sup_{\xsmall,\bm{z}\in\sd}\left|\frac{|\Czx|}{p}-\frac{\pi-\arccos (\xsmall^T\bm{z})}{2\pi}\right|<\zeta\right\},\\
    &\mathcal{J}_2\defeq\left\{\max_{i\in\{1,2,\cdots,n\}}|\gbi-\gbinf|<\zeta\right\}.
\end{align*}

By Lemma~\ref{le.c_divide_p_convergence}, there exists a threshold $p_0$ such that for any $p>p_0$,
\begin{align*}
    \prob [\mathcal{J}_1]>1-\zeta.
\end{align*}
By Eq.~\eqref{eq.gbi_convergence}, there exists a threshold $p_1$ such that for any $p>p_1$,
\begin{align*}
    \prob [\mathcal{J}_2]>1-\zeta.
\end{align*}
Thus, by the union bound, when $p>\max\{p_0,p_1\}$, we have
\begin{align}\label{eq.temp_020801}
    \prob[\mathcal{J}_1 \cap \mathcal{J}_2]>1-2\zeta.
\end{align}
When $\mathcal{J}_1 \cap \mathcal{J}_2$ happens, we have
\begin{align*}
    &\sup\limits_{\xsmall\in\sd}|\fl(\xsmall)-\flinf(\xsmall)|\\
    =&\sup\limits_{\xsmall\in\sd}\left|\frac{1}{n}\sum_{i=1}^n\xsmall^T\Xii\left(\frac{|\CXix|}{p}\gbi - \frac{\pi-\arccos(\xsmall^T\Xii)}{2\pi}\gbinf\right)\right|\\
    &\text{ (by Lemma~\ref{le.fl_delta} and Eq.~\eqref{eq.def_flinf})}\\
    \leq & \sup\limits_{\xsmall\in\sd, i\in\{1,2,\cdots,n\}}\left|\left(\frac{|\CXix|}{p}\gbi - \frac{\pi-\arccos(\xsmall^T\Xii)}{2\pi}\gbinf\right)\right|\text{ (because $|\bm{x}^T\Xii|\leq 1$)}\\
    =&\sup\limits_{\xsmall\in\sd, i\in\{1,2,\cdots,n\}}\left|\left(\frac{|\CXix|}{p}-\frac{\pi-\arccos(\xsmall^T\Xii)}{2\pi}\right)\gbinf + (\gbi-\gbinf)\frac{|\CXix|}{p}\right|\\
    \leq &\sup\limits_{\xsmall\in\sd, i\in\{1,2,\cdots,n\}}\left|\left(\frac{|\CXix|}{p}-\frac{\pi-\arccos(\xsmall^T\Xii)}{2\pi}\right)\gbinf\right| +\left| (\gbi-\gbinf)\frac{|\CXix|}{p}\right|\\
    \leq & \zeta\cdot\left(\max_i|\gbinf|+1\right)\text{ (because $\mathcal{J}_1 \cap \mathcal{J}_2$ happens, $\frac{|\CXix|}{p}\in[0,1]$, and $\frac{\pi-\arccos(\xsmall^T\Xii)}{2\pi}\in[0, 0.5]$)}.
\end{align*}
Because $\max_i|\gbinf|$ is fixed when $\XX$ is given, $\zeta\cdot\left(\max_i|\gbinf|+1\right)$ can be arbitrarily small as long as $\zeta$ is small enough. The conclusion of this lemma thus follows by Eq.~\eqref{eq.temp_020801}.
\end{proof}

If the ground-truth function $\f\notin \overline{\learnableSet}$ (or equivalently, $D(\f,\learnableSet)>0$), then the MSE of $\flinf$ (with respect to the ground-truth function $\f$) is at least $D(\f,\learnableSet)$ (because $\flinf\in\learnableSet$). 
% Related to the MSE of $\fl$, we have the following lemma.
Therefore, we have proved Proposition~\ref{th.equal_set}. Below we state an even stronger result than part (ii) of Proposition~\ref{th.equal_set}, i.e., it captures not only the MSE of $\flinf$, but also that of $\fl$ for sufficiently large $p$. 

\begin{lemma}
For any given $\XX$ and $\zeta>0$, there exists a threshold $p_0$ such that for all $p>p_0$, $\prob\{\sqrt{\text{MSE}}\geq D(\f,\learnableSet)-\zeta\}>1-\zeta$.
\end{lemma}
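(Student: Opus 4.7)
The plan is to combine the uniform convergence $\fl \to \flinf$ in probability (Lemma~\ref{le.uniform_convergence_sup_x}) with the fact that $\flinf \in \learnableSet$ (established in the preceding part of Proposition~\ref{th.equal_set}) and a reverse triangle inequality in $L^2(\sd)$. Since $\flinf \in \learnableSet$, the infimum characterization of $D(\f,\learnableSet)$ immediately gives
\begin{align*}
\|\f - \flinf\|_{L^2} \;\geq\; D(\f,\learnableSet).
\end{align*}
Write $\sqrt{\text{MSE}} = \|\fl - \f\|_{L^2}$. By the reverse triangle inequality,
\begin{align*}
\|\fl - \f\|_{L^2} \;\geq\; \|\flinf - \f\|_{L^2} - \|\fl - \flinf\|_{L^2} \;\geq\; D(\f,\learnableSet) - \|\fl - \flinf\|_{L^2}.
\end{align*}
Thus it suffices to show that for any $\zeta>0$ there is a $p_0$ such that for $p>p_0$, $\|\fl - \flinf\|_{L^2} < \zeta$ with probability $>1-\zeta$.

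First, I would convert uniform convergence into $L^2$-convergence. Since $\xsmall$ is distributed on the compact space $\sd$ of unit total probability mass, one has the trivial bound
\begin{align*}
\|\fl - \flinf\|_{L^2}^2 \;=\; \expectation_{\xsmall}\left[(\fl(\xsmall) - \flinf(\xsmall))^2\right] \;\leq\; \left(\sup_{\xsmall\in\sd}|\fl(\xsmall) - \flinf(\xsmall)|\right)^2.
\end{align*}
Hence $\|\fl - \flinf\|_{L^2} \leq \sup_{\xsmall\in\sd}|\fl(\xsmall) - \flinf(\xsmall)|$. By Lemma~\ref{le.uniform_convergence_sup_x}, this supremum converges to $0$ in probability as $p\to\infty$, so for sufficiently large $p$ we have $\|\fl - \flinf\|_{L^2} < \zeta$ with probability greater than $1-\zeta$.

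Combining the three ingredients: on this high-probability event we obtain
\begin{align*}
\sqrt{\text{MSE}} \;\geq\; D(\f,\learnableSet) - \|\fl - \flinf\|_{L^2} \;\geq\; D(\f,\learnableSet) - \zeta,
\end{align*}
which is the desired conclusion. The only mildly delicate step is making sure that the uniform-convergence statement from Lemma~\ref{le.uniform_convergence_sup_x} (which gives convergence in probability of a supremum) can be transferred to a single high-probability event controlling the $L^2$-norm; but this follows directly from the pointwise-bounded-by-sup inequality above, so there is no real obstacle beyond invoking Lemma~\ref{le.uniform_convergence_sup_x} with the parameter choice $\zeta$ (or a suitable constant multiple of $\zeta$). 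No growth-in-$n$ or randomness-in-$\XX$ argument is needed here, since $\XX$ is fixed and all statements are taken as $p \to \infty$.
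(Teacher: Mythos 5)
Your proposal is correct and follows essentially the same route as the paper: invoke Lemma~\ref{le.uniform_convergence_sup_x} to get $\sup_{\xsmall\in\sd}|\fl(\xsmall)-\flinf(\xsmall)|<\zeta$ with probability $>1-\zeta$, bound the $L^2$-distance by the supremum (the measure is a probability measure), and conclude via the reverse triangle inequality together with $D(\flinf,\f)\geq D(\f,\learnableSet)$ since $\flinf\in\learnableSet$. No gaps.
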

\begin{proof}
By Lemma~\ref{le.uniform_convergence_sup_x}, for any $\zeta>0$, there must exist a threshold $p_0$ such that for all $p>p_0$,
\begin{align*}
    \prob\left\{\sup\limits_{\xsmall\in\sd}|\fl(\xsmall)-\flinf(\xsmall)|<\zeta\right\}>1-\zeta.
\end{align*}
When $\sup\limits_{\xsmall\in\sd}|\fl(\xsmall)-\flinf(\xsmall)|<\zeta$, we have
\begin{align*}
    D(\fl, \flinf)=&\sqrt{\int_{\sd}\left(\fl(\xsmall)-\flinf(\xsmall)\right)^2d\xdensity(\xsmall)}\leq \zeta.
\end{align*}
Because $\flinf\in\learnableSet$, we have $D(\flinf, \f)\geq D(\f, \learnableSet)$. Thus, by the triangle inequality, we have $D(\f,\fl)\geq D(\f,\flinf)-D(\fl,\flinf)\geq D(\f, \learnableSet)-\zeta$. Putting these together, we have
\begin{align*}
    \prob\left\{D(\f, \fl)\geq D(\f,\learnableSet)-\zeta\right\}>1-\zeta.
\end{align*}
Notice that $\text{MSE}=(D(\f,\fl))^2$. The result of this lemma thus follows.
\end{proof}

\section{Details for Section~\ref{sec.feature_of_set} (hyper-spherical harmonics decomposition on \texorpdfstring{$\sd$}{the unit hyper-sphere})}\label{app.d_larger_than_three}

\subsection{Convolution on \texorpdfstring{$\sd$}{hyper-sphere}}\label{app.convolution}
First, we introduce the definition of the convolution on $\sd$. In \cite{dokmanic2009convolution}, the convolution on $\sd$ is defined as follows.
\begin{align*}
    f_1 \circledast f_2(\xsmall)\defeq\int_{\mathsf{SO}(d)}f_1(\mathbf{S}\bm{e})f_2(\mathbf{S}^{-1}\xsmall)d\mathbf{S},
\end{align*}
where $\mathbf{S}$ is a $d\times d$ orthogonal matrix that denotes a rotation in $\sd$, chosen from the set $\mathsf{SO}(d)$ of all rotations.
In the following, we will show Eq.~\eqref{eq.convolution}. To that end, we have
\begin{align}\label{eq.temp_112601}
    g\circledast\hf(\xsmall)&=\int_{\mathsf{SO}(d)}g(\mathbf{S}\bm{e})\hf(\mathbf{S}^{-1}\xsmall)d\mathbf{S}.
\end{align}
Now, we replace $\mathbf{S}\bm{e}$ by $\bm{z}$. Thus, we have
\begin{align*}
    \mathbf{S}\bm{e}=\bm{z}\implies \bm{e}=\mathbf{S}^{-1}\bm{z}\implies (\mathbf{S}^{-1}\xsmall)^T\bm{e}=(\mathbf{S}^{-1}\xsmall)^T\mathbf{S}^{-1}\bm{z}\implies (\mathbf{S}^{-1}\xsmall)^T\bm{e}=\xsmall^T(\mathbf{S}^{-1})^T\mathbf{S}^{-1}\bm{z}.
\end{align*}
Because $\mathbf{S}$ is an orthonormal matrix, we have $\mathbf{S}^T=\mathbf{S}^{-1}$. Therefore, we have $(\mathbf{S}^{-1}\xsmall)^T\bm{e}=\xsmall^T\bm{z}$. Thus, by Eq.~\eqref{eq.h_in_convolution}, we have
\begin{align}\label{eq.temp_112602}
    \hf(\mathbf{S}^{-1}\xsmall)=(\mathbf{S}^{-1}\xsmall)^T\bm{e}\frac{\pi-\arccos ((\mathbf{S}^{-1}\xsmall)^T\bm{e})}{2\pi}=\xsmall^T\bm{z}\frac{\pi-\arccos (\xsmall^T\bm{z})}{2\pi}.
\end{align}
By plugging Eq.~\eqref{eq.temp_112602} into Eq.~\eqref{eq.temp_112601}, we have
\begin{align*}
    g\circledast\hf(\xsmall)&=\int_{\sd}g(\bm{z})\xsmall^T\bm{z}\frac{\pi-\arccos (\xsmall^T\bm{z})}{2\pi}d\xdensity(\bm{z}).
\end{align*}
Eq.~\eqref{eq.convolution} thus follows.

The following lemma shows the intrinsic symmetry of such a convolution.
\begin{lemma}
Let $\mathbf{S}\in\mathds{R}^{d\times d}$ denotes any rotation in $\mathds{R}^d$. If $\f(\xsmall)\in\learnableSet$, then $f(\mathbf{S}\xsmall)\in \learnableSet$. 
\end{lemma}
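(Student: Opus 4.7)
The plan is direct: given $\f\in\learnableSet$ with associated $g$ as in Definition~\ref{def.learnableSet}, construct an explicit representation of $\f(\mathbf{S}\xsmall)$ in the same form by a change of variables, producing the new profile $g'(\bm{z})\defeq g(\mathbf{S}\bm{z})$. This leverages two classical facts about rotations $\mathbf{S}\in\mathsf{SO}(d)$: (i) they preserve the Euclidean inner product, so $(\mathbf{S}\xsmall)^T(\mathbf{S}\bm{z}')=\xsmall^T\bm{z}'$; and (ii) they preserve the uniform measure $\xdensity$ on $\sd$, so $d\xdensity(\mathbf{S}\bm{z}')=d\xdensity(\bm{z}')$.

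First, I would start from the identity
\begin{align*}
    \f(\mathbf{S}\xsmall)\stackrel{\text{a.e.}}{=}\int_{\sd}(\mathbf{S}\xsmall)^T\bm{z}\,\frac{\pi-\arccos((\mathbf{S}\xsmall)^T\bm{z})}{2\pi}\,g(\bm{z})\,d\xdensity(\bm{z}),
\end{align*}
which is valid wherever the Definition~\ref{def.learnableSet} representation of $\f$ holds. Next, substitute $\bm{z}=\mathbf{S}\bm{z}'$. Since $\mathbf{S}$ is a bijection of $\sd$ that preserves $\xdensity$, we obtain
\begin{align*}
    \f(\mathbf{S}\xsmall)\stackrel{\text{a.e.}}{=}\int_{\sd}\xsmall^T\bm{z}'\,\frac{\pi-\arccos(\xsmall^T\bm{z}')}{2\pi}\,g(\mathbf{S}\bm{z}')\,d\xdensity(\bm{z}').
\end{align*}
Setting $g'(\bm{z}')\defeq g(\mathbf{S}\bm{z}')$ then exhibits $\f(\mathbf{S}\cdot)$ as $\fg'$ in the sense of Definition~\ref{def.learnableSet}. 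The $L^1$ condition is preserved: $\|g'\|_1=\int_{\sd}|g(\mathbf{S}\bm{z}')|\,d\xdensity(\bm{z}')=\int_{\sd}|g(\bm{z})|\,d\xdensity(\bm{z})=\|g\|_1<\infty$, again by the rotation-invariance of $\xdensity$. Hence $\f(\mathbf{S}\cdot)\in\learnableSet$.

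The only mild subtlety—and what I expect to be the main obstacle, although a very minor one—is the case where $g$ contains Dirac $\delta$-components, since then $g'=g(\mathbf{S}\cdot)$ must be interpreted in the signed-measure sense noted in the footnote to Definition~\ref{def.learnableSet}. For a point mass $g=\delta_{\bm{z}_0}$, the change of variables gives $g'=\delta_{\mathbf{S}^{-1}\bm{z}_0}$, and the identity $\int_{\sd}\delta_{\bm{z}_0}(\bm{z})\varphi(\bm{z})\,d\xdensity(\bm{z})=\varphi(\bm{z}_0)$ combined with rotation-invariance of $\xdensity$ yields the same computation as above; $\|g'\|_1=\|\delta_{\mathbf{S}^{-1}\bm{z}_0}\|_1=1=\|g\|_1$. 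By linearity, the argument extends to any sum of $\delta$-functions and $L^1$-components, covering all admissible $g$ in Definition~\ref{def.learnableSet}.
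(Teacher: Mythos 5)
Your proof is correct and takes essentially the same route as the paper: a change of variables $\bm{z}=\mathbf{S}\bm{z}'$ exploiting that rotations preserve both the inner product and the uniform measure $\xdensity$ on $\sd$, yielding the new profile $g(\mathbf{S}\cdot)$. You additionally verify $\|g(\mathbf{S}\cdot)\|_1=\|g\|_1<\infty$ and treat the $\delta$-function components explicitly, which the paper's proof leaves implicit.
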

\begin{proof}
Because $\f(\xsmall)\in\learnableSet$, we can find $g$ such that
\begin{align*}
    \f(\xsmall)=\int_{\sd}\xsmall^T\bm{z}\frac{\pi-\arccos(\xsmall^T\bm{z})}{2\pi}g(\bm{z})d\xdensity(\bm{z}).
\end{align*}
Thus, we have
\begin{align*}
    \f(\mathbf{S}\bm{x})=&\int_{\sd}(\mathbf{S}\xsmall)^T\bm{z}\frac{\pi-\arccos((\mathbf{S}\xsmall)^T\bm{z})}{2\pi}g(\bm{z})d\xdensity(\bm{z})\\
    =&\int_{\sd}\xsmall^T(\mathbf{S}^T\bm{z})\frac{\pi-\arccos(\xsmall^T(\mathbf{S}^T\bm{z}))}{2\pi}g(\bm{z})d\xdensity(\bm{z})\\
    =&\int_{\sd}\xsmall^T(\mathbf{S}^T\bm{z})\frac{\pi-\arccos(\xsmall^T(\mathbf{S}^T\bm{z}))}{2\pi}g(\mathbf{S}\mathbf{S}^T\bm{z})d\xdensity(\bm{z})\\
    &\text{ (because $\mathbf{S}$ is a rotation, we have $\mathbf{S}\mathbf{S}^T=\mathbf{I}$)}\\
    =&\int_{\sd}\xsmall^T\bm{z}\frac{\pi-\arccos(\xsmall^T\bm{z})}{2\pi}g(\mathbf{S}\bm{z})d\xdensity(\mathbf{S}\bm{z})\text{ (replace $\mathbf{S}^T\bm{z}$ by $\bm{z}$)}\\
    =&\int_{\sd}\xsmall^T\bm{z}\frac{\pi-\arccos(\xsmall^T\bm{z})}{2\pi}g(\mathbf{S}\bm{z})d\xdensity(\bm{z})\text{ (by Assumption~\ref{as.uniform})}
\end{align*}
The result of this lemma thus follows.
\end{proof}

\subsection{Hyper-spherical harmonics}

We follow the the conventions of hyper-spherical harmonics in \cite{dokmanic2009convolution}.  We express $\xsmall=[\xsmall_1\ \xsmall_2\ \cdots\ \xsmall_d]\in\sd$
in a set of hyper-spherical polar coordinates as follows.
\begin{align*}
    \xsmall_1&=\sin \theta_{d-1}\sin\theta_{d-2}\cdots\sin\theta_2\sin\theta_1,\\
    \xsmall_2&=\sin\theta_{d-1}\sin\theta_{d-2}\cdots\sin\theta_2\cos\theta_1,\\
    \xsmall_3&=\sin\theta_{d-1}\sin\theta_{d-2}\cdots\cos\theta_2,\\
    &\quad \vdots\\
    \xsmall_{d-1}&=\sin\theta_{d-1}\cos\theta_{d-2},\\
    \xsmall_d&=\cos\theta_{d-1}.
\end{align*}
Notice that $\theta_1\in[0,\ 2\pi)$ and $\theta_2,\theta_3,\cdots,\theta_{d-1}\in [0,\pi)$.
Let $\xi=[\theta_1\ \theta_2\ \cdots\ \theta_{d-1}]$.
In such coordinates, hyper-spherical harmonics are given by \cite{dokmanic2009convolution}
\begin{align}\label{eq.harmonics}
    \Xi_{\mathbf{K}}^l(\xi)=A_{\mathbf{K}}^l\times \prod_{i=0}^{d-3}C_{k_i-k_{i+1}}^{\frac{d-i-2}{2}+k_{i+1}}(\cos\theta_{d-i-1})\sin^{k_{i+1}}\theta_{d-i-1}e^{\pm jk_{d-2}\theta_1},
\end{align}
where the normalization factor is
\begin{align*}
    A_{\mathbf{K}}^l=\sqrt{\frac{1}{\Gamma\left(\frac{d}{2}\right)}\prod_{i=0}^{d-3}2^{2k_{i+1}+d-i-4}\times \frac{(k_i-k_{i+1})!(d-i+2k_i-2)\Gamma^2\left(\frac{d-i-2}{2}+k_{i+1}\right)}{\sqrt{\pi}\Gamma(k_i+k_{i+1}+d-i-2)}},
\end{align*}
and $C_d^\lambda(t)$ are the Gegenbauer polynomials of degree $d$. These Gegenbauer polynomials can be defined as the coefficients of $\alpha^n$ in the power-series expansion of the following function,
\begin{align*}
    (1-2t\alpha +\alpha^2)^{-\lambda}=\sum_{i=0}^{\infty}C_i^\lambda(t)\alpha^i.
\end{align*}
Further, the Gegenbauer polynomials can be computed by a three-term recursive relation,
\begin{align}\label{eq.temp_011302}
    (i+2)C_{i+2}^\lambda(t)=2(\lambda+i+1)tC_{i+1}^\lambda(t)-(2\lambda+i)C_i^\lambda(t),
\end{align}
with $C_0^\lambda(t)=1$ and $C_1^\lambda(t)=2\lambda t$.

\subsection{Calculate \texorpdfstring{$\Xi_{\mathbf{K}}^l(\xi)$}{harmonics} where \texorpdfstring{$\mathbf{K}=\mathbf{0}$}{K=0}}
Recall that $\mathbf{K}=(k_1,k_2,\cdots,k_{d-2})$ and $l=k_0$. By plugging $\mathbf{K}=\mathbf{0}$ into Eq.~\eqref{eq.harmonics}, we have
\begin{align}\label{eq.temp_011304}
    \Xi_{\mathbf{0}}^l(\xi)=A_{\mathbf{0}}^l\times C_l^{\frac{d-2}{2}}(\cos\theta_{d-1}).
\end{align}
The following lemma gives an explicit form of Gegenbauer polynomials.
\begin{lemma}\label{le.poly_Geg}
\begin{align}\label{eq.temp_011301}
    C_i^\lambda(t)=\sum_{k=0}^{\lfloor \frac{i}{2}\rfloor}(-1)^k\frac{\Gamma(i-k+\lambda)}{\Gamma(\lambda)k!(i-2k)!}(2t)^{i-2k}.
\end{align}
\end{lemma}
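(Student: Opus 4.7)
The plan is to prove the closed form for $C_i^\lambda(t)$ directly from the defining generating function, since this is cleaner than inducting on the three-term recurrence Eq.~\eqref{eq.temp_011302}. Specifically, I would start from
\[
(1-2t\alpha+\alpha^2)^{-\lambda} = \bigl(1-\alpha(2t-\alpha)\bigr)^{-\lambda},
\]
and apply the generalized binomial series $(1-u)^{-\lambda} = \sum_{n\ge 0} \frac{\Gamma(\lambda+n)}{\Gamma(\lambda)\,n!}\,u^n$ with $u = \alpha(2t-\alpha)$. This yields
\[
(1-2t\alpha+\alpha^2)^{-\lambda} = \sum_{n=0}^{\infty} \frac{\Gamma(\lambda+n)}{\Gamma(\lambda)\, n!}\,\alpha^n (2t-\alpha)^n.
\]

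Next I would expand $(2t-\alpha)^n$ using the ordinary binomial theorem as $\sum_{m=0}^n \binom{n}{m}(2t)^{n-m}(-\alpha)^m$, substitute back, and reindex the resulting double sum by $i=n+m$, $k=m$. The constraints $0\le m\le n$ translate to $0\le k\le i-k$, i.e., $0\le k\le \lfloor i/2\rfloor$, and $n = i-k$. Collecting the coefficient of $\alpha^i$ gives
\[
\sum_{k=0}^{\lfloor i/2\rfloor} (-1)^k\,\frac{\Gamma(\lambda+i-k)}{\Gamma(\lambda)\,(i-k)!}\binom{i-k}{k}(2t)^{i-2k}
= \sum_{k=0}^{\lfloor i/2\rfloor}(-1)^k\,\frac{\Gamma(i-k+\lambda)}{\Gamma(\lambda)\,k!\,(i-2k)!}(2t)^{i-2k},
\]
where in the last step I used $\binom{i-k}{k} = \frac{(i-k)!}{k!\,(i-2k)!}$. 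By the defining generating function of $C_i^\lambda$, this coefficient equals $C_i^\lambda(t)$, which is exactly Eq.~\eqref{eq.temp_011301}.

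There is no real obstacle here; the only thing to be careful about is the interchange of summations and the reindexing, which is fine because the series is a formal power series in $\alpha$ (equivalently, it converges absolutely for small $|\alpha|$ and any fixed $t$, so Fubini applies termwise). An alternative route would be to verify the claimed formula satisfies both the initial conditions $C_0^\lambda = 1$, $C_1^\lambda(t) = 2\lambda t$ and the recurrence Eq.~\eqref{eq.temp_011302}, proceeding by induction on $i$; this works but is more tedious because one must manipulate ratios of $\Gamma$-values and split the sum according to the parity of $i$ when applying the recurrence. The generating-function derivation above is therefore the preferred path.
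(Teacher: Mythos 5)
Your proof is correct, but it takes a genuinely different route from the paper. The paper verifies Eq.~\eqref{eq.temp_011301} by mathematical induction on $i$, checking that the claimed closed form satisfies the three-term recurrence Eq.~\eqref{eq.temp_011302}; this forces a somewhat tedious coefficient-by-coefficient comparison, including separate treatment of the boundary index $k=\lfloor\frac{i+2}{2}\rfloor$ according to the parity of $i$. You instead expand the defining generating function directly: writing $1-2t\alpha+\alpha^2=1-\alpha(2t-\alpha)$, applying the generalized binomial series, expanding $(2t-\alpha)^n$, and reindexing via $i=n+m$, $k=m$. Your reindexing is right (the constraint $0\le m\le n$ does become $0\le k\le\lfloor i/2\rfloor$ with $n=i-k$, and $\frac{1}{(i-k)!}\binom{i-k}{k}=\frac{1}{k!\,(i-2k)!}$), and the interchange of summations is justified for a formal power series in $\alpha$ (or by absolute convergence for small $|\alpha|$). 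Since the paper explicitly takes the generating function as the definition of $C_i^\lambda$, your argument is legitimate and arguably cleaner: it avoids the parity case analysis entirely and makes the combinatorial origin of the factor $\frac{1}{k!\,(i-2k)!}$ transparent. What the paper's induction buys in exchange is that it only uses the recurrence and the initial values $C_0^\lambda=1$, $C_1^\lambda(t)=2\lambda t$, so it would remain valid even if one preferred to treat the recurrence as the primary definition.
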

\begin{proof}
We use mathematical induction. We already know that $C_0^\lambda(t)=1$ and $C_1^\lambda(t)=2\lambda t$, which both satisfy Eq.~\eqref{eq.temp_011301}. Suppose that $C_i^\lambda(t)$ and $C_{i+1}^\lambda(t)$ satisfy Eq.~\eqref{eq.temp_011301}, i.e.,
\begin{align*}
    &C_i^\lambda(t)=\sum_{k=0}^{\lfloor \frac{i}{2}\rfloor}(-1)^k\frac{\Gamma(i-k+\lambda)}{\Gamma(\lambda)k!(i-2k)!}(2t)^{i-2k},\\
    &C_{i+1}^\lambda(t)=\sum_{k=0}^{\lfloor \frac{i+1}{2}\rfloor}(-1)^k\frac{\Gamma(i-k+\lambda+1)}{\Gamma(\lambda)k!(i-2k+1)!}(2t)^{i-2k+1}.
\end{align*}

It remains to show that $C_{i+2}^\lambda(t)$ also satisfy Eq.~\eqref{eq.temp_011301}. By Eq.~\eqref{eq.temp_011302}, it suffices to show that
\begin{align}
    &(i+2)\sum_{k=0}^{\lfloor \frac{i+2}{2}\rfloor}(-1)^k\frac{\Gamma(i-k+\lambda+2)}{\Gamma(\lambda)k!(i-2k+2)!}(2t)^{i-2k+2}\nonumber\\
    =&2(\lambda+i+1)t\sum_{k=0}^{\lfloor \frac{i+1}{2}\rfloor}(-1)^k\frac{\Gamma(i-k+\lambda+1)}{\Gamma(\lambda)k!(i-2k+1)!}(2t)^{i-2k+1}\nonumber\\
    &-(2\lambda+i)\sum_{k=0}^{\lfloor \frac{i}{2}\rfloor}(-1)^k\frac{\Gamma(i-k+\lambda)}{\Gamma(\lambda)k!(i-2k)!}(2t)^{i-2k}.\label{eq.temp_011303}
\end{align}
To that end, it suffices to show that the coefficients of $(2t)^{i-2k+2}$ are the same for both sides of Eq.~\eqref{eq.temp_011303}, for $k=0,1,\cdots,\lfloor\frac{i+2}{2}\rfloor$. For the first step, we verify the coefficients of $(2t)^{i-2k+2}$ for $k=1,\cdots,\lfloor\frac{i+1}{2}\rfloor$. We have
\begin{align*}
    &\text{coefficients of $(2t)^{i-2k+2}$ on the  right-hand-side of Eq.~\eqref{eq.temp_011303}}\\
    =&(\lambda+i+1)(-1)^k\frac{\Gamma(i-k+\lambda+1)}{\Gamma(\lambda)k!(i-2k+1)!}-(2\lambda+i)(-1)^{k-1}\frac{\Gamma(i-k+\lambda+1)}{\Gamma(\lambda)(k-1)!(i-2k+2)!}\\
    =&(-1)^k\frac{\Gamma(i-k+\lambda+1)}{\Gamma(\lambda)k!(i-2k+2)!}\left((\lambda+i+1)(i-2k+2)+(2\lambda+i)k\right)\\
    =&(-1)^k\frac{\Gamma(i-k+\lambda+1)}{\Gamma(\lambda)k!(i-2k+2)!}\left((\lambda+i+1)(i+2)+(2\lambda+i)k-2k(\lambda+i+1)\right)\\
    =&(-1)^k\frac{\Gamma(i-k+\lambda+1)}{\Gamma(\lambda)k!(i-2k+2)!}\left((\lambda+i+1)(i+2)-k(i+2)\right)\\
    =&(-1)^k\frac{\Gamma(i-k+\lambda+1)}{\Gamma(\lambda)k!(i-2k+2)!}(\lambda-k+i+1)(i+2)\\
    =&(i+2)(-1)^k\frac{\Gamma(i-k+\lambda+2)}{\Gamma(\lambda)k!(i-2k+2)!}\\
    =&\text{coefficients of $(2t)^{i-2k+2}$ on the  left-hand-side of Eq.~\eqref{eq.temp_011303}}.
\end{align*}
For the second step, we verify the coefficient of $(2t)^{i-2k+2}$ for $k=0$, i.e., the coefficient of $(2t)^{i+2}$. We have
\begin{align*}
    &\text{coefficients of $(2t)^{i+2}$ on the  right-hand-side of Eq.~\eqref{eq.temp_011303}}\\
    =&(\lambda+i+1)\frac{\Gamma(i+\lambda+1)}{\Gamma(\lambda)(i+1)!}\\
    =&(i+2)\frac{\Gamma(i+2+\lambda)}{\Gamma(\lambda)(i+2)!}\\
    =&\text{coefficients of $(2t)^{i+2}$ on the left-hand-side of Eq.~\eqref{eq.temp_011303}}.
\end{align*}
For the third step, we verify the coefficient of $(2t)^{i-2k+2}$ for $k=\lfloor\frac{i+2}{2}\rfloor=\lfloor\frac{i}{2}\rfloor+1$. We consider two cases: 1) $i$ is even, and 2) $i$ is odd. When $i$ is even, we have $\lfloor\frac{i}{2}\rfloor+1=\frac{i}{2}+1$, i.e., $i-2k+2=0$. Thus, we have
\begin{align*}
    &\text{coefficients of $(2t)^0$ on the  right-hand-side of Eq.~\eqref{eq.temp_011303}}\\
    =&-(2\lambda+i)(-1)^{\frac{i}{2}}\frac{\Gamma\left(\frac{i}{2}+\lambda\right)}{\Gamma(\lambda)\left(\frac{i}{2}\right)!}\\
    =&(i+2)(-1)^{\frac{i}{2}+1}\frac{\Gamma\left(\frac{i}{2}+1+\lambda\right)}{\Gamma(\lambda)\left(\frac{i}{2}+1\right)!}\\
    =&\text{coefficients of $(2t)^0$ on the  left-hand-side of Eq.~\eqref{eq.temp_011303}}.
\end{align*}
When $i$ is odd, we have $k=\lfloor\frac{i}{2}\rfloor+1=\frac{i+1}{2}=\lfloor\frac{i+1}{2}\rfloor$ and this case has already been verified in the first step.

In conclusion, the coefficients of $(2t)^{i-2k+2}$ are the same for both sides of Eq.~\eqref{eq.temp_011303}, for $k=0,1,\cdots,\lfloor\frac{i+2}{2}\rfloor$. Thus, by mathematical induction, the result of this lemma thus follows.
\end{proof}

Applying Lemma~\ref{le.poly_Geg} in Eq.~\eqref{eq.temp_011304}, we have
\begin{align}
    \Xi_{\bm{0}}^l(\xi)=A_{\bm{0}}^l\sum_{k=0}^{\lfloor\frac{l}{2}\rfloor}(-1)^k\frac{\Gamma(l-k+\frac{d-2}{2})}{\Gamma(\frac{d-2}{2})k!(l-2k)!}(2\cos\theta_{d-1})^{l-2k}\label{eq.temp_011305}.
\end{align}
We give a few examples of $\Xi_{\mathbf{0}}^l(\xi)$ as follows.
\begin{align*}
    &\Xi_{\mathbf{0}}^0(\xi)=A_{\bm{0}}^0,\\
    &\Xi_{\mathbf{0}}^1(\xi)=A_{\bm{0}}^1(d-2)\cos\theta_{d-1},\\
    &\Xi_{\mathbf{0}}^2(\xi)=A_{\bm{0}}^2\frac{d-2}{2}\left(d\cos^2\theta_{d-1}-1\right),\\
    &\Xi_{\mathbf{0}}^3(\xi)=A_{\bm{0}}^3\frac{d-2}{2}\cdot d\cdot \left(\frac{d+2}{3}\cos^3\theta_{d-1}-\cos\theta_{d-1}\right).
\end{align*}
% \begin{align*}
%     C_0^{\frac{d-2}{2}}(\cos \theta_{d-1})&=1,\\
%     C_1^{\frac{d-2}{2}}(\cos \theta_{d-1})&=(d-2)\cos\theta_{d-1},\\
%     C_2^{\frac{d-2}{2}}(\cos \theta_{d-1})&=\frac{1}{2}\left(2\left(\frac{d-2}{2}+1\right)\cos\theta_{d-1}C_1^{\frac{d-2}{2}}(\cos \theta_{d-1})-(d-2)C_0^{\frac{d-2}{2}}(\cos \theta_{d-1})\right)\\
%     &=\frac{d-2}{2}\left(d\cos^2\theta_{d-1}-1\right),\\
%     C_3^{\frac{d-2}{2}}(\cos \theta_{d-1})&=
% \end{align*}

\subsection{Proof of Proposition~\ref{prop.coefficients}}\label{app.zero_component}
Recall that
\begin{align*}
    \hf(\xsmall)\defeq \xsmall^T\bm{e}\frac{\pi-\arccos (\xsmall^T\bm{e})}{2\pi},\quad \bm{e}\defeq [0\ 0\ \cdots\ 0\ 1]^T\in\mathds{R}^d.
\end{align*}
Notice that $\xsmall^T\bm{e}=\cos\theta_{d-1}$. Thus, we have
\begin{align*}
    h(\xsmall)=\cos\theta_{d-1}\frac{\pi-\arccos(\cos\theta_{d-1})}{2\pi}.
\end{align*}
The $\arccos$ function has a Taylor Series Expansion:
\begin{align*}
    \arccos (a) = \frac{\pi}{2}-\sum_{i=0}^\infty \frac{(2i)!}{2^{2i}(i!)^2}\frac{a^{2i+1}}{2i+1},
\end{align*}
which converges when $-1\leq a\leq 1$.
Thus, we have
\begin{align}\label{eq.temp_011401}
    h(\xsmall)=\frac{1}{4}\cos\theta_{d-1}+\frac{1}{2\pi}\sum_{i=0}^\infty\frac{(2i)!}{2^{2i}(i!)^2}\frac{\cos^{2i+2}\theta_{d-1}}{2i+1}.
\end{align}
By comparing terms of even and odd power of $\cos\theta_{d-1}$ in Eq.~\eqref{eq.temp_011305} and Eq.~\eqref{eq.temp_011401}, we immediately see that $h(\xsmall)\not\perp \Xi_{\bm{0}}^l(\xsmall)$ when $l=1$, and $h(\xsmall)\perp \Xi_{\bm{0}}^l(\xsmall)$ when $l=3, 5, 7,\cdots$. It remains to examine whether $h(\xsmall)\perp \Xi_{\bm{0}}^l(\xsmall)$ or $h(\xsmall)\not\perp \Xi_{\bm{0}}^l(\xsmall)$ for  $l\in\{0,1,2,4,6,\cdots\}$. We first introduce the following lemma.
% In addition, because Eq.~\eqref{eq.temp_011401} contains infinite
% many terms, $h(\xsmall)$ must contain infinite many $\Xi_{\bm{0}}^l(\xsmall)$ for $l\in\{0,1,2,4,6,\cdots\}$
% In order to determine whether $h(\xsmall)\perp \Xi_{\bm{0}}^l(\xsmall)$ or $h(\xsmall)\not\perp \Xi_{\bm{0}}^l(\xsmall)$ for some specific $l\in\{0,1,2,4,6,\cdots\}$, we can calculate the integral
% \begin{align*}
%     \int_{\sd}h(\xsmall)\Xi_{\bm{0}}^l(\xsmall)d\xdensity(\xsmall)
% \end{align*}
% to see whether it is zero or not.

\begin{lemma}\label{le.positive_component}
Let $a$ and $b$ be two non-negative integers. Define the function
\begin{align*}
    Q(a,b)\defeq\int_{\sd}\cos^{a}(\theta_{d-1})\Xi_{\bm{0}}^{b}(\xi)d\xdensity(\xsmall).
\end{align*}
We must have
\begin{align}\label{eq.temp_011701}
    Q(2k,2m)\begin{cases}
    >0,\text{ if $m\leq k$},\\
    =0,\text{ if $m>k$}.
    \end{cases}
\end{align}
\end{lemma}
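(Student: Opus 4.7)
The plan is to reduce $Q(2k,2m)$ to a one-dimensional integral on $[-1,1]$ and then exploit classical properties of the Gegenbauer polynomials. By Eq.~\eqref{eq.temp_011304}, $\Xi_{\bm 0}^b(\xi)$ depends only on $\theta_{d-1}$ via $C_b^{(d-2)/2}(\cos\theta_{d-1})$, and $\cos^a(\theta_{d-1})$ likewise depends only on $\theta_{d-1}$. Thus the integrations over $\theta_1,\dots,\theta_{d-2}$ in $d\xdensity(\xsmall)$ factor out as a positive constant, and the remaining $\theta_{d-1}$-integral carries the Jacobian $\sin^{d-2}(\theta_{d-1})$. After the change of variables $u=\cos\theta_{d-1}$, one obtains
$$Q(a,b) \;=\; \tilde C_d \cdot A_{\bm 0}^b \int_{-1}^1 u^{a}\, C_b^{\lambda}(u)\,(1-u^2)^{\lambda-1/2}\,du,$$
with $\lambda=(d-2)/2$, $\tilde C_d>0$, and $A_{\bm 0}^b>0$ (the latter because $A_{\bm 0}^b$ is a square root of a manifestly positive quantity).

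For the case $m>k$: the Gegenbauer family $\{C_j^{\lambda}\}_{j\ge 0}$ is orthogonal on $[-1,1]$ with weight $(1-u^2)^{\lambda-1/2}$, and since $u^{2k}$ is a polynomial of degree $2k<2m$, it lies in the span of $C_0^{\lambda},\dots,C_{2k}^{\lambda}$, which is orthogonal to $C_{2m}^{\lambda}$. Hence $Q(2k,2m)=0$.

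For the case $m\le k$: I would invoke Rodrigues' formula
$$C_n^{\lambda}(u)\,(1-u^2)^{\lambda-1/2} \;=\; \frac{(-1)^n R_n^{\lambda}}{2^{n}\,n!}\cdot \frac{d^n}{du^n}\!\left[(1-u^2)^{n+\lambda-1/2}\right],$$
where $R_n^{\lambda}>0$ is an explicit positive combination of Gamma values. Applied with $n=2m$, integration by parts $2m$ times transfers the derivatives onto $u^{2k}$. The boundary terms vanish because $(1-u^2)^{n+\lambda-1/2}$ and its first $n-1$ derivatives vanish at $u=\pm 1$ as long as $\lambda>0$ (i.e.\ $d\ge 3$, which is the standing context of the harmonic expansion; the $d=2$ case is handled separately in Appendix~\ref{app.special_case_d2}). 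The sign $(-1)^n$ from Rodrigues cancels against the $(-1)^n$ from the $n$ integrations by parts, yielding
$$\int_{-1}^1 u^{2k} C_{2m}^{\lambda}(u)(1-u^2)^{\lambda-1/2}\,du \;=\; \frac{R_{2m}^{\lambda}}{2^{2m}(2m)!}\cdot\frac{(2k)!}{(2k-2m)!}\int_{-1}^{1} u^{\,2k-2m}(1-u^2)^{2m+\lambda-1/2}\,du.$$
The remaining integrand is even, non-negative on $[-1,1]$, and not identically zero, so the integral is strictly positive. Combined with the positivity of $A_{\bm 0}^{2m}$, $\tilde C_d$, $R_{2m}^{\lambda}$, and the combinatorial factor, we conclude $Q(2k,2m)>0$.

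The main obstacle will be bookkeeping rather than conceptual difficulty: namely, checking that every multiplicative constant appearing in the chain (the hyper-spherical Jacobian factor $\tilde C_d$, the normalization $A_{\bm 0}^{2m}$ from Eq.~\eqref{eq.harmonics}, the Rodrigues constant $R_{2m}^{\lambda}$, and the factorial ratio $(2k)!/(2k-2m)!$) is positive under $d\ge 3$, and verifying that the boundary terms in the $2m$-fold integration by parts genuinely vanish. These are standard but tedious; the structural core of the proof is the dichotomy orthogonality for $m>k$ versus Rodrigues-plus-integration-by-parts for $m\le k$.
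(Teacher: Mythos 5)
Your proposal is correct, but it proves the lemma by a genuinely different route than the paper. The paper's proof is an induction on $k$: using the three-term recurrence for Gegenbauer polynomials it derives the identity $Q(a+1,l+1)=q_{l,1}Q(a,l+2)+q_{l,2}Q(a,l)$ with strictly positive coefficients $q_{l,1},q_{l,2}$, applies it twice to express $Q(2i+2,2m)$ as a positive combination of $Q(2i,2m+2)$, $Q(2i,2m)$, $Q(2i,2m-2)$, and propagates the sign/vanishing dichotomy from the base case $Q(0,2m)$ (which is handled by orthonormality of the basis). Everything stays at the level of the harmonics and the recurrence; no integral is ever evaluated. You instead collapse $Q(2k,2m)$ to the one-dimensional weighted integral $\int_{-1}^1 u^{2k}C_{2m}^{\lambda}(u)(1-u^2)^{\lambda-1/2}du$, dispose of the case $m>k$ by the degree/orthogonality argument, and for $m\leq k$ use Rodrigues' formula plus $2m$-fold integration by parts to land on a manifestly positive integral. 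Your route buys an explicit closed form for $Q(2k,2m)$ (strictly more information than the sign), at the cost of importing Rodrigues' formula and verifying the vanishing of boundary terms; the paper's route needs only the recurrence it has already recorded in Eq.~\eqref{eq.temp_011302} and the orthogonality of the $\Xi_{\bm 0}^l$, and is self-contained but yields no formula. Your restriction to $\lambda=(d-2)/2>0$, i.e.\ $d\geq 3$, matches the paper's standing convention for the hyper-spherical harmonics (the case $d=2$ is treated separately via Fourier series), and the positivity checks you flag ($\tilde C_d$, $A_{\bm 0}^{2m}$, $R_{2m}^{\lambda}$, the factorial ratio, and the boundary terms vanishing for $j\leq 2m-1$ since the exponent $2m+\lambda-\tfrac12$ exceeds $2m-1$) all go through.
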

\begin{proof}
We have
\begin{align*}
    Q(2k, 0)=\int_{\sd}\cos^{2k}(\theta_{d-1})\Xi_{\bm{0}}^0(\xi)d\xdensity(\xsmall)=A_{\bm{0}}^0 \int_{\sd}\cos^{2k}(\theta_{d-1})d\xdensity(\xsmall)>0.
\end{align*}
Thus, to finish the proof, we only need to consider the case of $m\geq 1$ in Eq.~\eqref{eq.temp_011701}.
We then prove by mathematical induction on the first parameter of $Q(\cdot,\cdot)$, i.e., $k$ in Eq.~\eqref{eq.temp_011701}.
When $m>0$, we have
\begin{align*}
    Q(0,2m)=&\int_{\sd}\Xi_{\bm{0}}^{2m}(\xi)d\xdensity(\xsmall)=\frac{1}{A_{\bm{0}}^0}\int_{\sd}\Xi_{\bm{0}}^{0}(\xi)\Xi_{\bm{0}}^{2m}(\xi)d\xdensity(\xsmall)=0\\
    &\text{ (by the orthogonality of the basis)}.
\end{align*}
Thus, Eq.~\eqref{eq.temp_011701} holds for all $m$ when $k=0$. Suppose that Eq.~\eqref{eq.temp_011701} holds when $k=i$. To complete the mathematical induction, it only remains to show that Eq.~\eqref{eq.temp_011701} also holds for all $m$ when $k=i+1$.
By Eq.~\eqref{eq.temp_011302} and Eq.~\eqref{eq.temp_011304}, for any $l$, we have
\begin{align*}
    \cos(\theta_{d-1})\Xi_{\bm{0}}^{l+1}(\xi)=\frac{(l+2)A_{\bm{0}}^{l+1}}{(d+2l)A_{\bm{0}}^{l+2}}\Xi_{\bm{0}}^{l+2}(\xi)+\frac{(d-2+l)A_{\bm{0}}^{l+1}}{(d+2l)A_{\bm{0}}^{l}}\Xi_{\bm{0}}^{l}(\xi).
\end{align*}
Thus, we have
\begin{align}\label{eq.temp_011702}
    Q(a+1,l+1)=q_{l,1}\cdot Q(a,l+2)+q_{l,2}\cdot Q(a,l),
\end{align}
where
\begin{align*}
    q_{l,1}\defeq \frac{(l+2)A_{\bm{0}}^{l+1}}{(d+2l)A_{\bm{0}}^{l+2}},\quad q_{l,2}\defeq \frac{(d-2+l)A_{\bm{0}}^{l+1}}{(d+2l)A_{\bm{0}}^{l}}.
\end{align*}
It is obvious that $q_{l,1}>0$ and $q_{l,2}>0$. Applying Eq.~\eqref{eq.temp_011702} multiple times, we have
\begin{align}
    &Q(2i+2,2m)=q_{2m-1,1}\cdot Q(2i+1,2m+1)+q_{2m-1,2}\cdot Q(2i+1,2m-1),\label{eq.temp_011703}\\
    &Q(2i+1,2m+1)=q_{2m,1}\cdot Q(2i,2m+2)+q_{2m,2}\cdot Q(2i,2m),\label{eq.temp_011704}\\
    &Q(2i+1,2m-1)=q_{2m-2,1}\cdot Q(2i,2m)+q_{2m-2,2}\cdot Q(2i,2m-2).\label{eq.temp_011705}
\end{align}
(Notice that we have already let $m\geq 1$, so all $q_{\cdot,1},q_{\cdot,2},Q(\cdot,\cdot)$ in those equations are well-defined.) By plugging Eq.~\eqref{eq.temp_011704} and Eq.~\eqref{eq.temp_011705} into Eq.~\eqref{eq.temp_011703}, we have
\begin{align}
    Q(2i+2,2m)=&q_{2m,1}q_{2m-1,1}Q(2i,2m+2)+(q_{2m-1,1}q_{2m,2}+q_{2m-1,2}q_{2m-2,1})Q(2i,2m)\nonumber\\
    &+q_{2m-1,2}q_{2m-2,2}Q(2i,2m-2).\label{eq.temp_011706}
\end{align}

To prove that Eq.~\eqref{eq.temp_011701} holds when $k=i+1$ for all $m$, we consider two cases, Case 1: $m\leq i+1$, and Case 2: $m>i+1$. Notice that by the induction hypothesis, we already know that Eq.~\eqref{eq.temp_011701} holds when $k=i$ for all $m$.

\emph{Case 1.} When $m\leq i+1$, we have $m-1\leq i$. Thus, by the induction hypothesis for $k=i$, we have $Q(2i,2m-2)>0$ (by $m-1\leq i$), which implies that the third term of the right-hand-side of Eq.~\eqref{eq.temp_011706} is positive. Further, by the induction hypothesis for $k=i$, we also know that $Q(2i,2m+2)\geq 0$ and $Q(2i,2m)\geq 0$ (regardless of the value of $m$), which means that the first and the second term of Eq.~\eqref{eq.temp_011706} is non-negative. Thus, by considering all three terms in Eq.~\eqref{eq.temp_011706} together, we have $Q(2i+2,2m)>0$ when $m\leq i+1$.

\emph{Case 2.} When $m>i+1$, we have $m+1>i$, $m>i$, and $m-1>i$. Thus, by the induction hypothesis for $k=i$, we have $Q(2i, 2m+2)=Q(2i,2m)=Q(2i,2m-2)=0$. Therefore, by Eq.~\eqref{eq.temp_011706}, we have $Q(2i+2,2m)=0$.

In summary, Eq.~\eqref{eq.temp_011701} holds when $k=i+1$ for all $m$. The mathematical induction is completed and the result of this lemma follows.
\end{proof}

% For other $l$ values, we have the following lemma.

% \begin{lemma}
% $h(\xsmall)\not\perp \Xi_{\bm{0}}^l$ for all  $l=0,1,2,4,6,\cdots$.
% \end{lemma}
% \begin{proof}
% We prove by mathematical induction. Define $l_i$ be the $i$-th element of the sequence $0,1,2,4,6,\cdots$. When $l=l_0=0$, we have
% \begin{align*}
%     &\int_{\sd}h(\xsmall)\Xi_{\bm{0}}^0(\xsmall)d\xdensity(\xsmall)\\
%     =&A_{\mathbf{0}}^0\int_{\sd}\left(\frac{1}{4}\cos\theta_{d-1}+\frac{1}{2\pi}\sum_{i=0}^\infty\frac{(2i)!}{2^{2i}(i!)^2}\frac{\cos^{2i+2}\theta_{d-1}}{2i+1}\right)d\xdensity(\xsmall)\\
%     =&A_{\mathbf{0}}^0\int_{\sd}\frac{1}{2\pi}\sum_{i=0}^\infty\frac{(2i)!}{2^{2i}(i!)^2}\frac{\cos^{2i+2}\theta_{d-1}}{2i+1}d\xdensity(\xsmall)\\
%     &\text{ (by symmetry, we know that $\int_{\sd}\cos\theta_{d-1}d\xdensity(\xsmall)=\int_{\sd}\xsmall^T\bm{e}d\xdensity(\xsmall)=0$)}\\
%     \geq &\frac{A_{\mathbf{0}}^0}{2\pi}\int_{\sd}\cos^2\theta_{d-1}d\xdensity(\xsmall)\text{ (by ignoring terms of $i>0$)}\\
%     >&0.
% \end{align*}
% Thus, we have $h(\xsmall)\not\perp \Xi_{\bm{0}}^{l_0}(\xsmall)$.
% Suppose that for $h(\xsmall)\not\perp \Xi_{\bm{0}}^{l_i}(\xsmall)$ holds for all $i\leq k$. 
% % To finish the mathematical induction, We are now going to show that $h(\xsmall)\not\perp\Xi_{\bm{0}}^{l_{k+1}}(\xsmall)$. \TODO

% \end{proof}

By Lemma~\ref{le.positive_component}, for all $k\geq 0$, we have
\begin{align*}
    &\int_{\sd}\frac{1}{2\pi}\sum_{i=0}^\infty\frac{(2i)!}{2^{2i}(i!)^2}\frac{\cos^{2i+2}\theta_{d-1}}{2i+1}\Xi_{\bm{0}}^{2k}(\xi) d\xdensity(\xsmall)\\
    =&\frac{1}{2\pi}\sum_{i=0}^\infty\frac{(2i)!}{2^{2i}(i!)^2}\frac{1}{2i+1}\int_{\sd}\cos^{2i+2}\theta_{d-1}\Xi_{\bm{0}}^{2k}(\xi) d\xdensity(\xsmall)\\
    >&0.
\end{align*}
Thus, by Eq.~\eqref{eq.temp_011401}, we know that $h(\xsmall)\not\perp \Xi_{\bm{0}}^l(\xsmall)$ for all $l\in\{0,2,4,\cdots\}$.

\subsection{A special case: when \texorpdfstring{$d=2$}{d=2}}\label{app.special_case_d2}
% \section{Features of $\learnableSet$ when $d=2$}\label{app.proof_d2}
% \subsection{Example: 2-D case}\label{sec.2d}

% \subsection{Outside}

% Notice that $\fg\in\learnableSet$ is indeed a convolution on $\sd$, as $\xsmall^T\bm{z}$ is only determined by the rotation from $\bm{z}$ to $\xsmall$ \cite{dokmanic2009convolution}. Therefore, we can apply the well-known property that the convolution corresponds to multiplication of Fourier coefficients on $\sd$. Therefore, if some hyper-spherical harmonics are missing in
% $\cos\theta\cdot (\pi-|\theta|)$, then those harmonics must be missing in all functions in $\learnableSet$. In other words, MSE for a ground-truth function that contains those harmonics must be larger than the sum of square of coefficients for those harmonics. In the next section, 

% we focus the simplest but representative situation for $d=2$. Conclusions in the following section can be easily generalized to higher dimension.

When $d=2$, $\sd$ denotes a unit circle. Therefore, every $\xsmall$ corresponds to an angle $\varphi\in [-\pi,\ \pi]$ such that $\xsmall=[\cos\varphi\ \sin\varphi]^T$. 
In this situation, the hyper-spherical harmonics are the well-known Fourier series, i.e., $1,\cos(\theta),\sin(\theta),\cos(2\theta),\sin(2\theta),\cdots$. Thus, we can explicitly calculate all Fourier coefficients of $h$ more easily. 
% The calculation result shows that $h$ contains $1$ and frequencies $0,\theta, 2\theta,4\theta,6\theta,\cdots$, while misses other frequencies $3\theta,5\theta,7\theta,\cdots$. We validate such result in Fig.~\ref{fig.compare}. In Fig.~\ref{fig.compare}(a), we let the ground-truth function be $\f(\theta)=\sum_{k\in\{0,1,2,4\}}(\sin(k\theta)+\cos(k\theta))$ that consists of learnable basis, while in Fig.~\ref{fig.compare}(b) we let the ground-truth function be $\f(\theta)=\sum_{k\in\{3, 5, 7, 9\}}(\sin(k\theta)+\cos(k\theta))$ that consists of not-learnable basis.

Similarly to Appendix~\ref{app.convolution}, we first write down the convolution for $d=2$, which is also in a simpler form.  
For any function $\fg\in\learnableSet$, we have
\begin{align*}
    \fg(\varphi)&=\frac{1}{2\pi}\int_{\varphi-\pi}^{\varphi+\pi}\frac{\pi-|\theta-\varphi|}{2\pi}\cos(\theta - \varphi)g(\theta) d\theta\\
    &=\frac{1}{2\pi}\int_{-\pi}^{\pi}\frac{\pi-|\theta|}{2\pi}\cos \theta\ g(\theta + \varphi)\ d\theta\text{ (replace $\theta$ by $\theta-\varphi$)}\\
    &=\frac{1}{2\pi}\int_{-\pi}^{\pi}\frac{\pi-|\theta|}{2\pi}\cos \theta\ g(\varphi-\theta)\ d\theta\text{ (replace $\theta$ by $-\theta$)}.
\end{align*}
Define $h(\theta)\defeq \frac{\pi-|\theta|}{2\pi}\cos \theta$. We then have
\begin{align*}
    \fg(\varphi)=\frac{1}{2\pi}h(\varphi)\circledast g(\varphi),
\end{align*}
where $\circledast$ denotes (continuous) circular convolution. Let $c_{\fg}(k), c_h(k)$ and $ c_g(k)$ (where $k=\cdots,-1,0,1,\cdots$) denote the (complex) Fourier series coefficients for $\fg(\varphi)$, $h(\varphi)$, and $g(\varphi)$, correspondingly. Specifically, we have
\begin{align*}
    \fg(\varphi)=\sum_{k=-\infty}^{\infty}c_{\fg}(k)e^{ik\varphi},\quad h(\varphi)=\sum_{k=-\infty}^{\infty}c_h(k)e^{ik\varphi},\quad g(\varphi)=\sum_{k=-\infty}^{\infty}c_g(k)e^{ik\varphi}.
\end{align*}
Thus, we have
\begin{align}\label{eq.temp_102401}
    c_{\fg}(k)= c_h(k)c_g(k).
\end{align}
% \begin{lemma}\label{prop.d2}
% \begin{align*}
% c_h(k)=\begin{cases}
%     \frac{1}{8},\quad &k=\pm 1\\
%     \frac{1}{2\pi^2}\left(\frac{1}{(k+1)^2}+\frac{1}{(k-1)^2}\right),\quad &k=0,\pm 2,\pm 4,\cdots\\
%     0,\quad &k= \pm 3,\pm 5,\cdots
%     \end{cases}.
% \end{align*}
% \end{lemma}
% \begin{proof}
Now we calculate $c_h(k)$, i.e., the Fourier decomposition of $h(\cdot)$. We have
\begin{align*}
    c_h(k)&=\frac{1}{2\pi}\int_{-\pi}^{\pi}\frac{\pi-|\theta|}{2\pi}\cos \theta\  e^{-ik\theta}d\theta\\
    &=\frac{1}{4\pi}\int_{-\pi}^{\pi}\left(1-\frac{|\theta|}{\pi}\right)\frac{e^{-i(k+1)\theta}+e^{-i(k-1)\theta}}{2}d \theta\\
    &=-\frac{1}{8\pi^2}\int_{-\pi}^{\pi}|\theta|\left(e^{-i(k+1)\theta}+e^{-i(k-1)\theta}\right) d\theta+\frac{1}{8\pi}\int_{-\pi}^{\pi}\left(e^{-i(k+1)\theta}+e^{-i(k-1)\theta}\right) d\theta.
\end{align*}
It is easy to verify that
\begin{align*}
    \int x e^{cx}dx = e^{cx}\left(\frac{cx-1}{c^2}\right),\quad \forall c\neq 0.
\end{align*}
Thus, we have
\begin{align*}
    c_h(1)&=-\frac{1}{8\pi^2}\int_{-\pi}^{\pi}|\theta|\left(e^{-i2\theta}+1\right)d\theta+\frac{1}{4}\\
    &=-\frac{1}{8\pi^2}\left(\pi^2-\int_{-\pi}^0\theta e^{-i2\theta}d\theta+\int_0^{\pi}\theta e^{-i2\theta}d\theta\right)+\frac{1}{4}\\
    &=-\frac{1}{8\pi^2}\left(\pi^2+\frac{i2\pi}{-4}+\frac{-i2\pi}{-4}\right)+\frac{1}{4}\\
    &=-\frac{1}{8}+\frac{1}{4}\\
    &=\frac{1}{8}.
\end{align*}
Similarly, we have
\begin{align*}
    c_h(-1)=\frac{1}{8}.
\end{align*}
Now we consider the situation of $n\neq \pm 1$. We have
\begin{align*}
    &\int_{-\pi}^0 |\theta|e^{-i(k+1)\theta}d\theta=-e^{-i(k+1)\theta}\cdot\frac{-i(k+1)\theta-1}{-(k+1)^2}\ \Bigg|_{-\pi}^0=-\frac{1}{(k+1)^2}+\frac{1-i(k+1)\pi}{(k+1)^2}e^{i(k+1)\pi},\\
    &\int_{0}^{\pi} |\theta|e^{-i(k+1)\theta}d\theta=e^{-i(k+1)\theta}\cdot\frac{-i(k+1)\theta-1}{-(k+1)^2}\ \Bigg|_0^\pi=-\frac{1}{(k+1)^2}+\frac{1+i(k+1)\pi}{(k+1)^2}e^{-i(k+1)\pi}.
\end{align*}
Notice that $e^{-i(k+1)\pi}=e^{-i(k+1)2\pi}e^{i(k+1)\pi}=e^{i(k+1)\pi}$. Therefore, we have
\begin{align*}
    \int_{-\pi}^\pi |\theta| e^{-i(k+1)\theta}d\theta=\frac{2}{(k+1)^2}\left(e^{i(k+1)\pi}-1\right).
\end{align*}
Similarly, we have
\begin{align*}
    \int_{-\pi}^\pi |\theta| e^{-i(k-1)\theta}d\theta=\frac{2}{(k-1)^2}\left(e^{i(k-1)\pi}-1\right).
\end{align*}
In summary, we have
\begin{align*}
    c_h(k)&=\begin{cases}
    \frac{1}{8},\quad &k=\pm 1\\
    -\frac{1}{4\pi^2}\left(\frac{1}{(k+1)^2}+\frac{1}{(k-1)^2}\right)\left(e^{i(k+1)\pi}-1\right),\quad &\text{otherwise}
    \end{cases}\\
    &=\begin{cases}
    \frac{1}{8},\quad &k=\pm 1\\
    \frac{1}{2\pi^2}\left(\frac{1}{(k+1)^2}+\frac{1}{(k-1)^2}\right),\quad &k=0,\pm 2,\pm 4,\cdots\\
    0,\quad &k= \pm 3,\pm 5,\cdots
    \end{cases}.
\end{align*}

\begin{comment}
By Eq.~\eqref{eq.temp_102401}, we have
\begin{align*}
    c_{\fg}(k)&=\begin{cases}
    \frac{1}{8}c_g(k),\quad &k=\pm 1\\
    \frac{1}{2\pi^2}\left(\frac{1}{(k+1)^2}+\frac{1}{(k-1)^2}\right)c_g(k),\quad &k=0,\pm 2,\pm 4,\cdots\\
    0,\quad &k= \pm 3,\pm 5,\cdots
    \end{cases}.
\end{align*}

Notice that we need $\|g\|_2^2$ to be finite. Let $|\cdot|$ denote the modulus of a complex number. We have
\begin{align*}
    \|g\|_2^2=&\sum_{k=-\infty}^{\infty}|c_g(k)|^2\text{ (by Parseval's theorem)}\\
    =&\left(\frac{|c_{\fg}(1)|}{\frac{1}{4}}\right)^2+\left(\frac{|c_{\fg}(-1)|}{\frac{1}{4}}\right)^2+\sum_{k=0,\pm 2,\pm 4,\cdots}\left(\frac{|c_{\fg}(k)|}{
    \frac{1}{\pi^2}\left(\frac{1}{(k+1)^2}+\frac{1}{(k-1)^2}\right)}\right)^2+\sum_{k=\pm3,\pm 5,\cdots}|c_g(k)|^2.
\end{align*}
Thus, to make sure there exists $g$ in $L^2$, we only need
\begin{align*}
    &\sum_{k=0,\pm 2,\pm 4,\cdots}\left(\frac{|c_{\fg}(k)|}{
    \frac{1}{\pi^2}\left(\frac{1}{(k+1)^2}+\frac{1}{(k-1)^2}\right)}\right)^2<\infty\\
    \iff &\sum_{k=2,4,\cdots}\frac{a_k^2+b_k^2}{\left(\frac{1}{(k+1)^2}+\frac{1}{(k-1)^2}\right)^2}<\infty.
\end{align*}
\end{comment}
% \end{proof}
By Eq.~\eqref{eq.temp_102401}, we thus have
\begin{align*}
    c_{\fg}(k)&=\begin{cases}
    \frac{1}{8}c_g(k),\quad &k=\pm 1\\
    \frac{1}{2\pi^2}\left(\frac{1}{(k+1)^2}+\frac{1}{(k-1)^2}\right)c_g(k),\quad &k=0,\pm 2,\pm 4,\cdots\\
    0,\quad &k= \pm 3,\pm 5,\cdots
    \end{cases}.
\end{align*}
In other words, when $d=2$, functions in $\learnableSet$ can only contain frequencies $0, \theta, 2\theta, 4\theta,6\theta,\cdots$, and cannot contain other frequencies $3\theta,5\theta,7\theta,\cdots$.

\subsection{Details of Remark~\ref{remark.bias_enlarge}}\label{proof.bias_enlarge}

% In our original setting of no bias for ReLU,  the domain of those not-learnable polynomials $(\xsmall^T\bm{a})^l$ where $l=3, 5, 7,\cdots$ is the whole $\sd$, and those not-learnable polynomials can still belong to $\learnableSet$ by constraining the domain to a proper subset of $\sd$. An example of constraining the domain is that the last element of $\xsmall\in\sd$ is fixed at a constant, e.g., $\frac{1}{\sqrt{d}}$, thus the domain is actually in lower dimension.
As we discussed in Remark~\ref{remark.bias_enlarge}, a ReLU activation function with bias that operates on $\tilde{\xsmall}\in\mathds{R}^{d-1}$, $\|\tilde{\xsmall}\|_2^2=\frac{d-1}{d}$ can be equivalently viewed as one without bias that operates on $\xsmall\in\sd$, but with the last element of $\xsmall$ fixed at $1/\sqrt{d}$. Note that
by fixing the last element of $\xsmall\in\sd$ at a constant $\frac{1}{\sqrt{d}}$, we essentially consider ground-truth functions with a much smaller domain $ \mathcal{D}\defeq \left\{\xsmall=\begin{bsmallmatrix}\tilde{\xsmall}\\ 1/\sqrt{d}\end{bsmallmatrix}\ \big|\ \tilde{\xsmall}\in\mathds{R}^{d-1},\|\tilde{\xsmall}\|_2^2=\frac{d-1}{d}\right\}\subset \sd$.   Correspondingly, define a vector $\tilde{\bm{a}}\in \mathds{R}^{d-1}$ and $a_0\in\mathds{R}$ such that $\bm{a}=\begin{bsmallmatrix}\tilde{\bm{a}}\\ a_0\end{bsmallmatrix}\in\mathds{R}^d$. We claim that for any $\bm{a}\in\mathds{R}^d$ and for all non-negative integer $l$, a ground-truth function $f(\xsmall)=(\xsmall^T\bm{a})^l, \xsmall\in\mathcal{D}$ must be learnable. In other words, all polynomials can be learned in the constrained domain $\mathcal{D}$. 
% From another perspective, when $a_0=0$, we have $\f(\xsmall)=(\tilde{\xsmall}^T\tilde{\bm{a}})^l$ which is indeed a function with $d-1$ input elements. Then this situation can be regarded as a two-layer NTK model with $d-1$ input nodes and each neuron has an extra bias weight to learn (so each neuron still has $d$ weights in total to learn). In this situation, polynomials (corresponding to the vector of $d-1$ elements) with any power are learnable. This actually reveals the importance of the bias in each neuron, which equivalently ``enlarges'' the learnable set.
Towards this end, recall that we have already shown that polynomials (of $\xsmall\in\sd$) to the power of $l=0,1,2,4,6,\cdots$ are learnable. Thus, it suffices to prove that polynomials of $\xsmall\in\mathcal{D}$ to the power of $l=3,5,7,\cdots$ can be represented by a finite sum of those to the power of $l=0,1,2,4,6,\cdots$. The idea is to utilize the fact that the binomial expansion of $(\tilde{\xsmall}^T\tilde{\bm{a}}+\frac{a_0}{\sqrt{d}})^l$ contains $(\tilde{\xsmall}^T\tilde{\bm{a}})^k$ for all $k=0,1,2,3,\cdots,l$. Here we give an example for writing $(\xsmall^T\bm{a})^3$ as a linear combination of learnable components. Other values of $l=5,7,9,\cdots$ can be proved in a similar way.
Notice that
\begin{align}
    (\tilde{\xsmall}^T\tilde{\bm{a}})^3=&\frac{1}{4}\left((\tilde{\xsmall}^T\tilde{\bm{a}}+1)^4-(\tilde{\xsmall}^T\tilde{\bm{a}})^4-6(\tilde{\xsmall}^T\tilde{\bm{a}})^2-4(\tilde{\xsmall}^T\tilde{\bm{a}})^2-1\right)\text{ (by the binomial expansion of $(\tilde{\xsmall}^T\tilde{\bm{a}}+1)^4$)}\nonumber\\
    =&\frac{1}{4}\left(\left(\xsmall^T\begin{bmatrix}\tilde{\bm{a}}\\ \sqrt{d}\end{bmatrix}\right)^4-\left(\xsmall^T\begin{bmatrix}\tilde{\bm{a}}\\ 0\end{bmatrix}\right)^4-6\left(\xsmall^T\begin{bmatrix}\tilde{\bm{a}}\\ 0\end{bmatrix}\right)^2-4\left(\xsmall^T\begin{bmatrix}\tilde{\bm{a}}\\ 0\end{bmatrix}\right)-1\right).\label{eq.temp_031601}
\end{align}
Thus, for all $\xsmall=\begin{bsmallmatrix}\tilde{\xsmall}\\ 1/\sqrt{d}\end{bsmallmatrix}$ and $\bm{a}=\begin{bsmallmatrix}\tilde{\bm{a}}\\ a_0\end{bsmallmatrix}$, we have
\begin{align*}
    (\xsmall^T\bm{a})^3=&\left(\tilde{\xsmall}^T\tilde{\bm{a}}+\frac{a_0}{\sqrt{d}}\right)^3\\
    =&(\tilde{\xsmall}^T\tilde{\bm{a}})^3+3\left(\frac{a_0}{\sqrt{d}}\right)(\tilde{\xsmall}^T\tilde{\bm{a}})^2+3\left(\frac{a_0}{\sqrt{d}}\right)^2(\tilde{\xsmall}^T\tilde{\bm{a}})+\left(\frac{a_0}{\sqrt{d}}\right)^3\\
    =&(\tilde{\xsmall}^T\tilde{\bm{a}})^3+3\left(\frac{a_0}{\sqrt{d}}\right)\left(\xsmall^T\begin{bmatrix}\tilde{\bm{a}}\\ 0\end{bmatrix}\right)^2+3\left(\frac{a_0}{\sqrt{d}}\right)^2\left(\xsmall^T\begin{bmatrix}\tilde{\bm{a}}\\ 0\end{bmatrix}\right)+\left(\frac{a_0}{\sqrt{d}}\right)^3\\
    =&\frac{1}{4}\left(\xsmall^T\begin{bmatrix}\tilde{\bm{a}}\\ \sqrt{d}\end{bmatrix}\right)^4-\frac{1}{4}\left(\xsmall^T\begin{bmatrix}\tilde{\bm{a}}\\ 0\end{bmatrix}\right)^4+\left(3\left(\frac{a_0}{\sqrt{d}}\right)-\frac{3}{2}\right)\left(\xsmall^T\begin{bmatrix}\tilde{\bm{a}}\\ 0\end{bmatrix}\right)^2\\
    &+\left(3\left(\frac{a_0}{\sqrt{d}}\right)^2-1\right)\left(\xsmall^T\begin{bmatrix}\tilde{\bm{a}}\\ 0\end{bmatrix}\right)+\left(\left(\frac{a_0}{\sqrt{d}}\right)^3-\frac{1}{4}\right)\text{ (by Eq.~\eqref{eq.temp_031601})},
\end{align*}
which is a sum of $5$ learnable components (corresponding to the polynomials with power of $4$, $4$, $2$, $1$, and $0$, respectively).

\section{Discussion when \texorpdfstring{$g$}{g} is a \texorpdfstring{$\delta$}{delta}-function (\texorpdfstring{$\|g\|_\infty=\infty$}{g is not bounded})}\label{app.g_is_delta}

We now discuss what happens to the conclusion of Theorem~\ref{th.combine} if $g$ contains a $\delta$-function, in which case $\|g\|_\infty=\infty$.
In Eq.~\eqref{eq.main_conclusion} of Theorem~\ref{th.combine}, only Term 1 and Term 4 (come from Proposition~\ref{th.fixed_Vzero}) will be affected when $\|g\|_\infty=\infty$. That is because only Proposition~\ref{th.fixed_Vzero} requires $\|g\|_\infty<\infty$ during the proof of Theorem~\ref{th.combine}. To accommodate the situation when $g$ contains a $\delta$-function ($\|g\|_\infty=\infty$), we need a new version of Proposition~\ref{th.fixed_Vzero}. In other words, we need to know the performance of the overfitted NTK solution in learning the pseudo ground-truth when $\|g\|_\infty=\infty$.

Without loss of generality, we consider the situation that $g=\delta_{\zzero}$. 
We have the following proposition.
\begin{proposition}\label{prop.g_is_delta}
If the ground-truth function is $\f=\fv$ in Definition~\ref{def.fv} with $g=\delta_{\zzero}$ and $\esmall=\bm{0}$, for any $\xsmall\in\sd$ and $q\in(1,\ \infty)$, we have
\begin{align*}
    &\prob_{\XX,\Vzero}\left\{|\fl(\xsmall)-\f(\xsmall)|\leq \left(\sqrt{\frac{3}{4}+\frac{\pi^2}{2}}\right)\left((d-1)B(\frac{d-1}{2},\frac{1}{2})\right)^{\frac{1}{2(d-1)}}n^{-\frac{1}{2(d-1)}(1-\frac{1}{q})}\right\}\\
    \geq & 1 - \exp\left(-n^{\frac{1}{q}}\right) - 2\exp\left(-\frac{p}{24}\left((d-1)B(\frac{d-1}{2},\frac{1}{2})\right)^{\frac{1}{d-1}}n^{-\frac{1}{d-1}(1-\frac{1}{q})}\right),
\end{align*}
when
\begin{align}\label{eq.temp_030502}
    n\geq \left((d-1)B(\frac{d-1}{2},\frac{1}{2})\right)^{\frac{q}{q-1}}\text{, i.e., }\left((d-1)B(\frac{d-1}{2},\frac{1}{2})\right)n^{-(1-\frac{1}{q})}\leq 1.
\end{align}
(Estimates of $B(\frac{d-1}{2},\frac{1}{2})$ can be found in Lemma~\ref{le.bound_B}.)
\end{proposition}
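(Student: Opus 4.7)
The plan is to adapt the framework of Proposition~\ref{th.fixed_Vzero} to the singular case $g=\delta_{\zzero}$. With this choice of $g$, Definition~\ref{def.fv} gives $\DVs[j]=(1/p)\bm{1}_{\{\zzero^T\Vzeroj>0\}}\zzero$, and since $\esmall=\bm{0}$, Lemma~\ref{le.f_minus_f} combined with Lemma~\ref{le.hxDV} gives
\begin{align*}
    |\fl(\xsmall)-\f(\xsmall)|\leq\sqrt{p}\,\|\DVs-\Ht^T\bm{a}\|_2
\end{align*}
for any $\bm{a}\in\mathds{R}^n$. The continuous construction $\bm{a}_i=g(\Xii)/(np)$ used in Proposition~\ref{th.fixed_Vzero} no longer makes sense because $g$ is not a pointwise function. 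Instead, the plan is to pick the training sample that is most aligned with $\zzero$: let $i^*\in\argmin_i\arccos(\Xii^T\zzero)$, let $\theta\defeq\arccos(\Xiis^T\zzero)$, and set $\bm{a}_{i^*}=1/p$ with $\bm{a}_i=0$ for $i\neq i^*$. This discrete choice mirrors the point-mass structure of $g$.

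With this $\bm{a}$, a direct computation yields
\begin{align*}
    p\,\|\DVs-\Ht^T\bm{a}\|_2^2=\frac{1}{p}\sum_{j=1}^p\big\|\bm{1}_{\{\Xiis^T\Vzeroj>0\}}\Xiis-\bm{1}_{\{\zzero^T\Vzeroj>0\}}\zzero\big\|_2^2\leq \theta^2\,\frac{N_1}{p}+\frac{N_{\text{flip}}}{p},
\end{align*}
where $N_1$ counts the neurons activated by both $\Xiis$ and $\zzero$ and $N_{\text{flip}}=\sum_j\bm{1}_{\{(\Xiis^T\Vzeroj)(\zzero^T\Vzeroj)<0\}}$ counts the activation disagreements; the bound uses $\|\Xiis-\zzero\|_2^2=2-2\cos\theta\leq\theta^2$ together with $\|\Xiis\|_2=\|\zzero\|_2=1$ on the disagreeing terms. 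By Lemma~\ref{le.spherePortion}, conditional on $\theta$, $N_{\text{flip}}\sim\bino(p,\theta/\pi)$.

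The remainder of the proof reduces to two probabilistic estimates combined by a union bound. First, I control $\theta$: since $\XX_1,\ldots,\XX_n$ are i.i.d.\ uniform on $\sd$, the probability that none falls within angular radius $\phi$ of the fixed $\zzero$ is $(1-P(\phi))^n\leq\exp(-nP(\phi))$, where $P(\phi)=\frac{1}{2}I_{\sin^2\phi}(\frac{d-1}{2},\frac{1}{2})$ (via Lemma~\ref{le.innerProd_I} applied with $c=\cos\phi$). Applying Lemma~\ref{le.estimate_Ix} and the definition of $C_d$ in Eq.~\eqref{eq.def_Cd} gives $P(\phi)\geq\sin^{d-1}\phi/\bigl((d-1)B(\frac{d-1}{2},\frac{1}{2})\bigr)$, and choosing $\sin\phi=\bigl((d-1)B(\frac{d-1}{2},\frac{1}{2})\bigr)^{1/(d-1)}n^{-(1-1/q)/(d-1)}$ (which satisfies $\sin\phi\leq 1$ by Eq.~\eqref{eq.temp_030502}, hence $\phi\in[0,\pi/2]$) yields $nP(\phi)\geq n^{1/q}$, so by Lemma~\ref{le.sin} we get $\theta\leq\phi\leq(\pi/2)\sin\phi$ with probability at least $1-\exp(-n^{1/q})$. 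Second, on this angle-bounded event, a Chernoff-type estimate on $N_{\text{flip}}$ (comparing it against the deterministic threshold $(3/2)p\phi/\pi$ rather than the random mean $p\theta/\pi$, via a relative-deviation application of Lemma~\ref{le.bino}) gives $N_{\text{flip}}/p\leq(3/2)\phi/\pi\leq(3/4)\sin\phi$ with failure probability at most $2\exp\bigl(-\frac{p}{24}\bigl((d-1)B(\frac{d-1}{2},\frac{1}{2})\bigr)^{1/(d-1)}n^{-(1-1/q)/(d-1)}\bigr)$. Combining these with the trivial bound $N_1/p\leq 1$, with $\theta^2\leq(\pi/2)^2\sin^2\phi$, and with $\sin^2\phi\leq\sin\phi$ (from $\sin\phi\leq 1$) produces $p\,\|\DVs-\Ht^T\bm{a}\|_2^2\leq(3/4+\pi^2/2)\sin\phi$ after absorbing constants, and the stated bound follows by taking square roots.

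The main obstacle will be the second concentration step: since the Bernoulli parameter $\theta/\pi$ in $N_{\text{flip}}\sim\bino(p,\theta/\pi)$ is random and can be arbitrarily small, applying Lemma~\ref{le.bino} directly would produce a failure probability depending on $\theta$ rather than the deterministic $\phi$. The trick is to compare $N_{\text{flip}}$ to the larger threshold $(3/2)p\phi/\pi$, which corresponds to a potentially large relative deviation $\delta=(3\phi)/(2\theta)-1$ when $\theta\ll\phi$; splitting into the cases $\delta<1$ and $\delta\geq 1$ and applying the appropriate Chernoff bound in each case yields a uniform failure probability of order $\exp(-\Omega(p\sin\phi))$, matching the second exponential in the conclusion. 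A secondary bookkeeping issue is tracking the $\pi/2$ factor from Lemma~\ref{le.sin} (which enters squared in the $\theta^2$ contribution) together with the constant from the Chernoff deviation, to package everything into the explicit form $\sqrt{3/4+\pi^2/2}$.
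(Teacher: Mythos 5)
Your proposal tracks the paper's proof almost step for step: the same $\DVs=\hsmall_{\Vzero,\zzero}^T/p$, the same one-hot choice of $\bm{a}$ supported on the nearest sample $i^*$, the same decomposition of $p\|\DVs-\Ht^T\bm{a}\|_2^2$ into an angle term plus a flip-count term, and the same cap-area estimate (via Lemma~\ref{le.estimate_Ix} and $(1-x)^n\le e^{-nx}$) giving the $\exp(-n^{1/q})$ term. The one place you genuinely diverge is the step you yourself flag as the main obstacle: the flip count is $\bino(p,\theta^*/\pi)$ with a \emph{random} parameter. The paper does not do a worst-case Chernoff argument over $\theta^*$; it proves a deterministic domination lemma (Lemma~\ref{le.temp_030501}) showing that the flip sets at $\XX_{i^*}$ are contained in the flip sets at a point $\xsmall_2$ lying at the fixed angle $\theta=\tfrac{\pi}{2}\sin\phi$ from $\zzero$, so the flip count is stochastically dominated by a clean $\bino(p,\theta/\pi)$ and a single application of Lemma~\ref{le.bino} with $\delta=1/2$ finishes the job. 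Your route avoids that geometric lemma at the cost of a case analysis, which is a legitimate alternative.

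However, as calibrated, your version does not deliver the stated constants. You compare $N_{\text{flip}}$ against $\tfrac{3}{2}\cdot\tfrac{p\phi}{\pi}$; the worst case is $\theta^*=\phi$, where $\delta=1/2$ and Lemma~\ref{le.bino} gives failure probability $2\exp\left(-\tfrac{p\phi}{12\pi}\right)$. Since $\phi\le\tfrac{\pi}{2}\sin\phi$, one has $\tfrac{p\phi}{12\pi}\le\tfrac{p\sin\phi}{24}$, so $2\exp\left(-\tfrac{p\phi}{12\pi}\right)\ge 2\exp\left(-\tfrac{p\sin\phi}{24}\right)$: your bound on the failure probability is \emph{weaker} than the one claimed in the proposition (by up to a factor $\pi/2$ in the exponent), so the statement as written is not established. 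The fix is exactly the paper's calibration: set the reference angle to $\theta=\tfrac{\pi}{2}\sin\phi\ge\theta^*$ and the threshold to $\tfrac{3\theta}{2\pi}=\tfrac{3}{4}\sin\phi$ (which still yields the $\tfrac{3}{4}\sin\phi$ contribution to the error and hence the constant $\sqrt{3/4+\pi^2/2}$), whereupon the worst-case mean is $\tfrac{p\sin\phi}{2}$, $\delta\ge 1/2$, and the exponent comes out to exactly $\tfrac{p\sin\phi}{24}$. Two further small points: your $\delta\ge 1$ branch needs a one-sided Chernoff bound that Lemma~\ref{le.bino} (stated only for $0<\delta<1$) does not provide — this is precisely the regime the paper's domination lemma lets it avoid — and the cap probability $P(\phi)=\tfrac12 I_{\sin^2\phi}(\tfrac{d-1}{2},\tfrac12)$ should be cited to Lemma~\ref{le.original_cap} rather than the two-sided Lemma~\ref{le.innerProd_I}, though the formula you use is correct.
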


Proposition~\ref{prop.g_is_delta} implies that when $n$ is large and $p$ is much larger than $n^{-\frac{1}{2(d-1)}(1-\frac{1}{q})}$, the test error between the pseudo ground-truth and learned result decreases with $n$ at the speed $O(n^{-\frac{1}{2(d-1)}(1-\frac{1}{q})})$. Further, if we let $q$ be large, then the decreasing speed with $n$ is almost $O(n^{-\frac{1}{2(d-1)}})$. When $d\geq 3$, this speed is slower than $O(n^{-\frac{1}{2}})$ described in Proposition~\ref{th.fixed_Vzero} (i.e., Term 1 in Eq.~\eqref{eq.main_conclusion} of Theorem~\ref{th.combine}). When $d=2$, the decreasing speed with respect to $n$ is $O(n^{-\frac{1}{2}})$ for both Proposition~\ref{th.fixed_Vzero} and Proposition~\ref{prop.g_is_delta}. Nonetheless, Proposition~\ref{prop.g_is_delta} implies that the ground-truth functions $\fg\in\learnableSet$ is still learnable even when $g$ is a $\delta$-function (i.e., $\|g\|_\infty=\infty$), but the test error potentially suffers a slower convergence speed with respect to $n$ when $d$ is large.

\begin{figure}[t]
\centering
\includegraphics[width=3in]{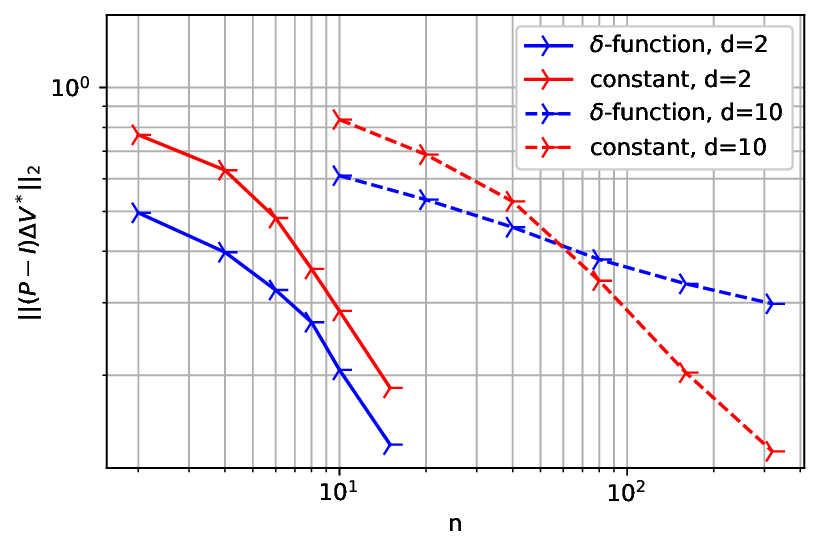}
\caption{The curves of the model error $\|(\mathbf{P}-\mathbf{I})\DVs\|_2$ for learning the pseudo ground-truth $\fv$ with respect to $n$ for different $g$ and different $d$, where $p=20000$, and $\esmall=\bm{0}$. Every curve is the average of 10 random simulation runs.}\label{fig.delta}
\end{figure}

In Fig.~\ref{fig.delta}, we plot the curves of the model error $\|(\mathbf{P}-\mathbf{I})\DVs\|_2$ for learning the pseudo ground-truth $\fv$ with respect to $n$ when $g=\delta_{\zzero}$ (two blue curves) and when $g$ is constant (two red curves). We plot both the case when $d=2$ (two solid curves) and the case when $d=10$ (two dashed curves). By Lemma~\ref{le.f_minus_f}, the model error $\|(\mathbf{P}-\mathbf{I})\DVs\|_2$ can represent the generalization performance for learning the pseudo ground-truth $\fv$ when there is no noise. In Fig.~\ref{fig.delta}, we can see that those two curves corresponding to $d=10$ have different slopes and the other two curves corresponding to $d=2$ have a similar slope, which confirms our prediction in the earlier paragraph (i.e., when $d=2$ the test error will decay at the same speed regardless of whether $g$ contains a $\delta$-function or not, but when $d>2$ the test error will decay more slowly when $g$ contains a $\delta$-function). 
% the comparison of the decreasing speed with respect to $n$ in Proposition~\ref{th.fixed_Vzero} and Proposition~\ref{prop.g_is_delta} in the last paragraph.

\subsection{Proof of Proposition~\ref{prop.g_is_delta}}
We first show two useful lemmas.
\begin{lemma}\label{le.temp_030301}
For any $q\in(1,\infty)$, if $b\in[ n^{-(1-1/q)}, 1]$, then
\begin{align*}
    (1-b)^n\leq \exp\left(-n^{\frac{1}{q}}\right).
\end{align*}
\end{lemma}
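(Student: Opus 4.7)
The plan is to reduce the inequality to the elementary bound $1-b \leq e^{-b}$, which is exactly Lemma~\ref{le.lnx} specialized to $x=-b$ (giving $1+(-b) \leq e^{-b}$). Raising both sides to the $n$-th power (valid since $b \in [0,1]$ implies $1-b \geq 0$) yields $(1-b)^n \leq e^{-nb}$. So it suffices to show that the hypothesis $b \geq n^{-(1-1/q)}$ forces $nb \geq n^{1/q}$, and then monotonicity of $\exp(-\cdot)$ finishes the job.

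Concretely, the chain will be: first invoke Lemma~\ref{le.lnx} to get $1-b \leq e^{-b}$, then take $n$-th powers to obtain $(1-b)^n \leq e^{-nb}$. Next, use the lower bound $b \geq n^{-(1-1/q)} = n^{1/q - 1}$ to conclude $nb \geq n \cdot n^{1/q-1} = n^{1/q}$. Finally, since $\exp(-x)$ is decreasing in $x$, $e^{-nb} \leq e^{-n^{1/q}}$, yielding $(1-b)^n \leq \exp(-n^{1/q})$. The upper bound $b \leq 1$ is only needed to ensure $1-b \geq 0$ so that raising to the $n$-th power preserves the inequality; no other subtlety arises. There is no real obstacle here — it is a one-line computation that packages the hypothesis into a clean exponential bound that will later be used (presumably in the proof of Proposition~\ref{prop.g_is_delta}) to control the probability that no $\Vzero[j]$ lands in a certain cap around the $\delta$-mass location $\zzero$.
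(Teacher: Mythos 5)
Your proof is correct and uses essentially the same approach as the paper: both arguments rest on the elementary bound $1-b\leq e^{-b}$ from Lemma~\ref{le.lnx} together with the hypothesis $b\geq n^{-(1-1/q)}$. The only difference is cosmetic — you raise to the $n$-th power directly to get $(1-b)^n\leq e^{-nb}\leq e^{-n^{1/q}}$, while the paper first passes through $(1-b)^{1/b}\leq e^{-1}$ and then exponentiates by $n^{1/q}$; your ordering is arguably cleaner.
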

\begin{proof}
By Lemma~\ref{le.lnx}, we have
\begin{align*}
    &e^{-b}\geq 1-b\\
    \implies &e^{-1}\geq (1-b)^{\frac{1}{b}}\\
    \implies &\exp\left(-n^{\frac{1}{q}}\right)\geq (1-b)^{n^{\frac{1}{q}}/ b}\\
    \implies &\exp\left(-n^{\frac{1}{q}}\right)\geq (1-b)^n\text{ because $b\in[ n^{-(1-1/q)}, 1]$}.
\end{align*}
\end{proof}

\begin{lemma}\label{le.temp_030501}
Consider $\xsmall_1\in\sd$ where $\varphi=\arccos (\xsmall_1^T\zzero)$. For any $\theta\in [\varphi, \pi]$, there must exist $\xsmall_2\in\sd$ such that $\arccos(\xsmall_2^T\zzero)=\theta$ and
\begin{align}\label{eq.temp_030501}
    \mathcal{C}_{-\xsmall_1,\zzero}^{\Vzero}\subseteq \mathcal{C}_{-\xsmall_2,\zzero}^{\Vzero},\quad \mathcal{C}_{\xsmall_1,-\zzero}^{\Vzero}\subseteq \mathcal{C}_{\xsmall_2,-\zzero}^{\Vzero}.
\end{align}
\end{lemma}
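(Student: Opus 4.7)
My plan is to construct $\xsmall_2$ as a rotation of $\xsmall_1$ inside the 2D plane spanned by $\xsmall_1$ and $\zzero$, moved further away from $\zzero$ until the angle reaches $\theta$, and then verify both containments by a direct 2D inner-product calculation. The intuition is that the lune $\{\bm{v}\in\sd:\xsmall^T\bm{v}<0,\ \zzero^T\bm{v}>0\}$ (and its mirror) grows monotonically as $\xsmall$ rotates away from $\zzero$ within this plane, and at every step the new lune contains the old one.

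Concretely, I will first dispose of the degenerate cases. If $\varphi=0$ then $\xsmall_1=\zzero$ and both $\mathcal{C}_{-\xsmall_1,\zzero}^{\Vzero}$ and $\mathcal{C}_{\xsmall_1,-\zzero}^{\Vzero}$ are empty, so the containments hold trivially for any $\xsmall_2$; if $\varphi=\pi$ the range $[\varphi,\pi]$ collapses to $\{\pi\}$ and we take $\xsmall_2=\xsmall_1$. For $\varphi\in(0,\pi)$, let $\bm{u}\in\sd$ be the unit vector in the $\xsmall_1$–$\zzero$ plane orthogonal to $\zzero$ such that $\xsmall_1=\cos\varphi\,\zzero+\sin\varphi\,\bm{u}$, and define
\begin{align*}
\xsmall_2\defeq\cos\theta\,\zzero+\sin\theta\,\bm{u}.
\end{align*}
By construction $\xsmall_2\in\sd$ and $\arccos(\xsmall_2^T\zzero)=\theta$.

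For the first containment, I will take $j\in\mathcal{C}_{-\xsmall_1,\zzero}^{\Vzero}$ and write $a\defeq\zzero^T\Vzeroj>0$, $b\defeq\bm{u}^T\Vzeroj$. Then $\xsmall_1^T\Vzeroj=a\cos\varphi+b\sin\varphi<0$ gives $b<-a\cot\varphi$ (since $\sin\varphi>0$). Because $\cot$ is strictly decreasing on $(0,\pi)$ and $\theta\geq\varphi$, and since $a>0$, we get $-a\cot\varphi\leq -a\cot\theta$, whence $b<-a\cot\theta$, i.e.\ $\xsmall_2^T\Vzeroj=a\cos\theta+b\sin\theta<0$. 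Combined with $\zzero^T\Vzeroj>0$, this gives $j\in\mathcal{C}_{-\xsmall_2,\zzero}^{\Vzero}$. For the second containment, for $j\in\mathcal{C}_{\xsmall_1,-\zzero}^{\Vzero}$ we have $a<0$ and $a\cos\varphi+b\sin\varphi>0$, so $b>-a\cot\varphi$; multiplying the inequality $\cot\varphi\geq\cot\theta$ by $-a>0$ yields $-a\cot\varphi\geq-a\cot\theta$, hence $b>-a\cot\theta$ and $\xsmall_2^T\Vzeroj>0$, as required.

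The only mild obstacle is keeping track of signs of $a$ and $\cot\varphi$ so that multiplying inequalities by $\sin\theta$ or $a$ preserves direction in both containments; the argument above handles each case symmetrically. The boundary point $\theta=\pi$ (where $\sin\theta=0$) is not actually problematic because then $\xsmall_2=-\zzero$, and the containment conditions reduce directly to $\zzero^T\Vzeroj>0$ or $\zzero^T\Vzeroj<0$, which are already assumed; alternatively one can approach $\theta=\pi$ by continuity from below. No further ingredients are needed.
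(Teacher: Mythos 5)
Your proposal is correct and uses essentially the same construction as the paper: $\xsmall_2$ is obtained by rotating $\xsmall_1$ within the two-dimensional span of $\xsmall_1$ and $\zzero$, written in the orthonormal basis $\{\zzero,\bm{u}\}$ exactly as the paper does with $\zzero$ and $\zzeroorth$. The only difference is in the verification step — you check the containments by a direct inner-product computation using the monotonicity of $\cot$ on $(0,\pi)$, whereas the paper reduces membership in $\Czx$ to an angular condition on the projection of $\Vzeroj$ onto the plane (reusing the interval characterization from the proof of Lemma~\ref{le.spherePortion}) and compares arcs; your version is equally valid, is self-contained, and disposes of the edge cases ($\varphi=\pi$, $\theta=\pi$, and $\Vzeroj$ perpendicular to the plane) automatically rather than via a footnote.
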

We will explain the intuition of Lemma~\ref{le.temp_030501} in Remark~\ref{re.temp_030801} right after we use the lemma. We put the proof of Lemma~\ref{le.temp_030501} in Section~\ref{subsec.le030501}.

Now we are ready to prove Proposition~\ref{prop.g_is_delta}.
Recall $\DVs$ defined in Eq.~\eqref{eq.temp_0929}. By Eq.~\eqref{eq.def_hx} and $g=\delta_{\zzero}$, we have
\begin{align*}
    \DVs = \frac{(\hsmall_{\Vzero,\zzero})^T}{p}.
\end{align*}
Define
\begin{align*}
    &i^*=\argmin_{i\in\{1,2,\cdots,n\}} \|\Xii-\zzero\|_2,\\
    &\theta^*=\arccos(\XX_{i^*}^T\zzero).
\end{align*}
Thus, we have
\begin{align}\label{eq.temp_030201}
    \|\XX_{i^*}-\zzero\|_2=&\sqrt{2-2\cos\theta^*}\text{ (by the law of cosines)}\nonumber\\
    =&2\sin\frac{\theta^*}{2}\text{ (by the half angle identity)}\nonumber\\
    \leq &\theta^*\text{ (by Lemma~\ref{le.sin})}.
\end{align}
(Graphically, Eq.~\eqref{eq.temp_030201} means that a chord is not longer than the corresponding arc.)

As we discussed in the proof sketch of Proposition~\ref{th.fixed_Vzero}, we now construct the vector $\bm{a}$ such that $\Ht^T\bm{a}$ is close to $\DVs$. Define $\bm{a}\in\mathds{R}^n$ whose $i$-th element is \begin{align*}
    \bm{a}_i=\begin{cases}
    1/p,\quad &\text{ if }i=i^*\\
    0,&\text{ if }i\in \{1,2,\cdots,n\}\setminus \{i^*\}
    \end{cases}.
\end{align*}
Thus, we have $\Ht^T\bm{a}=(\hsmall_{\Vzero,\XX_{i^*}})^T/p$. Therefore, we have
\begin{align*}
    \|\Ht^T\bm{a}-\DVs\|_2^2=&\sum_{j=1}^p\|(\Ht^T\bm{a})[j]-\DVs[j]\|_2^2\\
    =&\frac{1}{p^2}\sum_{j=1}^p\left(\bm{1}_{\{\XX_{i^*}^T\Vzeroj>0,\zzero^T\Vzeroj>0\}}\|\XX_{i^*}-\zzero\|_2^2+\bm{1}_{\{(\XX_{i^*}^T\Vzeroj)(\zzero^T\Vzeroj)<0\}}\right)\\
    \leq & \frac{1}{p^2}\left(p\|\XX_{i^*}-\zzero\|_2^2+|\mathcal{C}_{-\XX_{i^*},\zzero}^{\Vzero}|+|\mathcal{C}_{\XX_{i^*},-\zzero}^{\Vzero}|\right)\text{ (by Eq.~\eqref{eq.card_c})}\\
    \leq & \frac{1}{p^2}\left(p\cdot(\theta^*)^2+|\mathcal{C}_{-\XX_{i^*},\zzero}^{\Vzero}|+|\mathcal{C}_{\XX_{i^*},-\zzero}^{\Vzero}|\right)\text{ (by Eq.~\eqref{eq.temp_030201})}.
\end{align*}
Thus, we have
\begin{align}
    \sqrt{p}\|\Ht\bm{a}-\DVs\|_2\leq& \sqrt{(\theta^*)^2+\frac{|\mathcal{C}_{-\XX_{i^*},\zzero}^{\Vzero}|+|\mathcal{C}_{\XX_{i^*},-\zzero}^{\Vzero}|}{p}}\nonumber\\
    \leq & \sqrt{\pi\theta^*+\frac{|\mathcal{C}_{-\XX_{i^*},\zzero}^{\Vzero}|+|\mathcal{C}_{\XX_{i^*},-\zzero}^{\Vzero}|}{p}}\text{ (because $\theta^*\leq \pi$)}.\label{eq.temp_030303}
\end{align}

\begin{remark}\label{re.geo_interp_orth_region}
We give a geometric interpretation of Eq.~\eqref{eq.temp_030303} when $d=2$ by Fig.~\ref{fig.2D}, where  $\overrightarrow{\mathrm{OA}}$ denotes $\zzero$, $\overrightarrow{\mathrm{OB}}$ denotes $\XX_{i^*}$. Then, $|\mathcal{C}_{-\XX_{i^*},\zzero}^{\Vzero}|+|\mathcal{C}_{\XX_{i^*},-\zzero}^{\Vzero}|$ corresponds to the number of $\Vzeroj$'s whose direction is in the arc $\stackrel{\frown}{\mathrm{CE}}$ or the arc $\stackrel{\frown}{\mathrm{FD}}$, and $\theta^*$ corresponds to the angle $\angle\mathrm{AOB}$. Intuitively, when $n$ increases, $\XX_{i^*}$ and $\zzero$ get closer, so $\theta^*$ becomes smaller. At the same time, both the arc $\stackrel{\frown}{\mathrm{CE}}$ and the arc $\stackrel{\frown}{\mathrm{FD}}$ become shorter. Consequently,  the value of Eq.~\eqref{eq.temp_030303} decreases as $n$ increases. In the rest of the proof, we will quantitatively estimate the above relationship.
\end{remark}

Recall $C_d$ in Eq.~\eqref{eq.def_Cd}. Define
\begin{align}\label{eq.temp_030305}
    \theta \defeq \frac{\pi}{2} \left(\frac{2\sqrt{2}(d-1)}{C_d}\right)^{\frac{1}{d-1}}n^{-\frac{1}{d-1}(1-\frac{1}{q})}\in \left[0, \frac{\pi}{2}\right]\quad \text{ (by Eq.~\eqref{eq.temp_030502})}.
\end{align}
% Notice that by Lemma~\ref{le.estimate_Ix}, we know that
% \begin{align*}
%     \frac{2\sqrt{2}(d-1)}{C_d}n^{-\frac{1}{d-1}(1-\frac{1}{q})}\leq 2 I_{a}(\frac{d-1}{2},\frac{1}{2}),\text{ where }a = n^{-\frac{2}{d-1}\cdot \frac{1}{d-1}(1-\frac{1}{q})}\in [0, 1].
% \end{align*}
% By the definition of $I_a(\cdot,\cdot)$ in Eq.~\eqref{eq.def_reg_incomplete_beta}, we have $I_a(\cdot,\cdot)\in [0, 1]$. Thus, $\theta\in [0, \pi]$.
For any $q\in(1, \infty)$, we define two events:
\begin{align*}
    &\mathcal{J}_1 \defeq \left\{\frac{|\mathcal{C}_{-\XX_{i^*},\zzero}^{\Vzero}|+|\mathcal{C}_{\XX_{i^*},-\zzero}^{\Vzero}|}{p}\leq \frac{3\theta}{2\pi}\right\},\\
    &\mathcal{J}_2 \defeq \left\{\theta^*\leq\theta\right\}.
\end{align*}
If both $\mathcal{J}_1$ and $\mathcal{J}_2$ happen, by Eq.~\eqref{eq.temp_030303}, we must then have
\begin{align*}
    \sqrt{p}\|\Ht\bm{a}-\DVs\|_2&\leq\left(\sqrt{\frac{3}{2\pi}+\pi}\right)\cdot\sqrt{\theta}\\
    &= \left(\sqrt{\frac{3}{4}+\frac{\pi^2}{2}}\right)\left(\frac{2\sqrt{2}(d-1)}{C_d}\right)^{\frac{1}{2(d-1)}}n^{-\frac{1}{2(d-1)}(1-\frac{1}{q})}.
\end{align*}
Thus, by Lemma~\ref{le.f_minus_f} and Lemma~\ref{le.hxDV}, if $\f=\fv$ and both $\mathcal{J}_1$ and $\mathcal{J}_2$ happen, then for any $\xsmall\in\sd$, we must have
\begin{align}\label{eq.temp_030306}
    |\fl(\xsmall)-\f(\xsmall)|\leq \left(\sqrt{\frac{3}{4}+\frac{\pi^2}{2}}\right)\left(\frac{2\sqrt{2}(d-1)}{C_d}\right)^{\frac{1}{2(d-1)}}n^{-\frac{1}{2(d-1)}(1-\frac{1}{q})}.
\end{align}

It then only remains to estimate the probability of $\mathcal{J}_1\cap \mathcal{J}_2$.

\textbf{Step 1: Estimate the probability of $\mathcal{J}_1$ conditional on $\mathcal{J}_2$.}

When $\mathcal{J}_2$ happens, we have $\theta^*<\theta$. By Lemma~\ref{le.temp_030501}, we can find $\xsmall\in\sd$ such that the angle between $\xsmall$ and $\zzero$ is exactly $\theta$ and
\begin{align}\label{eq.temp_030503}
    \frac{|\mathcal{C}_{-\XX_{i^*},\zzero}^{\Vzero}|+|\mathcal{C}_{\XX_{i^*},-\zzero}^{\Vzero}|}{p}\leq \frac{|\mathcal{C}_{-\xsmall,\zzero}^{\Vzero}|+|\mathcal{C}_{\xsmall,-\zzero}^{\Vzero}|}{p}.
\end{align}

\begin{remark}\label{re.temp_030801}
We give a geometric  interpretation of Eq.~\eqref{eq.temp_030503} (i.e., Lemma~\ref{le.temp_030501}) when $d=2$ by Fig.~\ref{fig.2D}. Recall in Remark~\ref{re.geo_interp_orth_region} that, if we take $\overrightarrow{\mathrm{OA}}$ as $\zzero$ and $\overrightarrow{\mathrm{OB}}$ as $\XX_{i^*}$, then $|\mathcal{C}_{-\XX_{i^*},\zzero}^{\Vzero}|+|\mathcal{C}_{\XX_{i^*},-\zzero}^{\Vzero}|$ corresponds to the number of $\Vzeroj$'s whose direction is in the arc $\stackrel{\frown}{\mathrm{CE}}$ or the arc $\stackrel{\frown}{\mathrm{FD}}$. If we fix $\overrightarrow{\mathrm{OA}}$ (i.e., $\zzero$) and increase the angle $\angle\mathrm{AOB}$ (corresponding to $\theta^*$), then both the arc $\stackrel{\frown}{\mathrm{CE}}$ and the arc $\stackrel{\frown}{\mathrm{FD}}$ will become longer. In other words, if we replace $\Xii$ by $\xsmall$ such that the angle $\theta^*$ (between $\zzero$ and $\Xii$) increases to the angle $\theta$ (between $\zzero$ and $\xsmall$), then $\mathcal{C}_{-\XX_{i^*},\zzero}^{\Vzero}\subseteq \mathcal{C}_{-\xsmall,\zzero}^{\Vzero}$ and $\mathcal{C}_{\XX_{i^*},-\zzero}^{\Vzero}\subseteq \mathcal{C}_{\xsmall,-\zzero}^{\Vzero}$, and thus Eq.~\eqref{eq.temp_030503} follows.
\end{remark}

We next estimate the probability that the right-hand-side of Eq.~\eqref{eq.temp_030503} is greater than $\frac{3\theta}{2\pi}$. By Eq.~\eqref{eq.card_c}, we have 
\begin{align}\label{eq.temp_030304}
    \frac{|\mathcal{C}_{-\xsmall,\zzero}^{\Vzero}|+|\mathcal{C}_{\xsmall,-\zzero}^{\Vzero}|}{p}=\frac{1}{p}\sum_{j=1}^p \underbrace{\bm{1}_{\{-\xsmall^T\Vzeroj>0,\zzero^T\Vzeroj>0\text{ OR }\xsmall^T\Vzeroj>0,-\zzero^T\Vzeroj<0\}}}_{\text{Term A}}.
\end{align}
Notice that the angle between $-\xsmall$ and $\zzero$ is $\pi-\theta$, and the angle between $\xsmall$ and $-\zzero$ is also $\pi-\theta$.
By Lemma~\ref{le.spherePortion} and Assumption~\ref{as.uniform}, we know that the Term A in Eq.~\eqref{eq.temp_030304} follows Bernoulli distribution with the probability $2\cdot\frac{\pi-(\pi-\theta)}{2\pi}=\frac{\theta}{\pi}$. By letting $\delta = 1/2$, $a=p$, $b=\frac{\theta}{\pi}$ in Lemma~\ref{le.bino}, we have
\begin{align*}
    \prob_{\Vzero}\left\{\left||\mathcal{C}_{-\xsmall,\zzero}^{\Vzero}|+|\mathcal{C}_{\xsmall,-\zzero}^{\Vzero}|-\frac{p\theta}{\pi}\right|>\frac{p\theta}{2\pi}\right\}\leq 2\exp\left(-\frac{p\theta}{12\pi}\right).
\end{align*}
By Eq.~\eqref{eq.temp_030503}, we then have
\begin{align*}
    \prob_{\Vzero}[\mathcal{J}_1^c\ |\ \mathcal{J}_2]\leq \prob_{\Vzero}\left\{\frac{|\mathcal{C}_{-\xsmall,\zzero}^{\Vzero}|+|\mathcal{C}_{\xsmall,-\zzero}^{\Vzero}|}{p}>\frac{3\theta}{2\pi}\right\}\leq 2\exp\left(-\frac{p\theta}{12\pi}\right).
\end{align*}

\textbf{Step 2: Estimate the probability of $\mathcal{J}_2$.}

By Lemma~\ref{le.original_cap} and Assumption~\ref{as.uniform}, for any $i\in\{1,2,\cdots,n\}$ and because $\theta\in [0, \pi/2]$, we have
\begin{align*}
    \prob_{\XX}\left\{\arccos (\Xii^T\zzero)>\theta\right\}=&1-\frac{1}{2}I_{\sin^2\theta}\left(\frac{d-1}{2},\frac{1}{2}\right)\\
    \leq &1-\frac{C_d}{2\sqrt{2}(d-1)}\sin^{d-1}\theta\text{ (by Lemma~\ref{le.estimate_Ix})}.
\end{align*}
Note that since $\prob_{\XX}\left\{\arccos (\Xii^T\zzero)>\theta\right\}\geq 0$, we must have
\begin{align}\label{eq.temp_052801}
    \frac{C_d}{2\sqrt{2}(d-1)}\sin^{d-1}\theta\leq 1.
\end{align}
Further, because all $\Xii$'s are \emph{i.i.d.} for $i\in\{1,2,\cdots,n\}$, we have
\begin{align}\label{eq.temp_030302}
    \prob_{\XX}\{\theta^*>\theta\}=\prob_{\XX}\left\{\min_{i\in\{1,2,\cdots,n\}}\arccos (\Xii^T\zzero)>\theta\right\}\leq &\left(1-\frac{C_d}{2\sqrt{2}(d-1)}\sin^{d-1}\theta\right)^n.
\end{align}
By Eq.~\eqref{eq.temp_030305} and  Lemma~\ref{le.sin}, we then have
\begin{align*}
    \sin\theta\geq \left(\frac{2\sqrt{2}(d-1)}{C_d}\right)^{\frac{1}{d-1}}n^{-\frac{1}{d-1}(1-\frac{1}{q})},
\end{align*}
i.e.,
\begin{align*}
    \frac{C_d}{2\sqrt{2}(d-1)}\sin^{d-1}\theta\geq n^{-(1-1/q)}.
\end{align*}
Thus, by Eq.~\eqref{eq.temp_052801}, Eq.~\eqref{eq.temp_030302}, and  Lemma~\ref{le.temp_030301}, we have
\begin{align*}
    &\prob_{\XX}[\mathcal{J}_2^c]=\prob_{\XX}\left\{\theta^*>\theta\right\}\leq \exp\left(-n^{\frac{1}{q}}\right).
\end{align*}

Combining the results of Step 1 and Step 2, we thus have
\begin{align*}
    \prob_{\XX,\Vzero}[\mathcal{J}_1\cap\mathcal{J}_2]=&\prob_{\XX,\Vzero}[\mathcal{J}_1\ |\ \mathcal{J}_2]\cdot  \prob_{\XX,\Vzero}[\mathcal{J}_2]\\
    =&\prob_{\Vzero}[\mathcal{J}_1\ |\ \mathcal{J}_2]\cdot \prob_{\XX}[\mathcal{J}_2]\text{ (because of $\Vzero$ and $\XX$ are independent)}\\
    \geq & \left(1-2\exp\left(-\frac{p\theta}{12\pi}\right)\right)\left(1- \exp\left(-n^{\frac{1}{q}}\right) \right)\\
    \geq &1 - \exp\left(-n^{\frac{1}{q}}\right)-2\exp\left(-\frac{p\theta}{12\pi}\right)\\
    = & 1 - \exp\left(-n^{\frac{1}{q}}\right) - 2\exp\left(-\frac{p}{24}\left(\frac{2\sqrt{2}(d-1)}{C_d}\right)^{\frac{1}{d-1}}n^{-\frac{1}{d-1}(1-\frac{1}{q})}\right)\text{ (by Eq.~\eqref{eq.temp_030306})}.
\end{align*}
By Eq.~\eqref{eq.def_Cd}, the conclusion of Proposition~\ref{prop.g_is_delta} thus follows.

\subsection{Proof of Lemma~\ref{le.temp_030501}}\label{subsec.le030501}
\begin{proof}
When $\xsmall_1=\zzero$, the conclusion of this lemma trivially holds because $\mathcal{C}_{-\xsmall_1,\zzero}^{\Vzero}=\mathcal{C}_{\xsmall_1,-\zzero}^{\Vzero}=\varnothing$ (because $-\xsmall^T\Vzeroj$ and $\zzero^T\Vzeroj$ cannot be both positive or negative at the same time when $\xsmall_1=\zzero$.). It remains to consider $\xsmall_1\neq \zzero$. Define 
\begin{align*}
    \zzeroorth\defeq \frac{\xsmall_1-(\xsmall_1^T\zzero)\zzero}{\|\xsmall_1-(\xsmall_1^T\zzero)\zzero\|_2}.
\end{align*}
Thus, we have $\zzeroorth^T\zzero=0$ and $\|\zzeroorth\|_2=1$, i.e., $\zzero$ and $\zzeroorth$ are orthonormal basis vectors on the 2D plane $\mathcal{L}$ spanned by $\xsmall_1$ and $\zzero$. Thus, we can represent $\xsmall_1$ as
\begin{align*}
    \xsmall_1=\cos \varphi \cdot \zzero + \sin \varphi \cdot \zzeroorth\in \mathcal{L}.
\end{align*}
For any $\theta\in[\varphi,\pi]$, we construct $\xsmall_2$ as
\begin{align*}
    \xsmall_2\defeq \cos \theta \cdot \zzero + \sin \theta \cdot \zzeroorth\in\mathcal{L}.
\end{align*}
In order to show $\mathcal{C}_{-\xsmall_1,\zzero}^{\Vzero}\subseteq \mathcal{C}_{-\xsmall_2,\zzero}^{\Vzero}$, we only need to prove any $j\in \mathcal{C}_{-\xsmall_1,\zzero}^{\Vzero}$ must in $\mathcal{C}_{-\xsmall_2,\zzero}^{\Vzero}$.
For any $\Vzeroj$, $j=1,2,\cdots,p$, define the angle $\theta_j\in [0,2\pi]$ as the angle between $\zzero$ and $\Vzeroj$'s projected component $\bm{v}_j$ on $\mathcal{L}$\footnote{Note that such an angle $\theta_j$ is well defined as long as $\Vzeroj$ is not perpendicular to $\mathcal{L}$. The reason that we do not need to worry about those $j$'s such that $\Vzeroj\perp \mathcal{L}$ is as follows. When $\Vzeroj\perp \mathcal{L}$, we then have $\xsmall_1^T \Vzeroj=\xsmall_2^T\Vzeroj=\zzero^T \Vzeroj = 0$. Thus, those $j$'s do not belong to any set $\mathcal{C}_{-\xsmall_1,\zzero}^{\Vzero}$, $\mathcal{C}_{-\xsmall_2,\zzero}^{\Vzero}$, $\mathcal{C}_{\xsmall_1,-\zzero}^{\Vzero}$, or $\mathcal{C}_{\xsmall_2,-\zzero}^{\Vzero}$  in Eq.~\eqref{eq.temp_030501}.}, i.e.,
\begin{align*}
    \bm{v}_j=\cos \theta_j \cdot \zzero + \sin \theta_j \cdot \zzeroorth\in\mathcal{L}.
\end{align*}
By the proof of Lemma~\ref{le.spherePortion}, we know that $j\in \mathcal{C}_{-\xsmall_1,\zzero}^{\Vzero}$ if and only if $\theta_j\in (-\frac{\pi}{2},\frac{\pi}{2}) \cap (\pi+\varphi-\frac{\pi}{2},\pi+\varphi+\frac{\pi}{2})$ (mod $2\pi$). Similarly, $j\in \mathcal{C}_{-\xsmall_2,\zzero}^{\Vzero}$ if and only if $\theta_j\in (-\frac{\pi}{2},\frac{\pi}{2}) \cap (\pi+\theta-\frac{\pi}{2},\pi+\theta+\frac{\pi}{2})$ (mod $2\pi$). Because $\varphi \in [0, \pi]$ and $\theta \in [\varphi, \pi]$, we have
\begin{align*}
    (-\frac{\pi}{2},\frac{\pi}{2}) \cap (\pi+\varphi-\frac{\pi}{2},\pi+\varphi+\frac{\pi}{2})\subseteq (-\frac{\pi}{2},\frac{\pi}{2}) \cap (\pi+\theta-\frac{\pi}{2},\pi+\theta+\frac{\pi}{2})\text{ (mod $2\pi$)}.
\end{align*}
Thus, whenever $j\in \mathcal{C}_{-\xsmall_1,\zzero}^{\Vzero}$, we must have $j\in \mathcal{C}_{-\xsmall_2,\zzero}^{\Vzero}$. Therefore, we conclude that $\mathcal{C}_{-\xsmall_1,\zzero}^{\Vzero}\in \mathcal{C}_{-\xsmall_2,\zzero}^{\Vzero}$. Using a similar method, we can also show that $\mathcal{C}_{\xsmall_1,-\zzero}^{\Vzero}\subseteq \mathcal{C}_{\xsmall_2,-\zzero}^{\Vzero}$. The result of this lemma thus follows.
\end{proof}

% %------------
% \input{abstract}
% \input{introduction}
% \input{problem_setup}

% \input{fixed_p_n}
% \input{sketch_proof}
% % \input{sketch_proof_variance}
% % \input{sketch_proof_fixed_init}
% % \input{sketch_proof_no_noise}
% \input{conclusion}

% \bibliographystyle{plainnat}
% % \bibliographystyle{icml2021}
% \bibliography{ref}
% \newpage
% \onecolumn
% \appendix
% \input{extra_notations}
% \input{proof_GD_same_l2}
% \input{assumptions}
% % \noindent(The following is the appendix)
% \input{prelim}
% \input{full_rank}
% \input{HHT}
% \input{lower_bound_noise}
% % \input{d1}
% \input{proof_fixed_Vzero}
% \input{proof_large_p}
% \input{proof_equal_set}
% % \input{convolution}
% % \input{proof_linear}
% % \input{proof_lower_bound_error}
% % \input{proof_d2}
% % \input{proof_backward}
% \input{ortho_decomp}
% \input{g_is_delta}
% %------------

\end{document}